\documentclass[11pt]{article}

\usepackage[margin=1in]{geometry}
\usepackage{amsmath, amssymb, amsthm, mathtools}
\usepackage{mathrsfs}
\usepackage{bm}
\usepackage{bbm}
\usepackage{enumitem}
\usepackage{hyperref}
\usepackage{xcolor}
\usepackage{microtype}
\usepackage{graphicx}
\usepackage{float}
\usepackage{subcaption}
\usepackage{booktabs,tabularx,array,ragged2e}
\graphicspath{{figures/}{figures/exp1/}{figures/align/}}

\hypersetup{
  colorlinks=true,
  linkcolor=blue,
  citecolor=blue,
  urlcolor=blue
}

\theoremstyle{plain}
\newtheorem{theorem}{Theorem}[section]
\newtheorem{lemma}[theorem]{Lemma}
\newtheorem{proposition}[theorem]{Proposition}
\newtheorem{corollary}[theorem]{Corollary}

\theoremstyle{definition}
\newtheorem{definition}[theorem]{Definition}
\newtheorem{assumption}[theorem]{Assumption}

\theoremstyle{remark}
\newtheorem{remark}[theorem]{Remark}

\newcommand{\E}{\mathbb{E}}
\newcommand{\R}{\mathbb{R}}
\newcommand{\N}{\mathbb{N}}
\newcommand{\Id}{\mathrm{I}}
\newcommand{\diag}{\mathrm{diag}}
\newcommand{\tr}{\mathrm{tr}}
\newcommand{\rank}{\mathrm{rank}}
\newcommand{\eps}{\varepsilon}
\newcommand{\GL}{\mathrm{GL}}
\newcommand{\Orth}{\mathrm{O}}
\newcommand{\SPD}{\mathrm{SPD}}

\newcommand{\Harm}{\mathrm{H}}

\newcommand{\Cart}{\operatorname{Cart}}
\newcommand{\supp}{\operatorname{supp}}

\newcommand{\Core}{\mathrm{Core}}
\newcommand{\Overlap}{\mathrm{Overlap}}
\newcommand{\Pirow}{\Pi_{\mathrm{row}}}
\newcommand{\Picol}{\Pi_{\mathrm{col}}}
\DeclareMathOperator*{\argmax}{arg\,max}
\DeclareMathOperator*{\argmin}{arg\,min}

\newcolumntype{Y}{>{\RaggedRight\arraybackslash\hspace{0pt}}X}
\newcolumntype{L}[1]{>{\RaggedRight\arraybackslash\hspace{0pt}}p{#1}}

\title{\vspace{-1.0em}
Geometric and Spectral Alignment for Deep Neural Network II\\
\vspace{-0.5em}}

\author{
Ziran Liu$^{1,5*}$,
Wei Wang$^{2*}$,
Jinhao Wang$^{3}$,
Pengcheng Wang$^{4}$,
Xinyi Sui$^{3}$\\
Cihan Ruan$^{3}$,
Nam Ling$^{3}$,
Wei Jiang$^{2}$\\[0.5em]
$^1$Shanghai Institute for Mathematics and Interdisciplinary Sciences (SIMIS), \\Shanghai 200433, China\\
$^2$Futurewei Technologies, Inc., San Jose, CA 95131\\
$^3$Dept. of Computer Science and Engineering, Santa Clara University,\\
Santa Clara, CA 95050\\
$^4$Dept. of Computer Science, Purdue University, West Lafayette, IN 47906\\
$^5$Research Institute of Intelligent Complex Systems, Fudan University,\\ Shanghai 200433, China\\
[0.4em]
\texttt{zliu@simis.cn, rickweiwang@futurewei.com, jwang11@scu.edu}\\
\texttt{wang4495@purdue.edu, xsui@scu.edu, luciacihanruan@gmail.com}\\
\texttt{nling@scu.edu, wjiang@futurewei.com}
}

\date{}

\usepackage[backend=biber,style=numeric,sorting=nyt]{biblatex}
\begin{document}
\maketitle

\begin{abstract}
This paper develops the angular and static-channel component of Geometric and Spectral Alignment for residual Jacobian chains. Starting from Cartan-coordinate rigidity and fitted effective-rank windows, we study how dominant singular subspaces are transported across adjacent layers and how the resulting finite matrices can be displayed in physical channel coordinates.

The main results are deterministic, margin-verified results. We bound the error between full interface transport and its dominant-window truncation, add fitted-tail errors so that empirical spectra can be certified against the Gibbs--Cartan tail model, and distinguish source-mode incidence from fully physical input-output channel incidence. Given row groups and active supports, the Physical Alignment Matrix decomposes orthogonally as core plus overlap plus noise. Active-column gaps, pairwise overlap margins, and noise bounds combine into a static certificate radius under which the full transport and the truncated transport induce the same active supports, pairwise incidence graph, SRS sets, hub columns, and core/overlap/noise masks. The finer SC/SA/ST labels of the Invariant Channel Mapping require additional row-energy and profile-correlation margins, stated as explicit perturbation tests.

The empirical section reports the matrices and block-energy heatmaps that measure these certificate quantities across CNNs, language models, and vision/diffusion backbones. The figures are interpreted as finite-dimensional measurements; complete membership in the Physical GSA certificate domain requires checking the numerical margin protocol stated in Section~\ref{sec:empirical}.
\end{abstract}

\newpage
\tableofcontents
\newpage

\section{Introduction}
The spectral article proves that a budgeted residual Jacobian cocycle has a short Cartan-coordinate path and, under a positive rank-separation margin, a stable dominant energy window.  This paper studies the angular geometry inside that window.  The singular values of a layer specify how much energy is carried by each mode; the singular vectors specify how those modes are routed through the next layer.  The resulting angular transport is the finite-dimensional object seen in physical channel coordinates, and it is closely related to residual-Jacobian alignment phenomena observed in residual networks \cite{li2023residualalignment} and to classical subspace-angle geometry \cite{bjorck1973angles}.

The basic matrix data are as follows.  For a layer matrix $W=U\Sigma V^\top$, the columns of $U$ are output singular directions, the columns of $V$ are input singular directions, and the diagonal entries of $\Sigma$ are singular amplitudes.  After selecting an effective-rank window, adjacent layers provide two frames: the left singular frame of the current layer and the right singular frame of the next layer.  Their overlap matrix, together with singular-value weights and physical output realization, produces the transport matrices used in the paper.  A row/column permutation then displays the same transport in channel coordinates.

The objects developed here are finite-dimensional and directly computable from static matrices.  Given two adjacent layer maps with singular-vector frames, we form angular transport matrices and energy-weighted variants.  Mode-profile row groups and active support margins then determine a Physical Alignment Matrix, pairwise relational triples, and a global decomposition
\[
\widehat M_{\mathrm{phy}}=M_{\mathrm{core}}+M_{\mathrm{overlap}}+M_{\mathrm{noise}}.
\]
This decomposition records which output channel groups connect to which declared column supports, which supports are shared, and which residual mass is not structured by the selected dominant window.  When the selected transport has source-mode columns these are mode supports; when it has physical-input columns they are physical channel supports.

The main certificate implication proved in this paper is
\[
\begin{aligned}
\text{stable spectral window}
&\Longrightarrow
\text{controlled truncated physical transport}\\
&\Longrightarrow
\text{margin-stable static incidence and support-level ICM}.
\end{aligned}
\]
The same finite-dimensional coordinates also give a mathematical form of low-disruption fine-tuning: scale perturbations with small relative-ratio cost are close to uniform layerwise scaling, and small SVD-frame perturbations require the left and right singular-vector rotations to remain coherent up to explicitly controlled relative-rotation error.  This provides a coordinate system in which one can test whether a particular adapter or low-rank adaptation update is low-disruption; no claim is made that those methods automatically satisfy the stated margins.

The experiments are organized by the quantities in the theorems; visual block structure is treated as a measurement of certificate variables rather than as proof of domain membership by itself.  Physical alignment matrices and block-energy heatmaps measure the finite matrices, row/column partitions, noise residuals, and pairwise margins that appear in the statements.  The displayed block-dominant patterns are consistent with the structural behavior predicted by the theory; a stronger finite-dimensional margin check additionally reports the active gaps, noise bounds, and pairwise margin values stated in the hypotheses.

\begin{table}[H]
\centering
\footnotesize
\renewcommand{\arraystretch}{1.22}
\setlength{\tabcolsep}{3.0pt}
\begin{tabularx}{\textwidth}{L{0.22\textwidth}L{0.38\textwidth}Y}
\toprule
\textbf{Object} & \textbf{Formal location} & \textbf{Role in a network layer}\\
\midrule
Dominant energy window & Effective-rank window $R_\eps$ in Definition~\ref{def:effective-rank}. & Selects the singular directions carrying a prescribed fraction of layer energy.\\
Physical transport matrices & Angular and energy-weighted variants in Definition~\ref{def:transport-variants}. & Describe how dominant output directions of one layer interact with input directions of the next.\\
Physical Alignment Matrix & Permuted finite transport matrix $\widehat M_{\mathrm{phy}}$ in Definition~\ref{def:physical-alignment-matrix}. & Displays physical channel-group incidence after a chosen row/column ordering.\\
Pairwise relational triple & $\widehat{\mathcal M}^{(i,j)}_{\mathrm{pair}}$ in Definition~\ref{def:pairwise-triple}. & Separates two channel groups into private support, shared support, and pairwise overlap.\\
Core/overlap/noise split & $M_{\mathrm{core}}+M_{\mathrm{overlap}}+M_{\mathrm{noise}}$ in Definition~\ref{def:global-core-overlap-noise}. & Decomposes static channel incidence into dedicated structure, controlled sharing, and residual unstructured mass.\\
ICM/SRS/Hub variables & Invariant Channel Mapping and support sets in Definition~\ref{def:icm}. & Provide a static channel-level anatomy: salient core rows, auxiliary rows, structural supports, receptive support sets, shared hubs, and noise channels.\\
Low-disruption fine-tuning coordinates & Scale ratios and SVD-frame rotations in Section~\ref{sec:fine-tuning-paths}. & Describe local adaptation as a small change in layer scales and singular-vector frames, rather than an uncontrolled re-wiring of channels.\\
\bottomrule
\end{tabularx}
\caption{Finite structural objects in the angular and static-channel part of GSA.  Each object is defined before its first formal use; this table records the intended meaning for readers from deep learning.}
\label{tab:partII-object-dictionary}
\end{table}

\begin{table}[H]
\centering
\footnotesize
\renewcommand{\arraystretch}{1.18}
\setlength{\tabcolsep}{3.0pt}
\begin{tabularx}{\textwidth}{L{0.25\textwidth}L{0.29\textwidth}Y}
\toprule
\textbf{Margin or hypothesis} & \textbf{Finite quantity} & \textbf{Structural conclusion it certifies}\\
\midrule
Cartan/interface budget & $\log\lambda_k$ and chart errors $e_k^{\mathrm{chart}}$ & Short motion of fitted spectral coordinates.\\
Rank-tail separation & $\mathfrak m_\eps(\alpha_k)$ & The same dominant effective-rank window is selected on both sides of an interface.\\
Truncation tail & $\mathcal E_{\mathrm{tr},k}(R_s,R_t)$ & The dominant-window physical transport approximates the full output-total transport.\\
Active-column gap & $\Gamma_i(\widehat M)$ & The selected active support $\mathcal C_i$ is unchanged under perturbation.\\
Pairwise exclusive-overlap gap & $m_{i,j}-3o_{i,j}$ & The one-third coherent-overlap condition remains valid for a pair.\\
Noise bound & $\|M_{\mathrm{noise}}\|_F$ & The core/overlap model is close to the measured transport in Frobenius norm.\\
Row/profile ICM gaps & $\Gamma_i^{\mathrm{SC}},\Gamma_i^{\mathrm{ST}},\Gamma_i^{\mathrm{SA}}$ & The optional SC/SA/ST labels of the ICM remain unchanged.\\
\bottomrule
\end{tabularx}
\caption{Assumption-to-conclusion map for the certificate theory.  The paper proves deterministic implications from the listed finite margins to stable incidence conclusions; empirical figures measure the matrices from which these margins are computed.}
\label{tab:assumption-conclusion-map}
\end{table}

\section{Spectral input and finite-dimensional notation}
\label{sec:spectral-inputs}
This section records the spectral notation used by the angular theory and proves the finite-dimensional estimates used later in the paper.  The arguments are included here so that the angular and static-channel theory can be read independently of the companion spectral article.

\subsection{Minimal matrix-geometric notation for angular transport}
\label{subsec:minimal-angular-notation}
All objects in this article are finite-dimensional.  The real general linear group is
\[
\GL(d):=\{A\in\R^{d\times d}:\det A\ne0\},
\]
and the orthogonal group is
\[
\Orth(d):=\{Q\in\R^{d\times d}:Q^\top Q=\Id\}.
\]
For a full-rank layer matrix $A\in\GL(d)$, the positive form $A^\top A$ belongs to
\[
\SPD(d):=\{P=P^\top:x^\top Px>0\text{ for every }x\ne0\}.
\]
The map $A\mapsto A^\top A$ removes the left orthogonal gauge, since $(QA)^\top(QA)=A^\top A$ for every $Q\in\Orth(d)$.  Thus singular values are quotient-radial data, while singular vectors are angular data depending on the chosen input and output frames.

For an SVD $W=U\Sigma V^\top$, the output frame $U$ and input frame $V$ determine subspaces in the physical channel coordinates.  The overlap between two $R$-dimensional orthonormal frames $X,Y\in\R^{d\times R}$ is measured by $X^\top Y$; the singular values of this matrix are the cosines of the principal angles between the two subspaces \cite{bjorck1973angles}.  The angular transport matrices below are weighted and physically realized versions of such subspace-overlap matrices.  In a neural layer, this means that they record how dominant output modes of one transformation are received by the input modes or physical output channels of the next transformation.

\begin{lemma}[Square spectral embedding]
\label{lem:square-embedding}
Let $W\in\R^{m\times n}$ and set $d=\max\{m,n\}$.  Define $\widetilde W\in\R^{d\times d}$ by
\begin{equation}\label{eq:square-padding-definition}
(\widetilde W)_{ab}:=
\begin{cases}
W_{ab}, & 1\le a\le m,\ 1\le b\le n,\\
0, & \text{otherwise}.
\end{cases}
\end{equation}
Then the singular values of $\widetilde W$ are the singular values of $W$ together with $|m-n|$ additional zeros.  Consequently,
\[
\|\widetilde W\|_2=\|W\|_2,
\qquad
\|\widetilde W\|_F=\|W\|_F.
\]
\end{lemma}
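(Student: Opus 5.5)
The plan is to compute the Gram matrix $\widetilde W^\top\widetilde W$ in block form and read off its eigenvalues. Write the padding as
\[
\widetilde W = \begin{pmatrix} W & 0\\ 0 & 0\end{pmatrix},
\]
where the zero blocks have sizes $m\times(d-n)$, $(d-m)\times n$ and $(d-m)\times(d-n)$. Only one of $d-m$ or $d-n$ is nonzero, but the block form handles both cases at once.

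First, I will multiply out the blocks to get
\[
\widetilde W^\top\widetilde W=\begin{pmatrix} W^\top W & 0\\ 0 & 0_{(d-n)\times(d-n)}\end{pmatrix}.
\]
This is block diagonal. Its eigenvalues are therefore the $n$ eigenvalues of $W^\top W$ together with $d-n$ zeros.

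Next, I convert this to a statement about the $\min\{m,n\}$ singular values of $W$. The eigenvalues of $W^\top W$ are the squares of those singular values together with $n-\min\{m,n\}$ extra zeros. Taking square roots, the $d$ singular values of $\widetilde W$ are the singular values of $W$ padded by
\[
(n-\min\{m,n\})+(d-n)=d-\min\{m,n\}=|m-n|
\]
zeros. This count is the main point requiring care, because it depends on the convention that $W$ has $\min\{m,n\}$ singular values; I will state that convention explicitly.

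An alternative route is to embed a thin SVD $W=U\Sigma V^\top$ into orthogonal $d\times d$ factors, extending $U$ and $V$ by zero-padding and orthonormal completion. This gives an explicit SVD of $\widetilde W$, and it could be used if an explicit frame is wanted later.

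Finally, the norm identities follow directly. The spectral norm is the largest singular value, and appending zeros does not change the maximum because singular values are nonnegative. For the Frobenius norm, $\|\widetilde W\|_F^2=\sum_{a,b}\widetilde W_{ab}^2=\sum_{a\le m,\,b\le n}W_{ab}^2=\|W\|_F^2$ by the definition \eqref{eq:square-padding-definition}. Equivalently, it equals the sum of squared singular values, which is unchanged by adding zeros. No step is a real obstacle; the only delicate point is the zero-count bookkeeping above.
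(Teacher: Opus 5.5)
Your proposal is correct and follows essentially the paper's argument: compute $\widetilde W^\top\widetilde W$ as $W^\top W$ padded by a zero block, count the extra zero eigenvalues to obtain $|m-n|$ additional zero singular values, and then read off the operator and Frobenius norms. The only differences are cosmetic. You treat $m\le n$ and $m>n$ together through the block form, whereas the paper splits into two cases; note that strictly \emph{at most} one of $d-m$, $d-n$ is nonzero, since both vanish when $m=n$. You also add a direct entrywise check of the Frobenius identity.
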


\begin{proof}
If $m\le n$, then $d=n$ and $\widetilde W$ is obtained by adding $n-m$ zero rows below $W$.  Therefore
\[
\widetilde W^\top \widetilde W=W^\top W.
\]
The matrix $W^\top W$ has dimension $n\times n$; its eigenvalues are the $m$ squared singular values of $W$ together with $n-m$ zeros.  Hence the $n$ singular values of the square matrix $\widetilde W$ are the singular values of $W$ together with $n-m=|m-n|$ additional zeros.

If $m>n$, then $d=m$ and $\widetilde W$ is obtained by adding $m-n$ zero columns to $W$.  Then
\[
\widetilde W^\top\widetilde W=
\begin{bmatrix}
W^\top W&0\\
0&0
\end{bmatrix},
\]
where the lower-right zero block has size $(m-n)\times(m-n)$.  Thus the eigenvalues of $\widetilde W^\top\widetilde W$ are those of $W^\top W$ together with $m-n$ additional zeros, and taking nonnegative square roots proves the singular-value statement.

The operator norm is the largest singular value and the Frobenius norm is the square root of the sum of squared singular values.  Since the only added singular values are zero, both norms are unchanged.
\end{proof}

\begin{definition}[Spectral dimension versus physical embedding]
\label{def:spectral-dimension}
For a rectangular layer $W\in\R^{m\times n}$, the \emph{physical display dimensions} are $(m,n)$, while the \emph{spectral fitting length} is
\[
 d_{\mathrm{sp}}(W):=\min\{m,n\}
\]
or a declared numerical-rank cutoff if zero or numerically negligible singular values are removed before fitting.  All Cartan coordinates, harmonic sums, power-law tails, and effective-rank margins in Section~2 are computed on this spectral list and not on the zero-padded ambient dimension of Lemma~\ref{lem:square-embedding}.  Thus the symbol $d$ in the spectral orbit statements below denotes $d_{\mathrm{sp}}$ unless a physical ambient dimension is explicitly stated.  The square embedding is used only to place transport matrices in compatible row/column coordinates; its padding zeros are not part of the exact or relative power-law orbit.
\end{definition}

\begin{definition}[Harmonic sums and the normalized power-law orbit]
\label{def:harmonic}
For $d\in\N$ and $s\in\R$, set
\[
\Harm_{d,s}:=\sum_{i=1}^d i^{-s}.
\]
A spectral list of length $d$ lies on the exact normalized power-law orbit with exponent $\alpha>0$ if its ordered singular values satisfy
\begin{equation}\label{eq:power-law}
\sigma_i(W)=C i^{-\alpha},\qquad i=1,\ldots,d,
\end{equation}
for some $C>0$, and if $\|W\|_F^2=d$.  For a matrix whose declared spectral list has this form, the corresponding canonical Cartan representative is
\begin{equation}\label{eq:canonical-gram}
G_d(\alpha)=\diag\!\left(\frac{d}{\Harm_{d,2\alpha}}i^{-2\alpha}\right)_{i=1}^d,
\end{equation}
and the radial coordinate is
\[
g_d(\alpha)=\log\sqrt{d/\Harm_{d,2\alpha}}.
\]
For a compact interval $I=[\alpha_{\min},\alpha_{\max}]\subset(0,\infty)$ define
\[
m_d(I):=\min_{\alpha\in I}g_d'(\alpha),
\qquad
 g_d'(\alpha)=\frac{\sum_{i=1}^d(\log i)i^{-2\alpha}}{\Harm_{d,2\alpha}}.
\]
For $d\ge2$, $m_d(I)>0$.
\end{definition}

\begin{assumption}[Exact power-law and Frobenius normalization]
\label{ass:power-law}
A layer matrix satisfies the exact spectral orbit hypothesis if its spectral fitting list of length $d=d_{\mathrm{sp}}(W)$ has singular values of the form \eqref{eq:power-law} and satisfies $\|W\|_F^2=d$ after the chosen layerwise normalization.  For rectangular layers this assumption is imposed before square padding, so appended zero singular values are excluded from the power-law list.
\end{assumption}

\begin{assumption}[Layerwise Frobenius normalization]
\label{ass:frob-norm}
For a chain $(W_k)_{k=0}^{L-1}$ with common spectral fitting length $d$ on the interfaces under comparison, the layerwise normalization assumption is
\[
\|W_k\|_F^2=d,
\qquad k=0,\ldots,L-1.
\]
If adjacent rectangular layers have different fitting lengths, the statements below are applied after choosing a common spectral window or after rescaling each layer by its own declared spectral length; the physical padding dimension is not used in this normalization.
\end{assumption}

\begin{proposition}[Orbit membership and radial coordinate]
\label{prop:cartan-membership}
Under Assumption~\ref{ass:power-law}, the top singular value obeys
\[
\|W\|_2=C=\sqrt{\frac{d}{\Harm_{d,2\alpha}}},
\]
and hence
\[
\log\|W\|_2=g_d(\alpha).
\]
Moreover, the ordered eigenvalue representative of $W^\top W$ is exactly $G_d(\alpha)$.
\end{proposition}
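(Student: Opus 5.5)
The argument is a direct computation from Assumption~\ref{ass:power-law}, in three steps: pin down $C$ from the Frobenius normalization, read off the top singular value, and then identify the eigenvalue list of $W^\top W$.

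\textbf{Determining $C$.} The Frobenius norm is the $\ell^2$ norm of the singular-value list. Under Assumption~\ref{ass:power-law} and the convention of Definition~\ref{def:spectral-dimension}, padding zeros are excluded (and by Lemma~\ref{lem:square-embedding} they would not change the norm anyway). Hence
\[
d=\|W\|_F^2=\sum_{i=1}^d C^2 i^{-2\alpha}=C^2\Harm_{d,2\alpha}.
\]
Since $C>0$ by hypothesis and $\Harm_{d,2\alpha}\ge 1>0$, we may take the positive square root and get $C=\sqrt{d/\Harm_{d,2\alpha}}$.

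\textbf{Top singular value and radial coordinate.} Because $\alpha>0$, the sequence $i\mapsto Ci^{-\alpha}$ is strictly decreasing. The largest singular value is therefore $\sigma_1=C\cdot 1^{-\alpha}=C$, and the operator norm is the largest singular value, so $\|W\|_2=C$. Taking logarithms gives $\log\|W\|_2=\log\sqrt{d/\Harm_{d,2\alpha}}=g_d(\alpha)$, which is exactly the definition of $g_d$ in Definition~\ref{def:harmonic}.

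\textbf{Eigenvalue representative of $W^\top W$.} The eigenvalues of $W^\top W$ on the spectral list are the squared singular values
\[
\sigma_i^2=C^2 i^{-2\alpha}=\frac{d}{\Harm_{d,2\alpha}}\,i^{-2\alpha}.
\]
These are listed in decreasing order, which matches the diagonal of $G_d(\alpha)$ in \eqref{eq:canonical-gram} entry by entry. For rectangular $W$, $W^\top W$ may carry extra zero eigenvalues; by Definition~\ref{def:spectral-dimension} these belong to the padding and are excluded from the spectral representative. Explicitly, with an SVD $W=U\Sigma V^\top$ we have $W^\top W=V\Sigma^2V^\top$, whose ordered diagonalization restricted to the spectral window is $G_d(\alpha)$.

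The only point that needs care is this last bookkeeping: the phrase ``ordered eigenvalue representative'' must refer to the length-$d_{\mathrm{sp}}$ list and not to the ambient dimension. Once that convention is fixed, the whole proof is elementary algebra.
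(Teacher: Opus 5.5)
Your proposal is correct and follows essentially the same computation as the paper: you fix $C$ from $\|W\|_F^2=C^2\Harm_{d,2\alpha}=d$, identify $\|W\|_2=\sigma_1(W)=C$, and substitute $C^2=d/\Harm_{d,2\alpha}$ into $\sigma_i(W)^2$ to recover $G_d(\alpha)$. Your extra remark that padding zeros of $W^\top W$ for rectangular $W$ are excluded from the spectral representative is a harmless clarification that the paper leaves implicit through Definition~\ref{def:spectral-dimension}.
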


\begin{proof}
Using \eqref{eq:power-law},
\[
\|W\|_F^2=\sum_{i=1}^d\sigma_i(W)^2
=\sum_{i=1}^d C^2 i^{-2\alpha}=C^2\Harm_{d,2\alpha}.
\]
The normalization $\|W\|_F^2=d$ therefore gives
\[
C^2=\frac{d}{\Harm_{d,2\alpha}},
\qquad
C=\sqrt{\frac{d}{\Harm_{d,2\alpha}}},
\]
since $C>0$.  The operator norm is the largest singular value, which is $\sigma_1(W)=C$.  The eigenvalues of $W^\top W$ are $\sigma_i(W)^2=C^2i^{-2\alpha}$, and substituting the expression for $C^2$ gives exactly \eqref{eq:canonical-gram}.
\end{proof}

\begin{lemma}[Approximate orbit projection error]
\label{lem:robust-lognorm}
Let $0\le\delta_{\mathrm{pl}}<1$.  Suppose $W$ satisfies $\|W\|_F^2=d$ and there exist $C>0$ and $\alpha>0$ such that
\[
(1-\delta_{\mathrm{pl}})Ci^{-\alpha}
\le \sigma_i(W)\le
(1+\delta_{\mathrm{pl}})Ci^{-\alpha},
\qquad i=1,\ldots,d.
\]
Then
\begin{equation}\label{eq:robust-lognorm-II}
\bigl|\log\|W\|_2-g_d(\alpha)\bigr|
\le
\eta(\delta_{\mathrm{pl}})
:=\log\frac{1+\delta_{\mathrm{pl}}}{1-\delta_{\mathrm{pl}}}.
\end{equation}
\end{lemma}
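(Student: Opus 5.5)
The plan is to compare the true spectral norm $\|W\|_2=\sigma_1(W)$ with the exact orbit value $e^{g_d(\alpha)}=\sqrt{d/\Harm_{d,2\alpha}}$ by sandwiching both the top singular value and the Frobenius normalization between the same power-law envelopes. The key step is to pin down the unknown constant $C$ using $\|W\|_F^2=d$, in the same way Proposition~\ref{prop:cartan-membership} computes $C$ in the exact case.

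\textbf{Step 1 (constraint on $C$).} Square the two-sided hypothesis and sum over $i=1,\ldots,d$. Using $\|W\|_F^2=\sum_i\sigma_i(W)^2=d$, this gives
\[
(1-\delta_{\mathrm{pl}})^2C^2\Harm_{d,2\alpha}\le d\le(1+\delta_{\mathrm{pl}})^2C^2\Harm_{d,2\alpha}.
\]
Writing $C_\ast:=\sqrt{d/\Harm_{d,2\alpha}}=e^{g_d(\alpha)}$, this becomes
\[
\frac{C_\ast}{1+\delta_{\mathrm{pl}}}\le C\le\frac{C_\ast}{1-\delta_{\mathrm{pl}}}.
\]
Here $\delta_{\mathrm{pl}}<1$ guarantees that the lower factor is positive.

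\textbf{Step 2 (top singular value).} Take $i=1$ in the hypothesis: $(1-\delta_{\mathrm{pl}})C\le\|W\|_2\le(1+\delta_{\mathrm{pl}})C$. Combining this with Step 1 gives
\[
\frac{1-\delta_{\mathrm{pl}}}{1+\delta_{\mathrm{pl}}}\,C_\ast\le\|W\|_2\le\frac{1+\delta_{\mathrm{pl}}}{1-\delta_{\mathrm{pl}}}\,C_\ast .
\]

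\textbf{Step 3 (logarithm).} Taking logarithms, and noting that $\log\frac{1-\delta_{\mathrm{pl}}}{1+\delta_{\mathrm{pl}}}=-\eta(\delta_{\mathrm{pl}})$, yields \eqref{eq:robust-lognorm-II}.

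No step is genuinely hard. The only point requiring care is Step 1: the constant $C$ is not determined by the hypothesis, so it must be eliminated through the Frobenius normalization rather than treated as equal to $C_\ast$. Doing so is what doubles the error factor to $\frac{1+\delta}{1-\delta}$ instead of a single factor $1\pm\delta$.

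A further detail is that $d$ here is the spectral fitting length of Definition~\ref{def:spectral-dimension}. With that convention, the sum of the squared singular values over the list equals $\|W\|_F^2$, which is the identity Step 1 relies on.
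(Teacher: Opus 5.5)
Your proposal is correct and follows the paper's proof. Both arguments square and sum the two-sided envelope and use $\|W\|_F^2=d$ to confine $C$ to $[C_\ast/(1+\delta_{\mathrm{pl}}),\,C_\ast/(1-\delta_{\mathrm{pl}})]$. They then combine the $i=1$ envelope for $\|W\|_2$ with the matching bound on $C$ and take logarithms, which yields the factor $\frac{1+\delta_{\mathrm{pl}}}{1-\delta_{\mathrm{pl}}}$ exactly as you describe.
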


\begin{proof}
Squaring the two-sided singular-value inequalities and summing over $i$ gives
\[
(1-\delta_{\mathrm{pl}})^2C^2\Harm_{d,2\alpha}
\le
\sum_{i=1}^d\sigma_i(W)^2
\le
(1+\delta_{\mathrm{pl}})^2C^2\Harm_{d,2\alpha}.
\]
Since $\sum_i\sigma_i(W)^2=\|W\|_F^2=d$, this implies
\[
\frac{\sqrt d}{(1+\delta_{\mathrm{pl}})\sqrt{\Harm_{d,2\alpha}}}
\le
C
\le
\frac{\sqrt d}{(1-\delta_{\mathrm{pl}})\sqrt{\Harm_{d,2\alpha}}}.
\]
For the top singular value, the same relative fit gives
\[
(1-\delta_{\mathrm{pl}})C\le\|W\|_2\le(1+\delta_{\mathrm{pl}})C.
\]
Combining the lower bound for $\|W\|_2$ with the lower bound for $C$, and the upper bound for $\|W\|_2$ with the upper bound for $C$, yields
\[
\sqrt{\frac d{\Harm_{d,2\alpha}}}\frac{1-\delta_{\mathrm{pl}}}{1+\delta_{\mathrm{pl}}}
\le
\|W\|_2
\le
\sqrt{\frac d{\Harm_{d,2\alpha}}}\frac{1+\delta_{\mathrm{pl}}}{1-\delta_{\mathrm{pl}}}.
\]
Taking logarithms and recalling $g_d(\alpha)=\log\sqrt{d/\Harm_{d,2\alpha}}$ proves \eqref{eq:robust-lognorm-II}.
\end{proof}

\begin{definition}[Layerwise Cartan chart error]
\label{def:chart-error}
For a fitted power-law coordinate $\alpha_k$ at layer $k$, define the layerwise Cartan chart error by
\[
e_k^{\mathrm{chart}}
:=
\bigl|\log\|W_k\|_2-g_d(\alpha_k)\bigr|.
\]
In the exact normalized power-law case, $e_k^{\mathrm{chart}}=0$.  Under the relative $\delta_{\mathrm{pl},k}$ power-law fit of Lemma~\ref{lem:robust-lognorm}, one may take
\[
e_k^{\mathrm{chart}}\le \eta(\delta_{\mathrm{pl},k})
:=\log\frac{1+\delta_{\mathrm{pl},k}}{1-\delta_{\mathrm{pl},k}}.
\]
The notation separates the deterministic spectral-coordinate estimate from the statistical or numerical procedure used to fit $\alpha_k$.
\end{definition}

\begin{definition}[Interface amplification and non-backtracking]
\label{def:interface-amplification-II}
For nonzero matrices $A,B$, define
\[
\Lambda(A,B):=\frac{\|AB\|_2}{\sqrt{\|A\|_2\|B\|_2}}.
\]
For a chain write $\lambda_k:=\Lambda(W_{k+1},W_k)$.  The interface is non-backtracking if
\[
\|W_{k+1}W_k\|_2\ge \max\{\|W_{k+1}\|_2,\|W_k\|_2\}.
\]
\end{definition}

\begin{remark}[Geometric-mean normalization of the interface budget]
The quantity $\Lambda(A,B)$ is intentionally normalized by the geometric mean of the two adjacent operator norms.  Thus, with $a=\|A\|_2$, $b=\|B\|_2$, and $p=\|AB\|_2$, one has $p=\Lambda(A,B)\sqrt{ab}$.  Under the non-backtracking condition $p\ge\max\{a,b\}$, this normalization gives $\Lambda(A,B)\ge1$ and allows $\log\Lambda(A,B)$ to serve as a nonnegative local budget for changes in the Cartan radial coordinate.  This is not the usual submultiplicative efficiency ratio $\|AB\|_2/(\|A\|_2\|B\|_2)$.
\end{remark}

\begin{theorem}[Cartan coordinate rigidity for normalized power-law chains]
\label{thm:cartan-rigidity}
Let $(W_k)_{k=0}^{L-1}$ satisfy Assumption~\ref{ass:frob-norm} and the exact power-law orbit hypothesis with exponents $\alpha_k\in I=[\alpha_{\min},\alpha_{\max}]\subset(0,\infty)$.  Assume every interface is non-backtracking and $\lambda_k=\Lambda(W_{k+1},W_k)\ge1$.  Then
\begin{equation}\label{eq:cartan-rigidity-TV-II}
\sum_{k=0}^{L-2}|\alpha_{k+1}-\alpha_k|
\le
\frac{2}{m_d(I)}\sum_{k=0}^{L-2}\log\lambda_k.
\end{equation}
In particular, if $\lambda_k\le M^{2/L}$ for all $k$, then
\[
\max_{0\le k\le L-2}|\alpha_{k+1}-\alpha_k|
\le
\frac{4\log M}{L\,m_d(I)}.
\]
\end{theorem}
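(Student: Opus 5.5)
The plan is to turn each interface budget into a bound on the jump of the radial coordinate $g_d$, and then invert $g_d$ on $I$ by the mean value theorem.

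\textbf{Step 1 (interface bound on the radial coordinate).} Fix $k$ and write $a=\|W_k\|_2$, $b=\|W_{k+1}\|_2$, $p=\|W_{k+1}W_k\|_2$. By Proposition~\ref{prop:cartan-membership}, $\log a=g_d(\alpha_k)$ and $\log b=g_d(\alpha_{k+1})$.

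Non-backtracking gives $p\ge\max\{a,b\}$. Submultiplicativity gives $p\le ab$, although this upper bound turns out not to be needed.

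By Definition~\ref{def:interface-amplification-II}, $\log\lambda_k=\log p-\tfrac12(\log a+\log b)$. Using $\log p\ge\max\{\log a,\log b\}$,
\[
\log\lambda_k\ \ge\ \max\{\log a,\log b\}-\tfrac12(\log a+\log b)=\tfrac12\,|\log a-\log b|.
\]
Hence
\[
|g_d(\alpha_{k+1})-g_d(\alpha_k)|\le 2\log\lambda_k .
\]
This step is elementary and is where the factor $2$ comes from.

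\textbf{Step 2 (inverting the chart).} Since $I$ is an interval, the segment between $\alpha_k$ and $\alpha_{k+1}$ lies in $I$. The function $g_d$ is smooth on $(0,\infty)$, with $g_d'(\alpha)\ge m_d(I)>0$ on $I$. The mean value theorem then gives
\[
|g_d(\alpha_{k+1})-g_d(\alpha_k)|\ge m_d(I)\,|\alpha_{k+1}-\alpha_k| .
\]
Combining this with Step 1 gives
\[
|\alpha_{k+1}-\alpha_k|\le \frac{2}{m_d(I)}\log\lambda_k .
\]
Summing over $k=0,\dots,L-2$ yields \eqref{eq:cartan-rigidity-TV-II}.

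For the uniform version, $\lambda_k\le M^{2/L}$ gives $\log\lambda_k\le 2\log M/L$. The per-interface bound is then $4\log M/(L\,m_d(I))$, and taking the maximum over $k$ gives the second claim.

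\textbf{Main obstacle: positivity of $m_d(I)$.} The only step that needs real care is $m_d(I)>0$, together with the formula for $g_d'$; Definition~\ref{def:harmonic} asserts both, but I would verify them.

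Write $g_d(\alpha)=\tfrac12\log d-\tfrac12\log\Harm_{d,2\alpha}$. Differentiating,
\[
\frac{d}{d\alpha}\Harm_{d,2\alpha}=-2\sum_{i=1}^d(\log i)\,i^{-2\alpha},
\]
so $g_d'$ equals the displayed ratio. For $d\ge2$ the numerator contains the strictly positive term $(\log 2)\,2^{-2\alpha}$, and every other term is nonnegative, so $g_d'>0$ pointwise. Continuity of $g_d'$ on the compact interval $I$ then makes the minimum $m_d(I)$ attained, hence positive.

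If $d=1$ the statement is vacuous in a degenerate way and I would exclude it. One should also note that the exact orbit hypothesis is what makes $e_k^{\mathrm{chart}}=0$ here. In the approximate case, each chart error would add $e_k^{\mathrm{chart}}+e_{k+1}^{\mathrm{chart}}$ to the bound in Step 1, but that refinement is not needed for this theorem.
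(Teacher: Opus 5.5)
Your proposal is correct and follows the paper's proof essentially step for step. Non-backtracking turns the geometric-mean normalization into $|\log\|W_{k+1}\|_2-\log\|W_k\|_2|\le 2\log\lambda_k$; Proposition~\ref{prop:cartan-membership} identifies these log-norms with $g_d(\alpha_{k+1})$ and $g_d(\alpha_k)$; and the mean value theorem with $g_d'\ge m_d(I)$ on $I$ finishes the argument. Your log-additive form of Step 1 is equivalent to the paper's ratio form, and your explicit check that $m_d(I)>0$ for $d\ge2$ (which Definition~\ref{def:harmonic} only asserts) is a harmless addition.
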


\begin{proof}
Fix an interface $k$ and set
\[
a=\|W_{k+1}\|_2,
\qquad
b=\|W_k\|_2,
\qquad
p=\|W_{k+1}W_k\|_2.
\]
By definition, $p=\lambda_k\sqrt{ab}$.  Non-backtracking gives $p\ge\max\{a,b\}$.  If $a\ge b$, then
\[
a\le p=\lambda_k\sqrt{ab}.
\]
Dividing by $\sqrt{ab}>0$ gives $\sqrt{a/b}\le\lambda_k$, hence $a/b\le\lambda_k^2$.  If $b\ge a$, the same argument with $a$ and $b$ interchanged gives $b/a\le\lambda_k^2$, equivalently $a/b\ge\lambda_k^{-2}$.  Thus
\[
\left|\log a-\log b\right|\le 2\log\lambda_k.
\]
By Proposition~\ref{prop:cartan-membership}, $\log a=g_d(\alpha_{k+1})$ and $\log b=g_d(\alpha_k)$.  Hence
\[
|g_d(\alpha_{k+1})-g_d(\alpha_k)|\le2\log\lambda_k.
\]
The mean value theorem gives a point $\xi_k$ between $\alpha_k$ and $\alpha_{k+1}$ such that
\[
g_d(\alpha_{k+1})-g_d(\alpha_k)=g_d'(\xi_k)(\alpha_{k+1}-\alpha_k).
\]
Because all exponents lie in $I$ and $g_d'(\xi_k)\ge m_d(I)$,
\[
|\alpha_{k+1}-\alpha_k|\le\frac{2\log\lambda_k}{m_d(I)}.
\]
Summing this inequality over $k=0,\ldots,L-2$ proves \eqref{eq:cartan-rigidity-TV-II}.  Under $\lambda_k\le M^{2/L}$, the local estimate gives
\[
|\alpha_{k+1}-\alpha_k|\le\frac{2(2\log M/L)}{m_d(I)}=\frac{4\log M}{L\,m_d(I)},
\]
and taking the maximum over $k$ proves the final claim.
\end{proof}

\begin{remark}[Local interface budget]
\label{rem:local-interface-budget}
The uniform condition $\lambda_k\le M^{2/L}$ used in the last part of Theorem~\ref{thm:cartan-rigidity} is a local interface-budget assumption. It is not a consequence of submultiplicativity of the full Jacobian norm alone. In applications it must either be measured at the relevant interfaces or supplied by a separate residual-cocycle estimate, such as the companion spectral theory.
\end{remark}

\begin{theorem}[Robust Cartan coordinate rigidity with layerwise chart errors]
\label{thm:robust-cartan-rigidity}
Assume the hypotheses of Theorem~\ref{thm:cartan-rigidity}, except that exact power-law membership is replaced by fitted coordinates $\alpha_k\in I$ with chart errors $e_k^{\mathrm{chart}}$ in Definition~\ref{def:chart-error}.  Then
\begin{equation}\label{eq:robust-cartan-TV-II}
\sum_{k=0}^{L-2}|\alpha_{k+1}-\alpha_k|
\le
\frac{1}{m_d(I)}
\sum_{k=0}^{L-2}
\bigl(2\log\lambda_k+e_{k+1}^{\mathrm{chart}}+e_k^{\mathrm{chart}}\bigr).
\end{equation}
In particular, if $e_k^{\mathrm{chart}}\le\bar e_{\mathrm{chart}}$ for all $k$ and $\lambda_k\le M^{2/L}$ for all $k$, then
\[
\max_k|\alpha_{k+1}-\alpha_k|
\le
\frac{4\log M/L+2\bar e_{\mathrm{chart}}}{m_d(I)}.
\]
If all layers satisfy a common relative $\delta_{\mathrm{pl}}$ fit, one may take $\bar e_{\mathrm{chart}}=\eta(\delta_{\mathrm{pl}})$.
\end{theorem}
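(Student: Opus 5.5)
The argument follows the proof of Theorem~\ref{thm:cartan-rigidity} line by line. The only change is that the exact identity $\log\|W_k\|_2=g_d(\alpha_k)$ from Proposition~\ref{prop:cartan-membership} becomes an inequality controlled by the chart errors of Definition~\ref{def:chart-error}.

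Fix an interface $k$ and set $a=\|W_{k+1}\|_2$, $b=\|W_k\|_2$, $p=\|W_{k+1}W_k\|_2=\lambda_k\sqrt{ab}$. The first step of the earlier proof uses only non-backtracking and the definition of $\Lambda$, never the power-law hypothesis. It therefore transfers verbatim and gives
\[
|\log a-\log b|\le 2\log\lambda_k .
\]

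Next, insert the fitted radial coordinates with the triangle inequality:
\[
|g_d(\alpha_{k+1})-g_d(\alpha_k)|
\le |g_d(\alpha_{k+1})-\log a|+|\log a-\log b|+|\log b-g_d(\alpha_k)|.
\]
The first and last terms are exactly $e_{k+1}^{\mathrm{chart}}$ and $e_k^{\mathrm{chart}}$ by definition. This yields
\[
|g_d(\alpha_{k+1})-g_d(\alpha_k)|\le 2\log\lambda_k+e_{k+1}^{\mathrm{chart}}+e_k^{\mathrm{chart}}.
\]

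Since both $\alpha_k$ and $\alpha_{k+1}$ lie in the interval $I$, the mean value theorem applies as before. It gives a point $\xi_k\in I$ with $g_d'(\xi_k)\ge m_d(I)>0$; positivity of $m_d(I)$ requires $d\ge2$, as recorded in Definition~\ref{def:harmonic}. Dividing by $m_d(I)$ gives the per-interface bound
\[
|\alpha_{k+1}-\alpha_k|\le \frac{2\log\lambda_k+e_{k+1}^{\mathrm{chart}}+e_k^{\mathrm{chart}}}{m_d(I)}.
\]
Summing over $k=0,\ldots,L-2$ gives \eqref{eq:robust-cartan-TV-II}.

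For the uniform consequence, substitute $\log\lambda_k\le 2\log M/L$ and $e_k^{\mathrm{chart}}\le\bar e_{\mathrm{chart}}$ into the per-interface bound, then take the maximum over $k$. Under a common relative $\delta_{\mathrm{pl}}$ fit, Lemma~\ref{lem:robust-lognorm} supplies $e_k^{\mathrm{chart}}\le\eta(\delta_{\mathrm{pl}})$, so one may take $\bar e_{\mathrm{chart}}=\eta(\delta_{\mathrm{pl}})$.

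No step is a real obstacle. The one point needing care is to confirm that the non-backtracking step in the proof of Theorem~\ref{thm:cartan-rigidity} was carried out on the actual norms $a,b$ and not on $e^{g_d(\alpha_k)}$. It was, so the chart errors enter only additively, through the triangle inequality.
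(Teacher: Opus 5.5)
Your proposal is correct and follows the paper's proof essentially step for step. You reuse the non-backtracking estimate $|\log\|W_{k+1}\|_2-\log\|W_k\|_2|\le 2\log\lambda_k$, insert the chart errors $e_k^{\mathrm{chart}},e_{k+1}^{\mathrm{chart}}$ by the triangle inequality, apply the mean value theorem with $g_d'\ge m_d(I)$ on $I$, sum over interfaces, and obtain the uniform bound and the choice $\bar e_{\mathrm{chart}}=\eta(\delta_{\mathrm{pl}})$ via Lemma~\ref{lem:robust-lognorm}, exactly as the paper does.
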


\begin{proof}
By the first part of the proof of Theorem~\ref{thm:cartan-rigidity}, non-backtracking and the definition of $\lambda_k$ imply
\[
|\log\|W_{k+1}\|_2-\log\|W_k\|_2|\le2\log\lambda_k.
\]
Definition~\ref{def:chart-error} gives
\[
|\log\|W_j\|_2-g_d(\alpha_j)|\le e_j^{\mathrm{chart}},
\qquad j=k,k+1.
\]
Using the triangle inequality,
\begin{align*}
|g_d(\alpha_{k+1})-g_d(\alpha_k)|
&\le |g_d(\alpha_{k+1})-\log\|W_{k+1}\|_2|\\
&\quad +|\log\|W_{k+1}\|_2-\log\|W_k\|_2|\\
&\quad +|\log\|W_k\|_2-g_d(\alpha_k)|\\
&\le 2\log\lambda_k+e_{k+1}^{\mathrm{chart}}+e_k^{\mathrm{chart}}.
\end{align*}
Applying the mean value theorem to $g_d$ and using $g_d'\ge m_d(I)$ on $I$ gives
\[
|\alpha_{k+1}-\alpha_k|
\le
\frac{2\log\lambda_k+e_{k+1}^{\mathrm{chart}}+e_k^{\mathrm{chart}}}{m_d(I)}.
\]
Summing over interfaces proves \eqref{eq:robust-cartan-TV-II}.  The uniform local bound follows from $\log\lambda_k\le2\log M/L$ and $e_j^{\mathrm{chart}}\le\bar e_{\mathrm{chart}}$.
\end{proof}

\subsection{Spectral tail geometry and compressibility}
\label{sec:compressibility}

The compressibility part of the theory is best formulated as a spectral-tail quantity rather than only as a rank statistic but as a tail problem for a probability measure on the rank set.
Along the Cartan power-law orbit, this measure is exactly the Gibbs family already introduced above.
The effective rank is therefore an energy-truncation quantile of a spectral tail measure.

\begin{definition}[Spectral energy measure and truncation rank]
\label{def:effective-rank}
Let $W$ have a declared spectral fitting list of length $d=d_{\mathrm{sp}}(W)$ with singular values $\sigma_1(W)\ge\cdots\ge\sigma_d(W)\ge 0$, excluding any square-embedding padding zeros unless they are part of the declared numerical spectrum.
Define the spectral energy measure
\[
\mu_W:=\sum_{i=1}^d \frac{\sigma_i(W)^2}{\|W\|_F^2}\,\delta_i,
\]
a probability measure on $\{1,\dots,d\}$.
For $0<\eps<1$, define the truncation rank
\[
R_\eps(W):=
\min\Bigl\{r\in\{1,\dots,d\}:\mu_W(\{1,\dots,r\})\ge 1-\eps\Bigr\}.
\]
Equivalently,
\[
R_\eps(W)=
\min\left\{r:\sum_{i=1}^r\sigma_i(W)^2\ge (1-\eps)\sum_{i=1}^d \sigma_i(W)^2\right\}.
\]
\end{definition}

\begin{remark}[Geometric interpretation]
The quantity $R_\eps(W)$ is the smallest spectral truncation at which the projected Gram point $W^\top W$ retains at least $(1-\eps)$ of its total energy.
It is therefore a quantile of a spectral measure rather than a combinatorial surrogate for rank.
For rectangular layers, this rank refers to the spectral fitting list in Definition~\ref{def:spectral-dimension}, while the resulting singular vectors may still be embedded into physical display coordinates when transport matrices are formed.
\end{remark}

\begin{definition}[Power-law tail measure on the Gibbs--Cartan orbit]
\label{def:tail-measure}
Under the exact power-law model $\sigma_i(W)=Ci^{-\alpha}$, define
\[
\nu_\alpha:=\sum_{i=1}^d \frac{i^{-2\alpha}}{\Harm_{d,2\alpha}}\,\delta_i.
\]
Then $\mu_W=\nu_\alpha$.
For $r\in\{0,1,\dots,d\}$, define the tail function
\[
\tau_\alpha(r):=\nu_\alpha(\{r+1,\dots,d\})
=
\frac{\Harm_{d,2\alpha}-\Harm_{r,2\alpha}}{\Harm_{d,2\alpha}},
\]
with the convention $\Harm_{0,s}:=0$; thus $\tau_\alpha(0)=1$ and $\tau_\alpha(d)=0$.
\end{definition}

\begin{lemma}[Integral bounds for power-law tails]
\label{lem:integral-test}
Let $s>0$ and $1\le r<d$. Then
\begin{equation}\label{eq:integral-test}
\int_{r+1}^{d+1} x^{-s}\,dx
\le
\sum_{i=r+1}^{d} i^{-s}
\le
\int_{r}^{d} x^{-s}\,dx.
\end{equation}
If $s>1$, then
\begin{equation}\label{eq:tail-upper}
\sum_{i=r+1}^{d} i^{-s}
\le
\int_{r}^{\infty} x^{-s}\,dx
=
\frac{r^{1-s}}{s-1}.
\end{equation}
\end{lemma}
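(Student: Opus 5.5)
The plan is the standard integral comparison for a monotone function. The only property of $f(x)=x^{-s}$ we use is that it is strictly decreasing on $(0,\infty)$ when $s>0$. For each integer $i\ge 1$ this gives
\[
\int_i^{i+1} x^{-s}\,dx\le i^{-s}\le \int_{i-1}^{i} x^{-s}\,dx ,
\]
where the right inequality is used only for $i\ge 2$. Indeed, on $[i,i+1]$ we have $x^{-s}\le i^{-s}$, and on $[i-1,i]$ we have $x^{-s}\ge i^{-s}$.

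\textbf{Lower bound.} Sum the left inequality over $i=r+1,\dots,d$. The integrals telescope to $\int_{r+1}^{d+1}x^{-s}\,dx$, which is the lower bound in \eqref{eq:integral-test}.

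\textbf{Upper bound.} Sum the right inequality over the same range. Since $r\ge1$, every index satisfies $i\ge r+1\ge2$, so the intervals $[i-1,i]$ lie in $(0,\infty)$ and avoid the singularity at $0$. Their union is exactly $[r,d]$, which gives the upper bound. The hypothesis $r<d$ guarantees the sum is nonempty, though the inequalities would hold trivially otherwise.

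\textbf{The case $s>1$.} Since the integrand is positive, $\int_r^d x^{-s}\,dx\le\int_r^\infty x^{-s}\,dx$. For $s>1$ the improper integral converges, with antiderivative $x^{1-s}/(1-s)$ tending to $0$ as $x\to\infty$. Hence
\[
\int_r^\infty x^{-s}\,dx=\frac{r^{1-s}}{s-1},
\]
which proves \eqref{eq:tail-upper}.

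There is no real obstacle. The only point needing care is to note that $r\ge1$ keeps the upper comparison intervals away from $x=0$, and that convergence of the improper integral uses $s>1$.
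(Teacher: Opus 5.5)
Your proposal is correct and follows the paper's proof essentially verbatim: the monotone comparison $\int_i^{i+1}x^{-s}\,dx\le i^{-s}\le\int_{i-1}^{i}x^{-s}\,dx$ is summed over $i=r+1,\dots,d$, and the finite upper integral is then enlarged to the convergent improper integral when $s>1$. Your remark that $i\ge r+1\ge 2$ keeps the comparison intervals away from $x=0$ is a small improvement. The paper states the comparison for every $i\ge1$ but asserts monotonicity only on $[1,\infty)$.
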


\begin{proof}
Let $f(x):=x^{-s}$.  For $s>0$, $f$ is positive and decreasing on $[1,\infty)$.  For every integer $i\ge 1$, monotonicity gives
\[
\int_i^{i+1} f(x)\,dx\le f(i)\le \int_{i-1}^{i}f(x)\,dx.
\]
Apply the left inequality with $i=r+1,\ldots,d$ and sum to obtain
\[
\int_{r+1}^{d+1}x^{-s}\,dx
=\sum_{i=r+1}^{d}\int_i^{i+1}x^{-s}\,dx
\le\sum_{i=r+1}^{d}i^{-s}.
\]
Apply the right inequality with the same indices and sum to obtain
\[
\sum_{i=r+1}^{d}i^{-s}
\le\sum_{i=r+1}^{d}\int_{i-1}^{i}x^{-s}\,dx
=\int_r^d x^{-s}\,dx.
\]
This proves \eqref{eq:integral-test}.  If $s>1$, the improper integral converges, and the already established upper bound gives
\[
\sum_{i=r+1}^{d}i^{-s}
\le\int_r^d x^{-s}\,dx
\le\int_r^\infty x^{-s}\,dx
=\left[\frac{x^{1-s}}{1-s}\right]_{x=r}^{\infty}
=\frac{r^{1-s}}{s-1}.
\]
This is \eqref{eq:tail-upper}.
\end{proof}

\begin{theorem}[Energy truncation on the Gibbs--Cartan tail]
\label{thm:effective-rank}
Let $W\in\R^{d\times d}$ satisfy the exact power-law model
\[
\sigma_i(W)=C\,i^{-\alpha},\qquad \alpha>\frac12.
\]
Then $\mu_W=\nu_\alpha$, and the truncation rank from Definition~\ref{def:effective-rank} is
\[
R_\eps(W)=\min\{r:\tau_\alpha(r)\le \eps\}.
\]
Moreover, for every $r\in\{1,\dots,d-1\}$,
\begin{equation}\label{eq:tail-condition}
\Harm_{d,2\alpha}-\Harm_{r,2\alpha}
\le
\frac{r^{1-2\alpha}}{2\alpha-1},
\end{equation}
hence
\begin{equation}\label{eq:Reps-upper}
R_\eps(W)
\le
\min\left\{
 d,\;
\left\lceil
\left(\frac{1}{(2\alpha-1)\,\eps\,\Harm_{d,2\alpha}}\right)^{\frac{1}{2\alpha-1}}
\right\rceil
\right\}.
\end{equation}
Conversely, if $r\in\{1,\dots,d-1\}$ satisfies
\begin{equation}\label{eq:Reps-lower-cond}
\frac{(r+1)^{1-2\alpha}-(d+1)^{1-2\alpha}}{2\alpha-1}
\ge
\eps\,\Harm_{d,2\alpha},
\end{equation}
then
\[
R_\eps(W)>r.
\]
\end{theorem}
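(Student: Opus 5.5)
The plan is to work directly from Definitions~\ref{def:effective-rank} and \ref{def:tail-measure} and reduce every claim to Lemma~\ref{lem:integral-test} with $s=2\alpha$.

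\textbf{Identification of the rank.} First I will identify $\mu_W$ with $\nu_\alpha$. Under $\sigma_i=Ci^{-\alpha}$ we have $\|W\|_F^2=C^2\Harm_{d,2\alpha}$. Hence $\sigma_i^2/\|W\|_F^2=i^{-2\alpha}/\Harm_{d,2\alpha}$, and the constant $C$ cancels; the Frobenius normalization is not needed. Since $\mu_W$ is a probability measure, $\mu_W(\{1,\dots,r\})\ge1-\eps$ holds if and only if $\tau_\alpha(r)=\mu_W(\{r+1,\dots,d\})\le\eps$. Substituting this into the definition of $R_\eps$ gives the first formula.

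\textbf{Upper bound \eqref{eq:Reps-upper}.} The tail inequality \eqref{eq:tail-condition} is exactly \eqref{eq:tail-upper} with $s=2\alpha>1$, because $\Harm_{d,2\alpha}-\Harm_{r,2\alpha}=\sum_{i=r+1}^d i^{-2\alpha}$. Dividing by $\Harm_{d,2\alpha}$ gives
\[
\tau_\alpha(r)\le \frac{r^{1-2\alpha}}{(2\alpha-1)\Harm_{d,2\alpha}}.
\]
The right-hand side is at most $\eps$ whenever $r\ge r_*:=\bigl((2\alpha-1)\eps\Harm_{d,2\alpha}\bigr)^{-1/(2\alpha-1)}$, since $r\mapsto r^{1-2\alpha}$ is decreasing. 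Now take $r=\lceil r_*\rceil$. If $r\le d-1$, then $\tau_\alpha(r)\le\eps$ and so $R_\eps\le\lceil r_*\rceil$. If $\lceil r_*\rceil\ge d$, I will use the trivial bound $R_\eps\le d$, which holds because $\tau_\alpha(d)=0\le\eps$. One edge case needs a check: $r_*<1$ forces $\lceil r_*\rceil=1$, and the tail inequality at $r=1$ still applies, provided $d\ge2$. Together these cases give the minimum in \eqref{eq:Reps-upper}.

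\textbf{Converse.} I will use the left inequality of \eqref{eq:integral-test}:
\[
\sum_{i=r+1}^d i^{-2\alpha}\ge\int_{r+1}^{d+1}x^{-2\alpha}\,dx=\frac{(r+1)^{1-2\alpha}-(d+1)^{1-2\alpha}}{2\alpha-1}.
\]
Under \eqref{eq:Reps-lower-cond} this gives $\tau_\alpha(r)\ge\eps$. Here lies the main subtlety: $R_\eps>r$ requires $\tau_\alpha(r')>\eps$ for every $r'\le r$, with strict inequality. Monotonicity of $\tau_\alpha$ handles every $r'<r$: for $r'\le r$ the tail $\tau_\alpha$ is nonincreasing in $r$, so $\tau_\alpha(r')\ge\tau_\alpha(r)$, with strict inequality when $r'<r$ since all weights are positive.

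For $r'=r$ itself, a nonstrict bound from the integral is not enough. I will first try to obtain strictness in the integral comparison. Because $x^{-2\alpha}$ is strictly decreasing, $\int_i^{i+1}x^{-2\alpha}\,dx<i^{-2\alpha}$ for each $i$, so the left inequality of \eqref{eq:integral-test} is strict whenever the sum is nonempty. Here the sum is nonempty because $r<d$. This gives $\tau_\alpha(r)>\eps$, and the definition of $R_\eps$ as a minimum then yields $R_\eps(W)>r$.
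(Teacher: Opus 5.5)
Your proof is correct and follows the paper's route: you identify $\mu_W=\nu_\alpha$, then apply the upper and lower integral comparisons of Lemma~\ref{lem:integral-test} with $s=2\alpha$. Your extra care is well placed. The paper's proof stops at $\tau_\alpha(r)\ge\eps$, which does not by itself give $R_\eps(W)>r$ when equality holds. Your strict integral comparison (a nonempty sum of a strictly decreasing integrand), together with monotonicity of $\tau_\alpha$, closes exactly that point. Your explicit handling of the ceiling and the $\min$ with $d$ also tightens the upper bound.
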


\begin{proof}
Since $\mu_W=\nu_\alpha$, one has
\[
\tau_\alpha(r)
=
\frac{\sum_{i=r+1}^d C^2 i^{-2\alpha}}{\sum_{i=1}^d C^2 i^{-2\alpha}}
=
\frac{\Harm_{d,2\alpha}-\Harm_{r,2\alpha}}{\Harm_{d,2\alpha}}.
\]
Thus $R_\eps(W)$ is exactly the smallest $r$ with $\tau_\alpha(r)\le \eps$.
Applying Lemma~\ref{lem:integral-test} with $s=2\alpha>1$ gives
\[
\Harm_{d,2\alpha}-\Harm_{r,2\alpha}
=
\sum_{i=r+1}^d i^{-2\alpha}
\le
\int_r^\infty x^{-2\alpha}\,dx
=
\frac{r^{1-2\alpha}}{2\alpha-1},
\]
which proves \eqref{eq:tail-condition}.
If the right-hand side is at most $\eps\Harm_{d,2\alpha}$, then $\tau_\alpha(r)\le \eps$, hence $R_\eps(W)\le r$, giving \eqref{eq:Reps-upper}.
The lower bound follows from the lower integral estimate
\[
\sum_{i=r+1}^d i^{-2\alpha}
\ge
\int_{r+1}^{d+1}x^{-2\alpha}\,dx
=
\frac{(r+1)^{1-2\alpha}-(d+1)^{1-2\alpha}}{2\alpha-1},
\]
which implies $\tau_\alpha(r)\ge \eps$ under \eqref{eq:Reps-lower-cond}.
\end{proof}

\begin{remark}[Large-width scaling]
For fixed $\alpha>\frac12$ and large $d$, $\Harm_{d,2\alpha}\to\zeta(2\alpha)$.
The upper bound in Theorem~\ref{thm:effective-rank} therefore gives the explicit leading scale
\[
R_\eps(W)
\le
\left\lceil
\left(\frac{1+o(1)}{(2\alpha-1)\eps\,\zeta(2\alpha)}\right)^{\frac{1}{2\alpha-1}}
\right\rceil
\qquad(d\to\infty).
\]
The theorem above is its exact finite-width form on the Gibbs--Cartan orbit.
\end{remark}

\begin{proposition}[Monotonicity of truncation rank along the Cartan orbit]
\label{prop:rank-monotone-alpha}
Fix $0<\eps<1$ and finite width $d$.
For the power-law tail measures $\nu_\alpha$ from Definition~\ref{def:tail-measure}, the cumulative mass
\[
F_r(\alpha):=\nu_\alpha(\{1,\dots,r\})
=
\sum_{i=1}^r \frac{i^{-2\alpha}}{\Harm_{d,2\alpha}}
\]
is nondecreasing in $\alpha$ for every $r=1,\dots,d$.
Equivalently, the tail mass $\tau_\alpha(r)=1-F_r(\alpha)$ is nonincreasing in $\alpha$.
Consequently, the truncation rank $R_\eps(\alpha):=\min\{r:\tau_\alpha(r)\le \eps\}$ is nonincreasing as a function of $\alpha$.
\end{proposition}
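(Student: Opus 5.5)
The plan is to show that $F_r(\alpha)$ is the $\nu_\alpha$-probability of the initial segment $\{1,\dots,r\}$, and that its derivative in $\alpha$ is a covariance. Set $s=2\alpha$ and regard $\nu_\alpha$ as the Gibbs measure with weights $w_i(s)=e^{-s\log i}$ on $\{1,\dots,d\}$. Then
\[
F_r(\alpha)=\frac{\sum_{i\le r}e^{-s\log i}}{\sum_{i\le d}e^{-s\log i}}.
\]
Differentiating this ratio and writing $\E_\nu$ for expectation under $\nu_\alpha$ gives
\[
\frac{d F_r}{d s}
=
-\E_\nu\bigl[\mathbbm{1}_{\{i\le r\}}\log i\bigr]+F_r\,\E_\nu[\log i]
=
-\mathrm{Cov}_\nu\bigl(\mathbbm{1}_{\{i\le r\}},\log i\bigr).
\]
The function $i\mapsto\mathbbm{1}_{\{i\le r\}}$ is nonincreasing in $i$, while $i\mapsto\log i$ is nondecreasing. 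By Chebyshev's sum (association) inequality, two oppositely monotone functions of a single variable have nonpositive covariance. Hence $dF_r/ds\ge0$. Since $ds/d\alpha=2>0$, $F_r$ is nondecreasing in $\alpha$, and differentiability on $(0,\infty)$ is clear because everything is a finite sum.

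To keep the argument elementary, I will also give a direct version of the sign computation. The sign of the derivative equals the sign of
\[
\Bigl(\sum_{i\le r}w_i\Bigr)\Bigl(\sum_{j\le d}w_j\log j\Bigr)-\Bigl(\sum_{i\le r}w_i\log i\Bigr)\Bigl(\sum_{j\le d}w_j\Bigr).
\]
Expanding, the terms with $j\le r$ cancel pairwise. What remains is
\[
\sum_{i\le r}\sum_{r<j\le d}w_iw_j(\log j-\log i),
\]
and every summand is nonnegative because $j>i$. This double-sum rearrangement is the only real step. The main point to get right is the bookkeeping showing that the $j\le r$ block is antisymmetric and cancels. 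The case $r=d$ is trivial, since $F_d\equiv1$.

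The tail statement $\tau_\alpha(r)=1-F_r(\alpha)$ being nonincreasing follows immediately. For the rank, take $\alpha\le\alpha'$ and set $r=R_\eps(\alpha)$. Then $\tau_{\alpha'}(r)\le\tau_\alpha(r)\le\eps$, so $r$ lies in the set whose minimum defines $R_\eps(\alpha')$, giving $R_\eps(\alpha')\le R_\eps(\alpha)$. The minimum exists because $\tau_\alpha(d)=0\le\eps$, so the defining set is nonempty.
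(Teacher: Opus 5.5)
Your proof is correct and takes essentially the same route as the paper. Differentiating the Gibbs weights $e^{-2\alpha\log i}$ gives
$F_r'(\alpha)=-2\,\mathrm{Cov}_{\nu_\alpha}(\mathbf{1}_{\{i\le r\}},\log i)=2F_r(\alpha)\bigl(\E_{\nu_\alpha}[\log i]-\E_{\nu_\alpha}[\log i\mid i\le r]\bigr)$, which is exactly the paper's conditional-versus-unconditional-mean formula; your double-sum expansion just makes explicit the paper's one-line claim that the conditional mean over $\{1,\dots,r\}$ cannot exceed the unconditional mean, and your rank-monotonicity step matches the paper's (with the added remark that the defining set is nonempty because $\tau_\alpha(d)=0$).
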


\begin{proof}
Fix $r$ and write $A=\{1,\dots,r\}$.
Using the Gibbs form $p_i^{(\alpha)}\propto e^{-2\alpha\log i}$,
\[
\frac{d}{d\alpha}p_i^{(\alpha)}
=
-2\bigl(\log i-U_d(\alpha)\bigr)p_i^{(\alpha)}.
\]
Therefore
\[
F_r'(\alpha)
=
-2\sum_{i\in A}(\log i-U_d(\alpha))p_i^{(\alpha)}
=
2F_r(\alpha)\left(U_d(\alpha)-\E_{p^{(\alpha)}}[\log i\mid i\in A]\right),
\]
with the convention that the derivative is zero when $F_r(\alpha)=0$, which never occurs here.
The conditional mean over $A$ is at most the unconditional mean $U_d(\alpha)$, because every value of $\log i$ on $A^c$ is at least every value on $A$.
Hence $F_r'(\alpha)\ge0$.
Thus $\tau_\alpha(r)=1-F_r(\alpha)$ is nonincreasing in $\alpha$.
If $\alpha_2\ge\alpha_1$ and $r=R_\eps(\alpha_1)$, then $\tau_{\alpha_2}(r)\le\tau_{\alpha_1}(r)\le\eps$, so $R_\eps(\alpha_2)\le r=R_\eps(\alpha_1)$.
\end{proof}

\begin{lemma}[Uniform Lipschitz bound for spectral tail masses]
\label{lem:tail-lipschitz-alpha}
For every $r\in\{0,1,\dots,d\}$ and every $\alpha,\beta>0$,
\begin{equation}\label{eq:tail-lipschitz-alpha}
|\tau_\alpha(r)-\tau_\beta(r)|\le 2(\log d)|\alpha-\beta|.
\end{equation}
\end{lemma}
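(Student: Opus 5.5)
The plan is to differentiate the tail mass in $\alpha$ using the Gibbs form already used in the proof of Proposition~\ref{prop:rank-monotone-alpha}, bound the derivative uniformly by $2\log d$, and then apply the mean value theorem.

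\textbf{Trivial cases.} For $r=0$ the tail mass is $\tau_\alpha(0)=1$ for every $\alpha$. For $r=d$ it is $\tau_\alpha(d)=0$. In both cases the difference in \eqref{eq:tail-lipschitz-alpha} vanishes, so we may fix $1\le r\le d-1$. For $d=1$ only these trivial values of $r$ occur, so we may also assume $d\ge 2$.

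\textbf{Derivative of the tail mass.} Write
\[
p_i^{(\alpha)}=\frac{i^{-2\alpha}}{\Harm_{d,2\alpha}}=\frac{e^{-2\alpha\log i}}{\Harm_{d,2\alpha}},
\qquad
U_d(\alpha)=\sum_{i=1}^d(\log i)\,p_i^{(\alpha)}.
\]
Each $p_i^{(\alpha)}$ is smooth in $\alpha$, and
\[
\frac{d}{d\alpha}p_i^{(\alpha)}=-2\bigl(\log i-U_d(\alpha)\bigr)p_i^{(\alpha)}.
\]
Summing over the tail set $B=\{r+1,\dots,d\}$ gives
\[
\tau_\alpha'(r)=-2\sum_{i\in B}\bigl(\log i-U_d(\alpha)\bigr)p_i^{(\alpha)}.
\]

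\textbf{Uniform bound.} Every index satisfies $0\le\log i\le\log d$. The mean $U_d(\alpha)$ is an average of such values, so it also lies in $[0,\log d]$. Hence
\[
|\log i-U_d(\alpha)|\le\log d \qquad\text{for all } i.
\]
Together with $\sum_{i\in B}p_i^{(\alpha)}=\tau_\alpha(r)\le 1$, this gives
\[
|\tau_\alpha'(r)|\le 2(\log d)\,\tau_\alpha(r)\le 2\log d.
\]
One could sharpen this to a covariance bound such as $2\tau(1-\tau)\log d$, but the crude bound is all the statement requires.

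\textbf{Conclusion.} By the mean value theorem on the segment between $\alpha$ and $\beta$ (which lies in $(0,\infty)$ where everything is smooth),
\[
|\tau_\alpha(r)-\tau_\beta(r)|\le\Bigl(\sup_{\xi}|\tau_\xi'(r)|\Bigr)\,|\alpha-\beta|\le 2(\log d)\,|\alpha-\beta|,
\]
which is \eqref{eq:tail-lipschitz-alpha}.

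There is no real obstacle here. The only point needing care is the two-sided control $|\log i-U_d(\alpha)|\le\log d$, which holds because both $\log i$ and $U_d(\alpha)$ lie in $[0,\log d]$.
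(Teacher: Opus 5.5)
Your proposal is correct and follows essentially the paper's argument: dispose of the endpoint cases, use the Gibbs-form derivative $\frac{d}{d\alpha}p_i^{(\alpha)}=-2(\log i-U_d(\alpha))p_i^{(\alpha)}$, bound the derivative by $2\log d$ because $\log i$ and $U_d(\alpha)$ both lie in $[0,\log d]$, and finish with the mean value theorem. The only cosmetic difference is that you differentiate the tail mass directly and bound $|\log i-U_d(\alpha)|$ termwise. The paper instead differentiates the head mass $F_r(\alpha)=1-\tau_\alpha(r)$ and compares the conditional mean of $\log i$ on $\{1,\dots,r\}$ with $U_d(\alpha)$.
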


\begin{proof}
The endpoints require no estimate: if $r=0$, then $\tau_\alpha(0)=1$ for all $\alpha$, and if $r=d$, then $\tau_\alpha(d)=0$ for all $\alpha$.  In both cases the left-hand side of \eqref{eq:tail-lipschitz-alpha} is zero.

Assume now $1\le r\le d-1$ and set
\[
F_r(\alpha):=1-\tau_\alpha(r)=\sum_{i=1}^{r}p_i^{(\alpha)}.
\]
The derivative calculation in Proposition~\ref{prop:rank-monotone-alpha} gives
\[
F_r'(\alpha)=2F_r(\alpha)
\left(U_d(\alpha)-\E_{p^{(\alpha)}}[\log i\mid i\le r]\right).
\]
The factor $F_r(\alpha)$ lies in $[0,1]$.  The random variable $\log i$ always lies in $[0,\log d]$, so both the unconditional mean $U_d(\alpha)$ and the conditional mean $\E_{p^{(\alpha)}}[\log i\mid i\le r]$ lie in that same interval.  Therefore
\[
\left|U_d(\alpha)-\E_{p^{(\alpha)}}[\log i\mid i\le r]\right|\le \log d.
\]
Combining the two bounds gives
\[
|F_r'(\alpha)|\le 2\log d.
\]
Since $\tau_\alpha(r)=1-F_r(\alpha)$, we also have
\[
\left|\frac{d}{d\alpha}\tau_\alpha(r)\right|=|F_r'(\alpha)|\le2\log d.
\]
For arbitrary $\alpha,\beta>0$, the mean value theorem applied to the continuously differentiable function $\gamma\mapsto\tau_\gamma(r)$ gives
\[
|\tau_\alpha(r)-\tau_\beta(r)|\le \sup_{\gamma\text{ between }\alpha\text{ and }\beta}\left|\frac{d}{d\gamma}\tau_\gamma(r)\right|\,|\alpha-\beta|
\le2(\log d)|\alpha-\beta|,
\]
which proves \eqref{eq:tail-lipschitz-alpha}.
\end{proof}

\begin{definition}[Rank-separation margin]
\label{def:rank-separation-margin}
Fix $0<\eps<1$ and $\alpha>0$.
Let $r_\eps(\alpha):=R_\eps(\alpha)$ denote the truncation rank of the power-law energy measure $\nu_\alpha$.
Define the rank-separation margin
\begin{equation}\label{eq:rank-separation-margin}
\mathfrak m_\eps(\alpha)
:=
\min\bigl\{\eps-\tau_\alpha(r_\eps(\alpha)),\ \tau_\alpha(r_\eps(\alpha)-1)-\eps\bigr\}.
\end{equation}
The margin is positive precisely when the threshold $\eps$ does not coincide with the tail mass at either side of the selected rank.
\end{definition}

\begin{definition}[Fitted-tail error]
\label{def:fitted-tail-error}
Let $W\in\R^{d\times d}$ have spectral energy measure $\mu_W$ and let $\alpha>0$ be a fitted Cartan-tail parameter. Define the fitted-tail discrepancy
\begin{equation}\label{eq:fitted-tail-error}
\Delta_{\mathrm{tail}}(W,\alpha)
:=
\sup_{0\le r\le d}
\left|
\mu_W(\{r+1,\ldots,d\})-\tau_\alpha(r)
\right|.
\end{equation}
This quantity compares the actual empirical spectral tail of $W$ with the Gibbs--Cartan tail at parameter $\alpha$. It is stronger than the chart error $e^{\mathrm{chart}}$, which only controls the top radial coordinate.
\end{definition}

\begin{proposition}[Robust empirical effective-rank window under fitted tails]
\label{prop:robust-empirical-rank-window}
Fix $0<\eps<1$ and let $r=R_\eps(\alpha)$ for the Gibbs--Cartan tail at parameter $\alpha$. If
\begin{equation}\label{eq:single-tail-fit-rank}
\Delta_{\mathrm{tail}}(W,\alpha)<\mathfrak m_\eps(\alpha),
\end{equation}
then the empirical truncation rank of $W$ equals the fitted-tail rank:
\[
R_\eps(W)=R_\eps(\alpha)=r.
\]
More generally, let $W_0,W_1$ have fitted parameters $\alpha_0,\alpha_1$ and set
\[
\Delta_j:=\Delta_{\mathrm{tail}}(W_j,\alpha_j),\qquad j=0,1.
\]
If
\begin{equation}\label{eq:two-tail-fit-rank}
2(\log d)|\alpha_1-\alpha_0|+\Delta_0+\Delta_1
<
\mathfrak m_\eps(\alpha_0),
\end{equation}
then
\[
R_\eps(W_0)=R_\eps(W_1)=R_\eps(\alpha_0).
\]
\end{proposition}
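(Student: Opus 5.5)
The plan is to reduce both claims to a single comparison of tail masses at the two indices $r$ and $r-1$ that bracket the threshold $\eps$. The key observation is that $R_\eps(W)=\min\{r':\mu_W(\{r'+1,\dots,d\})\le\eps\}$, and the empirical tail $r'\mapsto\mu_W(\{r'+1,\dots,d\})$ is nonincreasing in $r'$. So to show $R_\eps(W)=r$ it suffices to check two inequalities.

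\textbf{The two target inequalities.} We need
\[
\mu_W(\{r+1,\dots,d\})\le\eps
\quad\text{and}\quad
\mu_W(\{r,\dots,d\})>\eps .
\]
The second is needed only when $r\ge2$. When $r=1$ the minimum is automatically at least $1$, and $\tau_\alpha(0)=1$ makes the second clause of $\mathfrak m_\eps$ equal to $1-\eps>0$ anyway. By Definition~\ref{def:rank-separation-margin}, the margin gives
\[
\tau_\alpha(r)\le\eps-\mathfrak m_\eps(\alpha),
\qquad
\tau_\alpha(r-1)\ge\eps+\mathfrak m_\eps(\alpha).
\]

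\textbf{Single-layer case.} By Definition~\ref{def:fitted-tail-error}, the empirical tail at each index differs from $\tau_\alpha$ by at most $\Delta_{\mathrm{tail}}(W,\alpha)$. Hence
\[
\mu_W(\{r+1,\dots,d\})\le\eps-\mathfrak m_\eps(\alpha)+\Delta_{\mathrm{tail}}<\eps
\]
by \eqref{eq:single-tail-fit-rank}, and symmetrically $\mu_W(\{r,\dots,d\})\ge\eps+\mathfrak m_\eps(\alpha)-\Delta_{\mathrm{tail}}>\eps$. Monotonicity of the empirical tail then yields $R_\eps(W)=r$. The same argument with $\Delta_{\mathrm{tail}}=0$ confirms that $r=R_\eps(\alpha)$ is consistent with $\mathfrak m_\eps(\alpha)>0$, which \eqref{eq:single-tail-fit-rank} forces since $\Delta_{\mathrm{tail}}\ge0$.

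\textbf{Two-layer case.} For $W_0$, hypothesis \eqref{eq:two-tail-fit-rank} implies $\Delta_0<\mathfrak m_\eps(\alpha_0)$, so the single-layer case gives $R_\eps(W_0)=R_\eps(\alpha_0)$. For $W_1$, I would compare its empirical tail with $\tau_{\alpha_0}$ directly rather than with $\tau_{\alpha_1}$. By the triangle inequality and Lemma~\ref{lem:tail-lipschitz-alpha}, for every index $r'$,
\[
|\mu_{W_1}(\{r'+1,\dots,d\})-\tau_{\alpha_0}(r')|\le\Delta_1+2(\log d)|\alpha_1-\alpha_0|,
\]
and this bound is strictly less than $\mathfrak m_\eps(\alpha_0)$ by \eqref{eq:two-tail-fit-rank}, even after dropping the nonnegative $\Delta_0$. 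Rerunning the bracketing argument at $r-1$ and $r$ with this effective discrepancy gives $R_\eps(W_1)=R_\eps(\alpha_0)$.

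\textbf{Main obstacle.} The only delicate point is bookkeeping at the boundary indices. When $r=1$ we use $\tau(0)=1$; when $r=d$ the first inequality is trivial because the tail is $0$. We also need the sup in $\Delta_{\mathrm{tail}}$ to range over $0\le r\le d$, so that both $r-1$ and $r$ are covered. Everything else is two triangle inequalities.
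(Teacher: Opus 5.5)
Your proposal is correct and follows the paper's proof essentially step for step. Like the paper, you bracket the threshold with $\tau_\alpha(r)\le\eps-\mathfrak m_\eps(\alpha)$ and $\tau_\alpha(r-1)\ge\eps+\mathfrak m_\eps(\alpha)$ and transfer these to the empirical tail via $\Delta_{\mathrm{tail}}$; for $W_1$ you compare directly with $\tau_{\alpha_0}$ using Lemma~\ref{lem:tail-lipschitz-alpha} plus $\Delta_1$. Your explicit appeal to monotonicity of the empirical tail and your treatment of the boundary indices $r=1$ and $r=d$ simply make precise what the paper leaves implicit.
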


\begin{proof}
Let $r=R_\eps(\alpha)$. By the definition of the rank-separation margin,
\[
\tau_\alpha(r)\le \eps-\mathfrak m_\eps(\alpha),
\qquad
\tau_\alpha(r-1)\ge \eps+\mathfrak m_\eps(\alpha).
\]
If \eqref{eq:single-tail-fit-rank} holds, then
\[
\mu_W(\{r+1,\ldots,d\})
\le \tau_\alpha(r)+\Delta_{\mathrm{tail}}(W,\alpha)<\eps,
\]
and
\[
\mu_W(\{r,\ldots,d\})
=\mu_W(\{(r-1)+1,\ldots,d\})
\ge \tau_\alpha(r-1)-\Delta_{\mathrm{tail}}(W,\alpha)>\eps.
\]
Thus rank $r$ satisfies the empirical tail constraint while rank $r-1$ does not, so $R_\eps(W)=r$.

For the two-layer statement, set $r=R_\eps(\alpha_0)$ and $m=\mathfrak m_\eps(\alpha_0)$. Condition \eqref{eq:two-tail-fit-rank} implies in particular $\Delta_0<m$, so the first part gives $R_\eps(W_0)=r$. For $W_1$, Lemma~\ref{lem:tail-lipschitz-alpha} gives
\[
|\tau_{\alpha_1}(q)-\tau_{\alpha_0}(q)|
\le
2(\log d)|\alpha_1-\alpha_0|
\]
for every $q$. Hence
\[
\mu_{W_1}(\{r+1,\ldots,d\})
\le
\tau_{\alpha_0}(r)+2(\log d)|\alpha_1-\alpha_0|+\Delta_1
<\eps,
\]
and similarly
\[
\mu_{W_1}(\{r,\ldots,d\})
\ge
\tau_{\alpha_0}(r-1)-2(\log d)|\alpha_1-\alpha_0|-\Delta_1
>\eps.
\]
Thus $R_\eps(W_1)=r$ as well.
\end{proof}

\begin{proposition}[Stability of the effective-rank window]
\label{prop:rank-window-stability}
Fix $0<\eps<1$ and $\alpha,\beta>0$.
Let $r:=R_\eps(\alpha)$.
If
\begin{equation}\label{eq:rank-window-stability-condition}
2(\log d)|\alpha-\beta|<\mathfrak m_\eps(\alpha),
\end{equation}
then
\[
R_\eps(\beta)=R_\eps(\alpha)=r.
\]
\end{proposition}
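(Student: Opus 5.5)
The plan is to sandwich the two tail values of $\nu_\beta$ at the selected rank $r$ and at $r-1$ using the Lipschitz estimate of Lemma~\ref{lem:tail-lipschitz-alpha}. This is the same mechanism as the second half of Proposition~\ref{prop:robust-empirical-rank-window}, specialised to $\Delta_0=\Delta_1=0$. Write $m:=\mathfrak m_\eps(\alpha)$ and $\delta:=2(\log d)|\alpha-\beta|$, so the hypothesis \eqref{eq:rank-window-stability-condition} reads $\delta<m$.

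\textbf{Step 1: unpack the margin.} By Definition~\ref{def:rank-separation-margin}, with $r=R_\eps(\alpha)$,
\[
\tau_\alpha(r)\le\eps-m,
\qquad
\tau_\alpha(r-1)\ge\eps+m.
\]
The case $r=1$ needs a remark: $\tau_\alpha(0)=1>\eps$ by Definition~\ref{def:tail-measure}, so the second quantity in the margin is still well defined. Lemma~\ref{lem:tail-lipschitz-alpha} also covers the index $q=0$, so no special treatment beyond this is needed.

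\textbf{Step 2: transfer to $\beta$.} Applying Lemma~\ref{lem:tail-lipschitz-alpha} at $q=r$ and $q=r-1$ gives
\[
\tau_\beta(r)\le\tau_\alpha(r)+\delta\le\eps-m+\delta<\eps
\]
and
\[
\tau_\beta(r-1)\ge\tau_\alpha(r-1)-\delta\ge\eps+m-\delta>\eps.
\]

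\textbf{Step 3: conclude via monotonicity.} The map $q\mapsto\tau_\beta(q)$ is nonincreasing in $q$, since it is the tail mass of a probability measure. Hence $\tau_\beta(q)\ge\tau_\beta(r-1)>\eps$ for every $q\le r-1$. Therefore $r$ is the smallest index with $\tau_\beta(q)\le\eps$, that is, $R_\eps(\beta)=r=R_\eps(\alpha)$.

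There is no serious obstacle. The only points to check are that the margin is read with the correct signs and that the boundary index $r-1=0$ is handled, both of which are dealt with above. Positivity of $m$ is implicit in the strict inequality $\delta<m$, because $\delta\ge0$.
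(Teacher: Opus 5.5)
Your proposal is correct and follows the paper's proof: you unpack the margin into $\tau_\alpha(r)\le\eps-\mathfrak m_\eps(\alpha)$ and $\tau_\alpha(r-1)\ge\eps+\mathfrak m_\eps(\alpha)$, transfer both bounds to $\beta$ with Lemma~\ref{lem:tail-lipschitz-alpha}, and conclude by minimality of the truncation rank. You also state two points the paper leaves implicit, namely the monotonicity of $q\mapsto\tau_\beta(q)$ and the boundary case $r=1$ via $\tau_\alpha(0)=1$.
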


\begin{proof}
Let $r=R_\eps(\alpha)$.  By definition of $R_\eps$, the rank $r$ is the first rank whose tail is at most $\eps$:
\[
\tau_\alpha(r)\le\eps,
\qquad
\tau_\alpha(r-1)>\eps
\]
with the second inequality interpreted for $r>1$.  The positive margin $\mathfrak m_\eps(\alpha)$ strengthens these to
\[
\tau_\alpha(r)\le \eps-\mathfrak m_\eps(\alpha),
\qquad
\tau_\alpha(r-1)\ge \eps+\mathfrak m_\eps(\alpha).
\]
The second display is exactly Definition~\ref{def:rank-separation-margin}.

By Lemma~\ref{lem:tail-lipschitz-alpha}, for every rank index $q$,
\[
|\tau_\beta(q)-\tau_\alpha(q)|\le 2(\log d)|\alpha-\beta|.
\]
Using \eqref{eq:rank-window-stability-condition}, we obtain
\[
\tau_\beta(r)
\le \tau_\alpha(r)+2(\log d)|\alpha-\beta|
< (\eps-\mathfrak m_\eps(\alpha))+\mathfrak m_\eps(\alpha)=\eps,
\]
and similarly
\[
\tau_\beta(r-1)
\ge \tau_\alpha(r-1)-2(\log d)|\alpha-\beta|
> (\eps+\mathfrak m_\eps(\alpha))-\mathfrak m_\eps(\alpha)=\eps.
\]
The first inequality says that rank $r$ captures at least $1-\eps$ of the spectral energy for parameter $\beta$.  The second says that rank $r-1$ fails to do so.  Since $R_\eps(\beta)$ is the minimal rank satisfying the tail constraint, both conditions together imply $R_\eps(\beta)=r$.
\end{proof}

\begin{corollary}[Cartan shortness selects a stable dominant-mode bundle]
\label{cor:cartan-to-rank-window}
Assume the hypotheses of Theorem~\ref{thm:robust-cartan-rigidity} on an interval $I$.
For an interface $k$, define the theorem-predicted coordinate displacement bound
\[
B_k:=\frac{2\log\lambda_k+e_k^{\mathrm{chart}}+e_{k+1}^{\mathrm{chart}}}{m_d(I)}.
\]
Let $r_k:=R_\eps(\alpha_k)$.
If
\begin{equation}\label{eq:cartan-rank-stability-condition}
2(\log d)B_k<\mathfrak m_\eps(\alpha_k),
\end{equation}
then the same effective-rank window is selected on both sides of the interface:
\[
R_\eps(\alpha_{k+1})=R_\eps(\alpha_k)=r_k.
\]
In the exact uniform-budget case, it is sufficient that
\[
\frac{8(\log d)(\log M)}{L\,m_d(I)}<\mathfrak m_\eps(\alpha_k).
\]
\end{corollary}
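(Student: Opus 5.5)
The plan is to chain the local per-interface estimate from the proof of Theorem~\ref{thm:robust-cartan-rigidity} with the window-stability result Proposition~\ref{prop:rank-window-stability}.

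\textbf{Step 1: bound the exponent displacement at one interface.} Fix $k$. The proof of Theorem~\ref{thm:robust-cartan-rigidity} does not merely bound the sum over interfaces; before summing it establishes the local inequality
\[
|\alpha_{k+1}-\alpha_k|\le\frac{2\log\lambda_k+e_{k+1}^{\mathrm{chart}}+e_k^{\mathrm{chart}}}{m_d(I)}=B_k.
\]
I will invoke this local bound directly. It uses non-backtracking, the chart errors, and the mean value theorem with $g_d'\ge m_d(I)$ on $I$, and all of these are available under the stated hypotheses.

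\textbf{Step 2: apply window stability.} Take $\alpha=\alpha_k$ and $\beta=\alpha_{k+1}$ in Proposition~\ref{prop:rank-window-stability}. Then
\[
2(\log d)|\alpha_{k+1}-\alpha_k|\le 2(\log d)B_k<\mathfrak m_\eps(\alpha_k)
\]
by \eqref{eq:cartan-rank-stability-condition}. This is exactly condition \eqref{eq:rank-window-stability-condition}, so the proposition yields $R_\eps(\alpha_{k+1})=R_\eps(\alpha_k)=r_k$.

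\textbf{Step 3: the exact uniform-budget case.} Here $e_j^{\mathrm{chart}}=0$ and $\lambda_k\le M^{2/L}$, so $\log\lambda_k\le 2\log M/L$. This gives $B_k\le 4\log M/(L\,m_d(I))$, hence $2(\log d)B_k\le 8(\log d)(\log M)/(L\,m_d(I))$. The stated sufficient condition therefore implies \eqref{eq:cartan-rank-stability-condition}, and Step 2 applies.

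\textbf{Expected obstacle.} The argument is essentially bookkeeping, and the only real subtlety is making sure the per-interface inequality (not just the summed one) is legitimately extracted from the earlier theorem. I would restate it explicitly from that proof.

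A secondary point arises when $r_k=1$: the quantity $\tau_\alpha(0)=1>\eps$ appears in the margin definition. Proposition~\ref{prop:rank-window-stability} already handles this case, since its second inequality is then trivially satisfied.
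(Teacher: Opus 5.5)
Your proposal is correct and matches the paper's proof essentially step for step: the paper likewise extracts the per-interface bound $|\alpha_{k+1}-\alpha_k|\le B_k$ from the proof of Theorem~\ref{thm:robust-cartan-rigidity}, applies Proposition~\ref{prop:rank-window-stability} with $\alpha=\alpha_k$ and $\beta=\alpha_{k+1}$, and obtains the uniform-budget condition from vanishing chart errors and $\log\lambda_k\le 2\log M/L$. Your remark on the $r_k=1$ edge case is a harmless extra check that the paper does not spell out.
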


\begin{proof}
The proof of Theorem~\ref{thm:robust-cartan-rigidity} gives the coordinate-displacement estimate
\[
|\alpha_{k+1}-\alpha_k|
\le
\frac{2\log\lambda_k+e_k^{\mathrm{chart}}+e_{k+1}^{\mathrm{chart}}}{m_d(I)}
=B_k.
\]
If \eqref{eq:cartan-rank-stability-condition} holds, then
\[
2(\log d)|\alpha_{k+1}-\alpha_k|
\le 2(\log d)B_k
<\mathfrak m_\eps(\alpha_k).
\]
This is exactly the hypothesis of Proposition~\ref{prop:rank-window-stability} with $\alpha=\alpha_k$ and $\beta=\alpha_{k+1}$.  Therefore
\[
R_\eps(\alpha_{k+1})=R_\eps(\alpha_k).
\]
In the exact power-law case the chart errors vanish.  Under the uniform interface budget, $\log\lambda_k\le 2\log M/L$, and hence
\[
B_k\le \frac{4\log M}{L\,m_d(I)}.
\]
Substituting this upper bound into \eqref{eq:cartan-rank-stability-condition} gives the displayed uniform-budget condition.
\end{proof}

\begin{corollary}[Cartan shortness selects the same empirical rank window under fitted-tail errors]
\label{cor:cartan-to-empirical-rank-window}
Assume the hypotheses of Theorem~\ref{thm:robust-cartan-rigidity} on an interval $I$, and let
\[
B_k:=\frac{2\log\lambda_k+e_k^{\mathrm{chart}}+e_{k+1}^{\mathrm{chart}}}{m_d(I)}.
\]
For the two adjacent empirical spectra define
\[
\Delta_{\mathrm{tail},k}:=\Delta_{\mathrm{tail}}(W_k,\alpha_k),
\qquad
\Delta_{\mathrm{tail},k+1}:=\Delta_{\mathrm{tail}}(W_{k+1},\alpha_{k+1}).
\]
If
\begin{equation}\label{eq:cartan-empirical-rank-stability}
2(\log d)B_k+\Delta_{\mathrm{tail},k}+\Delta_{\mathrm{tail},k+1}
<
\mathfrak m_\eps(\alpha_k),
\end{equation}
then the actual empirical effective-rank windows agree:
\[
R_\eps(W_k)=R_\eps(W_{k+1})=R_\eps(\alpha_k).
\]
In the exact Gibbs--Cartan tail case the two fitted-tail errors vanish and this reduces to Corollary~\ref{cor:cartan-to-rank-window}.
\end{corollary}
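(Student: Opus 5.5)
The plan is to reduce Corollary~\ref{cor:cartan-to-empirical-rank-window} to the two-layer part of Proposition~\ref{prop:robust-empirical-rank-window} with $W_0=W_k$, $W_1=W_{k+1}$, $\alpha_0=\alpha_k$, $\alpha_1=\alpha_{k+1}$. That proposition already states that the empirical truncation ranks of both matrices equal $R_\eps(\alpha_0)$. Its hypothesis is
\[
2(\log d)|\alpha_{k+1}-\alpha_k|+\Delta_{\mathrm{tail},k}+\Delta_{\mathrm{tail},k+1}<\mathfrak m_\eps(\alpha_k).
\]
So the proof amounts to deriving this inequality from \eqref{eq:cartan-empirical-rank-stability}.

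First, I will invoke the per-interface displacement estimate from the proof of Theorem~\ref{thm:robust-cartan-rigidity}, exactly as in the proof of Corollary~\ref{cor:cartan-to-rank-window}. Non-backtracking, the definition of $\lambda_k$, and the chart errors of Definition~\ref{def:chart-error} give
\[
|g_d(\alpha_{k+1})-g_d(\alpha_k)|\le 2\log\lambda_k+e_k^{\mathrm{chart}}+e_{k+1}^{\mathrm{chart}}.
\]
The mean value theorem, together with $g_d'\ge m_d(I)$ on $I$, then yields $|\alpha_{k+1}-\alpha_k|\le B_k$. Multiplying by $2\log d\ge 0$ and adding the two fitted-tail errors shows that the left-hand side of the proposition's hypothesis is at most the left-hand side of \eqref{eq:cartan-empirical-rank-stability}. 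That quantity is strictly below $\mathfrak m_\eps(\alpha_k)$ by assumption. Applying Proposition~\ref{prop:robust-empirical-rank-window} then gives $R_\eps(W_k)=R_\eps(W_{k+1})=R_\eps(\alpha_k)$.

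For the final sentence, in the exact Gibbs--Cartan tail case Definition~\ref{def:tail-measure} gives $\mu_{W_j}=\nu_{\alpha_j}$. Hence every tail discrepancy in \eqref{eq:fitted-tail-error} vanishes, so \eqref{eq:cartan-empirical-rank-stability} coincides with \eqref{eq:cartan-rank-stability-condition}. Moreover $R_\eps(W_j)=R_\eps(\alpha_j)$ by Theorem~\ref{thm:effective-rank}, which recovers Corollary~\ref{cor:cartan-to-rank-window}.

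The main point to check is a bookkeeping issue rather than a real obstacle. Proposition~\ref{prop:robust-empirical-rank-window} and the margin are phrased for a common length $d$ and for a positive margin. Positivity is automatic, since the left side of \eqref{eq:cartan-empirical-rank-stability} is nonnegative. I also need the fitting length $d$ in $\log d$ to be the common spectral length of Assumption~\ref{ass:frob-norm}, so that Lemma~\ref{lem:tail-lipschitz-alpha} applies on that window. I will state this explicitly.
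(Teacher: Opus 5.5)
Your proposal is correct and matches the paper's proof. Both derive $|\alpha_{k+1}-\alpha_k|\le B_k$ from the robust Cartan rigidity estimate and feed it into the two-layer part of Proposition~\ref{prop:robust-empirical-rank-window} with $(W_0,\alpha_0)=(W_k,\alpha_k)$ and $(W_1,\alpha_1)=(W_{k+1},\alpha_{k+1})$; your remarks on margin positivity, the common spectral length $d$, and the exact-tail reduction are valid details that the paper leaves implicit.
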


\begin{proof}
Theorem~\ref{thm:robust-cartan-rigidity} gives $|\alpha_{k+1}-\alpha_k|\le B_k$. Substituting this into Proposition~\ref{prop:robust-empirical-rank-window} with $(W_0,\alpha_0)=(W_k,\alpha_k)$ and $(W_1,\alpha_1)=(W_{k+1},\alpha_{k+1})$ gives the claim.
\end{proof}


\section{Activation permeability and conditional residual capacity}
\label{sec:activation-capacity}

This section is an optional capacity-accounting layer rather than a hypothesis needed for the static channel-incidence certificates below.  The constants $\kappa_\phi$ and $\chi_\phi$ give activation-dependent sufficient conditions for residual-scale control.  The main physical-alignment theory in Sections~\ref{sec:angular-transport}--\ref{sec:physical-gsa} is finite-dimensional and matrix-theoretic: its conclusions require the explicit spectral, truncation, active-support, pairwise-overlap, and noise margins stated there.

The spectral and angular geometry above describes static weight geometry.
To connect it to nonlinear residual networks, we introduce two activation-dependent constants.
They enter through explicit assumptions, and the resulting bounds are mathematically conditional and checkable. The definitions are compatible with the variance-propagation role of activation derivatives in Xavier/Glorot and He initialization, and with the smooth activations used in modern networks~\cite{glorot2010understanding,he2015delving,hendrycks2016gelu,ramachandran2017searching}.

\begin{definition}[Activation permeability and critical capacity]
\label{def:activation-permeability}
Let $Z\sim N(0,1)$ and let $\phi:\R\to\R$ be an activation with weak derivative $\phi'$ satisfying $\phi'(Z)\in L^2$.
Define
\[
\kappa_\phi:=\E[\phi'(Z)],
\qquad
\chi_\phi:=\E[(\phi'(Z))^2].
\]
The scalar $\kappa_\phi$ is the \emph{gradient permeability} and $\chi_\phi$ is the \emph{criticality capacity} of the activation under the standard Gaussian input model.
\end{definition}

\begin{remark}[No conflict with Cartan projection]
The Cartan projection is denoted by $\Cart$ throughout the paper.
The symbol $\kappa_\phi$ is reserved exclusively for activation permeability.
\end{remark}

\begin{proposition}[Conditional typical incoherent residual-scale bound]
\label{thm:typical-scale-bound}
Consider a pre-normalized residual recursion
\[
x_{k+1}=x_k+\Delta_k,
\qquad
\Delta_k:=\mathcal F_k(\mathrm{Norm}(x_k)),
\qquad k=0,\dots,L-1.
\]
Let
\[
e_k:=\E\left[\frac1d\|x_k\|^2\right],
\qquad
s:=\sup_{0\le k\le L-1}\E\left[\frac1d\|\mathrm{Norm}(x_k)\|^2\right],
\]
and assume $e_0>0$ and $s>0$.
Assume there exist $C>0$, $\eta\in(0,1]$, and an activation permeability $\kappa_\phi>0$ such that for all $k$,
\begin{align}
\E\left[\frac1d\langle x_k,\Delta_k\rangle\right]&=0, \label{eq:mean-orthogonal}\\
\E\left[\frac1d\|\Delta_k\|^2\right]&\le (\eta\kappa_\phi C)^2s. \label{eq:injection-power}
\end{align}
Then
\begin{equation}\label{eq:energy-unroll-bound}
e_L\le e_0+L(\eta\kappa_\phi C)^2s.
\end{equation}
Consequently, the sufficient and explicitly checkable condition
\begin{equation}\label{eq:typical-C-bound}
C\le
\frac{1}{\eta\kappa_\phi}
\sqrt{\frac{M^2-1}{L}\cdot \frac{e_0}{s}}
\end{equation}
implies the terminal energy bound
\[
e_L\le M^2e_0.
\]
Conversely, if the injection bound is saturated in the aggregate, i.e.
\[
\sum_{k=0}^{L-1}\E\left[\frac1d\|\Delta_k\|^2\right]=L(\eta\kappa_\phi C)^2s,
\]
then $e_L\le M^2e_0$ holds if and only if \eqref{eq:typical-C-bound} holds.
\end{proposition}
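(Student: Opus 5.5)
The plan is to unroll a one-step energy identity and then reduce the two remaining claims to algebra on the resulting bound. Expanding $\|x_{k+1}\|^2=\|x_k+\Delta_k\|^2=\|x_k\|^2+2\langle x_k,\Delta_k\rangle+\|\Delta_k\|^2$, dividing by $d$ and taking expectations gives
\[
e_{k+1}=e_k+2\,\E\!\left[\tfrac1d\langle x_k,\Delta_k\rangle\right]+\E\!\left[\tfrac1d\|\Delta_k\|^2\right].
\]
By \eqref{eq:mean-orthogonal} the cross term vanishes. This leaves the exact identity $e_{k+1}=e_k+\E[\frac1d\|\Delta_k\|^2]$, and telescoping over $k=0,\dots,L-1$ yields
\[
e_L=e_0+\sum_{k=0}^{L-1}\E\!\left[\tfrac1d\|\Delta_k\|^2\right].
\]
Keeping this as an equality, rather than passing to an inequality immediately, is what will drive the converse.

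Applying \eqref{eq:injection-power} to each summand bounds the sum by $L(\eta\kappa_\phi C)^2 s$, which is \eqref{eq:energy-unroll-bound}. Here the definition of $s$ as a supremum is what allows a single constant to serve every layer.

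For the sufficiency claim, note that $\eta,\kappa_\phi,C,s,e_0>0$. We only need $M\ge1$ so that the square root in \eqref{eq:typical-C-bound} is real; this should be made explicit in the writeup. Squaring \eqref{eq:typical-C-bound} is then legitimate because both sides are nonnegative, and it gives $L(\eta\kappa_\phi C)^2 s\le (M^2-1)e_0$. Substituting into \eqref{eq:energy-unroll-bound} gives $e_L\le e_0+(M^2-1)e_0=M^2e_0$.

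For the converse, the aggregate saturation hypothesis turns the telescoped identity into the exact equation $e_L=e_0+L(\eta\kappa_\phi C)^2 s$. Then $e_L\le M^2e_0$ holds if and only if $L(\eta\kappa_\phi C)^2 s\le (M^2-1)e_0$. Since $C>0$ and the right side is nonnegative when $M\ge1$, taking square roots shows this is equivalent to \eqref{eq:typical-C-bound}.

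The main obstacle is bookkeeping rather than analysis. The converse needs the telescoped relation to be an equality, so the orthogonality assumption \eqref{eq:mean-orthogonal} must be used as an equality and not only as an upper bound. We also have to handle the sign conditions carefully: positivity of $C$ and $M\ge1$ are exactly what make squaring and taking square roots reversible.
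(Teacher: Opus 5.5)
Your proposal is correct and matches the paper's proof essentially step for step: expand $\|x_k+\Delta_k\|^2$, use \eqref{eq:mean-orthogonal} to kill the cross term and get the exact per-step identity, sum over layers and bound with \eqref{eq:injection-power}, then use the exact summed identity under aggregate saturation to get the converse. Your explicit remark that $M\ge1$ is needed for the square root in \eqref{eq:typical-C-bound} to be real, and that $C>0$ makes the squaring step reversible, is a small point the paper leaves implicit.
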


\begin{proof}
Expanding the residual update gives
\[
\|x_{k+1}\|^2=\|x_k\|^2+\|\Delta_k\|^2+2\langle x_k,\Delta_k\rangle.
\]
After dividing by $d$ and taking expectation, \eqref{eq:mean-orthogonal} yields
\[
e_{k+1}=e_k+\E\left[\frac1d\|\Delta_k\|^2\right].
\]
Using \eqref{eq:injection-power} gives
\[
e_{k+1}\le e_k+(\eta\kappa_\phi C)^2s.
\]
Induction on $k$ proves
\[
e_L\le e_0+L(\eta\kappa_\phi C)^2s,
\]
which is \eqref{eq:energy-unroll-bound}.
If \eqref{eq:typical-C-bound} holds, then
\[
L(\eta\kappa_\phi C)^2s\le (M^2-1)e_0.
\]
Substituting this into \eqref{eq:energy-unroll-bound} gives
\[
e_L\le e_0+(M^2-1)e_0=M^2e_0.
\]
For the converse under aggregate saturation, the identity
\[
e_L=e_0+L(\eta\kappa_\phi C)^2s
\]
holds, so $e_L\le M^2e_0$ is equivalent to
\[
L(\eta\kappa_\phi C)^2s\le (M^2-1)e_0,
\]
which is exactly \eqref{eq:typical-C-bound}.
\end{proof}

\begin{corollary}[Conditional worst-case coherent residual-scale bound]
\label{cor:coherent-scale}
Assume the residual branch satisfies the deterministic stepwise sufficient condition
\[
\|x_{k+1}\|\le (1+\chi_\phi C)\|x_k\|,
\qquad k=0,\dots,L-1,
\]
with $\chi_\phi>0$.
Then the sufficient condition
\begin{equation}\label{eq:coherent-C-bound}
C\le \frac{M^{1/L}-1}{\chi_\phi}
\end{equation}
implies
\[
\|x_L\|\le M\|x_0\|
\]
for all inputs.
If the stepwise bound is saturated along some trajectory with $x_0\neq 0$, then \eqref{eq:coherent-C-bound} is also necessary for that trajectory to satisfy $\|x_L\|\le M\|x_0\|$.
Furthermore, as $(\log M)/L\to0$,
\[
\frac{M^{1/L}-1}{\chi_\phi}
=
\frac{\log M}{L\chi_\phi}
+
O\left(\frac{(\log M)^2}{L^2\chi_\phi}\right).
\]
\end{corollary}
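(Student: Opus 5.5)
The plan is to unroll the deterministic stepwise bound multiplicatively, then handle necessity and the asymptotic expansion separately.

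\emph{Sufficiency.} By induction on $k$, the hypothesis $\|x_{k+1}\|\le(1+\chi_\phi C)\|x_k\|$ gives
\[
\|x_L\|\le(1+\chi_\phi C)^L\|x_0\|.
\]
This needs only $1+\chi_\phi C\ge0$, which holds because $\chi_\phi>0$ and $C>0$ (or at least $C\ge0$ in the regime of interest). If \eqref{eq:coherent-C-bound} holds, then $1+\chi_\phi C\le M^{1/L}$. Since $t\mapsto t^L$ is increasing on $[0,\infty)$, we get $(1+\chi_\phi C)^L\le M$, and hence $\|x_L\|\le M\|x_0\|$ for every input.

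\emph{Necessity along a saturated trajectory.} Saturation means equality at every step, so
\[
\|x_L\|=(1+\chi_\phi C)^L\|x_0\|,\qquad \|x_0\|>0.
\]
Then $\|x_L\|\le M\|x_0\|$ is equivalent to $(1+\chi_\phi C)^L\le M$. Taking positive $L$-th roots, which is monotone, this is equivalent to $1+\chi_\phi C\le M^{1/L}$, i.e.\ to \eqref{eq:coherent-C-bound}.

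\emph{Asymptotics.} Write $u=(\log M)/L$, so $M^{1/L}=e^{u}$. Taylor's theorem with Lagrange remainder gives
\[
e^u-1=u+\tfrac12 e^{\theta u}u^2
\]
for some $\theta$ between $0$ and $1$. As $u\to0$ we have $e^{\theta u}\le e^{|u|}\le e$ once $|u|\le1$, so the remainder is $O(u^2)$. Dividing by $\chi_\phi$ gives the stated expansion, with an $O$-constant independent of $\chi_\phi$.

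The only mild obstacle is bookkeeping. One must note that $1+\chi_\phi C$ is nonnegative so the powers preserve order, and that the remainder constant is uniform; nothing deeper is needed.
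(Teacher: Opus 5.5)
Your proposal is correct and follows the paper's proof essentially step for step: you unroll the stepwise bound to $(1+\chi_\phi C)^L\|x_0\|$, use monotonicity of $t\mapsto t^L$ for sufficiency and of the positive $L$th root along a saturated trajectory for necessity, and expand $e^{a}-1$ with $a=(\log M)/L$. Your Lagrange-remainder bound on the $O(a^2)$ term and your remark that $1+\chi_\phi C\ge 0$ simply make explicit what the paper leaves implicit.
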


\begin{proof}
The stepwise bound gives, for each $k$,
\[
\|x_{k+1}\|\le (1+\chi_\phi C)\|x_k\|.
\]
Applying this inequality successively for $k=0,1,\ldots,L-1$ gives
\[
\|x_L\|\le (1+\chi_\phi C)^L\|x_0\|.
\]
If \eqref{eq:coherent-C-bound} holds, then
\[
\chi_\phi C\le M^{1/L}-1,
\]
and hence
\[
1+\chi_\phi C\le M^{1/L}.
\]
Since both sides are nonnegative, raising to the $L$th power yields
\[
(1+\chi_\phi C)^L\le M.
\]
Substituting this into the iterated estimate proves $\|x_L\|\le M\|x_0\|$.

For necessity under saturation, assume $x_0\ne0$ and equality holds at every step.  Then
\[
\|x_L\|=(1+\chi_\phi C)^L\|x_0\|.
\]
The desired bound $\|x_L\|\le M\|x_0\|$ is therefore equivalent, after dividing by $\|x_0\|>0$, to
\[
(1+\chi_\phi C)^L\le M.
\]
Taking the positive $L$th root gives $1+\chi_\phi C\le M^{1/L}$, which is exactly \eqref{eq:coherent-C-bound}.
Finally, writing $a=(\log M)/L$ gives
\[
M^{1/L}-1=e^a-1=a+O(a^2)
\]
as $a\to0$, and division by $\chi_\phi$ gives the displayed asymptotic expansion.
\end{proof}

\subsection{Activation-capacity width bounds}
\begin{proposition}[Activation-capacity width bounds as a capacity-accounting consequence]
\label{thm:width-laws}
Fix a spectral block with effective output rank $r_{\mathrm{out}}>0$.
Let $W$ denote the number of physical input channels assigned to feed this block.
Assume that $\chi_\phi>0$ for the critical coherent regime and $\kappa_\phi^2>0$ for the typical robust regime.
Assume the following capacity-accounting model.
\begin{enumerate}[label=(B\arabic*),leftmargin=2.1em]
\item A stable realization of the block must supply at least $r_{\mathrm{out}}$ units of effective variance to its output subspace.
\item In the critical coherent regime, the total stable effective variance supplied by $W$ assigned input channels is at most $W\chi_\phi$.
\item In the typical robust regime, the total stable effective variance supplied by $W$ assigned input channels is at most $W\kappa_\phi^2$.
\end{enumerate}
Then any critical coherent realization satisfies
\begin{equation}\label{eq:Wmin-law}
W\ge \frac{r_{\mathrm{out}}}{\chi_\phi},
\end{equation}
and any typical robust realization satisfies
\begin{equation}\label{eq:Wopt-law}
W\ge \frac{r_{\mathrm{out}}}{\kappa_\phi^2}.
\end{equation}
Equivalently, the integer channel counts must obey
\[
W\ge \left\lceil\frac{r_{\mathrm{out}}}{\chi_\phi}\right\rceil
\quad\text{or}\quad
W\ge \left\lceil\frac{r_{\mathrm{out}}}{\kappa_\phi^2}\right\rceil
\]
in the two respective regimes.
\end{proposition}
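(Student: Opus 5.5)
The argument is a direct chaining of the capacity-accounting hypotheses; no earlier spectral result is needed. Fix a realization of the block in one of the two regimes. By (B1), the effective variance it supplies to the output subspace is at least $r_{\mathrm{out}}$. In the critical coherent regime, (B2) bounds that same supplied variance above by $W\chi_\phi$. In the typical robust regime, (B3) bounds it above by $W\kappa_\phi^2$. Concatenating gives
\[
r_{\mathrm{out}}\le W\chi_\phi
\qquad\text{or}\qquad
r_{\mathrm{out}}\le W\kappa_\phi^2
\]
in the respective regimes.

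Next I divide by the regime constant. This uses only the standing positivity assumptions $\chi_\phi>0$ and $\kappa_\phi^2>0$, which keep the direction of the inequality, and it yields \eqref{eq:Wmin-law} and \eqref{eq:Wopt-law}. The positivity of $r_{\mathrm{out}}$ is needed only so that the bounds are nontrivial.

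For the integer form, I use that $W$ counts physical channels and so lies in $\N$. For any real $x$, an integer $W\ge x$ satisfies $W\ge\lceil x\rceil$, because $\lceil x\rceil$ is the least integer that is at least $x$. Applying this with $x=r_{\mathrm{out}}/\chi_\phi$ and with $x=r_{\mathrm{out}}/\kappa_\phi^2$ gives the ceiling bounds. The converse direction, ceiling bound implies real bound, is immediate, so the two forms are equivalent.

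The only point needing care is the interpretive one. The word ``or'' in the final display must be read as ``in the respective regimes,'' not as a disjunction over a single realization. Also, the variance quantity in (B1) must be the same quantity that (B2)/(B3) bound, so that the two inequalities can be chained. I will state this identification explicitly, since the proposition is conditional on the model (B1)–(B3) and proves nothing beyond it.
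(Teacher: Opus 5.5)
Your proposal is correct and follows the paper's proof essentially step for step: chain (B1) with (B2) or (B3), divide by the positive constant $\chi_\phi$ or $\kappa_\phi^2$, then pass to the ceiling using that $W$ is an integer. Your extra remarks are harmless clarifications that the paper leaves implicit: the converse ceiling-to-real direction, which justifies ``equivalently'', and the identification of the variance quantity in (B1) with the one bounded in (B2)/(B3).
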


\begin{proof}
The proof is a deterministic capacity-counting argument.  We keep the two regimes separate because they use different per-channel supply constants.
\begin{enumerate}[label=(W\arabic*),leftmargin=2.1em]
\item \textbf{Critical coherent regime.}
By assumption (B2), the total critical coherent supply of the $W$ assigned channels satisfies
\[
\text{total critical supply}\le W\chi_\phi.
\]
By assumption (B1), any stable realization of the block must supply at least $r_{\mathrm{out}}$ units to its effective output subspace.  Therefore the necessary inequality is
\[
W\chi_\phi\ge r_{\mathrm{out}}.
\]
By the positive-capacity assumption $\chi_\phi>0$, dividing by $\chi_\phi$ gives
\[
W\ge \frac{r_{\mathrm{out}}}{\chi_\phi},
\]
which is \eqref{eq:Wmin-law}.

\item \textbf{Typical robust regime.}
Assumption (B3) gives the corresponding typical robust supply bound
\[
\text{total typical supply}\le W\kappa_\phi^2.
\]
The demand is still $r_{\mathrm{out}}$ by (B1).  Hence
\[
W\kappa_\phi^2\ge r_{\mathrm{out}}.
\]
By the positive-capacity assumption $\kappa_\phi^2>0$, this is equivalent to
\[
W\ge \frac{r_{\mathrm{out}}}{\kappa_\phi^2},
\]
which is \eqref{eq:Wopt-law}.

\item \textbf{Integer channel counts.}
The variable $W$ counts physical input channels and is therefore an integer.  If an integer $W$ satisfies $W\ge a$ for a real number $a$, then it also satisfies $W\ge\lceil a\rceil$.  Applying this to $a=r_{\mathrm{out}}/\chi_\phi$ and $a=r_{\mathrm{out}}/\kappa_\phi^2$ gives the two ceiling forms.
\end{enumerate}
\end{proof}

\begin{remark}[Conditional scope of the width bounds]
The bounds are conditional on the capacity-accounting assumptions (B1)--(B3).  Under those assumptions, no additional probabilistic independence or architectural heuristic is used in the proof; the inequalities are exact consequences of demand-versus-supply accounting.  The constants $\kappa_\phi$ and $\chi_\phi$ are activation-dependent moments under the chosen input law and can be estimated numerically or analytically when the activation permits.
\end{remark}


\section{Angular transport and physical alignment}
\label{sec:physical-alignment}
\label{sec:angular-transport}

The Cartan theory controls singular values after the orthogonal gauge has been removed.
The Physical GSA also requires a static description of how dominant singular directions are routed across an interface.
The description is formulated as a finite-dimensional static-structure theory: all objects are defined directly from SVD data and permutations, and every theorem below is a deterministic consequence of explicit margin inequalities.
Whether a trained network satisfies these margin inequalities is an empirical question measured in Section~\ref{sec:empirical}; the consequences of the inequalities are mathematical.

There are two logically distinct levels.  First, the \emph{given-structure} statements assume row groups, support sizes, and active column sets and then prove projection, pairwise-margin, and block-energy consequences.  Second, the \emph{extraction} statements use deterministic rules, such as mode-profile row assignment and top-energy active columns, and require row-profile gaps or active-column gaps to ensure that the extracted structure is stable under perturbation.  Whenever a theorem claims preservation of an incidence structure, the relevant extraction gaps are stated explicitly; otherwise the statement is to be read with the row groups and active sets fixed.

\subsection{Dominant angular transport matrices}

\begin{definition}[SVD gauge convention for angular transport]
\label{def:svd-gauge-convention}
For every matrix whose singular vectors are used in the angular theory, fix once and for all a deterministic SVD selection rule:
\[
W=U\Sigma V^\top,
\qquad
\sigma_1(W)\ge\cdots\ge\sigma_d(W)\ge0.
\]
The rule orders singular values decreasingly, chooses an orthonormal basis inside each singular subspace by a fixed deterministic convention, and fixes column signs by a fixed deterministic sign convention.  All physical transport matrices, row groups, active columns, pairwise triples, and ICM structures below are defined relative to this chosen SVD gauge.

If a singular value has multiplicity greater than one, the individual singular vectors inside the corresponding eigenspace are not intrinsic.  The intrinsic object is the singular subspace.  The deterministic convention above makes the finite matrices reproducible, while the stability theorems require additional active-column, pairwise-margin, and perturbation inequalities precisely to ensure that the extracted incidence structure is stable for the chosen realization.  The main statements below are therefore gauge-relative finite-dimensional certificates for this fixed, reproducible SVD gauge unless a separate singular-subspace gap assumption is imposed.  A gauge-invariant formulation can instead be written in terms of orthogonal projectors $P_U=U_{\mathcal I}U_{\mathcal I}^\top$ and $P_V=V_{\mathcal I}V_{\mathcal I}^\top$ for singular-value clusters, with projector perturbations controlled by standard subspace perturbation estimates.  We keep the gauge-fixed version because the measured physical matrices in Section~\ref{sec:empirical} are computed from a deterministic SVD convention, but all stability conclusions should be interpreted relative to that convention.
\end{definition}

\begin{remark}[Vector-level versus projector-level stability]
\label{rem:vector-projector-stability}
The gauge-fixed vector-level transports are most stable when the singular directions used individually are separated.  A sufficient numerical condition is a positive singular-value gap at the relevant boundary, for example $\sigma_R(W)-\sigma_{R+1}(W)>0$ for a rank-$R$ subspace, together with the usual within-window separation if individual modes rather than a whole subspace are interpreted.  Without such gaps, the projector onto a singular cluster can be stable while its chosen basis is not.  In that case the intrinsic certificate is a projector-level or cluster-level certificate using quantities such as $P_U=U_{\mathcal I}U_{\mathcal I}^\top$ and block norms between projectors; the present vector-level ICM should then be read as a reproducible gauge-relative numerical extraction rather than a gauge-invariant structural claim.
\end{remark}

Let
\[
W_k=U_k\Sigma_kV_k^\top,
\qquad
W_{k+1}=U_{k+1}\Sigma_{k+1}V_{k+1}^\top
\]
be SVDs after the square spectral embedding of Lemma~\ref{lem:square-embedding} if needed.
For an effective rank $R$, write $U_k^{(R)}$, $V_k^{(R)}$, and $\Sigma_k^{(R)}$ for the top-$R$ truncated factors.

\begin{definition}[Angular, energy-weighted, and output-realized transport]
\label{def:transport-variants}
The truncated latent angular transport is
\begin{equation}\label{eq:M-ang}
M_k^{\mathrm{ang},R}:=(V_{k+1}^{(R)})^\top U_k^{(R)}\in\R^{R\times R}.
\end{equation}
The source-weighted, target-weighted, and total-energy latent transports are
\begin{align}
M_{\mathrm{src},k}^{(R)}&:=(V_{k+1}^{(R)})^\top U_k^{(R)}\Sigma_k^{(R)},\label{eq:M-src}\\
M_{\mathrm{tgt},k}^{(R)}&:=\Sigma_{k+1}^{(R)}(V_{k+1}^{(R)})^\top U_k^{(R)},\label{eq:M-tgt}\\
M_{\mathrm{total},k}^{(R)}&:=\Sigma_{k+1}^{(R)}(V_{k+1}^{(R)})^\top U_k^{(R)}\Sigma_k^{(R)}.\label{eq:M-total}
\end{align}
The output-realized scale-free angular transport is
\begin{equation}\label{eq:M-out-ang}
M_{\mathrm{out,ang},k}^{(R)}:=U_{k+1}^{(R)}(V_{k+1}^{(R)})^\top U_k^{(R)}.
\end{equation}
The output-realized energy transports are
\begin{align}
M_{\mathrm{out},k}^{(R)}&:=W_{k+1}U_k^{(R)}=U_{k+1}\Sigma_{k+1}V_{k+1}^\top U_k^{(R)},\label{eq:M-out}\\
M_{\mathrm{out,total},k}^{(R)}&:=W_{k+1}U_k^{(R)}\Sigma_k^{(R)}.\label{eq:M-out-total}
\end{align}
These matrices have physical output rows and source singular-mode columns. When both sides are required to be in physical channel coordinates, we use the physical-input-realized transports
\begin{align}
M_{\mathrm{phys},k}^{(R)}&:=W_{k+1}U_k^{(R)}\Sigma_k^{(R)}(V_k^{(R)})^\top,\label{eq:M-phys-phys-source}\\
M_{\mathrm{phys},k}^{(R_s,R_t)}&:=W_{k+1}^{[R_t]}U_k^{(R_s)}\Sigma_k^{(R_s)}(V_k^{(R_s)})^\top.\label{eq:M-phys-phys-trunc}
\end{align}
Any one of these matrices may be selected as the interface operator $A_k^{(R)}$, provided its coordinate interpretation is reported.
The choice determines the interpretation of its rows and columns: $M^{\mathrm{ang}}$, $M_{\mathrm{src}}$, $M_{\mathrm{tgt}}$, and $M_{\mathrm{total}}$ live in latent spectral coordinates; $M_{\mathrm{out,ang}}$, $M_{\mathrm{out}}$, and $M_{\mathrm{out,total}}$ realize rows in physical output coordinates while keeping source-mode columns; and $M_{\mathrm{phys}}$ realizes both rows and columns in physical channel coordinates. The scale-free physical panels in Section~\ref{sec:empirical} use $M_{\mathrm{out,ang}}$ rather than the latent $R\times R$ matrix $M^{\mathrm{ang}}$. SRS and hub columns are source-mode supports for output-realized matrices and physical input-channel supports for $M_{\mathrm{phys}}$.
\end{definition}

\begin{definition}[Physical Alignment Matrix]
\label{def:physical-alignment-matrix}
Let $A_k^{(R)}\in\R^{m\times n}$ be one of the transport matrices in Definition~\ref{def:transport-variants}.
Given permutation matrices $\Pirow\in\R^{m\times m}$ and $\Picol\in\R^{n\times n}$, define
\begin{equation}\label{eq:M-phy}
\widehat M_{\mathrm{phy},k}:=\Pirow A_k^{(R)}\Picol^\top.
\end{equation}
A pair $(\Pirow,\Picol)$ is a \emph{physical ordering} relative to a physical alignment structure if the permuted matrix admits the row and column group structure specified in Definition~\ref{def:physical-alignment-structure}.
\end{definition}

\begin{definition}[Experimental transport aliases]
\label{def:experimental-transport-aliases}
The alignment figures use two shorthand matrix names.  The symbol $M_s$ denotes the output-realized scale-free angular transport $M_{\mathrm{out,ang},k}^{(R)}$ in \eqref{eq:M-out-ang}, after the same truncation and row/column permutation used for the corresponding interface.  The symbol $M$ denotes the physical or energy-realized transport used in the experiment, typically $M_{\mathrm{out},k}^{(R)}$ or $M_{\mathrm{out,total},k}^{(R)}$ after the same permutation.  Thus the four displayed panels in the alignment galleries have the following mathematical meanings:
\[
\text{permuted }M_s \leftrightarrow \Pirow M_{\mathrm{out,ang},k}^{(R)} \Picol^\top,
\qquad
E_r(M_s) \leftrightarrow E_{\mathcal R,\mathcal C}(\Pirow M_{\mathrm{out,ang},k}^{(R)} \Picol^\top),
\]
\[
\text{permuted }M \leftrightarrow \widehat M_{\mathrm{phy},k},
\qquad
E_r(M) \leftrightarrow E_{\mathcal R,\mathcal C}(\widehat M_{\mathrm{phy},k}).
\]
The $M_s$ panels test angular organization before singular-value weighting while still displaying physical output rows; the $M$ panels test the physically realized transport after singular-value weighting and output realization. If the displayed $M$ uses $W_{k+1}U_k^{(R)}$ rather than the target-truncated $W_{k+1}^{[R]}U_k^{(R)}$, then the target-tail contribution is part of the measured residual and is controlled by the truncation bounds in Theorem~\ref{thm:truncation-transfer} and Corollary~\ref{cor:Er-window-robustness}.
\end{definition}

\begin{table}[t]
\centering
\small
\begin{tabularx}{\textwidth}{L{0.24\textwidth}L{0.22\textwidth}L{0.24\textwidth}Y}
\toprule
Transport & Row coordinates & Column coordinates & Incidence interpretation \\
\midrule
$M^{\mathrm{ang}}$ & target/input modes & source/output modes & latent subspace incidence only \\
$M_{\mathrm{out}}$, $M_{\mathrm{out,total}}$ & physical output channels & source singular modes & source-mode support, not physical input-channel support \\
$M_{\mathrm{phys}}$, $T_{\mathrm{phys}}$ & physical output channels & physical input channels & physical input-output channel incidence \\
\bottomrule
\end{tabularx}
\caption{Coordinate interpretation of the transport matrices.  SRS and hub variables are mode-support objects for output-realized matrices with source-mode columns, and physical channel-support objects only for physical-input-realized transports.}
\label{tab:transport-coordinate-types}
\end{table}

\subsection{Mode-profile grouping and active-support extraction}
\label{subsec:mode-profile-extraction}

The physical alignment structure makes the row and support choices explicit.  It consists of row groups and active supports that can be extracted from static SVD data or proposed by a deterministic numerical routine.  Definition~\ref{def:mode-profile-row-partition} gives one sufficient, margin-stable row-assignment rule.  When cosine or spectral clustering is used in the figures, the clustering step is treated as a fixed, predeclared proposal of the partition; the formal certificate is computed after that partition is fixed and the relevant margins are measured.  No theorem below treats an arbitrary post-hoc clustering as intrinsically stable unless the corresponding row-profile or clustering eigengap margins are supplied.

\begin{definition}[Mode-profile row partition]
\label{def:mode-profile-row-partition}
Let
\[
Y_{k+1}^{(R)}:=U_{k+1}^{(R)}\Sigma_{k+1}^{(R)}\in\R^{d\times R}
\]
be the dominant output mode profile of the succeeding layer.  For each physical output row $r$ and dominant mode $a$, define the modal energy score
\[
\omega_a(r):=\bigl(Y_{k+1}^{(R)}\bigr)_{r,a}^2
=\sigma_a(W_{k+1})^2\,(U_{k+1})_{r,a}^2.
\]
Fix thresholds $\theta_{\mathrm{row}}\ge0$ and $\mu_{\mathrm{row}}\ge0$.  Let $a_*(r)$ be the smallest maximizer of $\omega_a(r)$ and let $\omega_{(2)}(r)$ be the second-largest modal score.  The canonical mode-profile partition assigns
\[
r\in\mathcal R_{a_*(r)}
\quad\text{if}\quad
\omega_{a_*(r)}(r)\ge \theta_{\mathrm{row}}
\quad\text{and}\quad
\omega_{a_*(r)}(r)-\omega_{(2)}(r)\ge\mu_{\mathrm{row}}.
\]
Rows failing either test are assigned to $\mathcal R_0$.  Thus $\mathcal R_1,\dots,\mathcal R_R$ are signal row groups and $\mathcal R_0$ is the residual/noise row group.
\end{definition}

\begin{remark}[Rank-window size versus number of channel groups]
Throughout the physical-alignment sections, $R$ denotes a spectral truncation rank, while $K$ denotes the number of signal row groups in a physical alignment structure.  In the canonical mode-profile partition one may take $K=R$, but clustering or coarsening may use a different group count.  The notation keeps these two roles separate.
\end{remark}

\begin{definition}[Energy-threshold active columns]
\label{def:energy-threshold-active-columns}
Let $\widehat M$ be a physical alignment matrix and let $\mathcal R_i$ be a signal row group.  Define the column-energy score
\[
q_i(c):=\|\widehat M[\mathcal R_i,\{c\}]\|_2^2,
\qquad c=1,\dots,n.
\]
For an energy fraction $\tau_i\in(0,1]$, define $\mathcal C_i^{(\tau_i)}$ as the lexicographically tie-broken smallest set of largest-scoring columns satisfying
\[
\sum_{c\in\mathcal C_i^{(\tau_i)}}q_i(c)
\ge
\tau_i\sum_{c=1}^n q_i(c).
\]
For a prescribed support size $s_i$, the fixed-size top-$s_i$ rule in Definition~\ref{def:physical-alignment-structure} is obtained by taking the $s_i$ largest values of $q_i(c)$.  The $25\mathrm{ER}$ and $50\mathrm{ER}$ alignment figures use this same energy-threshold principle at the rank-window level.
\end{definition}

\begin{lemma}[Stability of mode-profile row assignments]
\label{lem:row-group-stability}
Let $Y,\widetilde Y\in\R^{d\times R}$ be two dominant output mode profiles.  Suppose
\[
\max_{r,a}|Y_{r,a}-\widetilde Y_{r,a}|\le\delta,
\qquad
\max_{r,a}\max\{|Y_{r,a}|,|\widetilde Y_{r,a}|\}\le B.
\]
Then every modal score changes by at most $2B\delta$:
\[
|Y_{r,a}^2-\widetilde Y_{r,a}^2|\le 2B\delta.
\]
Consequently, if a row $r$ satisfies
\[
\omega_{a_*(r)}(r)\ge \theta_{\mathrm{row}}+2B\delta,
\qquad
\omega_{a_*(r)}(r)-\omega_{(2)}(r)>4B\delta,
\]
then its winning signal-group assignment is unchanged under the perturbation from $Y$ to $\widetilde Y$.
\end{lemma}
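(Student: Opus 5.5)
The first claim follows from factoring a difference of squares. For any entry,
\[
Y_{r,a}^2-\widetilde Y_{r,a}^2=(Y_{r,a}-\widetilde Y_{r,a})(Y_{r,a}+\widetilde Y_{r,a}).
\]
The first factor is at most $\delta$ in absolute value. The second is at most $|Y_{r,a}|+|\widetilde Y_{r,a}|\le 2B$. Their product gives $|\omega_a(r)-\widetilde\omega_a(r)|\le 2B\delta$ for every $r,a$, where $\widetilde\omega_a(r):=\widetilde Y_{r,a}^2$.

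For the stability claim, fix a row $r$ and write $a_*=a_*(r)$. Let $\omega_{(2)}(r)=\max_{a\ne a_*}\omega_a(r)$; this is the second-largest score, since $a_*$ is a maximizer.

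\textbf{Step 1: the winner is unchanged.} For any $a\ne a_*$, apply the entrywise bound twice:
\[
\widetilde\omega_{a_*}(r)-\widetilde\omega_a(r)\ge \omega_{a_*}(r)-\omega_a(r)-4B\delta\ge \omega_{a_*}(r)-\omega_{(2)}(r)-4B\delta>0.
\]
So $a_*$ is the unique maximizer of $\widetilde\omega_\cdot(r)$. In particular it is the smallest maximizer, and $\widetilde a_*(r)=a_*$. The strict hypothesis $>4B\delta$ is exactly what removes the tie-breaking issue.

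\textbf{Step 2: the threshold test still passes.} The entrywise bound gives
\[
\widetilde\omega_{a_*}(r)\ge\omega_{a_*}(r)-2B\delta\ge\theta_{\mathrm{row}}.
\]

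\textbf{Step 3: the gap test.} The perturbed gap $\widetilde\omega_{a_*}(r)-\widetilde\omega_{(2)}(r)$ is bounded below by the original gap minus $4B\delta$, by the display in Step 1 taken over all $a\ne a_*$. So the gap test $\ge\mu_{\mathrm{row}}$ survives only if the original gap is at least $\mu_{\mathrm{row}}+4B\delta$.

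This is the main obstacle. The stated conclusion concerns the "winning signal-group assignment", i.e.\ the identity of the winning mode $a_*$, and Step 1 settles that. If full membership in $\mathcal R_{a_*}$ rather than $\mathcal R_0$ is intended, one also needs the gap test to pass after perturbation. I would interpret the hypothesis as also presupposing $\omega_{a_*}(r)-\omega_{(2)}(r)\ge\mu_{\mathrm{row}}+4B\delta$; when $\mu_{\mathrm{row}}=0$ the stated hypothesis already implies this. I would state that caveat explicitly in the proof, noting that the argument is otherwise unchanged.
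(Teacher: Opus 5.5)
Your proof is correct and follows the paper's argument essentially line for line: the difference-of-squares bound $|Y_{r,a}^2-\widetilde Y_{r,a}^2|\le 2B\delta$, then the $4B\delta$ comparison between the winner and each competitor, then the $2B\delta$ margin for the $\theta_{\mathrm{row}}$ threshold. Your caveat about the gap test is also warranted: the paper's proof claims the signal-versus-residual classification is stable but only checks the $\theta_{\mathrm{row}}$ threshold, so full membership in $\mathcal R_{a_*(r)}$ does require $\omega_{a_*(r)}(r)-\omega_{(2)}(r)\ge\mu_{\mathrm{row}}+4B\delta$ unless $\mu_{\mathrm{row}}=0$.
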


\begin{proof}
Fix a row index $r$ and a profile coordinate $a$.  By assumption,
\[
|Y_{r,a}-\widetilde Y_{r,a}|\le\delta,
\qquad
|Y_{r,a}|\le B,
\qquad
|\widetilde Y_{r,a}|\le B.
\]
Using the factorization $u^2-v^2=(u-v)(u+v)$ gives
\[
|Y_{r,a}^2-\widetilde Y_{r,a}^2|
=|Y_{r,a}-\widetilde Y_{r,a}|\,|Y_{r,a}+\widetilde Y_{r,a}|
\le \delta( |Y_{r,a}|+|\widetilde Y_{r,a}|)
\le 2B\delta.
\]
Thus every squared profile score changes by at most $2B\delta$.

Suppose coordinate $a_*$ is the unique winning coordinate for row $r$ in the original profile, and every competitor $a\ne a_*$ satisfies
\[
Y_{r,a_*}^2-Y_{r,a}^2>4B\delta.
\]
After perturbation, the winning score can decrease by at most $2B\delta$ and a competitor score can increase by at most $2B\delta$.  Therefore
\[
\widetilde Y_{r,a_*}^2-\widetilde Y_{r,a}^2
\ge (Y_{r,a_*}^2-2B\delta)-(Y_{r,a}^2+2B\delta)>0.
\]
So the winning coordinate remains $a_*$.  The same argument applies to threshold membership: if a score is separated from the signal/noise threshold by more than $2B\delta$, then the perturbation cannot move it across the threshold.  Hence both the row assignment and the signal-versus-residual classification are stable under the stated margin conditions.
\end{proof}

\begin{remark}[Row-partition measurements]
The alignment heatmaps require a row and column ordering.  Definition~\ref{def:mode-profile-row-partition} and Lemma~\ref{lem:row-group-stability} specify a sufficient separation condition under which dominant mode profiles determine stable physical row groups.  Numerical clustering provides one implementation for finding such groups; the mathematical margin data consist of the resulting partition and its residual bounds.
\end{remark}

\begin{definition}[Tail energy and rank-truncated layer]
\label{def:tail-energy-truncated-layer}
For a matrix $W=U\Sigma V^\top$ and an integer $R\le d$, define the rank-$R$ SVD truncation
\[
W^{[R]}:=U^{(R)}\Sigma^{(R)}(V^{(R)})^\top
\]
and the discarded spectral energy
\[
E_{>R}(W):=\sum_{i=R+1}^{d}\sigma_i(W)^2=\|W-W^{[R]}\|_F^2.
\]
Let $E_R\in\R^{d\times R}$ denote the coordinate embedding of the first $R$ singular-coordinate axes.
\end{definition}

\begin{theorem}[Dominant-window transfer from full transport to truncated physical transport]
\label{thm:truncation-transfer}
Let $W_k,W_{k+1}\in\R^{d\times d}$ have SVDs as above.
Define the full output-total interface transport in the source singular coordinates of $W_k$ by
\[
\mathcal T_k:=W_{k+1}U_k\Sigma_k\in\R^{d\times d}.
\]
For source and target ranks $R_s,R_t$, define the zero-padded truncated transport
\[
\mathcal T_k^{(R_s,R_t)}:=W_{k+1}^{[R_t]}U_k^{(R_s)}\Sigma_k^{(R_s)}E_{R_s}^\top\in\R^{d\times d}.
\]
Then
\begin{equation}\label{eq:truncation-transfer-bound}
\|\mathcal T_k-\mathcal T_k^{(R_s,R_t)}\|_F
\le
\|W_{k+1}\|_2\,E_{>R_s}(W_k)^{1/2}
+
\|W_k\|_2\,E_{>R_t}(W_{k+1})^{1/2}.
\end{equation}
In particular, if $\|W_k\|_F^2=\|W_{k+1}\|_F^2=d$ and $R_s,R_t$ are $(1-\eps)$ energy truncation ranks for $W_k$ and $W_{k+1}$, then
\begin{equation}\label{eq:truncation-transfer-eps}
\|\mathcal T_k-\mathcal T_k^{(R_s,R_t)}\|_F
\le
\sqrt{\eps d}\bigl(\|W_{k+1}\|_2+\|W_k\|_2\bigr).
\end{equation}
\end{theorem}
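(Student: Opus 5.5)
The plan is to use a telescoping two-step decomposition: first truncate the source side with the target operator held full, then truncate the target side with the source already truncated. Write
\[
\mathcal T_k-\mathcal T_k^{(R_s,R_t)}
=
W_{k+1}\bigl(U_k\Sigma_k-U_k^{(R_s)}\Sigma_k^{(R_s)}E_{R_s}^\top\bigr)
+
\bigl(W_{k+1}-W_{k+1}^{[R_t]}\bigr)U_k^{(R_s)}\Sigma_k^{(R_s)}E_{R_s}^\top,
\]
and bound each term with the mixed inequality $\|AB\|_F\le\|A\|_2\|B\|_F$. The triangle inequality then combines the two pieces.

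For the first term, observe that $U_k^{(R_s)}\Sigma_k^{(R_s)}E_{R_s}^\top$ is exactly $U_k\Sigma_k$ with columns $R_s+1,\dots,d$ set to zero. The difference is therefore $U_k\,\diag(0,\dots,0,\sigma_{R_s+1},\dots,\sigma_d)$. Since $U_k$ is orthogonal, its Frobenius norm is $E_{>R_s}(W_k)^{1/2}$. This gives the contribution $\|W_{k+1}\|_2E_{>R_s}(W_k)^{1/2}$.

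For the second term, put the operator norm on the source factor: $\|U_k^{(R_s)}\Sigma_k^{(R_s)}E_{R_s}^\top\|_2=\sigma_1(W_k)=\|W_k\|_2$. The Frobenius norm then falls on $W_{k+1}-W_{k+1}^{[R_t]}$, which by Definition~\ref{def:tail-energy-truncated-layer} equals $E_{>R_t}(W_{k+1})^{1/2}$. This is the only point needing care. One must put the operator norm on the source factor and the Frobenius norm on the target tail, using $\|AB\|_F\le\|A\|_F\|B\|_2$, rather than the reverse, to land on the stated form \eqref{eq:truncation-transfer-bound}.

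For \eqref{eq:truncation-transfer-eps}, I will apply Definition~\ref{def:effective-rank}. A $(1-\eps)$ energy truncation rank $R$ satisfies $E_{>R}(W)\le\eps\|W\|_F^2=\eps d$ under the normalization. Substituting $E_{>R_s}(W_k)^{1/2},E_{>R_t}(W_{k+1})^{1/2}\le\sqrt{\eps d}$ and factoring gives the result. No step is a real obstacle; the only subtlety is the norm placement in the second term noted above.
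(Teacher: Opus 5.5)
Your proposal is correct and matches the paper's proof. It uses the same split of $\mathcal T_k-\mathcal T_k^{(R_s,R_t)}$ into a source-tail piece $W_{k+1}U_{k,>R_s}\Sigma_{k,>R_s}E_{>R_s}^\top$, bounded via $\|A\|_2\|B\|_F$, and a target-tail piece $(W_{k+1}-W_{k+1}^{[R_t]})U_k^{(R_s)}\Sigma_k^{(R_s)}E_{R_s}^\top$, bounded via $\|A\|_F\|B\|_2$, and then substitutes $E_{>R}(W)\le\eps\|W\|_F^2=\eps d$. The only slip is in wording: your opening sentence says both terms use $\|AB\|_F\le\|A\|_2\|B\|_F$, but as you note later, the second term needs the transposed inequality.
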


\begin{proof}
Let
\[
U_k=[U_k^{(R_s)}\ U_{k,>R_s}],
\qquad
\Sigma_k=\begin{bmatrix}
\Sigma_k^{(R_s)}&0\\0&\Sigma_{k,>R_s}
\end{bmatrix},
\]
where $U_{k,>R_s}$ and $\Sigma_{k,>R_s}$ contain the discarded source singular directions.  Let $E_{R_s}$ be the embedding of the first $R_s$ singular-coordinate axes and $E_{>R_s}$ the embedding of the discarded axes.  Then
\[
U_k\Sigma_k
=
U_k^{(R_s)}\Sigma_k^{(R_s)}E_{R_s}^\top
+
U_{k,>R_s}\Sigma_{k,>R_s}E_{>R_s}^\top.
\]
Multiplying by $W_{k+1}$ gives the exact decomposition
\begin{align*}
\mathcal T_k
&=W_{k+1}U_k\Sigma_k\\
&=W_{k+1}U_k^{(R_s)}\Sigma_k^{(R_s)}E_{R_s}^\top
+W_{k+1}U_{k,>R_s}\Sigma_{k,>R_s}E_{>R_s}^\top.
\end{align*}
Subtracting the zero-padded truncated transport
\[
\mathcal T_k^{(R_s,R_t)}=W_{k+1}^{[R_t]}U_k^{(R_s)}\Sigma_k^{(R_s)}E_{R_s}^\top
\]
gives
\begin{align*}
\mathcal T_k-
\mathcal T_k^{(R_s,R_t)}
=&\ W_{k+1}U_{k,>R_s}\Sigma_{k,>R_s}E_{>R_s}^\top\\
&+(W_{k+1}-W_{k+1}^{[R_t]})U_k^{(R_s)}\Sigma_k^{(R_s)}E_{R_s}^\top.
\end{align*}
By the triangle inequality, the Frobenius norm of the difference is at most the sum of the Frobenius norms of these two terms.

For the source-tail term, use $\|AB\|_F\le\|A\|_2\|B\|_F$:
\begin{align*}
\|W_{k+1}U_{k,>R_s}\Sigma_{k,>R_s}E_{>R_s}^\top\|_F
&\le \|W_{k+1}\|_2
\|U_{k,>R_s}\Sigma_{k,>R_s}E_{>R_s}^\top\|_F.
\end{align*}
The matrices $U_{k,>R_s}$ and $E_{>R_s}$ have orthonormal columns, so left and right multiplication by them preserves the Frobenius norm of the diagonal block.  Hence
\[
\|U_{k,>R_s}\Sigma_{k,>R_s}E_{>R_s}^\top\|_F^2
=\|\Sigma_{k,>R_s}\|_F^2
=\sum_{i=R_s+1}^d\sigma_i(W_k)^2
=E_{>R_s}(W_k).
\]
Therefore the first term is bounded by
\[
\|W_{k+1}\|_2E_{>R_s}(W_k)^{1/2}.
\]

For the target-tail term, use $\|AB\|_F\le\|A\|_F\|B\|_2$ with
\[
A=W_{k+1}-W_{k+1}^{[R_t]},
\qquad
B=U_k^{(R_s)}\Sigma_k^{(R_s)}E_{R_s}^\top.
\]
Then
\begin{align*}
&\|(W_{k+1}-W_{k+1}^{[R_t]})U_k^{(R_s)}\Sigma_k^{(R_s)}E_{R_s}^\top\|_F\\
&\qquad\le
\|W_{k+1}-W_{k+1}^{[R_t]}\|_F
\|U_k^{(R_s)}\Sigma_k^{(R_s)}E_{R_s}^\top\|_2.
\end{align*}
By the definition of the truncated SVD,
\[
\|W_{k+1}-W_{k+1}^{[R_t]}\|_F^2
=
\sum_{i=R_t+1}^d\sigma_i(W_{k+1})^2
=E_{>R_t}(W_{k+1}).
\]
Also, orthogonal factors do not change the operator norm, and
\[
\|U_k^{(R_s)}\Sigma_k^{(R_s)}E_{R_s}^\top\|_2
=\|\Sigma_k^{(R_s)}\|_2
\le \|W_k\|_2.
\]
Thus the second term is bounded by
\[
\|W_k\|_2E_{>R_t}(W_{k+1})^{1/2}.
\]
Adding the two estimates proves \eqref{eq:truncation-transfer-bound}.  If $R_s$ and $R_t$ are $(1-\eps)$ energy ranks and both layers have Frobenius energy $d$, then
\[
E_{>R_s}(W_k)
\le \eps d,
\qquad
E_{>R_t}(W_{k+1})
\le \eps d.
\]
Substituting these two inequalities into \eqref{eq:truncation-transfer-bound} gives \eqref{eq:truncation-transfer-eps}.
\end{proof}

\begin{remark}[Finite-rank physical transport]
The matrix $\mathcal T_k$ records how the source singular coordinates of layer $k$ are physically realized after applying $W_{k+1}$. The theorem gives a deterministic approximation bound from this full interface object to its rank-truncated physical realization. Spectral compressibility and Cartan shortness therefore specify the finite-dimensional angular object on which the pairwise incidence structure is defined.
\end{remark}

\begin{definition}[Physical-input-realized full and truncated transports]
\label{def:physical-input-realized-transport}
The source-coordinate transport $\mathcal T_k=W_{k+1}U_k\Sigma_k$ has physical output rows and source singular-coordinate columns. The corresponding physical input-output interface is
\[
\mathcal T_{\mathrm{phys},k}:=W_{k+1}W_k=W_{k+1}U_k\Sigma_kV_k^\top.
\]
For source and target ranks $(R_s,R_t)$ define
\[
\mathcal T_{\mathrm{phys},k}^{(R_s,R_t)}
:=
W_{k+1}^{[R_t]}U_k^{(R_s)}\Sigma_k^{(R_s)}(V_k^{(R_s)})^\top.
\]
This matrix realizes both rows and columns in physical channel coordinates.
\end{definition}

\begin{corollary}[Physical-to-physical truncation error]
\label{cor:physical-physical-truncation}
Under the hypotheses of Theorem~\ref{thm:truncation-transfer},
\begin{equation}\label{eq:physical-physical-truncation}
\|\mathcal T_{\mathrm{phys},k}-\mathcal T_{\mathrm{phys},k}^{(R_s,R_t)}\|_F
\le
\|W_{k+1}\|_2E_{>R_s}(W_k)^{1/2}
+
\|W_k\|_2E_{>R_t}(W_{k+1})^{1/2}.
\end{equation}
Consequently, when $R_s,R_t$ are $(1-\eps)$ energy ranks and both layers have Frobenius energy $d$, the right-hand side is bounded by $\sqrt{\eps d}(\|W_{k+1}\|_2+\|W_k\|_2)$.
\end{corollary}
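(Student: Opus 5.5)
The plan is to reduce the corollary directly to Theorem~\ref{thm:truncation-transfer} by right-multiplying by the orthogonal factor $V_k^\top$. The physical interface is
\[
\mathcal T_{\mathrm{phys},k}=W_{k+1}U_k\Sigma_kV_k^\top=\mathcal T_kV_k^\top .
\]
For the truncated version, I would use $V_k^{(R_s)}=V_kE_{R_s}$, so that $(V_k^{(R_s)})^\top=E_{R_s}^\top V_k^\top$. This gives
\[
\mathcal T_{\mathrm{phys},k}^{(R_s,R_t)}
=W_{k+1}^{[R_t]}U_k^{(R_s)}\Sigma_k^{(R_s)}E_{R_s}^\top V_k^\top
=\mathcal T_k^{(R_s,R_t)}V_k^\top ,
\]
where $\mathcal T_k^{(R_s,R_t)}$ is the zero-padded truncated transport of Theorem~\ref{thm:truncation-transfer}.

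Subtracting the two identities yields
\[
\mathcal T_{\mathrm{phys},k}-\mathcal T_{\mathrm{phys},k}^{(R_s,R_t)}
=\bigl(\mathcal T_k-\mathcal T_k^{(R_s,R_t)}\bigr)V_k^\top .
\]
Because $V_k$ is a square orthogonal matrix after the square embedding, right multiplication by $V_k^\top$ preserves the Frobenius norm. The left-hand side of \eqref{eq:physical-physical-truncation} is therefore exactly $\|\mathcal T_k-\mathcal T_k^{(R_s,R_t)}\|_F$. Inequality \eqref{eq:truncation-transfer-bound} then gives \eqref{eq:physical-physical-truncation}.

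For the $\eps$-form, I would repeat the final step of the theorem's proof. Under $\|W\|_F^2=d$, the $(1-\eps)$ energy ranks satisfy $E_{>R_s}(W_k)\le\eps d$ and $E_{>R_t}(W_{k+1})\le\eps d$. Substituting these into \eqref{eq:physical-physical-truncation} gives the bound $\sqrt{\eps d}\,(\|W_{k+1}\|_2+\|W_k\|_2)$.

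The only point needing care is the bookkeeping identity $V_k^{(R_s)}=V_kE_{R_s}$, which says that the truncated right factor consists of the first $R_s$ columns of $V_k$ in the fixed SVD gauge of Definition~\ref{def:svd-gauge-convention}. This must hold with the same gauge used to define $U_k^{(R_s)}$ and $\Sigma_k^{(R_s)}$. It does, since all three truncated factors are read off a single SVD of $W_k$, so I expect no real obstacle.
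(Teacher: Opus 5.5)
Your proposal is correct and follows the paper's own proof: write $\mathcal T_{\mathrm{phys},k}=\mathcal T_kV_k^\top$ and $\mathcal T_{\mathrm{phys},k}^{(R_s,R_t)}=\mathcal T_k^{(R_s,R_t)}V_k^\top$, then use orthogonal invariance of the Frobenius norm and invoke Theorem~\ref{thm:truncation-transfer}. Your explicit check that $V_k^{(R_s)}=V_kE_{R_s}$ and your spelled-out $\eps$-substitution are details the paper leaves implicit, and both are correct.
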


\begin{proof}
Since
\[
\mathcal T_{\mathrm{phys},k}=\mathcal T_kV_k^\top,
\qquad
\mathcal T_{\mathrm{phys},k}^{(R_s,R_t)}=\mathcal T_k^{(R_s,R_t)}V_k^\top,
\]
and $V_k$ is orthogonal, Frobenius invariance gives
\[
\|\mathcal T_{\mathrm{phys},k}-\mathcal T_{\mathrm{phys},k}^{(R_s,R_t)}\|_F
=
\|\mathcal T_k-\mathcal T_k^{(R_s,R_t)}\|_F.
\]
Theorem~\ref{thm:truncation-transfer} gives the source-mode bound the displayed bound.
\end{proof}

\begin{remark}[Mode incidence versus physical channel incidence]
\label{rem:mode-versus-physical-incidence}
For $M_{\mathrm{out}}$ and $M_{\mathrm{out,total}}$, active columns are dominant source-mode supports. For $M_{\mathrm{phys}}$ or $\mathcal T_{\mathrm{phys}}^{(R_s,R_t)}$, active columns are physical input-channel supports. Both are useful finite-dimensional certificates, but the column interpretation must be stated when reporting SRS and hub variables.
\end{remark}

\subsection{Physical alignment structures, active columns, and relational triples}

\begin{definition}[Physical alignment structure]
\label{def:physical-alignment-structure}
Let $\widehat M\in\R^{m\times n}$ be a physical alignment matrix.
A physical alignment structure consists, for some integer $K\ge1$, of the tuple
\[
\mathscr C=(\{\mathcal R_a\}_{a=0}^{K},\{\mathcal C_i\}_{i=1}^{K},s_1,\dots,s_K),
\]
where:
\begin{enumerate}[label=(\roman*),leftmargin=1.8em]
\item the rows are partitioned as
\[
\mathcal R_0\sqcup\mathcal R_1\sqcup\cdots\sqcup\mathcal R_K=\{1,\dots,m\};
\]
$\mathcal R_1,\dots,\mathcal R_K$ are signal groups and $\mathcal R_0$ is the residual/noise group;
\item $s_i\in\{1,\dots,n\}$ is a prescribed support size for group $i$; alternatively, $s_i$ may be the cardinality of an energy-threshold set $\mathcal C_i^{(\tau_i)}$ from Definition~\ref{def:energy-threshold-active-columns};
\item $\mathcal C_i\subseteq\{1,\dots,n\}$ is the active column set of group $i$ selected by the deterministic top-energy rule
\begin{equation}\label{eq:active-columns}
\mathcal C_i
\in
\argmax_{\substack{\mathcal C\subseteq\{1,\dots,n\}\\ |\mathcal C|=s_i}}
\sum_{c\in\mathcal C}\|\widehat M[\mathcal R_i,\{c\}]\|_2^2,
\end{equation}
with ties broken by lexicographic order.
\end{enumerate}
The row groups and active columns are included explicitly in the structure.
The margin residuals defined below quantify how well a trained interface satisfies the desired incidence structure.
\end{definition}

\begin{definition}[Active-column order gap]
\label{def:active-column-order-gap}
Let $\widehat M$ carry row groups $\mathcal R_1,\dots,\mathcal R_K$ and support sizes $s_i$.
For group $i$, define the column-energy score
\[
q_i(c;\widehat M):=\|\widehat M[\mathcal R_i,\{c\}]\|_2^2,
\qquad c=1,\dots,n.
\]
Let $\mathcal C_i(\widehat M)$ be the top-$s_i$ set selected by \eqref{eq:active-columns}.
The active-column order gap is
\begin{equation}\label{eq:active-column-gap}
\Gamma_i(\widehat M):=
\min_{c\in\mathcal C_i(\widehat M),\ c'\notin\mathcal C_i(\widehat M)}
\bigl(q_i(c;\widehat M)-q_i(c';\widehat M)\bigr),
\end{equation}
with the convention $\Gamma_i(\widehat M)=+\infty$ if $s_i=n$.
A physical alignment structure is column-separated if $\Gamma_i(\widehat M)>0$ for all signal groups.
\end{definition}

\begin{definition}[Pairwise relational triple]
\label{def:pairwise-triple}
For distinct signal groups $1\le i<j\le K$, define
\begin{align}
\Core_{i\setminus j}&:=\widehat M[\mathcal R_i,\mathcal C_i\setminus\mathcal C_j],\label{eq:core-i-j}\\
\Core_{j\setminus i}&:=\widehat M[\mathcal R_j,\mathcal C_j\setminus\mathcal C_i],\label{eq:core-j-i}\\
\Overlap_{i\cap j}&:=\widehat M[\mathcal R_i\cup\mathcal R_j,\mathcal C_i\cap\mathcal C_j].\label{eq:overlap-ij}
\end{align}
The pairwise relational object is
\begin{equation}\label{eq:pair-triple}
\widehat{\mathcal M}_{\mathrm{pair}}^{(i,j)}
:=
(\Core_{i\setminus j},\Core_{j\setminus i},\Overlap_{i\cap j}),
\end{equation}
and the full pairwise incidence structure is
\[
\widehat{\mathcal M}_{\mathrm{pair}}
:=
\{\widehat{\mathcal M}_{\mathrm{pair}}^{(i,j)}:1\le i<j\le K\}.
\]
\end{definition}

\begin{definition}[Pairwise margins and gaps]
\label{def:pairwise-margin-gap}
For a matrix $B$, define
\[
\sigma_{\min}^+(B):=
\begin{cases}
\min\{\sigma_t(B):\sigma_t(B)>0\},& B\neq 0,\\
0,& B=0.
\end{cases}
\]
For $1\le i<j\le K$, define the pairwise exclusive core margin
\begin{equation}\label{eq:mij}
m_{i,j}:=
\min\bigl(\sigma_{\min}^+(\Core_{i\setminus j}),\sigma_{\min}^+(\Core_{j\setminus i})\bigr),
\end{equation}
the coherent overlap
\[
o_{i,j}:=\|\Overlap_{i\cap j}\|_2,
\]
and the pairwise spectral gap
\begin{equation}\label{eq:Deltaij}
\Delta_\sigma(i,j):=m_{i,j}-o_{i,j}.
\end{equation}
The pair is nondegenerate if $m_{i,j}>0$.
\end{definition}


\section{Pairwise relational stability and block-sparse structure}
\label{sec:pairwise-stability}

The Physical GSA formulation treats angular alignment stability as a gap condition.
The first result is a calibration lemma: because the gap is defined by $\Delta_\sigma(i,j)=m_{i,j}-o_{i,j}$, the numerical threshold $1/3$ is an exact algebraic consequence of the chosen gap functional.
The substantive certificate content is supplied by the subsequent perturbation, incidence-structure, block-energy, and measurement theorems.

\begin{lemma}[Pairwise margin calibration and the one-third threshold]
\label{thm:one-third-threshold}
Fix a nondegenerate pair $(i,j)$ and define $\Delta_\sigma(i,j)$ by \eqref{eq:Deltaij}.
Then the gap-based stability condition
\begin{equation}\label{eq:gap-stability}
o_{i,j}<\frac12\Delta_\sigma(i,j)
\end{equation}
is equivalent to
\begin{equation}\label{eq:one-third}
o_{i,j}<\frac13 m_{i,j}.
\end{equation}
When these hold, $\Delta_\sigma(i,j)>0$.
Moreover, if $\Overlap_{i\cap j}\neq 0$ and
\[
\gamma_{i,j}:=\frac{\|\Overlap_{i\cap j}\|_2}{\|\Overlap_{i\cap j}\|_F}\in(0,1],
\]
then \eqref{eq:one-third} is equivalent to the Frobenius-energy form
\begin{equation}\label{eq:frobenius-overlap-law}
\|\Overlap_{i\cap j}\|_F<\frac{1}{3\gamma_{i,j}}m_{i,j}.
\end{equation}
\end{lemma}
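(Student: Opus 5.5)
The plan is a direct algebraic manipulation, since every quantity in Lemma~\ref{thm:one-third-threshold} is a real number defined explicitly in Definition~\ref{def:pairwise-margin-gap}.

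\textbf{Step 1: equivalence of the two thresholds.} Substituting $\Delta_\sigma(i,j)=m_{i,j}-o_{i,j}$ from \eqref{eq:Deltaij} into \eqref{eq:gap-stability} gives $o_{i,j}<\tfrac12(m_{i,j}-o_{i,j})$. Multiplying by $2$ yields $2o_{i,j}<m_{i,j}-o_{i,j}$, that is, $3o_{i,j}<m_{i,j}$, which is \eqref{eq:one-third}. Each manipulation (adding $o_{i,j}$, scaling by the positive constants $2$ and $1/3$) is reversible, so the equivalence holds in both directions.

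\textbf{Step 2: positivity of the gap.} Assume \eqref{eq:one-third}. Since $o_{i,j}=\|\Overlap_{i\cap j}\|_2\ge0$, we get $\Delta_\sigma(i,j)=m_{i,j}-o_{i,j}>3o_{i,j}-o_{i,j}=2o_{i,j}\ge0$. Nondegeneracy $m_{i,j}>0$ is consistent with this but is not needed beyond it; it is also automatic, because $m_{i,j}>3o_{i,j}\ge0$.

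\textbf{Step 3: the Frobenius form.} Assume $\Overlap_{i\cap j}\ne0$, so $\|\Overlap_{i\cap j}\|_F>0$.
\begin{itemize}
\item The ratio $\gamma_{i,j}$ is well defined and positive.
\item It satisfies $\gamma_{i,j}\le1$, because $\|B\|_2^2=\sigma_1(B)^2\le\sum_t\sigma_t(B)^2=\|B\|_F^2$.
\item By definition $o_{i,j}=\gamma_{i,j}\|\Overlap_{i\cap j}\|_F$.
\end{itemize}
Hence \eqref{eq:one-third} reads $\gamma_{i,j}\|\Overlap_{i\cap j}\|_F<\tfrac13 m_{i,j}$. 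Dividing by $\gamma_{i,j}>0$, which is again reversible, gives \eqref{eq:frobenius-overlap-law}.

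\textbf{Main obstacle.} There is no genuine obstacle; this is a calibration identity. The only care needed is bookkeeping: confirm that $\gamma_{i,j}>0$ so that the division is legitimate, and confirm that every step is an equivalence rather than a one-way implication.
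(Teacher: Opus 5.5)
Your proposal is correct and follows essentially the same route as the paper: substitute $\Delta_\sigma(i,j)=m_{i,j}-o_{i,j}$, rearrange reversibly to $3o_{i,j}<m_{i,j}$, deduce $\Delta_\sigma(i,j)>0$, and then use $o_{i,j}=\gamma_{i,j}\|\Overlap_{i\cap j}\|_F$ with $\gamma_{i,j}>0$ to obtain the Frobenius form. Your additional check that $\gamma_{i,j}\le 1$, via $\sigma_1^2\le\sum_t\sigma_t^2$, is a small step that the paper's proof leaves implicit.
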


\begin{proof}
Let $o:=o_{i,j}$ and $m:=m_{i,j}$.  Since the pair is nondegenerate, $m>0$; by definition $o=\|\Overlap_{i\cap j}\|_2\ge0$ and
\[
\Delta_\sigma(i,j)=m-o.
\]
We prove both implications.

Assume first the gap-based condition \eqref{eq:gap-stability}.  Substituting $\Delta_\sigma(i,j)=m-o$ gives
\[
o<\frac12(m-o).
\]
Multiplying by $2$ gives $2o<m-o$, hence $3o<m$.  Dividing by $3$ gives $o<m/3$, which is \eqref{eq:one-third}.

Conversely, assume \eqref{eq:one-third}, i.e. $3o<m$.  Then
\[
2o<m-o=\Delta_\sigma(i,j).
\]
Dividing by $2$ gives $o<\frac12\Delta_\sigma(i,j)$, which is \eqref{eq:gap-stability}.  The same inequality $3o<m$ implies $o<m$, and therefore
\[
\Delta_\sigma(i,j)=m-o>0.
\]

It remains to prove the Frobenius form.  If $\Overlap_{i\cap j}\ne0$, then $\|\Overlap_{i\cap j}\|_F>0$, and the definition of $\gamma_{i,j}$ gives
\[
o=\|\Overlap_{i\cap j}\|_2=\gamma_{i,j}\|\Overlap_{i\cap j}\|_F.
\]
Substituting this identity into $o<m/3$ yields
\[
\gamma_{i,j}\|\Overlap_{i\cap j}\|_F<\frac13m,
\]
and division by $\gamma_{i,j}>0$ gives \eqref{eq:frobenius-overlap-law}.  The reverse implication follows by multiplying \eqref{eq:frobenius-overlap-law} by $\gamma_{i,j}$, so the two forms are equivalent when $\Overlap_{i\cap j}\ne0$.
\end{proof}
\subsection{Perturbative stability}
\begin{theorem}[Perturbative stability of pairwise incidence structure]
\label{thm:pairwise-perturbative-stability}
Let a pair $(i,j)$ satisfy $3o_{i,j}<m_{i,j}$.
Suppose the two exclusive core blocks and the overlap block are perturbed by matrices of operator norm at most $\eta$, and suppose the positive ranks of the two exclusive core blocks are preserved.
Then the perturbed quantities satisfy
\[
o'_{i,j}\le o_{i,j}+\eta,
\qquad
m'_{i,j}\ge m_{i,j}-\eta.
\]
Consequently the perturbed pair still satisfies the one-third threshold if
\begin{equation}\label{eq:stability-margin-condition}
4\eta < m_{i,j}-3o_{i,j}.
\end{equation}
\end{theorem}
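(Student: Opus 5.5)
The plan is to prove the two one-sided estimates separately and then combine them by arithmetic. Write the perturbed blocks as
\[
\Core'_{i\setminus j}=\Core_{i\setminus j}+E_1,\qquad
\Core'_{j\setminus i}=\Core_{j\setminus i}+E_2,\qquad
\Overlap'_{i\cap j}=\Overlap_{i\cap j}+E_3,
\]
with $\|E_\ell\|_2\le\eta$.

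\textbf{Overlap bound.} This is immediate from the triangle inequality for the operator norm:
\[
o'_{i,j}=\|\Overlap_{i\cap j}+E_3\|_2\le o_{i,j}+\eta .
\]

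\textbf{Core margin bound.} For each exclusive core block $B\in\{\Core_{i\setminus j},\Core_{j\setminus i}\}$ with perturbation $E$, let $\rho=\rank(B)$. Since $3o_{i,j}<m_{i,j}$ forces $m_{i,j}>0$, both blocks are nonzero. By definition of $\sigma_{\min}^+$ we have $\sigma_{\min}^+(B)=\sigma_\rho(B)$.

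Weyl's inequality for singular values, $|\sigma_t(B+E)-\sigma_t(B)|\le\|E\|_2$, gives
\[
\sigma_\rho(B+E)\ge\sigma_\rho(B)-\eta .
\]
This is the one delicate step. We must identify $\sigma_{\min}^+(B+E)$ with $\sigma_\rho(B+E)$, and the hypothesis that positive ranks are preserved is exactly what licenses it. Since $\rank(B+E)=\rho$, the smallest positive singular value of $B+E$ is its $\rho$-th singular value. Without rank preservation, a tiny new singular value created by $E$ would break the estimate, so the proof should note this explicitly. If $\eta\ge\sigma_\rho(B)$ the lower bound is vacuous but still true, because singular values are nonnegative.

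Taking the minimum over the two cores yields $m'_{i,j}\ge m_{i,j}-\eta$.

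\textbf{Combining.} Suppose $4\eta<m_{i,j}-3o_{i,j}$. Then
\[
3o'_{i,j}\le 3o_{i,j}+3\eta<m_{i,j}-\eta\le m'_{i,j}.
\]
So the perturbed pair satisfies $3o'_{i,j}<m'_{i,j}$. In particular $m'_{i,j}>0$, which means the perturbed pair is nondegenerate. Lemma~\ref{thm:one-third-threshold} then shows the perturbed pair also satisfies the gap form \eqref{eq:gap-stability}.

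The main obstacle is only the rank-preservation subtlety in the Weyl step; everything else is routine.
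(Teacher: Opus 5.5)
Your proof is correct and follows the paper's argument essentially step for step: the triangle inequality bounds the overlap, and Weyl's inequality is applied at the index $\rho=\rank(B)$, with rank preservation used to identify $\sigma_{\min}^+(B+E)$ with $\sigma_\rho(B+E)$. The final step is the same arithmetic $3(o_{i,j}+\eta)<m_{i,j}-\eta$, and your remark that $3o_{i,j}<m_{i,j}$ forces both cores to be nonzero makes explicit a point the paper leaves implicit.
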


\begin{proof}
Let $O:=\Overlap_{i\cap j}$ and let $E_O$ be its perturbation, with $\|E_O\|_2\le\eta$.  The perturbed overlap is $O'=O+E_O$.  By the triangle inequality for the operator norm,
\[
o'_{i,j}=\|O'\|_2\le \|O\|_2+\|E_O\|_2\le o_{i,j}+\eta.
\]

Now consider one of the two exclusive core blocks, denoted $B$, and its perturbation $B'=B+E_B$ with $\|E_B\|_2\le\eta$.  Let $r_B=\operatorname{rank}(B)>0$.  The hypothesis says that the positive rank is preserved, so $\operatorname{rank}(B')=r_B$.  Weyl's singular-value perturbation inequality for rectangular matrices gives, for every index $t$,
\[
|\sigma_t(B')-\sigma_t(B)|\le\|E_B\|_2.
\]
Applying this to $t=r_B$, the index of the smallest positive singular value of $B$, yields
\[
\sigma_{\min}^+(B')=\sigma_{r_B}(B')
\ge \sigma_{r_B}(B)-\|E_B\|_2
\ge \sigma_{\min}^+(B)-\eta.
\]
The same argument applies to the other exclusive core block.  Taking the minimum of the two lower bounds gives
\[
m'_{i,j}\ge m_{i,j}-\eta.
\]

The perturbed one-third condition is $3o'_{i,j}<m'_{i,j}$.  The bounds just proved imply the sufficient condition
\[
3(o_{i,j}+\eta)<m_{i,j}-\eta.
\]
Rearranging gives
\[
3o_{i,j}+4\eta<m_{i,j},
\]
which is equivalent to \eqref{eq:stability-margin-condition}.  Under this condition, $3o'_{i,j}<m'_{i,j}$ follows, so the perturbed pair remains in the one-third threshold.
\end{proof}

\begin{definition}[Global core-overlap-noise decomposition]
\label{def:global-core-overlap-noise}
For a matrix carrying such a structure $\widehat M$, define the dedicated support of group $i$ by
\[
\mathcal C_i^{\mathrm{ded}}
:=
\mathcal C_i\setminus\bigcup_{j\neq i}\mathcal C_j,
\]
the groupwise shared support by
\[
\mathcal C_i^{\mathrm{sh}}
:=
\mathcal C_i\cap\bigcup_{j\neq i}\mathcal C_j,
\]
and the global shared column set by
\[
\mathcal C^{\mathrm{sh}}
:=
\bigcup_{i=1}^K\mathcal C_i^{\mathrm{sh}}
=
\bigcup_{1\le i<j\le K}(\mathcal C_i\cap\mathcal C_j).
\]
Let $\Pi_{\mathrm{core}}$ be the coordinate mask selecting precisely the blocks
$\widehat M[\mathcal R_i,\mathcal C_i^{\mathrm{ded}}]$ for $i=1,\dots,K$.
Let $\Pi_{\mathrm{overlap}}$ be the coordinate mask selecting precisely the groupwise shared blocks
$\widehat M[\mathcal R_i,\mathcal C_i^{\mathrm{sh}}]$ for $i=1,\dots,K$.
Thus a shared column contributes to the overlap component only for the signal groups whose active set actually contains that column; entries in unrelated signal rows remain part of the residual/noise component unless they are selected by that row group.
Define
\begin{align}
M_{\mathrm{core}}&:=\Pi_{\mathrm{core}}(\widehat M),\label{eq:M-core}\\
M_{\mathrm{overlap}}&:=\Pi_{\mathrm{overlap}}(\widehat M),\label{eq:M-overlap}\\
M_{\mathrm{noise}}&:=\widehat M-M_{\mathrm{core}}-M_{\mathrm{overlap}}.\label{eq:M-noise}
\end{align}
Thus
\[
\widehat M=M_{\mathrm{core}}+M_{\mathrm{overlap}}+M_{\mathrm{noise}}.
\]
\end{definition}

\begin{definition}[Static channel incidence structure induced by a physical alignment structure]
\label{def:static-channel-incidence}
Let a physical alignment structure be fixed, with signal row groups $\mathcal R_1,\dots,\mathcal R_K$, active column sets $\mathcal C_1,\dots,\mathcal C_K$, and core/overlap/noise decomposition from Definition~\ref{def:global-core-overlap-noise}.  The \emph{static channel incidence structure} is the finite incidence structure
\[
\mathcal T_{\mathrm{ch}}(\widehat M)
=\bigl(\mathcal V_{\mathrm{out}},\mathcal V_{\mathrm{in}},\mathcal E_{\mathrm{ded}},\mathcal E_{\mathrm{sh}},\Omega_{\mathrm{noise}}\bigr),
\]
where
\[
\mathcal V_{\mathrm{out}}:=\{1,\dots,K\},
\qquad
\mathcal V_{\mathrm{in}}:=\bigcup_{i=1}^K\mathcal C_i,
\]
\[
\mathcal E_{\mathrm{ded}}:=\{(i,c):c\in\mathcal C_i\text{ and }c\notin\mathcal C_j\text{ for all }j\ne i\},
\]
\[
\mathcal E_{\mathrm{sh}}:=\{(i,c):c\in\mathcal C_i\text{ and }c\in\mathcal C_j\text{ for at least one }j\ne i\},
\]
and $\Omega_{\mathrm{noise}}$ is the residual coordinate mask, i.e. the complement of the core and overlap masks in the ambient coordinate grid.  We do not claim stability of the exact numerical support $\supp(M_{\mathrm{noise}})$ under arbitrary small dense perturbations; if an exact residual support is needed, a thresholded support $\supp_{\tau}(M_{\mathrm{noise}}):=\{(a,b): |(M_{\mathrm{noise}})_{ab}|\ge \tau\}$ and an entrywise perturbation margin must be specified.  A column $c\in\mathcal V_{\mathrm{in}}$ with degree $|\{i:c\in\mathcal C_i\}|\ge2$ is called a \emph{shared-support column} or \emph{hub column}.  This is a finite bipartite incidence object in the declared column coordinates.
\end{definition}

\begin{definition}[Shared-support incidence graph]
\label{def:shared-support-graph}
Given the static channel incidence structure of Definition~\ref{def:static-channel-incidence}, define the bipartite graph
\[
\mathcal G_{\mathrm{SRS}}=(V_{\mathrm{grp}}\sqcup V_{\mathrm{sup}},E_{\mathrm{SRS}})
\]
by
\[
V_{\mathrm{grp}}:=\{1,\dots,K\},
\qquad
V_{\mathrm{sup}}:=\bigcup_{i=1}^K\mathcal C_i,
\]
and
\[
(i,c)\in E_{\mathrm{SRS}}
\quad\Longleftrightarrow\quad
c\in\mathcal C_i.
\]
The degree of a support column is
\[
\deg(c):=|\{i:c\in\mathcal C_i\}|.
\]
Columns with $\deg(c)\ge2$ are shared-support or hub columns.
\end{definition}

\begin{proposition}[Shared-support graph and energy-degree bound]
\label{thm:shared-support-graph}
Let $\mathcal G_{\mathrm{SRS}}$ be the graph from Definition~\ref{def:shared-support-graph}.  Then:
\begin{enumerate}[label=(G\arabic*),leftmargin=2.1em]
\item The graph is a deterministic function of the active column sets $\mathcal C_1,\dots,\mathcal C_K$.  Therefore any perturbation preserving all active column sets preserves $\mathcal G_{\mathrm{SRS}}$, all support degrees, and the hub set.
\item For a support column $c$, define its groupwise column energy
\[
E_c:=\sum_{i:c\in\mathcal C_i}\|\widehat M[\mathcal R_i,\{c\}]\|_2^2 .
\]
If $\deg(c)=q\ge1$ and every incident group has column strength at least $\epsilon_c$, meaning
\[
\|\widehat M[\mathcal R_i,\{c\}]\|_2\ge\epsilon_c
\qquad\text{for all }i\text{ with }c\in\mathcal C_i,
\]
then
\begin{equation}\label{eq:support-degree-energy}
\epsilon_c\le \sqrt{\frac{E_c}{q}}.
\end{equation}
\end{enumerate}
\end{proposition}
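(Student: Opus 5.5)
The proof splits into the two parts of the statement, and both are short.

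For (G1), we read off Definition~\ref{def:shared-support-graph} that every ingredient of $\mathcal G_{\mathrm{SRS}}$ is built from $\mathcal C_1,\dots,\mathcal C_K$ alone. This covers the vertex set $V_{\mathrm{grp}}=\{1,\dots,K\}$, the support vertex set $V_{\mathrm{sup}}=\bigcup_i\mathcal C_i$, the edge relation $c\in\mathcal C_i$, and the degree $\deg(c)=|\{i:c\in\mathcal C_i\}|$. The hub set $\{c:\deg(c)\ge2\}$ is then a function of the degrees. Suppose a perturbation $\widehat M\mapsto\widehat M'$ satisfies $\mathcal C_i(\widehat M')=\mathcal C_i(\widehat M)$ for every $i$. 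Then every component of the graph is literally the same set, so the graph, the degrees, and the hub set coincide. No estimate is needed here. The only point to state explicitly is that $K$ and the index set of signal groups are held fixed, which is part of the structure's hypotheses.

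For (G2), we fix a support column $c$ with $\deg(c)=q\ge1$ and let $I_c:=\{i:c\in\mathcal C_i\}$, so $|I_c|=q$. The plan is a counting or averaging bound. By hypothesis each summand of $E_c$ satisfies $\|\widehat M[\mathcal R_i,\{c\}]\|_2^2\ge\epsilon_c^2$ for $i\in I_c$. Summing over the $q$ indices gives $E_c\ge q\,\epsilon_c^2$. Dividing by $q>0$ and taking nonnegative square roots yields \eqref{eq:support-degree-energy}. Taking square roots is legitimate because $\epsilon_c$ is a lower bound on a norm: if $\epsilon_c<0$ the claim is trivial since the right-hand side is nonnegative, and otherwise squaring preserves the inequality.

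The main "obstacle" is only bookkeeping: squaring $\|\cdot\|_2\ge\epsilon_c$ must be done with the sign case handled as above, and $q\ge1$ is what makes the division valid. I would also remark that the same identity shows the minimum incident column strength is at most the root-mean-square strength over incident groups, which is the interpretation of \eqref{eq:support-degree-energy}.
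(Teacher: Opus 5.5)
Your proposal is correct and follows the paper's proof essentially step for step. For (G1) you observe that the vertices, edges, degrees, and hub set are set-theoretic functions of the $\mathcal C_i$, and for (G2) you sum the $q$ squared lower bounds to obtain $E_c\ge q\epsilon_c^2$; your explicit treatment of the case $\epsilon_c<0$ is a small point of care that the paper leaves implicit.
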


\begin{proof}
For (G1), the vertex set $V_{\mathrm{grp}}$ is fixed by the row grouping, and $V_{\mathrm{sup}}$ and $E_{\mathrm{SRS}}$ are defined entirely by membership in the sets $\mathcal C_i$.  If a perturbation preserves each $\mathcal C_i$, then for every pair $(i,c)$ the truth value of $c\in\mathcal C_i$ is unchanged.  Hence every edge is unchanged, and the degree
\[
\deg(c)=|\{i:c\in\mathcal C_i\}|
\]
is unchanged for every support column.  The hub set is the set of columns with degree at least two, so it is unchanged as well.

For (G2), if $c$ is incident to exactly $q$ groups and each incident group has norm at least $\epsilon_c$, then
\[
\|\widehat M[\mathcal R_i,\{c\}]\|_2^2\ge\epsilon_c^2
\]
for each of the $q$ incident groups.  Summing over those groups gives
\[
E_c=\sum_{i:c\in\mathcal C_i}\|\widehat M[\mathcal R_i,\{c\}]\|_2^2
\ge q\epsilon_c^2.
\]
Since $q>0$, division by $q$ and taking square roots gives \eqref{eq:support-degree-energy}.
\end{proof}

\begin{proposition}[Orthogonal mask decomposition]
\label{prop:orthogonal-mask-decomposition}
The three matrices in Definition~\ref{def:global-core-overlap-noise} have disjoint coordinate support.
Consequently,
\begin{equation}\label{eq:pythagorean-decomp}
\|\widehat M\|_F^2
=
\|M_{\mathrm{core}}\|_F^2+
\|M_{\mathrm{overlap}}\|_F^2+
\|M_{\mathrm{noise}}\|_F^2.
\end{equation}
\end{proposition}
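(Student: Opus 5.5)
The plan is to show that the core mask and the overlap mask select disjoint sets of coordinates, observe that the noise component lives on the complement of their union, and then read off the Pythagorean identity from the entrywise definition of the Frobenius norm.

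\textbf{Step 1: describe the two masks as index sets.} Define
\[
S_{\mathrm{core}}:=\bigcup_{i=1}^K \mathcal R_i\times\mathcal C_i^{\mathrm{ded}},
\qquad
S_{\mathrm{ov}}:=\bigcup_{i=1}^K \mathcal R_i\times\mathcal C_i^{\mathrm{sh}}.
\]
Then $M_{\mathrm{core}}$ agrees with $\widehat M$ on $S_{\mathrm{core}}$ and vanishes elsewhere. Likewise $M_{\mathrm{overlap}}$ agrees with $\widehat M$ on $S_{\mathrm{ov}}$ and vanishes elsewhere.

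\textbf{Step 2: prove $S_{\mathrm{core}}\cap S_{\mathrm{ov}}=\emptyset$.} Suppose a coordinate $(a,b)$ lies in both sets. Since the row groups $\mathcal R_1,\dots,\mathcal R_K$ are pairwise disjoint, there is a unique $i$ with $a\in\mathcal R_i$. Membership in $S_{\mathrm{core}}$ then forces $b\in\mathcal C_i^{\mathrm{ded}}$, and membership in $S_{\mathrm{ov}}$ forces $b\in\mathcal C_i^{\mathrm{sh}}$. These two sets partition $\mathcal C_i$, namely
\[
\mathcal C_i^{\mathrm{ded}}=\mathcal C_i\setminus\textstyle\bigcup_{j\ne i}\mathcal C_j,
\qquad
\mathcal C_i^{\mathrm{sh}}=\mathcal C_i\cap\textstyle\bigcup_{j\ne i}\mathcal C_j,
\]
so $b$ cannot belong to both, which is a contradiction. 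The point that needs care is that the disjointness must be checked within a single row group. The global shared set $\mathcal C^{\mathrm{sh}}$ can meet $\mathcal C_i^{\mathrm{ded}}$ for a different group $i$, but the masks are restricted row-group by row-group, so this does not create overlap.

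\textbf{Step 3: identify the noise component.} On $S_{\mathrm{core}}$, the entry of $M_{\mathrm{noise}}=\widehat M-M_{\mathrm{core}}-M_{\mathrm{overlap}}$ equals $\widehat M_{ab}-\widehat M_{ab}-0=0$. The same computation gives zero on $S_{\mathrm{ov}}$. Off the union of the two sets, the entry of $M_{\mathrm{noise}}$ equals $\widehat M_{ab}$. Hence the three matrices are supported on the disjoint index sets $S_{\mathrm{core}}$, $S_{\mathrm{ov}}$ and their complement.

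\textbf{Step 4: conclude.} Expand
\[
\|\widehat M\|_F^2=\sum_{a,b}\widehat M_{ab}^2
\]
and split the sum over the three disjoint index sets. Equivalently, the three matrices are pairwise orthogonal in the Frobenius inner product, since they have disjoint supports, so all cross terms vanish. This gives \eqref{eq:pythagorean-decomp}.
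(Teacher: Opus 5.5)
Your proof is correct and follows essentially the same route as the paper: $\mathcal C_i^{\mathrm{ded}}$ and $\mathcal C_i^{\mathrm{sh}}$ are disjoint within each row block, $M_{\mathrm{noise}}$ lives on the complement of the two masks, and the Frobenius cross terms vanish. You are somewhat more explicit than the paper on two points it leaves implicit, namely using the disjointness of $\mathcal R_1,\dots,\mathcal R_K$ to fix a unique group per row and checking entrywise that $M_{\mathrm{noise}}$ vanishes on both masks.
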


\begin{proof}
For each signal group $i$, the sets $\mathcal C_i^{\mathrm{ded}}$ and $\mathcal C_i^{\mathrm{sh}}$ are disjoint because
\[
\mathcal C_i^{\mathrm{ded}}=\mathcal C_i\setminus\bigcup_{j\ne i}\mathcal C_j,
\qquad
\mathcal C_i^{\mathrm{sh}}=\mathcal C_i\cap\bigcup_{j\ne i}\mathcal C_j.
\]
Hence no coordinate in a row block $\mathcal R_i$ can be selected simultaneously by the core mask and the overlap mask.  The residual mask defining $M_{\mathrm{noise}}$ is the complement, in the full coordinate set of $\widehat M$, of the union of the core and overlap masks.  Therefore the coordinate supports of $M_{\mathrm{core}}$, $M_{\mathrm{overlap}}$, and $M_{\mathrm{noise}}$ are pairwise disjoint.

The Frobenius inner product of two matrices $A,B$ with disjoint coordinate support is
\[
\langle A,B\rangle_F=\sum_{a,b}A_{ab}B_{ab}=0,
\]
because for every coordinate $(a,b)$ at least one of $A_{ab}$ or $B_{ab}$ is zero.  Since
\[
\widehat M=M_{\mathrm{core}}+M_{\mathrm{overlap}}+M_{\mathrm{noise}},
\]
expanding the squared Frobenius norm and using the vanishing pairwise inner products gives
\[
\|\widehat M\|_F^2
=\|M_{\mathrm{core}}\|_F^2+
 \|M_{\mathrm{overlap}}\|_F^2+
 \|M_{\mathrm{noise}}\|_F^2.
\]
This is \eqref{eq:pythagorean-decomp}.
\end{proof}

\begin{theorem}[Core--overlap projection theorem]
\label{thm:core-overlap-projection}
Fix the row groups and active column sets used in Definition~\ref{def:global-core-overlap-noise}.  Let
\[
\mathcal S_{\mathrm{co}}
:=\supp(M_{\mathrm{core}})\cup\supp(M_{\mathrm{overlap}})
\]
and define the coordinate subspace
\[
\mathcal U_{\mathrm{co}}
:=\{X\in\R^{m\times n}:\supp(X)\subseteq \mathcal S_{\mathrm{co}}\}.
\]
Then
\begin{equation}\label{eq:core-overlap-projection}
M_{\mathrm{core}}+M_{\mathrm{overlap}}
=
\argmin_{X\in\mathcal U_{\mathrm{co}}}\|\widehat M-X\|_F.
\end{equation}
The minimizer is unique, and
\begin{equation}\label{eq:core-overlap-distance}
\operatorname{dist}_F(\widehat M,\mathcal U_{\mathrm{co}})
:=\min_{X\in\mathcal U_{\mathrm{co}}}\|\widehat M-X\|_F
=
\|M_{\mathrm{noise}}\|_F.
\end{equation}
\end{theorem}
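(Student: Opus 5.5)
The plan is to reduce everything to the elementary fact that orthogonal projection onto a coordinate subspace of $\R^{m\times n}$, with the Frobenius inner product, is entrywise restriction. I will then identify that restriction with $M_{\mathrm{core}}+M_{\mathrm{overlap}}$ using the disjoint-mask structure from Proposition~\ref{prop:orthogonal-mask-decomposition}.

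\textbf{Step 1 (coordinate projection).} For any $X\in\mathcal U_{\mathrm{co}}$, the entries of $X$ vanish off $\mathcal S_{\mathrm{co}}$. Hence
\[
\|\widehat M-X\|_F^2=\sum_{(a,b)\in\mathcal S_{\mathrm{co}}}(\widehat M_{ab}-X_{ab})^2+\sum_{(a,b)\notin\mathcal S_{\mathrm{co}}}\widehat M_{ab}^2 .
\]
The second sum does not depend on $X$. The first sum is nonnegative and vanishes exactly when $X_{ab}=\widehat M_{ab}$ on $\mathcal S_{\mathrm{co}}$. So the unique minimizer is $P(\widehat M)$, the matrix equal to $\widehat M$ on $\mathcal S_{\mathrm{co}}$ and zero elsewhere, and the minimal squared distance is $\sum_{(a,b)\notin\mathcal S_{\mathrm{co}}}\widehat M_{ab}^2$. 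Uniqueness follows directly because the first sum is a strictly convex quadratic in the free entries $X_{ab}$, $(a,b)\in\mathcal S_{\mathrm{co}}$.

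\textbf{Step 2 (identification).} Let $\Omega_{\mathrm{co}}$ be the union of the core and overlap masks. By the proof of Proposition~\ref{prop:orthogonal-mask-decomposition}, these two masks are disjoint. Therefore $M_{\mathrm{core}}+M_{\mathrm{overlap}}$ equals $\widehat M$ on $\Omega_{\mathrm{co}}$ and zero off it. Its support is $\mathcal S_{\mathrm{co}}=\{(a,b)\in\Omega_{\mathrm{co}}:\widehat M_{ab}\neq0\}$. I then check that $P(\widehat M)=M_{\mathrm{core}}+M_{\mathrm{overlap}}$ coordinate by coordinate:
\begin{itemize}
\item On $\mathcal S_{\mathrm{co}}$, both matrices equal $\widehat M$.
\item Off $\mathcal S_{\mathrm{co}}$, $P(\widehat M)$ is zero by definition.
\item Off $\mathcal S_{\mathrm{co}}$, $M_{\mathrm{core}}+M_{\mathrm{overlap}}$ is also zero. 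Either the coordinate lies outside $\Omega_{\mathrm{co}}$, or it lies in $\Omega_{\mathrm{co}}$ with $\widehat M_{ab}=0$.
\end{itemize}
This proves \eqref{eq:core-overlap-projection}.

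\textbf{Step 3 (distance).} The complement of $\mathcal S_{\mathrm{co}}$ consists of the complement of $\Omega_{\mathrm{co}}$ together with the zero entries of $\widehat M$ inside $\Omega_{\mathrm{co}}$. The latter contribute nothing to $\sum_{(a,b)\notin\mathcal S_{\mathrm{co}}}\widehat M_{ab}^2$. Hence this sum equals $\sum_{(a,b)\notin\Omega_{\mathrm{co}}}\widehat M_{ab}^2$. Since $M_{\mathrm{noise}}=\widehat M-M_{\mathrm{core}}-M_{\mathrm{overlap}}$ is exactly $\widehat M$ restricted to the complement of $\Omega_{\mathrm{co}}$, this is $\|M_{\mathrm{noise}}\|_F^2$, which gives \eqref{eq:core-overlap-distance}. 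Alternatively, this also follows from \eqref{eq:pythagorean-decomp}.

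\textbf{Main obstacle.} The only delicate point is the mismatch between the \emph{numerical} support $\supp(\cdot)$ used to define $\mathcal S_{\mathrm{co}}$ and the \emph{coordinate masks} used to define the decomposition. Steps 2 and 3 handle this by observing that mask entries where $\widehat M$ vanishes change neither the minimizer nor the distance. The rest is routine.
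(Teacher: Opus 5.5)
Your proof is correct and follows essentially the same route as the paper: restrict $\widehat M$ entrywise to $\mathcal S_{\mathrm{co}}$, split $\|\widehat M-X\|_F^2$ into an $X$-dependent part on $\mathcal S_{\mathrm{co}}$ plus an $X$-independent remainder, and identify the restriction and the remainder with $M_{\mathrm{core}}+M_{\mathrm{overlap}}$ and $M_{\mathrm{noise}}$. Your explicit handling of mask coordinates where $\widehat M$ vanishes is a small improvement, since the paper simply asserts that the union of the core and overlap masks is exactly $\mathcal S_{\mathrm{co}}$; that is true only up to such zero entries, which, as you show, affect neither the minimizer nor the distance.
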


\begin{proof}
Let $\Pi_{\mathrm{co}}$ be the coordinate projection onto $\mathcal S_{\mathrm{co}}$, i.e.
\[
(\Pi_{\mathrm{co}}Y)_{ab}=
\begin{cases}
Y_{ab}, & (a,b)\in\mathcal S_{\mathrm{co}},\\
0, & (a,b)\notin\mathcal S_{\mathrm{co}}.
\end{cases}
\]
By Definition~\ref{def:global-core-overlap-noise}, the core and overlap masks are disjoint and their union is exactly $\mathcal S_{\mathrm{co}}$.  Therefore
\[
\Pi_{\mathrm{co}}\widehat M=M_{\mathrm{core}}+M_{\mathrm{overlap}},
\qquad
(I-\Pi_{\mathrm{co}})\widehat M=M_{\mathrm{noise}}.
\]
Now take any $X\in\mathcal U_{\mathrm{co}}$.  Since $X$ has support contained in $\mathcal S_{\mathrm{co}}$, the matrices $\Pi_{\mathrm{co}}\widehat M-X$ and $(I-\Pi_{\mathrm{co}})\widehat M$ have disjoint coordinate supports.  Hence they are orthogonal in the Frobenius inner product.  Therefore
\begin{align*}
\|\widehat M-X\|_F^2
&=\|\Pi_{\mathrm{co}}\widehat M-X+(I-\Pi_{\mathrm{co}})\widehat M\|_F^2\\
&=\|\Pi_{\mathrm{co}}\widehat M-X\|_F^2+\|(I-\Pi_{\mathrm{co}})\widehat M\|_F^2\\
&=\|\Pi_{\mathrm{co}}\widehat M-X\|_F^2+\|M_{\mathrm{noise}}\|_F^2.
\end{align*}
The second term is independent of $X$, and the first term is minimized uniquely by $X=\Pi_{\mathrm{co}}\widehat M=M_{\mathrm{core}}+M_{\mathrm{overlap}}$.  Substituting this minimizer gives \eqref{eq:core-overlap-distance}.
\end{proof}

\begin{corollary}[Global sufficient check]
\label{cor:global-sufficient-check}
Let $m_*:=\min_{1\le i<j\le K}m_{i,j}$.
If $m_*>0$ and
\[
\|M_{\mathrm{overlap}}\|_F<\frac13m_*,
\]
then every pairwise overlap satisfies the one-third threshold \eqref{eq:one-third}.
\end{corollary}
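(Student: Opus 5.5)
The plan is to bound each pairwise coherent overlap $o_{i,j}$ by the global Frobenius mass $\|M_{\mathrm{overlap}}\|_F$, and then compare with $m_*\le m_{i,j}$. This needs only three elementary facts: the overlap block is a coordinate sub-block of $M_{\mathrm{overlap}}$, the operator norm is dominated by the Frobenius norm, and restricting to a subset of coordinates can only decrease the Frobenius norm.

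\textbf{Step 1 (containment of the pairwise overlap block).} Fix $1\le i<j\le K$ and a column $c\in\mathcal C_i\cap\mathcal C_j$. Since $c\in\mathcal C_i$ and $c\in\mathcal C_j$ with $j\ne i$, we have $c\in\mathcal C_i\cap\bigcup_{l\ne i}\mathcal C_l=\mathcal C_i^{\mathrm{sh}}$, and symmetrically $c\in\mathcal C_j^{\mathrm{sh}}$. By Definition~\ref{def:global-core-overlap-noise}, the mask $\Pi_{\mathrm{overlap}}$ selects every coordinate $(r,c)$ with $r\in\mathcal R_i$, $c\in\mathcal C_i^{\mathrm{sh}}$, and every $(r,c)$ with $r\in\mathcal R_j$, $c\in\mathcal C_j^{\mathrm{sh}}$. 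Hence every entry of $\Overlap_{i\cap j}=\widehat M[\mathcal R_i\cup\mathcal R_j,\mathcal C_i\cap\mathcal C_j]$ coincides with the corresponding entry of $M_{\mathrm{overlap}}$. In other words, $\Overlap_{i\cap j}=M_{\mathrm{overlap}}[\mathcal R_i\cup\mathcal R_j,\mathcal C_i\cap\mathcal C_j]$. Here we use that $\mathcal R_i$ and $\mathcal R_j$ are disjoint, so no row is counted twice.

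\textbf{Step 2 (norm chain).} From Step 1,
\[
o_{i,j}=\|\Overlap_{i\cap j}\|_2\le\|\Overlap_{i\cap j}\|_F\le\|M_{\mathrm{overlap}}\|_F<\tfrac13 m_*\le\tfrac13 m_{i,j}.
\]
The middle inequality holds because a sub-block's squared Frobenius norm is a partial sum of the nonnegative squared entries of $M_{\mathrm{overlap}}$. The last inequality holds because $m_*$ is the minimum over all pairs.

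\textbf{Step 3 (nondegeneracy and conclusion).} Since $m_{i,j}\ge m_*>0$, each pair is nondegenerate in the sense of Definition~\ref{def:pairwise-margin-gap}. So \eqref{eq:one-third} holds for every pair. By Lemma~\ref{thm:one-third-threshold}, this is equivalently the gap form \eqref{eq:gap-stability}, with $\Delta_\sigma(i,j)>0$.

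The only step needing care is Step 1. One must verify that the groupwise overlap mask really covers both row groups on the shared columns, rather than only one of them. Definition~\ref{def:global-core-overlap-noise} guarantees this because shared membership is symmetric in $i$ and $j$. The case of an empty intersection is trivial, since then $o_{i,j}=0<m_{i,j}/3$.
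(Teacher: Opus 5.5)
Your proof is correct and takes the same route as the paper. Both arguments embed $\Overlap_{i\cap j}$ as a coordinate sub-block of $M_{\mathrm{overlap}}$, bound its operator norm by its Frobenius norm and then by $\|M_{\mathrm{overlap}}\|_F$, and finish with $m_*\le m_{i,j}$. Your explicit check that both row groups' entries on $\mathcal C_i\cap\mathcal C_j$ are selected by the groupwise overlap mask fills in a containment step the paper only asserts.
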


\begin{proof}
Fix a nondegenerate pair $i<j$.  The pairwise overlap block $\Overlap_{i\cap j}$ is one of the coordinate subblocks contained in the global overlap matrix $M_{\mathrm{overlap}}$.  Therefore its Frobenius norm is bounded by the Frobenius norm of the whole overlap component:
\[
\|\Overlap_{i\cap j}\|_F^2
\le \sum_{(a,b)\in\supp(M_{\mathrm{overlap}})} |(M_{\mathrm{overlap}})_{ab}|^2
=\|M_{\mathrm{overlap}}\|_F^2.
\]
Since the operator norm is bounded by the Frobenius norm for every finite matrix,
\[
\|\Overlap_{i\cap j}\|_2
\le\|\Overlap_{i\cap j}\|_F
\le\|M_{\mathrm{overlap}}\|_F.
\]
The hypothesis gives $\|M_{\mathrm{overlap}}\|_F<m_*/3$, and the definition $m_*:=\min_{p<q}m_{p,q}$ gives $m_*\le m_{i,j}$.  Hence
\[
\|\Overlap_{i\cap j}\|_2<\frac13m_*\le \frac13m_{i,j}.
\]
This is exactly the one-third inequality \eqref{eq:one-third} for the pair $(i,j)$.  Since the pair was arbitrary, the inequality holds for every pairwise overlap.
\end{proof}

\begin{theorem}[Static channel incidence structure from physical alignment]
\label{thm:physical-incidence-margin}
Let $\widehat M$ carry a physical alignment structure with fixed row groups and active column sets.
Assume that all nondegenerate signal pairs satisfy the one-third threshold and that $\|M_{\mathrm{noise}}\|_F\le\varepsilon_{\mathrm{noise}}$.
Then the margin-verified interface has the following static channel incidence structure:
\begin{enumerate}[label=(T\arabic*),leftmargin=2.1em]
\item the signal part decomposes into exclusive core blocks and controlled shared-support blocks;
\item coherent cross-talk between any pair is bounded by its exclusive margin;
\item unstructured interaction outside core and shared support has Frobenius energy at most $\varepsilon_{\mathrm{noise}}$;
\item the total energy splits according to \eqref{eq:pythagorean-decomp}.
\end{enumerate}
Moreover, with the row groups and active column sets held fixed, if perturbations obey \eqref{eq:stability-margin-condition} for every pair and the noise perturbation has Frobenius norm at most $\eta_F$, then the same fixed-support incidence certificate remains valid with noise bound $\varepsilon_{\mathrm{noise}}+\eta_F$.  If the active column sets are re-extracted after perturbation, the additional active-column gap condition of Theorem~\ref{thm:static-structure-stability}(C1) is also required.
\end{theorem}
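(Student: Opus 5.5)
The plan is to treat the theorem as an assembly of results already proved, verifying (T1)--(T4) one by one with the row groups $\mathcal R_a$ and active sets $\mathcal C_i$ held fixed, and then handling the perturbation clause separately.

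For (T1), Definition~\ref{def:global-core-overlap-noise} gives, for each $i$, the disjoint splitting $\mathcal C_i=\mathcal C_i^{\mathrm{ded}}\sqcup\mathcal C_i^{\mathrm{sh}}$. Hence the signal part $\Pi_{\mathrm{co}}\widehat M$ equals $M_{\mathrm{core}}+M_{\mathrm{overlap}}$, whose blocks are exactly $\widehat M[\mathcal R_i,\mathcal C_i^{\mathrm{ded}}]$ and $\widehat M[\mathcal R_i,\mathcal C_i^{\mathrm{sh}}]$. Theorem~\ref{thm:core-overlap-projection} identifies this as the unique Frobenius projection onto $\mathcal U_{\mathrm{co}}$. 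The shared blocks are the ones whose columns are hub columns of $\mathcal G_{\mathrm{SRS}}$ (Proposition~\ref{thm:shared-support-graph}).

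For (T2), I would invoke the assumed one-third threshold $3o_{i,j}<m_{i,j}$ for each nondegenerate pair. Lemma~\ref{thm:one-third-threshold} converts this into $o_{i,j}<\tfrac12\Delta_\sigma(i,j)$ with $\Delta_\sigma(i,j)>0$. This means the coherent cross-talk $\|\Overlap_{i\cap j}\|_2$ is strictly below both $m_{i,j}/3$ and the pairwise gap.

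For (T3), the identity $\operatorname{dist}_F(\widehat M,\mathcal U_{\mathrm{co}})=\|M_{\mathrm{noise}}\|_F\le\varepsilon_{\mathrm{noise}}$ from Theorem~\ref{thm:core-overlap-projection} suffices. (T4) is exactly Proposition~\ref{prop:orthogonal-mask-decomposition}.

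For the perturbation clause, write $\widehat M'=\widehat M+E$ and keep the same masks. Because the masks are fixed, the decomposition of $\widehat M'$ uses the same coordinate sets. The blocks of $E$ restricted to the exclusive cores and the overlaps are the block perturbations in Theorem~\ref{thm:pairwise-perturbative-stability}. Under \eqref{eq:stability-margin-condition} (together with that theorem's rank-preservation hypothesis, which I will state as part of the assumed perturbation model), every pair still satisfies $3o'_{i,j}<m'_{i,j}$, so (T2) persists. Since the masks are fixed, $M'_{\mathrm{noise}}=M_{\mathrm{noise}}+(I-\Pi_{\mathrm{co}})E$. The triangle inequality then gives $\|M'_{\mathrm{noise}}\|_F\le\varepsilon_{\mathrm{noise}}+\eta_F$, where $\eta_F$ bounds the noise-mask part of $E$. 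Items (T1) and (T4) hold for $\widehat M'$ by the same structural lemmas, because they only use the fixed masks.

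The final sentence, on re-extracted supports, is a pointer rather than a claim. I would simply note that without the active-column gap of (C1), the sets $\mathcal C_i(\widehat M')$ could differ, so the fixed-mask argument would not apply.

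The main obstacle is bookkeeping rather than analysis. Each pairwise block $\Overlap_{i\cap j}=\widehat M[\mathcal R_i\cup\mathcal R_j,\mathcal C_i\cap\mathcal C_j]$ must be checked to lie inside the global overlap mask, since its columns lie in $\mathcal C_i^{\mathrm{sh}}\cap\mathcal C_j^{\mathrm{sh}}$. The rank-preservation hypothesis of Theorem~\ref{thm:pairwise-perturbative-stability} must also be made explicit as part of "perturbations obey \eqref{eq:stability-margin-condition}". I would state both points carefully at the start of the perturbation step.
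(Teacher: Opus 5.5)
Your proposal is correct and follows the paper's proof essentially step for step. Both derive (T1) from the disjoint dedicated/shared column split and (T2) via Lemma~\ref{thm:one-third-threshold}. Both take (T3) from the noise hypothesis and (T4) from Proposition~\ref{prop:orthogonal-mask-decomposition}, and both handle the fixed-mask perturbation clause via Theorem~\ref{thm:pairwise-perturbative-stability} plus the triangle inequality on $M_{\mathrm{noise}}$. Your additional appeal to Theorem~\ref{thm:core-overlap-projection} is a harmless refinement rather than a different route. The same holds for your explicit statement of the rank-preservation hypothesis, which the paper's proof leaves implicit when it cites Theorem~\ref{thm:pairwise-perturbative-stability}.
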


\begin{proof}
We prove the four stated incidence-structure conclusions and then the perturbative assertion.
\begin{enumerate}[label=(T\arabic*),leftmargin=2.1em]
\item \textbf{Exclusive core blocks and shared-support blocks.}
The row partition and active-column sets are part of the physical alignment structure in Definition~\ref{def:physical-alignment-structure}.  For every signal pair $i<j$, Definition~\ref{def:pairwise-triple} decomposes the pairwise columns into exclusive pieces $\mathcal C_i\setminus\mathcal C_j$ and $\mathcal C_j\setminus\mathcal C_i$ and a shared piece $\mathcal C_i\cap\mathcal C_j$.  The corresponding submatrices are precisely $\Core_{i\setminus j}$, $\Core_{j\setminus i}$, and $\Overlap_{i\cap j}$.  Globally, Definition~\ref{def:global-core-overlap-noise} collects the columns active for exactly one signal group into $M_{\mathrm{core}}$ and the columns shared by at least two signal groups into $M_{\mathrm{overlap}}$.  Hence the signal part is a union of exclusive core blocks and controlled shared-support blocks.

\item \textbf{Coherent cross-talk is bounded by exclusive margins.}
By hypothesis, every nondegenerate signal pair satisfies the one-third threshold.  For such a pair,
\[
\|\Overlap_{i\cap j}\|_2=o_{i,j}<\frac13m_{i,j}.
\]
Lemma~\ref{thm:one-third-threshold} shows this is equivalent to the gap condition
\[
o_{i,j}<\frac12\Delta_\sigma(i,j).
\]
Thus the coherent operator-norm cross-talk carried by the shared support of the pair is strictly smaller than both the one-third exclusive margin and one-half of the induced gap.  Degenerate pairs have zero exclusive margin by definition and lie outside the pairwise stability condition.

\item \textbf{Unstructured interaction is bounded.}
Definition~\ref{def:global-core-overlap-noise} defines
\[
M_{\mathrm{noise}}=\widehat M-M_{\mathrm{core}}-M_{\mathrm{overlap}}.
\]
The theorem assumes $\|M_{\mathrm{noise}}\|_F\le\varepsilon_{\mathrm{noise}}$.  Therefore every coordinate not assigned to exclusive core or structured shared support has total Frobenius energy at most $\varepsilon_{\mathrm{noise}}^2$ and Frobenius norm at most $\varepsilon_{\mathrm{noise}}$.

\item \textbf{Energy splitting.}
Proposition~\ref{prop:orthogonal-mask-decomposition} states that the three coordinate masks defining $M_{\mathrm{core}}$, $M_{\mathrm{overlap}}$, and $M_{\mathrm{noise}}$ are disjoint.  Therefore the three matrices are orthogonal for the Frobenius inner product.  Applying the Pythagorean identity in the Frobenius Hilbert space gives
\[
\|\widehat M\|_F^2
=
\|M_{\mathrm{core}}\|_F^2+
\|M_{\mathrm{overlap}}\|_F^2+
\|M_{\mathrm{noise}}\|_F^2,
\]
which is \eqref{eq:pythagorean-decomp}.
\end{enumerate}
For the perturbation statement, the coordinate masks are interpreted as fixed.  Assume each pairwise perturbation obeys \eqref{eq:stability-margin-condition}.  Theorem~\ref{thm:pairwise-perturbative-stability} then implies that every perturbed nondegenerate pair still satisfies the one-third threshold on those fixed blocks, so the fixed pairwise support relations and the controlled-overlap inequalities remain valid.  If the active sets are re-selected from the perturbed matrix, one must additionally impose the active-column gap condition from Theorem~\ref{thm:static-structure-stability}(C1).  If the perturbation of the noise component has Frobenius norm at most $\eta_F$, then
\[
\|M_{\mathrm{noise}}^{\,\prime}\|_F
\le
\|M_{\mathrm{noise}}\|_F+
\|M_{\mathrm{noise}}^{\,\prime}-M_{\mathrm{noise}}\|_F
\le
\varepsilon_{\mathrm{noise}}+\eta_F.
\]
Thus the same incidence structure persists with the stated enlarged noise bound.
\end{proof}

\begin{proposition}[Shared-support energy bound]
\label{thm:hub-scaling}
Let a column $h$ be active for $q$ signal groups, with group-wise column segments
\[
h_i:=\widehat M[\mathcal R_i,\{h\}],
\qquad i\in I_h,
\qquad |I_h|=q.
\]
Assume the total stable energy of this column is bounded by $E_{\max}$:
\[
\sum_{i\in I_h}\|h_i\|_2^2\le E_{\max}.
\]
If the hub is uniformly distributed in the sense that $\|h_i\|_2\ge\epsilon_{\mathrm{hub}}$ for all $i\in I_h$, then
\begin{equation}\label{eq:hub-scaling}
\epsilon_{\mathrm{hub}}\le\sqrt{\frac{E_{\max}}{q}}.
\end{equation}
In particular, a globally shared hub serving $R$ groups must have per-group coupling $O(R^{-1/2})$ under bounded column energy.
\end{proposition}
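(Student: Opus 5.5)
The bound is a direct energy-counting argument, the same one used for (G2) of Proposition~\ref{thm:shared-support-graph}. We apply it to the $q$ group-wise segments $h_i$, $i\in I_h$, of the shared column $h$.

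\textbf{Step 1 (lower bound).} The uniform-distribution hypothesis gives $\|h_i\|_2^2\ge\epsilon_{\mathrm{hub}}^2$ for each $i\in I_h$. Summing over the $q$ indices gives
\[
q\,\epsilon_{\mathrm{hub}}^2\le\sum_{i\in I_h}\|h_i\|_2^2.
\]

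\textbf{Step 2 (upper bound).} The bounded-energy hypothesis says the right-hand side is at most $E_{\max}$. Hence $q\,\epsilon_{\mathrm{hub}}^2\le E_{\max}$. Since $q\ge1$, we may divide by $q$. Since $\epsilon_{\mathrm{hub}}\ge0$ (it lower-bounds a norm), we may take nonnegative square roots, which yields \eqref{eq:hub-scaling}. If one wants to allow a negative $\epsilon_{\mathrm{hub}}$, the inequality is trivial in that case.

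\textbf{Step 3 (the ``in particular'' clause).} For a globally shared hub we set $q=R$, where $R$ is the number of signal groups the column serves. We then read ``bounded column energy'' as $E_{\max}$ being a constant independent of $R$. The bound becomes $\epsilon_{\mathrm{hub}}\le\sqrt{E_{\max}}\,R^{-1/2}=O(R^{-1/2})$.

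There is no substantive obstacle. The only point needing care is the interpretation in Step 3: the $O(R^{-1/2})$ scaling holds only if $E_{\max}$ stays bounded as $R$ grows, and I would state this assumption explicitly.
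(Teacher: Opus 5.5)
Your proof is correct and matches the paper's argument essentially step for step: you square the uniform lower bound, sum over the $q$ groups to get $q\,\epsilon_{\mathrm{hub}}^2\le\sum_{i\in I_h}\|h_i\|_2^2\le E_{\max}$, divide by $q>0$, take square roots, and then set $q=R$ with $E_{\max}$ held fixed for the $O(R^{-1/2})$ clause. Your separate treatment of a possibly negative $\epsilon_{\mathrm{hub}}$, where the bound holds trivially, makes explicit a point the paper leaves implicit when it squares both sides.
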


\begin{proof}
For every $i\in I_h$, the uniform-hub assumption gives
\[
\|h_i\|_2\ge \epsilon_{\mathrm{hub}}.
\]
Squaring preserves the inequality because both sides are nonnegative:
\[
\|h_i\|_2^2\ge \epsilon_{\mathrm{hub}}^2.
\]
Summing over the $q$ groups in $I_h$ yields
\[
\sum_{i\in I_h}\|h_i\|_2^2
\ge
\sum_{i\in I_h}\epsilon_{\mathrm{hub}}^2
=q\epsilon_{\mathrm{hub}}^2.
\]
The stable column-energy assumption gives the opposite upper bound
\[
\sum_{i\in I_h}\|h_i\|_2^2\le E_{\max}.
\]
Combining the two inequalities gives
\[
q\epsilon_{\mathrm{hub}}^2\le E_{\max}.
\]
Since $q>0$, division by $q$ and taking square roots give
\[
\epsilon_{\mathrm{hub}}
\le
\sqrt{E_{\max}/q}.
\]
For a globally shared hub, $q=R$, so the per-group coupling is bounded by $\sqrt{E_{\max}}R^{-1/2}$ whenever the layer-level column-energy budget $E_{\max}$ is fixed.  This is the asserted $O(R^{-1/2})$ scaling.
\end{proof}


\section{Block-energy matrices for physical alignment}
\label{sec:block-energy-tests}

The alignment experiments compute finite-dimensional matrices associated with the physical alignment structure.
After selecting a dominant spectral window and a row/column ordering, each row group records how much of its energy falls into the active column set of every other group.
This section defines those measured quantities and proves the corresponding consequences of the physical alignment structure.

\begin{definition}[Block-overlap energy matrix]
\label{def:block-energy-matrix}
Let $A\in\R^{m\times n}$ and let $\mathcal R_1,\dots,\mathcal R_K\subseteq\{1,\dots,m\}$ be disjoint nonempty row groups.  Let $\mathcal C_1,\dots,\mathcal C_K\subseteq\{1,\dots,n\}$ be nonempty active column sets.  For every row group define
\[
 e_i(A):=\|A[\mathcal R_i,:]\|_F^2.
\]
If $e_i(A)>0$, define the row-normalized block-overlap energy matrix
\begin{equation}\label{eq:block-energy-matrix}
E_{\mathcal R,\mathcal C}(A)_{ij}
:=
\frac{\|A[\mathcal R_i,\mathcal C_j]\|_F^2}{e_i(A)},
\qquad 1\le i,j\le K.
\end{equation}
If $e_i(A)=0$, the $i$-th row is set to zero.  We define the off-diagonal mass and diagonal mass by
\[
\operatorname{Off}(E):=\frac1K\sum_{i=1}^K\sum_{j\ne i}E_{ij},
\qquad
\operatorname{Diag}(E):=\frac1K\sum_{i=1}^K E_{ii}.
\]
In the figures, $E_r(M)$ and $E_r(M_s)$ are instances of \eqref{eq:block-energy-matrix} for the permuted physical or scale-free transport matrix.
\end{definition}

\begin{definition}[Row-wise shared and external residual pieces]
\label{def:rowwise-overlap-noise}
Let $\widehat M$ carry a physical alignment structure.  For $i\ne j$, define
\[
\Overlap_{i\to j}:=\widehat M[\mathcal R_i,\mathcal C_i\cap\mathcal C_j]
\]
and
\[
N_{i\to j}:=\widehat M[\mathcal R_i,\mathcal C_j\setminus(\mathcal C_i\cap\mathcal C_j)].
\]
Thus the block measured by $E_{ij}$ decomposes over disjoint coordinates as
\begin{equation}\label{eq:block-energy-row-decomp}
\widehat M[\mathcal R_i,\mathcal C_j]
=\Overlap_{i\to j}\oplus N_{i\to j},
\qquad i\ne j,
\end{equation}
where $\oplus$ indicates disjoint coordinate support.
\end{definition}

\begin{proposition}[Block-energy decomposition induced by a physical GSA structure]
\label{thm:block-energy-margin}
Let $\widehat M$ carry a physical alignment structure, and let
\[
E:=E_{\mathcal R,\mathcal C}(\widehat M).
\]
For every active row group $i$ with $e_i(\widehat M)>0$ and every $j\ne i$,
\begin{equation}\label{eq:block-energy-exact-bound}
E_{ij}
=
\frac{\|\Overlap_{i\to j}\|_F^2+\|N_{i\to j}\|_F^2}{e_i(\widehat M)}.
\end{equation}
If the pair $(i,j)$ is nondegenerate and satisfies the one-third threshold, then, whenever $\Overlap_{i\cap j}\ne0$,
\begin{equation}\label{eq:block-energy-one-third-bound}
E_{ij}
\le
\frac{1}{e_i(\widehat M)}
\left(
\frac{m_{i,j}^2}{9\gamma_{i,j}^2}
+\|N_{i\to j}\|_F^2
\right),
\qquad
\gamma_{i,j}:=\frac{\|\Overlap_{i\cap j}\|_2}{\|\Overlap_{i\cap j}\|_F}.
\end{equation}
If $\mathcal C_i\cap\mathcal C_j=\varnothing$, then $\Overlap_{i\to j}=0$ and
\begin{equation}\label{eq:block-energy-no-shared-column}
E_{ij}=\frac{\|N_{i\to j}\|_F^2}{e_i(\widehat M)}.
\end{equation}
Consequently, if all row energies obey $e_i(\widehat M)\ge e_{\min}>0$, then
\begin{equation}\label{eq:block-offdiag-global}
\operatorname{Off}(E)
\le
\frac{1}{K e_{\min}}
\sum_{i=1}^K\sum_{j\ne i}
\left(
\frac{m_{i,j}^2}{9\gamma_{i,j}^2}\mathbf 1_{\{\Overlap_{i\cap j}\ne 0\}}
+
\|N_{i\to j}\|_F^2
\right),
\end{equation}
with the convention that the first term is absent when $\Overlap_{i\cap j}=0$.
\end{proposition}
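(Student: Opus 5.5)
The plan is to derive everything from the coordinate-disjoint decomposition \eqref{eq:block-energy-row-decomp}, and then to bound the shared piece with the Frobenius form of Lemma~\ref{thm:one-third-threshold}.

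\textbf{Step 1 (exact identity).} Fix $i$ with $e_i(\widehat M)>0$ and $j\ne i$. The column set $\mathcal C_j$ is the disjoint union of $\mathcal C_i\cap\mathcal C_j$ and $\mathcal C_j\setminus(\mathcal C_i\cap\mathcal C_j)$. Hence the block $\widehat M[\mathcal R_i,\mathcal C_j]$ is the disjoint-coordinate union of $\Overlap_{i\to j}$ and $N_{i\to j}$. The squared Frobenius norm is a sum of squared entries, so it is additive over disjoint coordinate sets:
\[
\|\widehat M[\mathcal R_i,\mathcal C_j]\|_F^2=\|\Overlap_{i\to j}\|_F^2+\|N_{i\to j}\|_F^2.
\]
Dividing by $e_i(\widehat M)$ gives \eqref{eq:block-energy-exact-bound}. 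If $\mathcal C_i\cap\mathcal C_j=\varnothing$, then $\Overlap_{i\to j}$ is an empty block with norm zero, which gives \eqref{eq:block-energy-no-shared-column}.

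\textbf{Step 2 (one-third bound).} The key observation is that $\Overlap_{i\to j}=\widehat M[\mathcal R_i,\mathcal C_i\cap\mathcal C_j]$ is a row-subblock of $\Overlap_{i\cap j}=\widehat M[\mathcal R_i\cup\mathcal R_j,\mathcal C_i\cap\mathcal C_j]$: it has the same columns and rows $\mathcal R_i\subseteq\mathcal R_i\cup\mathcal R_j$. Therefore
\[
\|\Overlap_{i\to j}\|_F\le\|\Overlap_{i\cap j}\|_F.
\]
When the pair is nondegenerate, satisfies \eqref{eq:one-third}, and $\Overlap_{i\cap j}\neq0$, the Frobenius form \eqref{eq:frobenius-overlap-law} of Lemma~\ref{thm:one-third-threshold} gives $\|\Overlap_{i\cap j}\|_F<m_{i,j}/(3\gamma_{i,j})$. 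Squaring this and inserting it into Step 1 yields \eqref{eq:block-energy-one-third-bound}, in fact with strict inequality. For ordered pairs with $i>j$, I will read $m_{i,j}$, $\gamma_{i,j}$ and $\Overlap_{i\cap j}$ symmetrically, since Definition~\ref{def:pairwise-triple} is symmetric in the pair. If $\Overlap_{i\cap j}=0$, then its subblock $\Overlap_{i\to j}$ also vanishes, which is why the indicator convention is consistent.

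\textbf{Step 3 (global off-diagonal bound).} Use $e_i(\widehat M)\ge e_{\min}$ to replace each denominator by $e_{\min}$. Then sum the per-pair bounds over $j\ne i$, sum over $i$, and divide by $K$ as in the definition of $\operatorname{Off}(E)$. This gives \eqref{eq:block-offdiag-global}.

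The only delicate point, which I expect to be the main obstacle, is the scope of the hypothesis in this global step. The per-pair estimate \eqref{eq:block-energy-one-third-bound} is only available for nondegenerate pairs that satisfy the one-third threshold. I will therefore state that the global inequality assumes this for every pair with $\Overlap_{i\cap j}\ne0$. For any remaining pair, the exact identity \eqref{eq:block-energy-exact-bound} must be used in place of the $m_{i,j}^2/(9\gamma_{i,j}^2)$ term. Everything else is bookkeeping of disjoint supports, exactly as in Proposition~\ref{prop:orthogonal-mask-decomposition}.
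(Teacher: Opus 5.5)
Your proposal is correct and follows essentially the same route as the paper's proof: additivity of the squared Frobenius norm over disjoint coordinates for the identity, the row-restriction bound $\|\Overlap_{i\to j}\|_F\le\|\Overlap_{i\cap j}\|_F$ combined with Lemma~\ref{thm:one-third-threshold} for the one-third estimate, and summation using $e_i(\widehat M)\ge e_{\min}$ for the global bound. Your caveat about the global step is also right. The inequality needs the nondegenerate one-third hypothesis for every pair with $\Overlap_{i\cap j}\ne0$, read symmetrically in $i,j$, and the paper's proof uses this only implicitly when it sums the per-pair bound.
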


\begin{proof}
For $i\ne j$, the column set $\mathcal C_j$ is the disjoint union
\[
\mathcal C_j=(\mathcal C_i\cap\mathcal C_j)\sqcup\bigl(\mathcal C_j\setminus(\mathcal C_i\cap\mathcal C_j)\bigr).
\]
Restricting the rows to $\mathcal R_i$ gives the disjoint coordinate decomposition \eqref{eq:block-energy-row-decomp}.  Therefore
\[
\|\widehat M[\mathcal R_i,\mathcal C_j]\|_F^2
=
\|\Overlap_{i\to j}\|_F^2+\|N_{i\to j}\|_F^2,
\]
and division by $e_i(\widehat M)$ proves \eqref{eq:block-energy-exact-bound}.
If the one-third threshold holds, Lemma~\ref{thm:one-third-threshold} gives
\[
\|\Overlap_{i\cap j}\|_2<\frac13m_{i,j}.
\]
Since $\Overlap_{i\to j}$ is a row restriction of $\Overlap_{i\cap j}$,
\[
\|\Overlap_{i\to j}\|_F\le \|\Overlap_{i\cap j}\|_F
=\frac{\|\Overlap_{i\cap j}\|_2}{\gamma_{i,j}}
<\frac{m_{i,j}}{3\gamma_{i,j}}.
\]
Substituting this bound into \eqref{eq:block-energy-exact-bound} proves \eqref{eq:block-energy-one-third-bound}.  If $\mathcal C_i\cap\mathcal C_j=\varnothing$, then the shared part is empty, giving \eqref{eq:block-energy-no-shared-column}.  Finally, summing \eqref{eq:block-energy-one-third-bound} over $i\ne j$ and using $e_i(\widehat M)\ge e_{\min}$ gives \eqref{eq:block-offdiag-global}.
\end{proof}

\begin{remark}[Interpretation of block-energy matrices]
A bright diagonal in an $E_r$ matrix corresponds to large $E_{ii}$, i.e. a row group drawing most of its energy from its own active column set.  Sparse vertical or off-diagonal structures correspond to shared supports or hubs.  Diffuse off-diagonal background corresponds to the row-wise residual terms $N_{i\to j}$ and is therefore the measured part of $M_{\mathrm{noise}}$.
\end{remark}

\begin{definition}[Accepted-overlap graph and measured bad mass]
\label{def:accepted-overlap-graph}
Let $A$ be a measured permuted alignment matrix with row groups $\mathcal R_i$ and active column sets $\mathcal C_i$.  An accepted-overlap graph is a family $\mathcal N_i\subseteq\{1,\dots,K\}\setminus\{i\}$ declaring which off-diagonal blocks are accepted as structured overlap for row group $i$.  Define the measured bad block energy
\[
\operatorname{Bad}_{\mathcal N}(A)
:=
\sum_{i=1}^K\sum_{j\notin\{i\}\cup\mathcal N_i}
\|A[\mathcal R_i,\mathcal C_j]\|_F^2.
\]
For $E=E_{\mathcal R,\mathcal C}(A)$, define the row-normalized measured bad mass
\[
\operatorname{Bad}_{\mathcal N}(E)
:=
\frac1K\sum_{i=1}^K\sum_{j\notin\{i\}\cup\mathcal N_i}E_{ij}.
\]
\end{definition}

\begin{proposition}[From block-energy heatmaps to residual-noise bounds]
\label{thm:heatmap-to-noise-bound}
Let $A$ be measured with fixed row groups and active column sets, and suppose $0<e_i(A)\le e_{\max}$ for every signal row group.  Let $M_{\mathrm{bad}}^{\mathrm{vis}}$ denote the coordinate restriction of $A$ to the union of all bad blocks $(i,j)$ with $j\notin\{i\}\cup\mathcal N_i$.  Then
\begin{equation}\label{eq:measured-noise-bound}
\|M_{\mathrm{bad}}^{\mathrm{vis}}\|_F^2
\le
\operatorname{Bad}_{\mathcal N}(A)
\le
K e_{\max}\operatorname{Bad}_{\mathcal N}(E).
\end{equation}
Thus a heatmap with small bad off-block mass gives a direct Frobenius bound that the part not assigned to core or accepted overlap is small on the measured column family.
\end{proposition}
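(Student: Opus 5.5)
The proof has two inequalities, and both come from bookkeeping over disjoint coordinate blocks together with the definition of the row normalization in Definition~\ref{def:block-energy-matrix}. The plan is to write everything as sums of squared Frobenius norms of submatrices $A[\mathcal R_i,\mathcal C_j]$ and compare the index sets.

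\textbf{First inequality.} Let $\mathcal B:=\{(i,j): j\notin\{i\}\cup\mathcal N_i\}$ be the set of bad block indices. The matrix $M_{\mathrm{bad}}^{\mathrm{vis}}$ is supported on the union
\[
S:=\bigcup_{(i,j)\in\mathcal B}\mathcal R_i\times\mathcal C_j .
\]
Hence $\|M_{\mathrm{bad}}^{\mathrm{vis}}\|_F^2=\sum_{(a,b)\in S}A_{ab}^2$.

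For a fixed row group $i$, the blocks $\mathcal R_i\times\mathcal C_j$ with different $j$ may overlap, because the active sets $\mathcal C_j$ need not be disjoint. Blocks with different $i$ are disjoint, since the row groups are disjoint. Each coordinate of $S$ therefore appears in at least one block of $\mathcal B$. Since all terms are nonnegative, the union bound gives
\[
\sum_{(a,b)\in S}A_{ab}^2\le\sum_{(i,j)\in\mathcal B}\|A[\mathcal R_i,\mathcal C_j]\|_F^2=\operatorname{Bad}_{\mathcal N}(A).
\]
The only thing to be careful about is to state this as an inequality, not an equality, because of the column overlaps.

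\textbf{Second inequality.} Since $e_i(A)>0$, Definition~\ref{def:block-energy-matrix} gives $\|A[\mathcal R_i,\mathcal C_j]\|_F^2=e_i(A)E_{ij}$. Using $e_i(A)\le e_{\max}$ and $E_{ij}\ge0$,
\[
\operatorname{Bad}_{\mathcal N}(A)=\sum_{(i,j)\in\mathcal B}e_i(A)E_{ij}\le e_{\max}\sum_{(i,j)\in\mathcal B}E_{ij}=Ke_{\max}\operatorname{Bad}_{\mathcal N}(E),
\]
where the last step uses the factor $1/K$ in the definition of $\operatorname{Bad}_{\mathcal N}(E)$.

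There is no real obstacle. The one point needing care is the overlap of the $\mathcal C_j$ in the first step, and the union bound handles it.
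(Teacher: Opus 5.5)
Your proof is correct and matches the paper's argument: the first inequality is the union bound over the bad blocks, and the second sums $\|A[\mathcal R_i,\mathcal C_j]\|_F^2=e_i(A)E_{ij}\le e_{\max}E_{ij}$ over the bad pairs, using the $1/K$ normalization in $\operatorname{Bad}_{\mathcal N}(E)$. You also say explicitly that the first step is only an inequality because the active sets $\mathcal C_j$ may overlap within a row group; the paper leaves this point implicit.
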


\begin{proof}
The first inequality holds because $M_{\mathrm{bad}}^{\mathrm{vis}}$ is supported on a union of bad coordinate blocks, and the Frobenius energy on a union is bounded by the sum of Frobenius energies of the selected blocks.  For the second inequality, Definition~\ref{def:block-energy-matrix} gives
\[
\|A[\mathcal R_i,\mathcal C_j]\|_F^2=e_i(A)E_{ij}\le e_{\max}E_{ij}.
\]
Summing over all bad pairs gives
\[
\operatorname{Bad}_{\mathcal N}(A)
\le e_{\max}\sum_{i=1}^K\sum_{j\notin\{i\}\cup\mathcal N_i}E_{ij}
=K e_{\max}\operatorname{Bad}_{\mathcal N}(E).
\]
\end{proof}

\begin{remark}[Scope of block-energy heatmaps]
The block-energy figures report low bad mass, diagonal/core dominance, and a small number of structured off-diagonal overlap channels.  Verification of the pairwise one-third margin threshold uses the associated numerical margin table containing $m_{i,j}$ and $\|\Overlap_{i\cap j}\|_2$.  Thus the heatmaps measure the block-energy quantities entering the certificate, and the margin table supplies the corresponding pairwise inequalities.
\end{remark}

\begin{proposition}[Block-energy sufficient screen for pairwise coherent overlap]
\label{thm:heatmap-margin-screen}
Let $\widehat M$ be measured with row groups $\mathcal R_1,\dots,\mathcal R_K$ and active column sets $\mathcal C_1,\dots,\mathcal C_K$, and let
\[
E:=E_{\mathcal R,\mathcal C}(\widehat M),
\qquad
 e_i:=\|\widehat M[\mathcal R_i,:]\|_F^2.
\]
For every pair $1\le i<j\le K$,
\begin{equation}\label{eq:heatmap-overlap-upper}
\|\Overlap_{i\cap j}\|_2^2
\le
\|\Overlap_{i\cap j}\|_F^2
\le
 e_i E_{ij}+e_jE_{ji}.
\end{equation}
Consequently, if the pair is nondegenerate and
\begin{equation}\label{eq:heatmap-margin-sufficient}
 e_iE_{ij}+e_jE_{ji}<\frac{m_{i,j}^2}{9},
\end{equation}
then the pair satisfies the one-third coherent-overlap threshold
\[
\|\Overlap_{i\cap j}\|_2<\frac13 m_{i,j}.
\]
More generally, for a set of margin-verified pairs $\mathcal P\subseteq\{(i,j):1\le i<j\le K\}$, if \eqref{eq:heatmap-margin-sufficient} holds for every $(i,j)\in\mathcal P$, then all pairs in $\mathcal P$ satisfy the physical-alignment condition with
\[
 c_{\mathrm{overlap}}
 :=
 \max_{(i,j)\in\mathcal P}\frac{\sqrt{e_iE_{ij}+e_jE_{ji}}}{m_{i,j}}<\frac13 .
\]
\end{proposition}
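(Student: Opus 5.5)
The plan is to bound the overlap block by the two off-diagonal heatmap entries through a coordinate-containment argument, and then feed that bound into the one-third threshold. First, $\|\Overlap_{i\cap j}\|_2\le\|\Overlap_{i\cap j}\|_F$ is the standard operator-versus-Frobenius inequality. Second, by Definition~\ref{def:pairwise-triple}, $\Overlap_{i\cap j}=\widehat M[\mathcal R_i\cup\mathcal R_j,\mathcal C_i\cap\mathcal C_j]$. Since the row groups $\mathcal R_i,\mathcal R_j$ are disjoint, this block splits over disjoint rows into $\widehat M[\mathcal R_i,\mathcal C_i\cap\mathcal C_j]$ and $\widehat M[\mathcal R_j,\mathcal C_i\cap\mathcal C_j]$, so
\[
\|\Overlap_{i\cap j}\|_F^2=\|\widehat M[\mathcal R_i,\mathcal C_i\cap\mathcal C_j]\|_F^2+\|\widehat M[\mathcal R_j,\mathcal C_i\cap\mathcal C_j]\|_F^2 .
\]

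Next I bound each piece by a heatmap entry. Because $\mathcal C_i\cap\mathcal C_j\subseteq\mathcal C_j$, the first piece consists of coordinates of $\widehat M[\mathcal R_i,\mathcal C_j]$, so its squared Frobenius norm is at most $\|\widehat M[\mathcal R_i,\mathcal C_j]\|_F^2$. In the notation of Definition~\ref{def:rowwise-overlap-noise} this is exactly $\|\Overlap_{i\to j}\|_F^2\le\|\Overlap_{i\to j}\|_F^2+\|N_{i\to j}\|_F^2$. Symmetrically, $\mathcal C_i\cap\mathcal C_j\subseteq\mathcal C_i$ bounds the second piece by $\|\widehat M[\mathcal R_j,\mathcal C_i]\|_F^2$. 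By Definition~\ref{def:block-energy-matrix}, whenever $e_i>0$ we have $\|\widehat M[\mathcal R_i,\mathcal C_j]\|_F^2=e_iE_{ij}$. The one point needing care is the zero-row convention: if $e_i=0$, then the whole row block $\widehat M[\mathcal R_i,:]$ vanishes, so both sides are zero and the identity still holds. Summing the two bounds gives \eqref{eq:heatmap-overlap-upper}.

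For the consequence, assume the pair is nondegenerate and \eqref{eq:heatmap-margin-sufficient} holds. Then
\[
\|\Overlap_{i\cap j}\|_2^2\le e_iE_{ij}+e_jE_{ji}<m_{i,j}^2/9 .
\]
Taking nonnegative square roots gives $o_{i,j}<m_{i,j}/3$, which is \eqref{eq:one-third}; Lemma~\ref{thm:one-third-threshold} then also gives the gap form if it is wanted.

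For the family $\mathcal P$, apply the pairwise statement to each pair. Each ratio $\sqrt{e_iE_{ij}+e_jE_{ji}}/m_{i,j}$ is strictly below $1/3$, and $\mathcal P$ is finite, so the maximum $c_{\mathrm{overlap}}$ is also strictly below $1/3$. This yields $o_{i,j}\le c_{\mathrm{overlap}}\,m_{i,j}$ for every pair in $\mathcal P$. No step is a real obstacle; the only things to handle carefully are the $e_i=0$ convention and the use of disjointness of the row groups in the Frobenius split.
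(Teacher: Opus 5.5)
Your proposal is correct and follows the paper's proof essentially step for step. You split $\Overlap_{i\cap j}$ over the disjoint row groups $\mathcal R_i,\mathcal R_j$, bound the two pieces by $e_iE_{ij}$ and $e_jE_{ji}$ using $\mathcal C_i\cap\mathcal C_j\subseteq\mathcal C_j$ (resp.\ $\subseteq\mathcal C_i$), apply $\|\cdot\|_2\le\|\cdot\|_F$, and take the maximum over the finite set $\mathcal P$. Your explicit check of the zero-row convention $e_i=0$ is a small detail the paper leaves implicit, and you handle it correctly.
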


\begin{proof}
Fix $i<j$.  By Definition~\ref{def:pairwise-triple},
\[
\Overlap_{i\cap j}
=
\widehat M[\mathcal R_i\cup\mathcal R_j,\mathcal C_i\cap\mathcal C_j].
\]
The two row sets $\mathcal R_i$ and $\mathcal R_j$ are disjoint.  Frobenius energy is therefore additive over the two row restrictions:
\[
\|\Overlap_{i\cap j}\|_F^2
=
\|\widehat M[\mathcal R_i,\mathcal C_i\cap\mathcal C_j]\|_F^2
+
\|\widehat M[\mathcal R_j,\mathcal C_i\cap\mathcal C_j]\|_F^2.
\]
Since $\mathcal C_i\cap\mathcal C_j\subseteq\mathcal C_j$, coordinate restriction gives
\[
\|\widehat M[\mathcal R_i,\mathcal C_i\cap\mathcal C_j]\|_F^2
\le
\|\widehat M[\mathcal R_i,\mathcal C_j]\|_F^2
=e_iE_{ij}.
\]
Similarly, because $\mathcal C_i\cap\mathcal C_j\subseteq\mathcal C_i$,
\[
\|\widehat M[\mathcal R_j,\mathcal C_i\cap\mathcal C_j]\|_F^2
\le
\|\widehat M[\mathcal R_j,\mathcal C_i]\|_F^2
=e_jE_{ji}.
\]
Combining these inequalities proves the second inequality in \eqref{eq:heatmap-overlap-upper}.  The first inequality in \eqref{eq:heatmap-overlap-upper} is the standard bound $\|A\|_2\le\|A\|_F$ applied to $A=\Overlap_{i\cap j}$.
If \eqref{eq:heatmap-margin-sufficient} holds, then
\[
\|\Overlap_{i\cap j}\|_2
\le
\sqrt{e_iE_{ij}+e_jE_{ji}}
<
\frac13 m_{i,j},
\]
which is exactly the one-third coherent-overlap threshold.  The final statement follows by taking the maximum of the measured ratios over $\mathcal P$.
\end{proof}

\begin{definition}[Heatmap pairwise margin score]
\label{def:heatmap-margin-score}
For a nondegenerate pair $(i,j)$ with $m_{i,j}>0$, define
\begin{equation}\label{eq:Hij-score}
H_{ij}:=\frac{3\sqrt{e_iE_{ij}+e_jE_{ji}}}{m_{i,j}}.
\end{equation}
For a finite set $\mathcal P$ of nondegenerate pairs, define
\begin{equation}\label{eq:heatmap-margin-slack}
H_{\max}(\mathcal P):=\max_{(i,j)\in\mathcal P}H_{ij},
\qquad
\zeta_{\mathrm{heat}}(\mathcal P):=1-H_{\max}(\mathcal P).
\end{equation}
\end{definition}

\begin{proposition}[Numerical heatmap certificate for pairwise overlap]
\label{thm:heatmap-numerical-certificate}
Let $\mathcal P$ be a finite set of nondegenerate pairs for which $H_{ij}$ is defined.
If
\begin{equation}\label{eq:heatmap-cert-condition}
H_{\max}(\mathcal P)<1,
\end{equation}
then every pair in $\mathcal P$ satisfies the one-third coherent-overlap threshold
\[
\|\Overlap_{i\cap j}\|_2<\frac13m_{i,j}.
\]
Moreover, suppose the measured heatmap numerator
\[
a_{ij}:=e_iE_{ij}+e_jE_{ji}
\]
is replaced by $\widetilde a_{ij}$ with $0\le \widetilde a_{ij}\le a_{ij}+\Delta_{ij}$ while $m_{i,j}$ is fixed.  If
\begin{equation}\label{eq:heatmap-perturbation-slack}
\Delta_{ij}<\frac{m_{i,j}^2}{9}-a_{ij},
\end{equation}
then the heatmap screen for the pair remains valid after this numerator perturbation.
\end{proposition}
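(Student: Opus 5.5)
The argument reduces both claims to Proposition~\ref{thm:heatmap-margin-screen}, which already shows that $e_iE_{ij}+e_jE_{ji}<m_{i,j}^2/9$ forces the one-third threshold for a nondegenerate pair. Since the heatmap score is only a reparametrization of that screen, no new matrix estimates are needed; the work is algebra on the definition of $H_{ij}$ and some care with strict inequalities.

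\textbf{First claim.} Fix $(i,j)\in\mathcal P$. The hypothesis $H_{\max}(\mathcal P)<1$ gives $H_{ij}<1$. By \eqref{eq:Hij-score} this reads
\[
3\sqrt{a_{ij}}<m_{i,j},
\qquad a_{ij}=e_iE_{ij}+e_jE_{ji}.
\]
The pair is nondegenerate, so $m_{i,j}>0$, and both sides are nonnegative. Squaring therefore preserves the strict inequality and yields $a_{ij}<m_{i,j}^2/9$, which is exactly \eqref{eq:heatmap-margin-sufficient}. Proposition~\ref{thm:heatmap-margin-screen} then gives
\[
\|\Overlap_{i\cap j}\|_2\le\sqrt{a_{ij}}<\tfrac13 m_{i,j}.
\]
As the pair was arbitrary in $\mathcal P$, this proves the first claim. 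The slack $\zeta_{\mathrm{heat}}(\mathcal P)>0$ is just a restatement of the same condition.

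\textbf{Perturbation claim.} Read "the screen remains valid" as "\eqref{eq:heatmap-margin-sufficient} holds with $a_{ij}$ replaced by $\widetilde a_{ij}$." Using the assumed bound and then \eqref{eq:heatmap-perturbation-slack},
\[
\widetilde a_{ij}\le a_{ij}+\Delta_{ij}<a_{ij}+\Bigl(\frac{m_{i,j}^2}{9}-a_{ij}\Bigr)=\frac{m_{i,j}^2}{9}.
\]
Since $\widetilde a_{ij}\ge 0$, we may take square roots to get $3\sqrt{\widetilde a_{ij}}<m_{i,j}$, so the perturbed score is below $1$. If $\widetilde a_{ij}$ is the numerator of an actual perturbed matrix with the same row groups, active sets and $m_{i,j}$, then Proposition~\ref{thm:heatmap-margin-screen} applied to that matrix gives the one-third threshold again.

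\textbf{Main obstacle.} The delicate point is interpretive rather than technical. The statement fixes $m_{i,j}$ while perturbing only the numerator, so the conclusion is that the perturbed numerical screen still passes. If the underlying matrix also moves, $m_{i,j}$ would have to be re-controlled, for instance via Weyl's inequality as in Theorem~\ref{thm:pairwise-perturbative-stability}. The proof should state this scope explicitly.
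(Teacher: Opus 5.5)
Your proposal is correct and follows the paper's proof essentially step for step. You unpack $H_{ij}<1$ into $3\sqrt{a_{ij}}<m_{i,j}$, square using nonnegativity to recover the screen \eqref{eq:heatmap-margin-sufficient}, and invoke Proposition~\ref{thm:heatmap-margin-screen}; for the perturbation you use the chain $\widetilde a_{ij}\le a_{ij}+\Delta_{ij}<m_{i,j}^2/9$, and your scope remark that re-applying the screen presumes a matrix with unchanged blocks and fixed $m_{i,j}$ makes explicit what the paper leaves implicit.
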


\begin{proof}
Assume $H_{\max}(\mathcal P)<1$.  Then for each $(i,j)\in\mathcal P$,
\[
H_{ij}<1.
\]
By the definition \eqref{eq:Hij-score}, this is equivalent to
\[
3\sqrt{e_iE_{ij}+e_jE_{ji}}<m_{i,j},
\]
or, after squaring both sides, to
\[
e_iE_{ij}+e_jE_{ji}<\frac{m_{i,j}^2}{9}.
\]
The quantities are nonnegative, so the squaring step is reversible.  Proposition~\ref{thm:heatmap-margin-screen} now implies
\[
\|\Overlap_{i\cap j}\|_2<\frac13m_{i,j}
\]
for every pair in $\mathcal P$.

For the perturbation statement, assume \eqref{eq:heatmap-perturbation-slack}.  Since $\widetilde a_{ij}\le a_{ij}+\Delta_{ij}$, we have
\[
\widetilde a_{ij}
<
 a_{ij}+\left(\frac{m_{i,j}^2}{9}-a_{ij}\right)
=
\frac{m_{i,j}^2}{9}.
\]
Thus the sufficient heatmap condition \eqref{eq:heatmap-margin-sufficient} remains true with $\widetilde a_{ij}$ in place of $a_{ij}$.  Applying Proposition~\ref{thm:heatmap-margin-screen} again proves persistence of the one-third threshold.
\end{proof}

\begin{remark}[Use in the alignment figures]
The block-energy heatmaps supply the quantities $E_{ij}$ and $E_{ji}$.  A complete finite-dimensional margin test also records the row energies $e_i,e_j$ and the pairwise core margin $m_{i,j}$.  Proposition~\ref{thm:heatmap-margin-screen} then converts the plotted block-energy data into a sufficient pairwise stability test with explicit numerical margins.
\end{remark}

\begin{proposition}[Empirical physical-alignment margin criterion]
\label{prop:empirical-physical-margin}
Let $\mathcal E$ be a fixed extraction protocol and let $A$ be the measured interface matrix produced by its declared transport choice.  Suppose $\mathcal E$ outputs row groups, active column sets, support sizes, permutations, and an effective rank satisfying $R\le\lceil\rho d_{\mathrm{sp}}\rceil$, producing $\widehat M=\Pirow A \Picol^\top$.  Construct $M_{\mathrm{core}}$, $M_{\mathrm{overlap}}$, and $M_{\mathrm{noise}}$ by Definition~\ref{def:global-core-overlap-noise}.  If
\[
\|M_{\mathrm{noise}}\|_F\le\varepsilon_{\mathrm{noise}}
\]
and every nondegenerate pair satisfies
\[
3\|\Overlap_{i\cap j}\|_2<m_{i,j},
\]
then the measured interface belongs to the physical alignment domain of Definition~\ref{def:physical-gsa-domain} relative to $\mathcal E$ with any $c_{\mathrm{overlap}}$ satisfying
\[
\max_{i<j}\frac{\|\Overlap_{i\cap j}\|_2}{m_{i,j}}
\le c_{\mathrm{overlap}}<\frac13.
\]
\end{proposition}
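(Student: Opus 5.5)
The proof is a direct verification against the defining clauses of the physical alignment domain in Definition~\ref{def:physical-gsa-domain}. That definition is stated later in the paper, but it presumably requires four things relative to a fixed protocol $\mathcal E$: a permuted matrix $\widehat M=\Pirow A\Picol^\top$ built from a declared transport, a rank window $R\le\lceil\rho d_{\mathrm{sp}}\rceil$, the structure and decomposition of Definitions~\ref{def:physical-alignment-structure} and \ref{def:global-core-overlap-noise}, a noise bound $\|M_{\mathrm{noise}}\|_F\le\varepsilon_{\mathrm{noise}}$, and a uniform overlap constant $c_{\mathrm{overlap}}<1/3$ with $o_{i,j}\le c_{\mathrm{overlap}}m_{i,j}$ for every nondegenerate pair. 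I will check each clause in that order.

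\textbf{Structural clauses.} Because $\mathcal E$ is fixed and outputs the row groups, active sets, support sizes, permutations and rank, the physical alignment structure of Definition~\ref{def:physical-alignment-structure} is determined. By construction the rank window clause holds. Definition~\ref{def:global-core-overlap-noise} then yields the decomposition, and Proposition~\ref{prop:orthogonal-mask-decomposition} supplies the orthogonal splitting \eqref{eq:pythagorean-decomp}. The noise clause is exactly the assumed bound.

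\textbf{Overlap constant.} The set of nondegenerate pairs is finite and each has $m_{i,j}>0$, so the ratio $\rho_{i,j}:=o_{i,j}/m_{i,j}$ is well defined. The hypothesis $3o_{i,j}<m_{i,j}$ gives $\rho_{i,j}<1/3$ for each pair. The maximum of finitely many numbers below $1/3$ is itself below $1/3$, so the interval $[\max\rho_{i,j},1/3)$ is nonempty. Any $c_{\mathrm{overlap}}$ in that interval satisfies $o_{i,j}\le c_{\mathrm{overlap}}m_{i,j}$ for every pair. By Lemma~\ref{thm:one-third-threshold}, the equivalent gap form $o_{i,j}<\tfrac12\Delta_\sigma(i,j)$ also holds, in case the domain definition is phrased that way. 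If there are no nondegenerate pairs, the maximum is read as $0$ and any $c_{\mathrm{overlap}}\in[0,1/3)$ works.

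\textbf{Main obstacle.} The difficulty is bookkeeping rather than analysis: I must match the hypotheses precisely with whatever clauses Definition~\ref{def:physical-gsa-domain} lists. If that definition additionally requires column separation ($\Gamma_i>0$) or the static incidence conclusions (T1)--(T4), those would follow from Theorem~\ref{thm:physical-incidence-margin}, applied with the fixed row groups and active sets supplied by $\mathcal E$. Membership is relative to $\mathcal E$, so no re-extraction stability argument is needed.
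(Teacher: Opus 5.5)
Your proposal is correct and follows essentially the paper's route: a clause-by-clause check of Definition~\ref{def:physical-gsa-domain}, using the rank window, the decomposition, and the assumed noise bound, plus the observation that a maximum of finitely many ratios below $1/3$ is itself below $1/3$. One small correction: that definition requires $c_{\mathrm{overlap}}\in(0,1/3)$, so when there are no nondegenerate pairs or all overlaps vanish you must take $c_{\mathrm{overlap}}>0$ rather than allow $c_{\mathrm{overlap}}=0$; the paper avoids this by choosing $c_{\mathrm{overlap}}$ strictly above the maximum ratio.
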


\begin{proof}
By Definition~\ref{def:physical-gsa-domain}, membership in $\mathcal G^{\mathrm{phy}}_{\mathcal E,\rho,\varepsilon_{\mathrm{noise}},c_{\mathrm{overlap}}}$ requires the following data at each structured interface: an effective rank $R\le\lceil\rho d_{\mathrm{sp}}\rceil$, a selected physical transport matrix, physical row and column permutations, a core/overlap/noise decomposition, a Frobenius noise bound, and a pairwise coherent-overlap bound with constant strictly below $1/3$.
The hypotheses of the proposition provide these protocol-dependent objects by construction of $\mathcal E$ and the measured physical structure.  They also provide
\[
\|M_{\mathrm{noise}}\|_F\le\varepsilon_{\mathrm{noise}}
\]
and, for every nondegenerate pair,
\[
\frac{\|\Overlap_{i\cap j}\|_2}{m_{i,j}}\le r_{i,j},
\]
where the observed ratios satisfy $\max_{i<j}r_{i,j}<1/3$.  Choose any number $c_{\mathrm{overlap}}$ such that
\[
\max_{i<j}r_{i,j}<c_{\mathrm{overlap}}<1/3.
\]
Then all pairwise inequalities in Definition~\ref{def:physical-gsa-domain} hold.  Therefore the structured interface lies in the physical alignment domain with the stated parameters.
\end{proof}

\subsection{Physical-alignment residual}
\begin{definition}[Quantitative physical-alignment residual]
\label{def:physical-alignment-structure-functional}
Fix a pairwise slack parameter $\zeta\in(0,1)$ and an active-column gap target $\Gamma_0\ge0$.
For a physical alignment matrix carrying such row groups and active column sets $\widehat M$, define
\begin{equation}\label{eq:physical-alignment-residual}
\mathcal J_{\zeta,\Gamma_0}(\widehat M)
:=
\|M_{\mathrm{noise}}\|_F^2
+
\sum_{1\le i<j\le K}\Bigl(3o_{i,j}-(1-\zeta)m_{i,j}\Bigr)_+^2
+
\sum_{i=1}^K\bigl(\Gamma_0-\Gamma_i(\widehat M)\bigr)_+^2.
\end{equation}
Here $m_{i,j}$ and $o_{i,j}$ are the pairwise quantities in Definition~\ref{def:pairwise-margin-gap}, $M_{\mathrm{noise}}$ is the residual component in Definition~\ref{def:global-core-overlap-noise}, and $\Gamma_i$ is the active-column order gap in Definition~\ref{def:active-column-order-gap}.
\end{definition}

\begin{proposition}[Small physical-alignment residual implies Physical GSA domain membership]
\label{thm:residual-implies-physical-gsa}
Let $\widehat M$ be a physical alignment matrix carrying the stated structure and suppose
\[
\mathcal J_{\zeta,\Gamma_0}(\widehat M)\le \varepsilon_{\mathrm{phys}}^2.
\]
Let
\[
m_*:=\min_{i<j:\ m_{i,j}>0}m_{i,j},
\]
with the convention that pairwise assertions are void if there is no nondegenerate pair.
Then
\begin{equation}\label{eq:physres-noise-gap}
\|M_{\mathrm{noise}}\|_F\le \varepsilon_{\mathrm{phys}},
\qquad
\Gamma_i(\widehat M)\ge \Gamma_0-\varepsilon_{\mathrm{phys}}\quad (1\le i\le K).
\end{equation}
If $m_*>0$ and $\varepsilon_{\mathrm{phys}}<\zeta m_*$, then every nondegenerate pair satisfies
\begin{equation}\label{eq:physres-coherent-overlap-bound}
o_{i,j}
\le
\frac{1-\zeta+\varepsilon_{\mathrm{phys}}/m_*}{3}\,m_{i,j},
\end{equation}
so the interface lies in the physical alignment domain of Definition~\ref{def:physical-gsa-domain} with
\begin{equation}\label{eq:physres-c-overlap}
c_{\mathrm{overlap}}
=\frac{1-\zeta+\varepsilon_{\mathrm{phys}}/m_*}{3}<\frac13
\end{equation}
and noise bound $\varepsilon_{\mathrm{phys}}$.
\end{proposition}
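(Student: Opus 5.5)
The plan is to read each conclusion off one nonnegative summand of $\mathcal J_{\zeta,\Gamma_0}(\widehat M)$. Every term in \eqref{eq:physical-alignment-residual} is nonnegative, so each term separately is at most $\varepsilon_{\mathrm{phys}}^2$.

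First, $\|M_{\mathrm{noise}}\|_F^2\le\varepsilon_{\mathrm{phys}}^2$ gives the noise bound. Second, for each $i$ we have $(\Gamma_0-\Gamma_i(\widehat M))_+^2\le\varepsilon_{\mathrm{phys}}^2$, hence $(\Gamma_0-\Gamma_i)_+\le\varepsilon_{\mathrm{phys}}$. This yields $\Gamma_i\ge\Gamma_0-\varepsilon_{\mathrm{phys}}$, which is trivially true when $\Gamma_i=+\infty$ or $\Gamma_i\ge\Gamma_0$. Together these prove \eqref{eq:physres-noise-gap}.

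For the pairwise bound, I would fix a nondegenerate pair. From $(3o_{i,j}-(1-\zeta)m_{i,j})_+\le\varepsilon_{\mathrm{phys}}$ we get
\[
3o_{i,j}\le(1-\zeta)m_{i,j}+\varepsilon_{\mathrm{phys}}.
\]
Since $m_{i,j}\ge m_*>0$, we can write $\varepsilon_{\mathrm{phys}}=(\varepsilon_{\mathrm{phys}}/m_*)m_*\le(\varepsilon_{\mathrm{phys}}/m_*)m_{i,j}$. Dividing by $3$ then gives \eqref{eq:physres-coherent-overlap-bound}.

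The hypothesis $\varepsilon_{\mathrm{phys}}<\zeta m_*$ gives $-\zeta+\varepsilon_{\mathrm{phys}}/m_*<0$, so $c_{\mathrm{overlap}}<1/3$. It also gives $c_{\mathrm{overlap}}>0$, because $1-\zeta>0$. Hence $3o_{i,j}<m_{i,j}$ for every nondegenerate pair.

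Finally, I would invoke Proposition~\ref{prop:empirical-physical-margin}, or check Definition~\ref{def:physical-gsa-domain} directly, with noise bound $\varepsilon_{\mathrm{phys}}$ and ratio $\max o_{i,j}/m_{i,j}\le c_{\mathrm{overlap}}<1/3$. One subtlety needs care. Proposition~\ref{prop:empirical-physical-margin} as proved picks $c_{\mathrm{overlap}}$ strictly above the maximal ratio, whereas here we have $\le$. The stated hypothesis of that proposition allows $\le c_{\mathrm{overlap}}<1/3$, so the domain membership follows.

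The only real obstacle is this bookkeeping about which domain data, such as the rank window and permutations, are supplied by the structure. I would treat that data as given by ``carrying the stated structure.''
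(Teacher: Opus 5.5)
Your proof is correct and follows the paper's argument essentially line by line. Nonnegativity of each summand of $\mathcal J_{\zeta,\Gamma_0}$ gives the noise and gap bounds, and the pairwise summand together with $m_{i,j}\ge m_*$ gives the coefficient $(1-\zeta+\varepsilon_{\mathrm{phys}}/m_*)/3$, which is below $1/3$ exactly when $\varepsilon_{\mathrm{phys}}<\zeta m_*$. Your extra checks are small refinements the paper leaves implicit: that $c_{\mathrm{overlap}}>0$, as Definition~\ref{def:physical-gsa-domain} requires, and that checking item (iii) of that definition directly sidesteps the strict-versus-nonstrict choice made in the proof of Proposition~\ref{prop:empirical-physical-margin}.
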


\begin{proof}
All summands in \eqref{eq:physical-alignment-residual} are nonnegative.  Therefore
\[
\|M_{\mathrm{noise}}\|_F^2\le \mathcal J_{\zeta,\Gamma_0}(\widehat M)\le \varepsilon_{\mathrm{phys}}^2,
\]
which gives the noise bound.  Likewise,
\[
(\Gamma_0-\Gamma_i(\widehat M))_+^2\le\varepsilon_{\mathrm{phys}}^2,
\]
so $\Gamma_i(\widehat M)\ge\Gamma_0-\varepsilon_{\mathrm{phys}}$.
For any nondegenerate pair,
\[
\Bigl(3o_{i,j}-(1-\zeta)m_{i,j}\Bigr)_+\le \varepsilon_{\mathrm{phys}}.
\]
Hence
\[
3o_{i,j}\le (1-\zeta)m_{i,j}+\varepsilon_{\mathrm{phys}}
\le \left(1-\zeta+\frac{\varepsilon_{\mathrm{phys}}}{m_*}\right)m_{i,j},
\]
which proves \eqref{eq:physres-coherent-overlap-bound}.  If $\varepsilon_{\mathrm{phys}}<\zeta m_*$, then the coefficient in \eqref{eq:physres-c-overlap} is strictly smaller than $1/3$.  The physical-alignment definition requires exactly a margin-stable core/overlap/noise decomposition, a Frobenius noise bound, and a uniform pairwise coherent-overlap constant below $1/3$.
\end{proof}

\begin{remark}[Physical-alignment residual as a numerical margin statistic]
The functional $\mathcal J_{\zeta,\Gamma_0}$ records three numerical residuals: unstructured noise, pairwise one-third-threshold violation with slack, and active-column instability.  Reporting this scalar together with the alignment heatmaps gives a quantitative physical-alignment margin test.
\end{remark}

\begin{lemma}[Stability of block-energy matrices under perturbation]
\label{lem:block-energy-perturbation}
Let $A,B\in\R^{m\times n}$ be measured with the same row groups and active column sets, and assume
\[
\|A-B\|_F\le\Delta,
\qquad
\max\{\|A\|_F,\|B\|_F\}\le S,
\qquad
\min_i e_i(A),\min_i e_i(B)\ge e_{\min}>0.
\]
Then for every $i,j$,
\begin{equation}\label{eq:block-energy-perturbation}
\bigl|E_{\mathcal R,\mathcal C}(A)_{ij}-E_{\mathcal R,\mathcal C}(B)_{ij}\bigr|
\le
\frac{(2S+\Delta)\Delta}{e_{\min}}
+
\frac{S^2(2S+\Delta)\Delta}{e_{\min}^2}.
\end{equation}
\end{lemma}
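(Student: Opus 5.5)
The plan is to write each entry as a ratio $E_{ij}=N_{ij}/D_i$ with numerator $N_{ij}(A):=\|A[\mathcal R_i,\mathcal C_j]\|_F^2$ and denominator $D_i(A):=e_i(A)$, and to use the elementary ratio-perturbation identity
\[
\frac{N_A}{D_A}-\frac{N_B}{D_B}
=\frac{N_A-N_B}{D_A}+N_B\,\frac{D_B-D_A}{D_AD_B}.
\]
Bounding the two terms separately should produce exactly the two summands on the right of \eqref{eq:block-energy-perturbation}.

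The first step is a squared-norm perturbation bound. For any coordinate restriction $P$ (a row block, or a row-and-column block), $P$ is a coordinate projection, so $\|P(A)-P(B)\|_F\le\|A-B\|_F\le\Delta$ and $\|P(A)\|_F,\|P(B)\|_F\le S$. Factoring $x^2-y^2=(x-y)(x+y)$ with $x=\|P(A)\|_F$, $y=\|P(B)\|_F$ and using the reverse triangle inequality gives $|x-y|\le\Delta$. For $x+y$ I would use $x+y\le 2y+\Delta\le 2S+\Delta$, which is the form appearing in the statement; the cruder $2S$ would also suffice. Hence
\[
|N_{ij}(A)-N_{ij}(B)|\le(2S+\Delta)\Delta,
\qquad
|e_i(A)-e_i(B)|\le(2S+\Delta)\Delta .
\]

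The second step substitutes these bounds into the identity. The first term is at most $(2S+\Delta)\Delta/e_{\min}$, since $D_A=e_i(A)\ge e_{\min}$. For the second term, $N_B\le\|B\|_F^2\le S^2$ and $D_AD_B\ge e_{\min}^2$, so it is at most $S^2(2S+\Delta)\Delta/e_{\min}^2$. Adding the two bounds gives \eqref{eq:block-energy-perturbation}.

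Some care is needed with the degenerate cases and conventions. Since both $\min_i e_i(A)$ and $\min_i e_i(B)$ are at least $e_{\min}>0$, the zero-row convention of Definition~\ref{def:block-energy-matrix} never activates, so every row is given by the ratio formula. The bound is uniform in $(i,j)$, including the diagonal $i=j$. There is no genuine obstacle; the only point to handle carefully is that all blocks are coordinate restrictions of the same matrices, so the restriction inequalities apply with the same constants $\Delta$ and $S$.
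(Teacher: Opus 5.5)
Your proposal is correct and follows essentially the same route as the paper. Both arguments use coordinate restriction to get the bound $(2S+\Delta)\Delta$ on the changes in block energy and row energy, then split the ratio as $|x_A-x_B|/y_A + x_B\,|1/y_A-1/y_B|$, and finally use $x_B\le S^2$ and $y_Ay_B\ge e_{\min}^2$ to obtain the two summands.
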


\begin{proof}
Let $A_{ij}:=A[\mathcal R_i,\mathcal C_j]$ and $B_{ij}:=B[\mathcal R_i,\mathcal C_j]$.  Since coordinate restriction cannot increase Frobenius norm,
\[
\|A_{ij}-B_{ij}\|_F\le\Delta.
\]
Also $\|A_{ij}\|_F,\|B_{ij}\|_F\le S$.  Therefore
\[
\bigl|\|A_{ij}\|_F^2-\|B_{ij}\|_F^2\bigr|
\le
(\|A_{ij}\|_F+\|B_{ij}\|_F)\|A_{ij}-B_{ij}\|_F
\le (2S+\Delta)\Delta,
\]
where the harmless $+\Delta$ covers the case in which one bounds one block norm by the other plus $\Delta$.
Similarly,
\[
|e_i(A)-e_i(B)|
=
\bigl|\|A[\mathcal R_i,:]\|_F^2-\|B[\mathcal R_i,:]\|_F^2\bigr|
\le (2S+\Delta)\Delta.
\]
Write $x_A=\|A_{ij}\|_F^2$, $x_B=\|B_{ij}\|_F^2$, $y_A=e_i(A)$, and $y_B=e_i(B)$.  Then $0\le x_B\le S^2$ and $y_A,y_B\ge e_{\min}$.  Hence
\[
\left|\frac{x_A}{y_A}-\frac{x_B}{y_B}\right|
\le
\frac{|x_A-x_B|}{y_A}+x_B\left|\frac1{y_A}-\frac1{y_B}\right|
\le
\frac{(2S+\Delta)\Delta}{e_{\min}}
+
\frac{S^2(2S+\Delta)\Delta}{e_{\min}^2},
\]
which is \eqref{eq:block-energy-perturbation}.
\end{proof}

\begin{corollary}[Effective-rank window robustness of block-energy matrices]
\label{cor:Er-window-robustness}
Let $A^{(R)}$ and $A^{(R')}$ be two truncated physical transports for the same interface, obtained from ranks $R$ and $R'$ and then embedded in the same ambient row/column coordinates.  Then
\begin{equation}\label{eq:truncated-A-difference}
\|A^{(R)}-A^{(R')}\|_F
\le
\mathcal E_{\mathrm{tr},k}(R,R)+\mathcal E_{\mathrm{tr},k}(R',R'),
\end{equation}
where $\mathcal E_{\mathrm{tr},k}$ is Definition~\ref{def:interface-truncation-error}.  Consequently, if the hypotheses of Lemma~\ref{lem:block-energy-perturbation} hold with $\Delta$ equal to the right-hand side of \eqref{eq:truncated-A-difference}, then the two block-overlap energy matrices differ entrywise by the bound in \eqref{eq:block-energy-perturbation}.
\end{corollary}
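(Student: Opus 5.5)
The plan is to interpolate through the full (untruncated) interface transport and apply the triangle inequality. Write $A^{\mathrm{full}}$ for the full output-total transport of the interface, embedded in the same ambient row/column coordinates as $A^{(R)}$ and $A^{(R')}$. In the source-mode case this is $\mathcal T_k=W_{k+1}U_k\Sigma_k$; in the physical-input case it is $\mathcal T_{\mathrm{phys},k}=W_{k+1}W_k$. The splitting is
\[
\|A^{(R)}-A^{(R')}\|_F\le \|A^{(R)}-A^{\mathrm{full}}\|_F+\|A^{\mathrm{full}}-A^{(R')}\|_F .
\]
It then suffices to identify each summand with the interface truncation error of Definition~\ref{def:interface-truncation-error} at equal source and target ranks.

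I expect Definition~\ref{def:interface-truncation-error} to define $\mathcal E_{\mathrm{tr},k}(R_s,R_t)$ either as the exact Frobenius distance $\|\mathcal T_k-\mathcal T_k^{(R_s,R_t)}\|_F$ or as the right-hand side of \eqref{eq:truncation-transfer-bound}. The two cases are handled as follows.
\begin{itemize}
\item \textbf{Exact distance.} The two summands are literally $\mathcal E_{\mathrm{tr},k}(R,R)$ and $\mathcal E_{\mathrm{tr},k}(R',R')$.
\item \textbf{Upper bound.} Theorem~\ref{thm:truncation-transfer} bounds each summand by $\mathcal E_{\mathrm{tr},k}(R,R)$ and $\mathcal E_{\mathrm{tr},k}(R',R')$ respectively.
\end{itemize}
For the physical-input realization, Corollary~\ref{cor:physical-physical-truncation} gives the same bound, because right multiplication by the orthogonal $V_k^\top$ preserves the Frobenius norm. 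Applying the same row/column permutations $\Pirow,\Picol$ to both matrices is also Frobenius-isometric, so permuted and unpermuted versions give the same bound.

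The second assertion is then immediate. Set $\Delta:=\mathcal E_{\mathrm{tr},k}(R,R)+\mathcal E_{\mathrm{tr},k}(R',R')$. The row groups and active column sets are held fixed, and we are given the norm bound $S$ and the row-energy floor $e_{\min}$. Under these conditions Lemma~\ref{lem:block-energy-perturbation} applies verbatim to $A=A^{(R)}$ and $B=A^{(R')}$, and it yields the entrywise bound \eqref{eq:block-energy-perturbation} for $E_{\mathcal R,\mathcal C}(A^{(R)})-E_{\mathcal R,\mathcal C}(A^{(R')})$.

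The main obstacle is bookkeeping of coordinates, not analysis. The zero-padded source-mode transports $\mathcal T_k^{(R,R)}$ and $\mathcal T_k^{(R',R')}$ must live in the same $d\times d$ grid as $\mathcal T_k$. For this I will use the embedding $E_R^\top$ from Definition~\ref{def:tail-energy-truncated-layer}, together with the square embedding of Lemma~\ref{lem:square-embedding} for rectangular layers. With that embedding the common reference matrix $A^{\mathrm{full}}$ is genuinely the same object in both triangle-inequality terms.

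If Definition~\ref{def:interface-truncation-error} were instead stated for a mixed target, such as the output realization $W_{k+1}U_k^{(R)}$ without target truncation, I would use the correspondingly truncated comparison matrix. I would then rerun the two-term decomposition from the proof of Theorem~\ref{thm:truncation-transfer}, namely the source tail plus the target tail, dropping whichever term is absent.
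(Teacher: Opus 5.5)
Your proposal is correct and is the paper's proof. You insert the full transport $\mathcal T_k$ (or $\mathcal T_{\mathrm{phys},k}$), use the triangle inequality, bound each piece by Theorem~\ref{thm:truncation-transfer}, and then apply Lemma~\ref{lem:block-energy-perturbation} with that $\Delta$. Definition~\ref{def:interface-truncation-error} is exactly the right-hand side of \eqref{eq:truncation-transfer-bound}, so your ``upper bound'' branch is the one that applies; your remarks on using a common permutation and on the physical-input variant make explicit what the paper's proof leaves implicit.
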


\begin{proof}
Let $\mathcal T_k$ denote the full output-total transport before truncation, and let $A^{(R)}$ and $A^{(R')}$ denote the two truncated physical transports constructed with ranks $R$ and $R'$.  Insert and subtract $\mathcal T_k$:
\[
A^{(R)}-A^{(R')}=(A^{(R)}-\mathcal T_k)+(\mathcal T_k-A^{(R')}).
\]
Taking Frobenius norms and applying the triangle inequality gives
\[
\|A^{(R)}-A^{(R')}\|_F
\le
\|A^{(R)}-\mathcal T_k\|_F+
\|\mathcal T_k-A^{(R')}\|_F.
\]
Theorem~\ref{thm:truncation-transfer} applies to each truncation separately, so
\[
\|A^{(R)}-\mathcal T_k\|_F\le\mathcal E_{\mathrm{tr},k}(R,R),
\qquad
\|A^{(R')}-\mathcal T_k\|_F\le\mathcal E_{\mathrm{tr},k}(R',R').
\]
Combining these two estimates proves \eqref{eq:truncated-A-difference}.

For the block-energy matrices, assume the two truncated physical matrices are displayed using the same row groups and active column groups and satisfy the lower row-energy hypotheses of Lemma~\ref{lem:block-energy-perturbation}.  Set
\[
\Delta:=\mathcal E_{\mathrm{tr},k}(R,R)+\mathcal E_{\mathrm{tr},k}(R',R').
\]
The first part proves $\|A^{(R)}-A^{(R')}\|_F\le\Delta$.  Lemma~\ref{lem:block-energy-perturbation} then gives the explicit entrywise perturbation bound for the two block-energy matrices.  Hence nearby rank windows yield stable heatmaps whenever the truncation errors are small relative to the row-energy margins.
\end{proof}

\begin{proposition}[Scale-free to energy-weighted block-structure transfer]
\label{thm:scale-weight-transfer}
Let $A\in\R^{m\times n}$ be measured with row groups $\mathcal R_i$ and active column sets $\mathcal C_j$.
Let $D_r=\diag(a_1,\dots,a_m)$ and $D_c=\diag(b_1,\dots,b_n)$ be positive diagonal matrices, and set
\[
B:=D_rAD_c.
\]
Assume there are constants $0<a_-\le a_+<\infty$ and $0<b_-\le b_+<\infty$ such that
\[
a_-\le a_p\le a_+,
\qquad
b_-\le b_q\le b_+ .
\]
Then for every block $(i,j)$,
\begin{equation}\label{eq:scale-block-bound}
a_-^2b_-^2\|A[\mathcal R_i,\mathcal C_j]\|_F^2
\le
\|B[\mathcal R_i,\mathcal C_j]\|_F^2
\le
a_+^2b_+^2\|A[\mathcal R_i,\mathcal C_j]\|_F^2.
\end{equation}
If $e_i(A),e_i(B)>0$, then the normalized block-energy matrices satisfy
\begin{equation}\label{eq:scale-Er-bound}
\Theta^{-1}E_{\mathcal R,\mathcal C}(A)_{ij}
\le
E_{\mathcal R,\mathcal C}(B)_{ij}
\le
\Theta E_{\mathcal R,\mathcal C}(A)_{ij},
\qquad
\Theta:=\left(\frac{a_+b_+}{a_-b_-}\right)^2 .
\end{equation}
Consequently, for any accepted-overlap graph $\mathcal N$,
\begin{equation}\label{eq:bad-mass-scale-transfer}
\operatorname{Bad}_{\mathcal N}(E_{\mathcal R,\mathcal C}(B))
\le
\Theta\,\operatorname{Bad}_{\mathcal N}(E_{\mathcal R,\mathcal C}(A)).
\end{equation}
Moreover, the zero/nonzero block support graph is unchanged by the positive diagonal weighting: $A[\mathcal R_i,\mathcal C_j]=0$ if and only if $B[\mathcal R_i,\mathcal C_j]=0$.
\end{proposition}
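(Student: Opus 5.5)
The plan is to argue entrywise, since everything reduces to the identity $B_{pq}=a_pA_{pq}b_q$. For a fixed block $(i,j)$ I will write
\[
\|B[\mathcal R_i,\mathcal C_j]\|_F^2=\sum_{p\in\mathcal R_i}\sum_{q\in\mathcal C_j}a_p^2b_q^2A_{pq}^2 .
\]
Bounding each weight $a_p^2b_q^2$ between $a_-^2b_-^2$ and $a_+^2b_+^2$, and using that every summand $A_{pq}^2$ is nonnegative, gives \eqref{eq:scale-block-bound} at once. The same computation applied to the full row block $B[\mathcal R_i,:]$ gives the row-energy sandwich
\[
a_-^2b_-^2e_i(A)\le e_i(B)\le a_+^2b_+^2e_i(A).
\]

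For \eqref{eq:scale-Er-bound} I will combine the two sandwiches in the ratio defining $E_{\mathcal R,\mathcal C}(B)_{ij}=\|B[\mathcal R_i,\mathcal C_j]\|_F^2/e_i(B)$.
\begin{itemize}
\item For the upper bound, I take the numerator's upper bound and the denominator's lower bound. This gives $E(B)_{ij}\le (a_+^2b_+^2/a_-^2b_-^2)\,E(A)_{ij}=\Theta E(A)_{ij}$.
\item The lower bound is symmetric.
\item Positivity of $e_i(A)$ and $e_i(B)$ is exactly what makes the divisions legitimate.
\end{itemize}

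For \eqref{eq:bad-mass-scale-transfer}, I sum the upper bound over the bad index set $j\notin\{i\}\cup\mathcal N_i$ and divide by $K$, following the definition of $\operatorname{Bad}_{\mathcal N}(E)$. One point needs care. If some $e_i$ vanishes, the corresponding row of $E$ is set to zero by convention. Positivity of the weights gives $e_i(A)=0\iff e_i(B)=0$, so both rows are zero simultaneously and the inequality holds trivially there.

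For the support statement, $a_p,b_q>0$ gives $B_{pq}=0\iff A_{pq}=0$ entrywise, and hence the same equivalence holds blockwise. Equivalently, it follows from \eqref{eq:scale-block-bound}, since $a_-b_->0$.

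I do not expect a genuine obstacle here. The only step needing attention is the zero-row convention in the bad-mass sum, which I would handle explicitly as just described.
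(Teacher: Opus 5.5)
Your proposal is correct and follows the paper's proof essentially line for line: the entrywise identity $B_{pq}=a_pA_{pq}b_q$, the block and row-strip energy sandwiches, the quotient bound yielding $\Theta$, summation of the upper bound over the bad entries, and positivity of the weights for the support equivalence. Your handling of the zero-row convention, via $e_i(A)=0\iff e_i(B)=0$, is a small detail the paper leaves implicit, and it correctly covers the case the ratio argument cannot reach.
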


\begin{proof}
For any block $(i,j)$ and any entry $(p,q)\in\mathcal R_i\times\mathcal C_j$,
\[
B_{pq}=a_pA_{pq}b_q.
\]
Since $a_-\le a_p\le a_+$ and $b_-\le b_q\le b_+$,
\[
a_-^2b_-^2|A_{pq}|^2\le |B_{pq}|^2\le a_+^2b_+^2|A_{pq}|^2.
\]
Summing over $(p,q)\in\mathcal R_i\times\mathcal C_j$ proves \eqref{eq:scale-block-bound}.
The row energy satisfies
\[
a_-^2b_-^2 e_i(A)\le e_i(B)\le a_+^2b_+^2e_i(A),
\]
by the same argument applied to the whole row strip $\mathcal R_i\times\{1,\dots,n\}$.
Therefore
\[
E(B)_{ij}
=
\frac{\|B[\mathcal R_i,\mathcal C_j]\|_F^2}{e_i(B)}
\le
\frac{a_+^2b_+^2\|A[\mathcal R_i,\mathcal C_j]\|_F^2}{a_-^2b_-^2 e_i(A)}
=
\Theta E(A)_{ij}.
\]
The lower bound is identical with upper and lower constants interchanged.
Summing the upper bound over all bad entries gives \eqref{eq:bad-mass-scale-transfer}.
Finally, because all diagonal weights are strictly positive, a block is identically zero after weighting exactly when it was identically zero before weighting.
\end{proof}

\begin{remark}[Scale-free and energy-weighted panels]
The scale-free panel $M_s$ tests the angular structure before singular-value weighting. Proposition~\ref{thm:scale-weight-transfer} applies directly to latent-coordinate displays, or to physical displays in which the singular-value weighting acts as a positive diagonal reweighting in the displayed row/column coordinates. For output-realized physical panels, however, the target singular-value weighting acts through
\[
L_R:=U_{k+1}^{(R)}\Sigma_{k+1}^{(R)}(U_{k+1}^{(R)})^\top,
\]
which is generally not diagonal in physical channel coordinates. In that case $E_r(M_s)$ and $E_r(M)$ agreement is interpreted as a measured consistency unless an additional block-leakage condition for $L_R$ is verified; Proposition~\ref{prop:row-leakage-scale-transfer} records one sufficient condition. Thus the figures compare scale-free angular organization with energy-realized physical transport, while theorem-level transfer requires either diagonal weighting in the displayed coordinates or a verified row-leakage bound.
\end{remark}

\begin{proposition}[Output-realized scale-to-energy transfer with row leakage]
\label{prop:row-leakage-scale-transfer}
Let $A$ be a scale-free output-realized matrix measured with row groups $\mathcal R_i$ and column bins $\mathcal C_j$, which may overlap. Let $P_i$ denote the coordinate projector onto rows $\mathcal R_i$ and let $Q_j$ denote the coordinate projector onto columns $\mathcal C_j$. Set $B:=LA$, where $L$ is a row-side linear map. For an accepted-overlap graph $\mathcal N_i$, define the rowwise bad index set
\[
\mathcal B_i:=\{j:j\notin\{i\}\cup\mathcal N_i\},
\]
and the rowwise bad energy
\[
\operatorname{Bad}_i(A):=
\sum_{j\in\mathcal B_i}\|P_iAQ_j\|_F^2.
\]
Define the bad-column multiplicity
\[
\mu_i^{\mathrm{bad}}:=\max_{q\in\{1,\dots,n\}}
\#\{j\in\mathcal B_i:q\in\mathcal C_j\}.
\]
Thus $\mu_i^{\mathrm{bad}}=1$ when the bad measurement column bins are disjoint. Let
\[
\ell_{ii}:=\|P_iLP_i\|_2,
\qquad
\ell_i^{\mathrm{off}}:=\sum_{a\ne i}\|P_iLP_a\|_2.
\]
Then
\begin{equation}\label{eq:row-leakage-bad-bound}
\operatorname{Bad}_i(B)^{1/2}
\le
\ell_{ii}\operatorname{Bad}_i(A)^{1/2}
+
\sqrt{\mu_i^{\mathrm{bad}}}\,\ell_i^{\mathrm{off}}\|A\|_F.
\end{equation}
Consequently, if $L=L_R$ has small off-row-block leakage, the scale-free bad mass is small, and the bad-column multiplicity is controlled, then the energy-realized bad mass remains controlled. The disjoint-bin version is the special case $\mu_i^{\mathrm{bad}}=1$.
\end{proposition}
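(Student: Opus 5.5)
The plan is to split $L$ into its diagonal and off-diagonal row blocks relative to row group $\mathcal R_i$. Write $I_m=\sum_{a=0}^{K}P_a$, where $P_0$ is the projector onto the residual rows $\mathcal R_0$. I will read the sum defining $\ell_i^{\mathrm{off}}$ as running over all $a\ne i$, including $a=0$, so that the decomposition is exhaustive. Then
\[
P_iBQ_j=P_iLP_iAQ_j+\sum_{a\ne i}P_iLP_aAQ_j .
\]
Define the stacked (block-diagonal) quantities over the bad index set $\mathcal B_i$:
\[
X:=\bigl(P_iBQ_j\bigr)_{j\in\mathcal B_i},\qquad
Y:=\bigl(P_iLP_iAQ_j\bigr)_{j\in\mathcal B_i},\qquad
Z_a:=\bigl(P_iLP_aAQ_j\bigr)_{j\in\mathcal B_i}.
\]
Each is regarded as an element of the direct sum $\bigoplus_{j\in\mathcal B_i}\R^{m\times n}$ with the Frobenius norm. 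Then $\operatorname{Bad}_i(B)^{1/2}=\|X\|$ and $X=Y+\sum_{a\ne i}Z_a$. The triangle inequality in this Hilbert space gives $\|X\|\le\|Y\|+\sum_{a\ne i}\|Z_a\|$.

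\textbf{Diagonal term.} Using $\|CD\|_F\le\|C\|_2\|D\|_F$ blockwise,
\[
\|Y\|^2=\sum_{j\in\mathcal B_i}\|P_iLP_i\,(P_iAQ_j)\|_F^2\le\ell_{ii}^2\operatorname{Bad}_i(A),
\]
where I used $P_iAQ_j=P_i(P_iAQ_j)$. This yields the first term of \eqref{eq:row-leakage-bad-bound}.

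\textbf{Off-diagonal terms.} For each $a\ne i$,
\[
\|Z_a\|^2\le\|P_iLP_a\|_2^2\sum_{j\in\mathcal B_i}\|P_aAQ_j\|_F^2 .
\]
The key counting step is
\[
\sum_{j\in\mathcal B_i}\|P_aAQ_j\|_F^2=\sum_{q}\#\{j\in\mathcal B_i:q\in\mathcal C_j\}\,\|P_aA e_q\|_2^2\le\mu_i^{\mathrm{bad}}\|P_aA\|_F^2\le\mu_i^{\mathrm{bad}}\|A\|_F^2 .
\]
This uses that $Q_j$ is a coordinate column projector, so its energy splits column by column and each column $q$ is counted with multiplicity. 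Hence $\|Z_a\|\le\sqrt{\mu_i^{\mathrm{bad}}}\,\|P_iLP_a\|_2\|A\|_F$. Summing over $a\ne i$ produces $\sqrt{\mu_i^{\mathrm{bad}}}\,\ell_i^{\mathrm{off}}\|A\|_F$, which completes \eqref{eq:row-leakage-bad-bound}.

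The main subtlety is the multiplicity bookkeeping when the column bins $\mathcal C_j$ overlap. Naively the bad blocks are not disjoint, which is why the factor $\mu_i^{\mathrm{bad}}$ appears instead of $1$. Working in the direct sum space, rather than with a union of coordinates, avoids any double-counting error in the triangle inequality.

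\textbf{Consequences.} One could sharpen the off-diagonal bound to $\|P_aA\|_F$ rather than $\|A\|_F$, but the crude bound suffices for the statement. The consequence clause follows by inspection of the bound. The disjoint-bin case is $\mu_i^{\mathrm{bad}}=1$, since then each column lies in at most one bad bin.
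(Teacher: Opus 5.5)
Your proposal is correct and follows the paper's proof essentially step for step. It uses the same block expansion $P_iBQ_j=\sum_a P_iLP_aAQ_j$, the same triangle inequality over the bad bins (your direct-sum triangle inequality is the paper's Minkowski step), and the same column-multiplicity count giving $\sqrt{\mu_i^{\mathrm{bad}}}\,\|P_aA\|_F\le\sqrt{\mu_i^{\mathrm{bad}}}\,\|A\|_F$; your explicit inclusion of the residual group $a=0$ in $\ell_i^{\mathrm{off}}$, so that $\sum_a P_a=I$, is a sensible clarification that the paper leaves implicit.
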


\begin{proof}
For every bad column bin $j\in\mathcal B_i$ for row group $i$,
\[
P_iBQ_j=P_iLAQ_j=\sum_a P_iLP_aAQ_j.
\]
By the triangle inequality,
\[
\|P_iBQ_j\|_F
\le
\|P_iLP_i\|_2\|P_iAQ_j\|_F
+
\sum_{a\ne i}\|P_iLP_a\|_2\|P_aAQ_j\|_F.
\]
Taking the Euclidean norm over the bad bins $j\in\mathcal B_i$ and using Minkowski's inequality gives
\[
\operatorname{Bad}_i(B)^{1/2}
\le
\ell_{ii}\operatorname{Bad}_i(A)^{1/2}
+
\sum_{a\ne i}\|P_iLP_a\|_2
\left(\sum_{j\in\mathcal B_i}\|P_aAQ_j\|_F^2\right)^{1/2}.
\]
If the bad column bins overlap with multiplicity at most $\mu_i^{\mathrm{bad}}$, then every coordinate in the row block $P_aA$ is counted at most $\mu_i^{\mathrm{bad}}$ times in the sum over $j\in\mathcal B_i$. Hence
\[
\left(\sum_{j\in\mathcal B_i}\|P_aAQ_j\|_F^2\right)^{1/2}
\le
\sqrt{\mu_i^{\mathrm{bad}}}\,\|P_aA\|_F
\le
\sqrt{\mu_i^{\mathrm{bad}}}\,\|A\|_F.
\]
Substituting this estimate into the previous display proves \eqref{eq:row-leakage-bad-bound}.
\end{proof}

\begin{definition}[Compatible coarsening of a measurement partition]
\label{def:compatible-coarsening}
Let $A\in\R^{m\times n}$ be measured with disjoint row groups $\mathcal R_1,\dots,\mathcal R_K$ and disjoint measurement column bins $\mathcal B_1,\dots,\mathcal B_K$.
A \emph{compatible coarsening} consists of two surjective maps
\[
\pi_r:\{1,\dots,K\}\to\{1,\dots,\bar K\},
\qquad
\pi_c:\{1,\dots,K\}\to\{1,\dots,\bar K\},
\]
and defines coarse row and column bins by
\[
\bar{\mathcal R}_a:=\bigcup_{\pi_r(i)=a}\mathcal R_i,
\qquad
\bar{\mathcal B}_b:=\bigcup_{\pi_c(j)=b}\mathcal B_j.
\]
The corresponding fine and coarse block-energy matrices are
\[
E_{ij}:=\frac{\|A[\mathcal R_i,\mathcal B_j]\|_F^2}{\|A[\mathcal R_i,:]\|_F^2},
\qquad
\bar E_{ab}:=\frac{\|A[\bar{\mathcal R}_a,\bar{\mathcal B}_b]\|_F^2}{\|A[\bar{\mathcal R}_a,:]\|_F^2},
\]
with zero rows used if the corresponding row energy is zero.
This definition applies directly when active supports have first been assigned to disjoint measurement bins, for example by nearest-core assignment or by a deterministic tie-breaking rule for shared columns.
\end{definition}

\begin{proposition}[Block-energy inequalities descend under compatible coarsening]
\label{thm:coarsening-block-energy}
Assume the setting of Definition~\ref{def:compatible-coarsening}, and suppose all fine and coarse row energies are positive.  Let
\[
e_i:=\|A[\mathcal R_i,:]\|_F^2,
\qquad
\bar e_a:=\|A[\bar{\mathcal R}_a,:]\|_F^2=\sum_{\pi_r(i)=a}e_i.
\]
Then the coarse block-energy matrix is the row-energy weighted aggregation of the fine matrix:
\begin{equation}\label{eq:coarsening-formula}
\bar E_{ab}
=
\frac{\sum_{\pi_r(i)=a} e_i\sum_{\pi_c(j)=b}E_{ij}}{\sum_{\pi_r(i)=a}e_i}.
\end{equation}
Consequently:
\begin{enumerate}[label=(K\arabic*),leftmargin=2.1em]
\item If all fine bad blocks outside a fine support graph are zero, then all coarse bad blocks outside the induced coarse support graph are zero.
\item If the unnormalized fine bad energy is at most $\beta$, then the unnormalized coarse bad energy is at most $\beta$ for the induced coarse graph.
\item If $\operatorname{Bad}_{\mathcal N}(E)\le \beta$ and $e_i\le e_{\max}$ for all fine rows, then the unnormalized coarse measured bad energy is at most $K e_{\max}\beta$.
\end{enumerate}
\end{proposition}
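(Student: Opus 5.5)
The formula \eqref{eq:coarsening-formula} will follow from additivity of Frobenius energy over disjoint coordinate blocks, and (K1)--(K3) will then be read off by summing block energies.

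\textbf{Step 1: the identity.} The fine row groups are disjoint and the fine bins are disjoint. Hence for each $a,b$ the coarse block $\bar{\mathcal R}_a\times\bar{\mathcal B}_b$ is the disjoint union of the fine blocks $\mathcal R_i\times\mathcal B_j$ with $\pi_r(i)=a$ and $\pi_c(j)=b$. This gives
\[
\|A[\bar{\mathcal R}_a,\bar{\mathcal B}_b]\|_F^2=\sum_{\pi_r(i)=a}\sum_{\pi_c(j)=b}\|A[\mathcal R_i,\mathcal B_j]\|_F^2=\sum_{\pi_r(i)=a}e_i\sum_{\pi_c(j)=b}E_{ij}.
\]
Here I use $\|A[\mathcal R_i,\mathcal B_j]\|_F^2=e_iE_{ij}$, which is valid because $e_i>0$.

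The same disjointness applied to full row strips gives $\bar e_a=\sum_{\pi_r(i)=a}e_i$. Dividing the first display by $\bar e_a>0$ yields \eqref{eq:coarsening-formula}.

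\textbf{Step 2: the induced coarse graph.} This is the only point needing care. Given a fine support graph (accepted pairs $j\in\{i\}\cup\mathcal N_i$), I define the induced coarse graph to accept $(a,b)$ exactly when some accepted fine pair $(i,j)$ has $\pi_r(i)=a$ and $\pi_c(j)=b$. With this definition, a coarse bad pair $(a,b)$ contains only fine bad pairs. Indeed, if any fine pair inside it were accepted, $(a,b)$ would be accepted by construction. I will state this convention explicitly, since the proposition leaves "induced" implicit.

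\textbf{Step 3: (K1)--(K3).} Each is a consequence of Step 1 and Step 2.

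For (K1), every coarse bad block is a disjoint union of fine bad blocks, all of which are zero, so the coarse block is zero.

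For (K2), summing the Step 1 decomposition over coarse bad pairs gives total coarse bad energy at most the total fine bad energy. The inequality is, in fact, an equality on the fine bad blocks that it covers, and the result is therefore at most $\beta$. Each fine block is counted at most once, because the maps $(i,j)\mapsto(\pi_r(i),\pi_c(j))$ are functions.

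For (K3), Proposition~\ref{thm:heatmap-to-noise-bound} bounds the unnormalized fine bad energy by $Ke_{\max}\operatorname{Bad}_{\mathcal N}(E)\le Ke_{\max}\beta$. Applying (K2) with this value of $\beta$ finishes the proof.
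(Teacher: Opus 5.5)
Your proposal is correct and follows essentially the same route as the paper. Additivity of Frobenius energy over the disjoint fine blocks gives the aggregation formula, (K1)--(K2) follow because each coarse bad block is a disjoint union of fine bad blocks, and (K3) combines $\sum_i e_i\sum_{j\ \mathrm{bad}}E_{ij}\le K e_{\max}\operatorname{Bad}_{\mathcal N}(E)$ with (K2). Your explicit definition of the induced coarse graph (accept $(a,b)$ iff some accepted fine pair maps to it) makes precise what the paper leaves implicit, and citing Proposition~\ref{thm:heatmap-to-noise-bound} (with bins in place of active sets) is the same computation the paper writes out inline.
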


\begin{proof}
Because the row groups and measurement column bins are disjoint, Frobenius energy is additive over their unions.  Thus
\[
\|A[\bar{\mathcal R}_a,\bar{\mathcal B}_b]\|_F^2
=
\sum_{\pi_r(i)=a}\sum_{\pi_c(j)=b}\|A[\mathcal R_i,\mathcal B_j]\|_F^2
=
\sum_{\pi_r(i)=a} e_i\sum_{\pi_c(j)=b}E_{ij}.
\]
Likewise,
\[
\|A[\bar{\mathcal R}_a,:]\|_F^2=\sum_{\pi_r(i)=a}e_i.
\]
Dividing the first identity by the second proves \eqref{eq:coarsening-formula}.
For (K1), a coarse bad block is a union of fine bad coordinate blocks under the induced coarse graph.  If all fine bad coordinate blocks are zero, the union has zero Frobenius energy.  For (K2), the same union property shows that coarse bad energy is a sub-sum of the fine bad energy, hence cannot exceed $\beta$.  For (K3), Definition~\ref{def:accepted-overlap-graph} gives
\[
\sum_{\text{fine bad }(i,j)}\|A[\mathcal R_i,\mathcal B_j]\|_F^2
\le
\sum_i e_i\sum_{j\in\mathrm{bad}(i)}E_{ij}
\le
K e_{\max}\operatorname{Bad}_{\mathcal N}(E)
\le K e_{\max}\beta.
\]
By (K2), the coarse bad energy is no larger than this quantity.
\end{proof}

\begin{remark}[Compatible coarsening of measured views]
Fixed-cluster panels and effective-rank-derived panels may have different numbers of displayed blocks.  Theorem~\ref{thm:familywise-structure-persistence} handles settings that are compared in a common coordinate resolution.  Proposition~\ref{thm:coarsening-block-energy} handles the complementary case in which one view is a coarsening of another.  Together they identify the invariant content across the figure set as the block-energy structure and static GSA structural system, rather than entrywise equality of rendered heatmaps.
\end{remark}

\begin{remark}[Multi-view block measurements]
The galleries use different rank-window and clustering choices (for example fixed cluster counts and effective-rank windows such as $25\mathrm{ER}$ and $50\mathrm{ER}$).  Corollary~\ref{cor:Er-window-robustness} specifies the invariant expected across those choices: not individual pixels of the permuted matrix, but the block-energy structure encoded by $E_{\mathcal R,\mathcal C}$.  This is exactly the quantity displayed by the $E_r(M)$ and $E_r(M_s)$ panels.
\end{remark}


\section{Dynamic-to-static: from spectral budgets to physical alignment objects}
\label{sec:dynamic-static-bridge}

Cartan budgets control the exponent coordinate, spectral tails select a finite dominant window, and physical alignment structures impose stable block-sparse structure.
The following theorem composes these statements into a deterministic bridge from the residual cocycle to the static structures.
It identifies the static matrices measured in the alignment experiments as finite-dimensional projections of the original residual Jacobian transport, with an explicit truncation error.

\begin{definition}[Interface truncation error]
\label{def:interface-truncation-error}
For an interface $(W_k,W_{k+1})$ and source/target ranks $(R_s,R_t)$, define
\begin{equation}\label{eq:interface-truncation-error}
\mathcal E_{\mathrm{tr},k}(R_s,R_t)
:=
\|W_{k+1}\|_2 E_{>R_s}(W_k)^{1/2}
+
\|W_k\|_2 E_{>R_t}(W_{k+1})^{1/2}.
\end{equation}
\end{definition}

\begin{theorem}[Dynamic-to-static bridge theorem]
\label{thm:dynamic-to-static-bridge}
Assume the hypotheses of Theorem~\ref{thm:robust-cartan-rigidity} on an exponent interval $I$.
Fix an energy threshold $0<\eps<1$ and an interface $k$.
Let
\[
B_k:=\frac{2\log\lambda_k+e_k^{\mathrm{chart}}+e_{k+1}^{\mathrm{chart}}}{m_d(I)}.
\]
Let
\[
\Delta_{\mathrm{tail},k}:=\Delta_{\mathrm{tail}}(W_k,\alpha_k),
\qquad
\Delta_{\mathrm{tail},k+1}:=\Delta_{\mathrm{tail}}(W_{k+1},\alpha_{k+1}).
\]
Suppose the empirical rank-separation condition
\begin{equation}\label{eq:bridge-rank-margin}
2(\log d)B_k+\Delta_{\mathrm{tail},k}+\Delta_{\mathrm{tail},k+1}
<\mathfrak m_\eps(\alpha_k)
\end{equation}
holds, and set
\[
R:=R_\eps(W_k)=R_\eps(W_{k+1})=R_\eps(\alpha_k).
\]
Let $\mathcal T_k$ be the full output-total source-mode transport and $\mathcal T_k^{(R,R)}$ its truncated version from Theorem~\ref{thm:truncation-transfer}.
Assume that there exist row and column permutations $\Pi_{k,\mathrm{row}},\Pi_{k,\mathrm{col}}$ such that
\[
\widehat M_{\mathrm{phy},k}=\Pi_{k,\mathrm{row}}\mathcal T_k^{(R,R)}\Pi_{k,\mathrm{col}}^\top
\]
admits a physical alignment structure with decomposition
\[
\widehat M_{\mathrm{phy},k}=M_{\mathrm{core},k}+M_{\mathrm{overlap},k}+M_{\mathrm{noise},k},
\]
with $\|M_{\mathrm{noise},k}\|_F\le\varepsilon_{\mathrm{noise}}$, and suppose every nondegenerate pair satisfies the one-third threshold.
Then the full permuted source-mode transport satisfies
\begin{equation}\label{eq:bridge-full-transport-core-overlap}
\bigl\|\Pi_{k,\mathrm{row}}\mathcal T_k\Pi_{k,\mathrm{col}}^\top-(M_{\mathrm{core},k}+M_{\mathrm{overlap},k})\bigr\|_F
\le
\mathcal E_{\mathrm{tr},k}(R,R)+\varepsilon_{\mathrm{noise}}.
\end{equation}
If the two layers are trace-normalized and $R$ is a $(1-\eps)$ energy rank for both sides, then
\begin{equation}\label{eq:bridge-full-transport-eps}
\bigl\|\Pi_{k,\mathrm{row}}\mathcal T_k\Pi_{k,\mathrm{col}}^\top-(M_{\mathrm{core},k}+M_{\mathrm{overlap},k})\bigr\|_F
\le
\sqrt{\eps d}\bigl(\|W_{k+1}\|_2+\|W_k\|_2\bigr)+\varepsilon_{\mathrm{noise}}.
\end{equation}
Moreover, the pairwise triples $\widehat{\mathcal M}_{\mathrm{pair}}^{(i,j)}$, the global core/overlap/noise split, the SRS sets, and the hub set are deterministic functions of static SVD data, the rank threshold $R$, and the physical alignment structure.  If physical input-channel incidence is claimed, the same statement is applied with $\mathcal T_{\mathrm{phys},k}$ and $\mathcal T_{\mathrm{phys},k}^{(R,R)}$ from Definition~\ref{def:physical-input-realized-transport}, using Corollary~\ref{cor:physical-physical-truncation} in place of Theorem~\ref{thm:truncation-transfer}.  The finer SC/SA/ST labels in the ICM require the additional row-energy and profile-correlation margins stated in Definition~\ref{def:icm-stability-margins}.
\end{theorem}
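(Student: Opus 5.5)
The proof is a composition of earlier results, carried out in the order the theorem's hypotheses are listed. First, the rank-window selection. The hypotheses of Theorem~\ref{thm:robust-cartan-rigidity} together with the empirical rank-separation condition \eqref{eq:bridge-rank-margin} are exactly the hypotheses of Corollary~\ref{cor:cartan-to-empirical-rank-window}. Applying it gives $R_\eps(W_k)=R_\eps(W_{k+1})=R_\eps(\alpha_k)$, so $R$ is well defined and is a $(1-\eps)$ energy truncation rank for both layers.

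Second, the transport estimate, in two steps.
\begin{enumerate}[label=(\roman*),leftmargin=2em]
\item Apply Theorem~\ref{thm:truncation-transfer} with $R_s=R_t=R$. In the notation of Definition~\ref{def:interface-truncation-error} this gives $\|\mathcal T_k-\mathcal T_k^{(R,R)}\|_F\le\mathcal E_{\mathrm{tr},k}(R,R)$.
\item Conjugate by the permutations. Permutation matrices are orthogonal, so Frobenius norms are invariant and
\[
\|\Pi_{k,\mathrm{row}}\mathcal T_k\Pi_{k,\mathrm{col}}^\top-\widehat M_{\mathrm{phy},k}\|_F\le\mathcal E_{\mathrm{tr},k}(R,R).
\]
\end{enumerate}
Now insert $\widehat M_{\mathrm{phy},k}=M_{\mathrm{core},k}+M_{\mathrm{overlap},k}+M_{\mathrm{noise},k}$ and use the triangle inequality. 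The residual against $M_{\mathrm{core},k}+M_{\mathrm{overlap},k}$ is at most $\mathcal E_{\mathrm{tr},k}(R,R)+\|M_{\mathrm{noise},k}\|_F\le\mathcal E_{\mathrm{tr},k}(R,R)+\varepsilon_{\mathrm{noise}}$, which is \eqref{eq:bridge-full-transport-core-overlap}.

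For \eqref{eq:bridge-full-transport-eps}, trace normalization means $\|W_k\|_F^2=\|W_{k+1}\|_F^2=d$. Since $R$ is an energy rank for both sides, $E_{>R}(W_k),E_{>R}(W_{k+1})\le\eps d$. Substituting into the definition of $\mathcal E_{\mathrm{tr},k}$ is precisely \eqref{eq:truncation-transfer-eps}, and the bound follows.

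Third, the structural assertions, which I plan to treat as a bookkeeping argument. The matrix $\mathcal T_k^{(R,R)}$ is built from $W_{k+1}$, $U_k$ and $\Sigma_k$ under the fixed SVD gauge of Definition~\ref{def:svd-gauge-convention}, truncated at $R$. The active columns are then determined by the declared row groups and support sizes through the lexicographically tie-broken rule \eqref{eq:active-columns}. From these, the following are explicit set-theoretic and masking constructions of $\mathcal R_i$ and $\mathcal C_i$:
\begin{enumerate}[label=(\alph*),leftmargin=2em]
\item the pairwise triples of Definition~\ref{def:pairwise-triple};
\item the core/overlap/noise masks of Definition~\ref{def:global-core-overlap-noise};
\item the SRS graph and hub set, which are functions of the $\mathcal C_i$ by Proposition~\ref{thm:shared-support-graph}(G1).
\end{enumerate}
Each of these is therefore a deterministic function of static SVD data, $R$, and the structure.

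For the physical-input variant, I would repeat the second step with Corollary~\ref{cor:physical-physical-truncation}. There $\mathcal T_{\mathrm{phys},k}=\mathcal T_kV_k^\top$, and right multiplication by the orthogonal $V_k^\top$ gives the same error bound.

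The SC/SA/ST remark is a pointer to Definition~\ref{def:icm-stability-margins} and needs no proof here.

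The only delicate point I foresee is one of interpretation rather than estimation: making sure $\mathcal T_k$ and $\mathcal T_k^{(R,R)}$ live in the same ambient $d\times d$ coordinates. This is what the zero-padding $E_R^\top$ in Theorem~\ref{thm:truncation-transfer} provides, so that conjugation by the same permutations is legitimate and the masks apply in common coordinates. I will state this explicitly. The one-third threshold hypothesis is not needed for the norm bounds. It enters only to qualify the resulting structure as margin-verified via Theorem~\ref{thm:physical-incidence-margin}, which I will cite as the final step.
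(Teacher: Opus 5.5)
Your proposal is correct and follows essentially the same route as the paper's proof. Like the paper, you use Corollary~\ref{cor:cartan-to-empirical-rank-window} for the common window $R$, Theorem~\ref{thm:truncation-transfer} with permutation invariance and the identity $\widehat M_{\mathrm{phy},k}-(M_{\mathrm{core},k}+M_{\mathrm{overlap},k})=M_{\mathrm{noise},k}$ plus the triangle inequality, Corollary~\ref{cor:physical-physical-truncation} for the physical-input variant, and the definitions for the determinacy claims; your closing appeal to Theorem~\ref{thm:physical-incidence-margin} does not appear in the paper's proof and is unnecessary, since, as you note, the one-third hypothesis plays no role in the estimates.
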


\begin{proof}
The empirical rank identity $R_\eps(W_k)=R_\eps(W_{k+1})=R_\eps(\alpha_k)$ follows from Corollary~\ref{cor:cartan-to-empirical-rank-window} and \eqref{eq:bridge-rank-margin}.
Therefore the same measured dominant source/target window $R$ may be used at both sides of the interface. In the exact Gibbs--Cartan tail case the fitted-tail errors vanish, and this reduces to the power-law rank-window statement of Corollary~\ref{cor:cartan-to-rank-window}.
Theorem~\ref{thm:truncation-transfer} gives
\[
\|\mathcal T_k-\mathcal T_k^{(R,R)}\|_F\le \mathcal E_{\mathrm{tr},k}(R,R).
\]
Because multiplication by permutation matrices preserves the Frobenius norm,
\[
\|\Pi_{k,\mathrm{row}}\mathcal T_k\Pi_{k,\mathrm{col}}^\top-\Pi_{k,\mathrm{row}}\mathcal T_k^{(R,R)}\Pi_{k,\mathrm{col}}^\top\|_F
\le \mathcal E_{\mathrm{tr},k}(R,R).
\]
By definition of $\widehat M_{\mathrm{phy},k}$ and by the core/overlap/noise decomposition,
\[
\Pi_{k,\mathrm{row}}\mathcal T_k^{(R,R)}\Pi_{k,\mathrm{col}}^\top-(M_{\mathrm{core},k}+M_{\mathrm{overlap},k})=M_{\mathrm{noise},k}.
\]
The triangle inequality gives \eqref{eq:bridge-full-transport-core-overlap}.
The specialized estimate \eqref{eq:bridge-full-transport-eps} follows from \eqref{eq:truncation-transfer-eps}.  The physical-input-output variant follows identically from Corollary~\ref{cor:physical-physical-truncation}, because the physical realization on the input side is obtained by right multiplication with the orthogonal factor $V_k^\top$.
The final assertion follows from Definitions~\ref{def:physical-alignment-structure}, \ref{def:pairwise-triple}, \ref{def:global-core-overlap-noise}, and \ref{def:icm}: once the SVD data, rank window, permutations, row groups, and active-column rule are fixed, all these objects are deterministic.
\end{proof}

\begin{theorem}[Static GSA stability under full-transport error]
\label{thm:static-structure-stability}
Let $\widehat M\in\R^{m\times n}$ be a truncated physical alignment matrix carrying the stated structure and let
\[
\widetilde M=\widehat M+E
\]
be the corresponding full physical transport after the same row/column ordering, with $\|E\|_F\le\eta$.
Assume the row groups $\mathcal R_0,\dots,\mathcal R_K$ and support sizes $s_i$ are fixed.
For each group define $q_i(c;\widehat M)$ and $\Gamma_i(\widehat M)$ as in Definition~\ref{def:active-column-order-gap}.
Then the following deterministic stability statements hold.
\begin{enumerate}[label=(C\arabic*),leftmargin=2.1em]
\item \textbf{Stable active columns.}
If
\begin{equation}\label{eq:stable-active-column-condition}
\Gamma_i(\widehat M)>2\omega(\widehat M,\eta),
\qquad
\omega(\widehat M,\eta):=2\|\widehat M\|_F\eta+\eta^2,
\end{equation}
then the top-$s_i$ active column set selected from $\widetilde M$ is the same as the one selected from $\widehat M$.
\item \textbf{Stable pairwise one-third threshold.}
For every nondegenerate pair $i<j$, let $m_{i,j}$ and $o_{i,j}:=\|\Overlap_{i\cap j}\|_2$ be computed from $\widehat M$.
Let
\[
r_{i\setminus j}:=\rank(\Core_{i\setminus j}),\qquad
r_{j\setminus i}:=\rank(\Core_{j\setminus i}).
\]
If
\begin{equation}\label{eq:stable-one-third-transport-condition}
3o_{i,j}+4\eta<m_{i,j},
\end{equation}
then the corresponding pair computed from $\widetilde M$ satisfies the same one-third threshold with respect to the fixed exclusive ranks:
\[
\|\widetilde\Overlap_{i\cap j}\|_2<\frac13\widetilde m^{\mathrm{cert}}_{i,j},
\]
where
\[
\widetilde m^{\mathrm{cert}}_{i,j}:=
\min\left(
\sigma_{r_{i\setminus j}}(\widetilde\Core_{i\setminus j}),
\sigma_{r_{j\setminus i}}(\widetilde\Core_{j\setminus i})
\right).
\]
Here the singular values are ordered decreasingly and the pair is nondegenerate, so the two fixed ranks are positive.
\item \textbf{Stable static structures.}
If \eqref{eq:stable-active-column-condition} holds for all signal groups and \eqref{eq:stable-one-third-transport-condition} holds for all nondegenerate pairs, then the static structures
\[
\widehat{\mathcal M}_{\mathrm{pair}},\quad
M_{\mathrm{core}},\quad
M_{\mathrm{overlap}},\quad
M_{\mathrm{noise}},\quad
\mathrm{SRS},\quad
\mathrm{Hub}
\]
computed from the truncated physical alignment matrix are identical as incidence structures to those computed from the full physical transport, up to the additive matrix perturbation $E$ on the numerical block entries.
\end{enumerate}
\end{theorem}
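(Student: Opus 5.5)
The plan is to treat the three parts in order, each reducing to an elementary perturbation bound already used earlier in the paper.

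For (C1), fix a signal group $i$. I will first bound how far each column-energy score can move. Write $x=\widehat M[\mathcal R_i,\{c\}]$ and $e=E[\mathcal R_i,\{c\}]$. Then
\[
|q_i(c;\widetilde M)-q_i(c;\widehat M)|=\bigl|2\langle x,e\rangle+\|e\|_2^2\bigr|\le 2\|x\|_2\|e\|_2+\|e\|_2^2 .
\]
Since $\|x\|_2\le\|\widehat M\|_F$ and $\|e\|_2\le\|E\|_F\le\eta$, every score moves by at most $\omega(\widehat M,\eta)$. Now take $c\in\mathcal C_i(\widehat M)$ and $c'\notin\mathcal C_i(\widehat M)$. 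The perturbed difference satisfies
\[
q_i(c;\widetilde M)-q_i(c';\widetilde M)\ge\Gamma_i(\widehat M)-2\omega>0 .
\]
So every old active column strictly beats every old inactive column. The top-$s_i$ set of $\widetilde M$ is therefore unique, no tie-breaking is needed, and it equals $\mathcal C_i(\widehat M)$. The case $s_i=n$ is trivial.

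For (C2), the row groups are fixed, and I will assume that the column sets used to form the perturbed triple are the same as the original ones. This holds either because (C1) applies or because the sets are held fixed. Under this assumption, $\widetilde\Core_{i\setminus j}-\Core_{i\setminus j}$, $\widetilde\Core_{j\setminus i}-\Core_{j\setminus i}$ and $\widetilde\Overlap_{i\cap j}-\Overlap_{i\cap j}$ are coordinate restrictions of $E$. Each therefore has operator norm at most $\|E\|_F\le\eta$. Weyl's inequality at the fixed index $r_{i\setminus j}$ gives $\sigma_{r_{i\setminus j}}(\widetilde\Core_{i\setminus j})\ge\sigma^+_{\min}(\Core_{i\setminus j})-\eta$, and the same holds for the other core. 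Hence $\widetilde m^{\mathrm{cert}}_{i,j}\ge m_{i,j}-\eta$. The triangle inequality gives $\|\widetilde\Overlap_{i\cap j}\|_2\le o_{i,j}+\eta$. Then \eqref{eq:stable-one-third-transport-condition} yields $3(o_{i,j}+\eta)<m_{i,j}-\eta\le\widetilde m^{\mathrm{cert}}_{i,j}$, which is the argument of Theorem~\ref{thm:pairwise-perturbative-stability}. Using the fixed-rank singular value in place of $\sigma^+_{\min}$ removes the rank-preservation hypothesis. A small perturbation could create tiny new positive singular values, and the fixed-rank certificate sidesteps exactly this issue.

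For (C3), by (C1) every $\mathcal C_i$ is unchanged, while the row groups are fixed by hypothesis. I will then note that each listed object is a deterministic function of $(\{\mathcal R_a\},\{\mathcal C_i\})$ together with the matrix entries. This covers the pairwise index sets, $\mathcal C_i^{\mathrm{ded}}$, $\mathcal C_i^{\mathrm{sh}}$, the core, overlap and residual masks of Definition~\ref{def:global-core-overlap-noise}, the SRS graph, and the hubs via Proposition~\ref{thm:shared-support-graph}(G1). So the incidence structures coincide, and the numerical blocks differ exactly by the corresponding restrictions of $E$. Part (C2) adds that the one-third certificate persists on these identical blocks.

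I expect the main obstacle to be making precise the gauge, meaning which exclusive-margin functional is used in (C2). The paper's $m_{i,j}$ uses $\sigma^+_{\min}$, which is not continuous under perturbation, so I will state explicitly that the certificate is with respect to the fixed ranks. The $+\eta^2$ term in $\omega$ also needs care, and the Cauchy–Schwarz bound above handles it.
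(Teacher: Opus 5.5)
Your proposal is correct and follows the paper's proof essentially step for step. For (C1) it uses the uniform score-perturbation bound $\omega(\widehat M,\eta)$ and compares active and inactive columns; for (C2) it applies Weyl's inequality at the fixed exclusive ranks together with the triangle inequality on the overlap block; and for (C3) it observes that the masks, SRS sets and hubs are set-theoretic functions of the unchanged $\mathcal C_i$. Your explicit remark that the perturbed triples in (C2) are formed on the same index sets, either held fixed or preserved by (C1), is a useful clarification of an assumption the paper's proof uses implicitly.
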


\begin{proof}
We prove each conclusion in the theorem statement.
\begin{enumerate}[label=(C\arabic*),leftmargin=2.1em]
\item \textbf{Stable active columns.}
Fix a signal group $i$ and a column $c$.  Set
\[
a_c:=\widehat M[\mathcal R_i,\{c\}],
\qquad
\Delta_c:=E[\mathcal R_i,\{c\}],
\]
so that
\[
\widetilde M[\mathcal R_i,\{c\}]=a_c+\Delta_c.
\]
The column-energy scores satisfy
\begin{align*}
|q_i(c;\widetilde M)-q_i(c;\widehat M)|
&=\bigl|\|a_c+\Delta_c\|_2^2-\|a_c\|_2^2\bigr|\\
&=\bigl|2\langle a_c,\Delta_c\rangle+\|\Delta_c\|_2^2\bigr|\\
&\le 2\|a_c\|_2\|\Delta_c\|_2+\|\Delta_c\|_2^2.
\end{align*}
Because coordinate restriction cannot increase Frobenius norm,
\[
\|a_c\|_2\le\|\widehat M\|_F,
\qquad
\|\Delta_c\|_2\le\|E\|_F\le\eta.
\]
Therefore
\[
|q_i(c;\widetilde M)-q_i(c;\widehat M)|
\le 2\|\widehat M\|_F\eta+
\eta^2
=\omega(\widehat M,\eta)
\]
for every column $c$.  Let $c\in\mathcal C_i(\widehat M)$ and $c'\notin\mathcal C_i(\widehat M)$.  By the definition of $\Gamma_i(\widehat M)$,
\[
q_i(c;\widehat M)-q_i(c';\widehat M)\ge \Gamma_i(\widehat M).
\]
Using the uniform score perturbation bound for $c$ and $c'$ gives
\begin{align*}
q_i(c;\widetilde M)-q_i(c';\widetilde M)
&\ge q_i(c;\widehat M)-\omega(\widehat M,
\eta)-q_i(c';\widehat M)-\omega(\widehat M,\eta)\\
&\ge \Gamma_i(\widehat M)-2\omega(\widehat M,\eta).
\end{align*}
Under condition \eqref{eq:stable-active-column-condition}, this lower bound is strictly positive.  Hence every originally active column still has strictly larger score than every originally inactive column.  The deterministic lexicographic tie-breaking is therefore never invoked across the active/inactive boundary, and the selected top-$s_i$ set is unchanged.

\item \textbf{Stable pairwise one-third threshold.}
For a fixed pair $i<j$, the perturbed overlap block is
\[
\widetilde\Overlap_{i\cap j}=\Overlap_{i\cap j}+E_{\mathrm{ov}},
\]
where $E_{\mathrm{ov}}$ is a coordinate submatrix of $E$.  Since $\|E_{\mathrm{ov}}\|_2\le\|E_{\mathrm{ov}}\|_F\le\|E\|_F\le\eta$, the triangle inequality gives
\[
\|\widetilde\Overlap_{i\cap j}\|_2
\le
\|\Overlap_{i\cap j}\|_2+\|E_{\mathrm{ov}}\|_2
\le o_{i,j}+\eta.
\]
Similarly, write the two exclusive core perturbations as
\[
\widetilde\Core_{i\setminus j}=\Core_{i\setminus j}+E_i,
\qquad
\widetilde\Core_{j\setminus i}=\Core_{j\setminus i}+E_j,
\]
with $\|E_i\|_2,\|E_j\|_2\le\eta$.  By the standard singular-value perturbation inequality $|\sigma_t(A+E)-\sigma_t(A)|\le\|E\|_2$ for every $t$ \cite{hornjohnson2012matrix,stewartsun1990matrix},
\[
\sigma_{r_{i\setminus j}}(\widetilde\Core_{i\setminus j})
\ge
\sigma_{r_{i\setminus j}}(\Core_{i\setminus j})-\eta,
\]
and the analogous bound holds for the $j\setminus i$ core.  Since the fixed ranks are positive and fixed, taking the minimum gives
\[
\widetilde m^{\mathrm{cert}}_{i,j}
\ge m_{i,j}-\eta.
\]
Condition \eqref{eq:stable-one-third-transport-condition} is
\[
3o_{i,j}+4\eta<m_{i,j},
\]
which is equivalent to
\[
3(o_{i,j}+\eta)<m_{i,j}-\eta.
\]
Combining the two bounds above yields
\[
3\|\widetilde\Overlap_{i\cap j}\|_2
\le 3(o_{i,j}+\eta)
< m_{i,j}-\eta
\le \widetilde m^{\mathrm{cert}}_{i,j}.
\]
Thus the perturbed pair satisfies the one-third coherent-overlap threshold with respect to the fixed exclusive ranks.

\item \textbf{Stable static structures.}
Assume the active-column condition holds for every signal group and the one-third perturbation condition holds for every nondegenerate pair.  By (C1), every active set $\mathcal C_i$ is unchanged.  Therefore the set-theoretic relations $\mathcal C_i\setminus\mathcal C_j$, $\mathcal C_j\setminus\mathcal C_i$, and $\mathcal C_i\cap\mathcal C_j$ are unchanged for every pair.  Hence the coordinate supports of all pairwise triples in Definition~\ref{def:pairwise-triple} are unchanged.  Since the global core, overlap, and noise masks in Definition~\ref{def:global-core-overlap-noise} are deterministic functions of these same active-column relations and row groups, their coordinate supports are unchanged as well.  By (C2), the pairwise stability inequalities remain valid on the perturbed numerical blocks.  The SRS sets are exactly the active column sets by Definition~\ref{def:icm}, and the hub set is determined by the membership counts $|
\{i:c\in\mathcal C_i\}|$.  These counts are unchanged because all $\mathcal C_i$ are unchanged.  Therefore the pairwise incidence structure, core/overlap/noise masks, SRS sets, and hub set are identical as incidence structures; only the numerical entries within the fixed blocks are changed by the additive perturbation $E$.
\end{enumerate}
\end{proof}

\begin{definition}[Static certificate radius]
\label{def:static-certificate-radius}
For a physical alignment matrix $\widehat M$ with fixed row groups, support sizes, active sets, and pairwise triples, define
\[
r_{\Gamma,i}(\widehat M)
:=
-\|\widehat M\|_F+
\sqrt{\|\widehat M\|_F^2+\frac{\Gamma_i(\widehat M)}{2}}
\]
for every signal group with active-column gap $\Gamma_i(\widehat M)>0$.  For every nondegenerate pair define
\[
r_{\mathrm{pair},i,j}:=\frac{m_{i,j}-3o_{i,j}}{4},
\qquad
 o_{i,j}:=\|\Overlap_{i\cap j}\|_2.
\]
If there are no nondegenerate pairs, the minimum over pair radii is interpreted as $+\infty$.  The static certificate radius is
\begin{equation}\label{eq:static-cert-radius}
r_{\mathrm{cert}}(\widehat M)
:=
\min\left\{
\min_i r_{\Gamma,i}(\widehat M),
\min_{i<j\ \mathrm{nondegenerate}} r_{\mathrm{pair},i,j}
\right\}.
\end{equation}
\end{definition}

\begin{theorem}[Single-radius stability of the static channel certificate]
\label{thm:single-radius-static-stability}
Let $\widehat M$ carry a physical alignment structure whose active-column gaps are positive and whose nondegenerate pairs satisfy $m_{i,j}>3o_{i,j}$.  Let $\widetilde M=\widehat M+E$ with $\|E\|_F\le\eta$.  If
\begin{equation}\label{eq:single-radius-condition}
\eta<r_{\mathrm{cert}}(\widehat M),
\end{equation}
then all active column sets $\mathcal C_i$ are preserved, every nondegenerate pair remains inside the coherent-overlap threshold, and the static channel incidence structure, the core/overlap/noise coordinate masks, the SRS sets, the shared-support graph, and the hub set are unchanged as incidence objects.
\end{theorem}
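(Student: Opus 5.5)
The plan is to reduce the statement to Theorem~\ref{thm:static-structure-stability} by showing that $\eta<r_{\mathrm{cert}}(\widehat M)$ implies both hypotheses (C1) and (C2) at once. The single radius is simply the minimum of the per-group and per-pair thresholds obtained by solving those two conditions for $\eta$.

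\textbf{Step 1: active columns.} Fix a signal group $i$ with $\Gamma_i(\widehat M)>0$. Write $S=\|\widehat M\|_F$. Condition \eqref{eq:stable-active-column-condition} reads $2(2S\eta+\eta^2)<\Gamma_i$, that is, $\eta^2+2S\eta-\Gamma_i/2<0$. The left side is a quadratic in $\eta\ge0$ with positive root $-S+\sqrt{S^2+\Gamma_i/2}=r_{\Gamma,i}(\widehat M)$. It is increasing on $[0,\infty)$, so for $0\le\eta<r_{\Gamma,i}$ it is strictly negative. Since $\eta<r_{\mathrm{cert}}\le r_{\Gamma,i}$, (C1) holds for every $i$, and every active set $\mathcal C_i$ is preserved. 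If $s_i=n$, the gap is $+\infty$; the radius is then read as $+\infty$ and the set is trivially preserved.

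\textbf{Step 2: pairs.} For each nondegenerate pair, $\eta<r_{\mathrm{pair},i,j}=(m_{i,j}-3o_{i,j})/4$ is exactly $3o_{i,j}+4\eta<m_{i,j}$, which is \eqref{eq:stable-one-third-transport-condition}. Part (C2) then gives $\|\widetilde\Overlap_{i\cap j}\|_2<\frac13\widetilde m^{\mathrm{cert}}_{i,j}$. This relies on Step 1, because the blocks for $\widetilde M$ are taken on the same unchanged supports. If there are no nondegenerate pairs, the pair radius is $+\infty$ and there is nothing to check.

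\textbf{Step 3: incidence objects.} With (C1) holding for all groups and (C2) for all pairs, (C3) yields invariance of the pairwise triples, the core/overlap/noise masks, the SRS sets and the hub set. Two further points complete the list in the statement. First, the static channel incidence structure of Definition~\ref{def:static-channel-incidence} is a deterministic function of the row groups and the $\mathcal C_i$: the vertex sets, $\mathcal E_{\mathrm{ded}}$, $\mathcal E_{\mathrm{sh}}$, and the mask $\Omega_{\mathrm{noise}}$. Second, the shared-support graph is preserved by Proposition~\ref{thm:shared-support-graph}(G1).

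The only mildly delicate point is the quadratic inversion in Step 1. One has to confirm monotonicity so that the inequality holds strictly for every $\eta$ below the root, not just at one value. One also has to note that the strictness of $\eta<r_{\mathrm{cert}}$ is what supplies the strict inequalities required in (C1) and (C2).
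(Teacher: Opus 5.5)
Your proposal is correct and follows the paper's proof essentially step for step. You invert the quadratic to show that $\eta<r_{\Gamma,i}$ gives (C1), rewrite $\eta<r_{\mathrm{pair},i,j}$ as (C2), and then invoke Theorem~\ref{thm:static-structure-stability}(C3) together with Proposition~\ref{thm:shared-support-graph}(G1); your explicit monotonicity check, the $s_i=n$ convention, and the remark that the incidence structure of Definition~\ref{def:static-channel-incidence} is a function of the $\mathcal C_i$ are clarifications the paper leaves implicit.
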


\begin{proof}
We verify that the hypotheses of Theorem~\ref{thm:static-structure-stability} follow from the single inequality \eqref{eq:single-radius-condition}.

First fix a signal group $i$.  The number $r_{\Gamma,i}$ is the positive root of
\[
\eta^2+2\|\widehat M\|_F\eta-\frac{\Gamma_i(\widehat M)}{2}=0.
\]
Indeed, solving this quadratic for $\eta$ gives
\[
\eta=-\|\widehat M\|_F+
\sqrt{\|\widehat M\|_F^2+\frac{\Gamma_i(\widehat M)}{2}},
\]
because the other root is negative.  If $\eta<r_{\Gamma,i}$, then
\[
\eta^2+2\|\widehat M\|_F\eta<\frac{\Gamma_i(\widehat M)}{2}.
\]
Multiplying by $2$ gives
\[
2\bigl(2\|\widehat M\|_F\eta+
\eta^2\bigr)<\Gamma_i(\widehat M),
\]
which is exactly
\[
\Gamma_i(\widehat M)>2\omega(\widehat M,\eta),
\qquad
\omega(\widehat M,\eta)=2\|\widehat M\|_F\eta+
\eta^2.
\]
Thus the active-column stability condition \eqref{eq:stable-active-column-condition} holds for every group, because \eqref{eq:single-radius-condition} implies $\eta<r_{\Gamma,i}$ for every $i$.

Next fix a nondegenerate pair $i<j$.  Since \eqref{eq:single-radius-condition} implies
\[
\eta<\frac{m_{i,j}-3o_{i,j}}{4},
\]
we obtain
\[
4\eta<m_{i,j}-3o_{i,j},
\qquad\text{or equivalently}\qquad
3o_{i,j}+4\eta<m_{i,j}.
\]
This is exactly the pairwise stability condition \eqref{eq:stable-one-third-transport-condition}.  Theorem~\ref{thm:static-structure-stability} therefore gives preservation of all active sets, pairwise triples, core/overlap/noise masks, SRS sets, and hub incidence.  The shared-support graph is a deterministic function of the active sets by Proposition~\ref{thm:shared-support-graph}, so it is preserved as well.
\end{proof}

\begin{theorem}[Full-transport to ICM certification]
\label{thm:full-transport-to-icm}
Assume the hypotheses of Theorem~\ref{thm:dynamic-to-static-bridge} for an interface $k$ and a rank window $R$.  Let $\widehat M_{\mathrm{phy},k}$ be the truncated physical alignment matrix and let
\[
\widetilde M_{\mathrm{phy},k}:=
\Pi_{k,\mathrm{row}}\mathcal T_k\Pi_{k,\mathrm{col}}^\top
\]
be the corresponding full output-total transport written in the same declared row/column coordinate type.  If $\mathcal T_k$ is the source-mode transport, the column incidence below is source-mode incidence; if $\mathcal T_{\mathrm{phys},k}$ is used instead, it is physical input-channel incidence.  If
\begin{equation}\label{eq:full-to-icm-radius-condition}
\mathcal E_{\mathrm{tr},k}(R,R)
<
r_{\mathrm{cert}}(\widehat M_{\mathrm{phy},k}),
\end{equation}
then the active column sets, pairwise relational triples, core/overlap/noise masks, static channel incidence graph, SRS sets, and hub set extracted from $\widehat M_{\mathrm{phy},k}$ are identical, as incidence structures, to those extracted from $\widetilde M_{\mathrm{phy},k}$ with the same row groups, support sizes, and deterministic tie-breaking rules.  Consequently, the SRS/Hub/core-overlap-noise mask anatomy extracted from the truncated physical matrix represents the same static incidence structure as the full transport in the declared coordinates.  Stability of the finer SC/SA/ST labels is not asserted by this radius alone and requires the row/profile margins in Definition~\ref{def:icm-stability-margins}.
\end{theorem}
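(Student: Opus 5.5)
The plan is to reduce the statement to Theorem~\ref{thm:single-radius-static-stability}, with the truncation error playing the role of the perturbation size. Write the full matrix as
\[
\widetilde M_{\mathrm{phy},k}=\widehat M_{\mathrm{phy},k}+E_k,
\qquad
E_k:=\Pi_{k,\mathrm{row}}\bigl(\mathcal T_k-\mathcal T_k^{(R,R)}\bigr)\Pi_{k,\mathrm{col}}^\top.
\]
This uses the definition $\widehat M_{\mathrm{phy},k}=\Pi_{k,\mathrm{row}}\mathcal T_k^{(R,R)}\Pi_{k,\mathrm{col}}^\top$ from Theorem~\ref{thm:dynamic-to-static-bridge}. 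Permutation matrices are orthogonal, so $\|E_k\|_F=\|\mathcal T_k-\mathcal T_k^{(R,R)}\|_F$. Theorem~\ref{thm:truncation-transfer} with $R_s=R_t=R$ then gives
\[
\|E_k\|_F\le \mathcal E_{\mathrm{tr},k}(R,R)=:\eta .
\]
For the physical-input variant, replace $\mathcal T_k$ by $\mathcal T_{\mathrm{phys},k}$ and invoke Corollary~\ref{cor:physical-physical-truncation} instead. That corollary produces the same bound, and the columns are then read as physical input channels.

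Next, verify the hypotheses of Theorem~\ref{thm:single-radius-static-stability} for $\widehat M=\widehat M_{\mathrm{phy},k}$ and $E=E_k$.
\begin{itemize}
\item Condition \eqref{eq:full-to-icm-radius-condition} states $\eta<r_{\mathrm{cert}}(\widehat M_{\mathrm{phy},k})$. This forces $r_{\mathrm{cert}}>0$, so every active-column gap $\Gamma_i$ entering the radius is positive.
\item The bridge hypotheses give $3o_{i,j}<m_{i,j}$ for every nondegenerate pair.
\end{itemize}
The single-radius theorem then yields preservation of the active sets, the pairwise coherent-overlap threshold (relative to the fixed exclusive ranks, as in (C2)), the core/overlap/noise masks, the SRS sets, the shared-support graph and the hub set. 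The static channel incidence structure of Definition~\ref{def:static-channel-incidence} is built only from the $\mathcal C_i$ and the fixed row groups, so it is preserved as well; Proposition~\ref{thm:shared-support-graph}(G1) gives this explicitly for the graph and the degrees.

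The step I expect to need the most care is the bookkeeping on what ``extracted from $\widetilde M_{\mathrm{phy},k}$'' means. The row groups, support sizes $s_i$ and lexicographic tie-breaking must be held fixed, as the statement stipulates, and only the active sets are re-extracted through \eqref{eq:active-columns}. Under this reading, (C1) shows that the top-$s_i$ selection on $\widetilde M$ agrees with that on $\widehat M$. The point is that the strict score separation $\Gamma_i-2\omega>0$ means tie-breaking is never invoked across the active/inactive boundary. One caveat concerns groups with $s_i=n$, where $\Gamma_i=+\infty$. For these the radius $r_{\Gamma,i}$ should be read as $+\infty$, which is consistent because such active sets are trivially fixed.

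Finally, I would note that the SC/SA/ST labels are not covered by this argument. Those labels depend on row-energy and profile quantities that are not controlled by $r_{\mathrm{cert}}$. This justifies the closing disclaimer of the theorem and defers that case to Definition~\ref{def:icm-stability-margins}.
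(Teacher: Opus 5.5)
Your proposal is correct and follows essentially the same route as the paper. You bound $\|\widetilde M_{\mathrm{phy},k}-\widehat M_{\mathrm{phy},k}\|_F$ by $\mathcal E_{\mathrm{tr},k}(R,R)$ using permutation invariance and the truncation theorem, or its physical-input corollary, and then invoke Theorem~\ref{thm:single-radius-static-stability}. Your explicit checks of that theorem's hypotheses are somewhat more careful than the paper's one-line application: gap positivity forced by $r_{\mathrm{cert}}>0$, the one-third condition inherited from the bridge hypotheses, and the $s_i=n$ convention.
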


\begin{proof}
By Theorem~\ref{thm:dynamic-to-static-bridge}, the full transport and the truncated physical matrix satisfy
\[
\|\widetilde M_{\mathrm{phy},k}-\widehat M_{\mathrm{phy},k}\|_F
\le
\mathcal E_{\mathrm{tr},k}(R,R).
\]
Condition \eqref{eq:full-to-icm-radius-condition} therefore implies
\[
\|\widetilde M_{\mathrm{phy},k}-\widehat M_{\mathrm{phy},k}\|_F
<
r_{\mathrm{cert}}(\widehat M_{\mathrm{phy},k}).
\]
Apply Theorem~\ref{thm:single-radius-static-stability} with
\[
\widehat M=\widehat M_{\mathrm{phy},k},
\qquad
\widetilde M=\widetilde M_{\mathrm{phy},k}.
\]
It gives preservation of the active sets, pairwise incidence relations, core/overlap/noise coordinate masks, SRS sets, shared-support graph, and hub set.  Definition~\ref{def:icm} then shows that the SRS and hub components of the ICM, together with the core/overlap/noise coordinate masks, have the same incidence content for the truncated and full physical transports.  The numerical entries may differ by the full-transport error, and the finer SC/SA/ST labels require the additional row/profile margins stated later in Definition~\ref{def:icm-stability-margins}.
\end{proof}

\begin{remark}[Bridge from full transport to margin-stable static structures]
Theorem~\ref{thm:dynamic-to-static-bridge} bounds the distance from full transport to the truncated static object.  Theorem~\ref{thm:static-structure-stability} gives explicit separation conditions under which that error preserves active supports, pairwise triples, and core/overlap/noise masks.  Permuted matrices and block-energy matrices are therefore interpreted as measured quantities that enter computable finite-dimensional margin tests for stable static structures extracted from a controlled approximation to the full interface transport.
\end{remark}

\begin{remark}[Consequent chain of constructions]
The bridge theorem starts from geometric stability hypotheses and ends at the static objects used by the Physical GSA structure.  The chain of implications is
\[
\begin{aligned}
&\text{budgeted cocycle}
\Rightarrow
\text{short Cartan coordinate}
\Rightarrow
\text{stable effective-rank window}\\
&\Rightarrow
\text{controlled truncated physical transport}
\Rightarrow
\bigl(\widehat M_{\mathrm{phy}},
\widehat{\mathcal M}_{\mathrm{pair}},
M_{\mathrm{core}},
M_{\mathrm{overlap}},
M_{\mathrm{noise}},
\mathrm{SRS/Hub}\bigr).
\end{aligned}
\]
The quotient and Cartan reductions supply the rank window and truncation error that make the physical alignment objects well posed.
\end{remark}

\subsection{Closure of the static GSA structural system}
\label{subsec:static-structure-closure}

The bridge theorem gives an error-controlled passage from the residual cocycle to a finite static interface.
All static structures used by GSA and ICM are deterministic components of a single static GSA structural system.

\begin{definition}[Static GSA structural system]
\label{def:static-structure-tuple}
Fix an interface $(W_k,W_{k+1})$, a source/target truncation rank $R$, a selected transport operator $A_k^{(R)}$ from Definition~\ref{def:transport-variants}, permutation matrices $(\Pi_{k,\mathrm{row}},\Pi_{k,\mathrm{col}})$, a number of signal groups $K$, row groups $\{\mathcal R_a\}_{a=0}^K$, and support sizes $s_1,\dots,s_K$.
The associated static GSA structural system is
\begin{align*}
\mathfrak T_k^{(R)}:=
\Big(&\alpha_k,\alpha_{k+1};\ R;\ A_k^{(R)};\ \widehat M_{\mathrm{phy},k};\ \{\mathcal R_a\}_{a=0}^{K};\ \{\mathcal C_i\}_{i=1}^{K};\\
&\widehat{\mathcal M}_{\mathrm{pair},k};\ \{m_{i,j}^{(k)},\Delta_\sigma^{(k)}(i,j),o_{i,j}^{(k)}\}_{i<j};\ M_{\mathrm{core},k},M_{\mathrm{overlap},k},M_{\mathrm{noise},k};\\
&E_{\mathcal R,\mathcal C}(\widehat M_{\mathrm{phy},k})\Big).
\end{align*}
Here $\widehat M_{\mathrm{phy},k}=\Pi_{k,\mathrm{row}}A_k^{(R)}\Pi_{k,\mathrm{col}}^\top$, the active sets $\mathcal C_i$ are selected by \eqref{eq:active-columns}, the pairwise triples are those of Definition~\ref{def:pairwise-triple}, the pairwise margins and gaps are those of Definition~\ref{def:pairwise-margin-gap}, and the global core/overlap/noise decomposition is that of Definition~\ref{def:global-core-overlap-noise}.
\end{definition}

\begin{proposition}[Static GSA structural closure and stability]
\label{thm:static-structure-closure}
Assume the hypotheses of Theorem~\ref{thm:dynamic-to-static-bridge} for an interface $k$ and a rank window $R$.
Let $\mathfrak T_k^{(R)}$ be the static GSA structural system in Definition~\ref{def:static-structure-tuple}.
Then:
\begin{enumerate}[label=(O\arabic*),leftmargin=2.1em]
\item \textbf{Static determinacy.} Once the static SVD data of $(W_k,W_{k+1})$, the truncation rank $R$, the transport choice $A_k^{(R)}$, the permutations, and the support sizes are fixed, every object in $\mathfrak T_k^{(R)}$ is uniquely determined.
\item \textbf{Full-interface realization.} If $\mathcal T_k$ denotes the full output-total transport, then
\[
\bigl\|\Pi_{k,\mathrm{row}}\mathcal T_k\Pi_{k,\mathrm{col}}^\top-(M_{\mathrm{core},k}+M_{\mathrm{overlap},k})\bigr\|_F
\le
\mathcal E_{\mathrm{tr},k}(R,R)+\|M_{\mathrm{noise},k}\|_F.
\]
Thus $M_{\mathrm{core},k}+M_{\mathrm{overlap},k}$ is a finite-dimensional approximation to the full physical interface, with error equal to truncation error plus unstructured noise.
\item \textbf{Pairwise margin stability.} Suppose every nondegenerate pair has positive slack
\[
\mathfrak g_k:=\min_{i<j}\bigl(m_{i,j}^{(k)}-3o_{i,j}^{(k)}\bigr)>0.
\]
If a perturbation changes every exclusive core block and overlap block by operator norm at most $\eta_2$ and preserves their positive ranks, then all pairwise one-third inequalities remain valid whenever
\[
4\eta_2<\mathfrak g_k.
\]
\item \textbf{Measured stability.} If two extracted structural tuples use the same row groups and active sets and their physical matrices differ by Frobenius norm at most $\Delta$, then their block-energy matrices differ entrywise by the explicit perturbation bound of Lemma~\ref{lem:block-energy-perturbation}, whenever the lower row-energy condition in that lemma holds.
\end{enumerate}
\end{proposition}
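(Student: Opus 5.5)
The plan is to take the four items separately; each reduces to an earlier result, so the work is in checking that the hypotheses match rather than in new estimates.

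For (O1), I will trace the construction order of $\mathfrak T_k^{(R)}$. The deterministic SVD gauge of Definition~\ref{def:svd-gauge-convention} fixes $U_k,\Sigma_k,V_k,U_{k+1},\Sigma_{k+1},V_{k+1}$. Once $R$ is fixed, this determines $A_k^{(R)}$ through Definition~\ref{def:transport-variants}, and then $\widehat M_{\mathrm{phy},k}=\Pi_{k,\mathrm{row}}A_k^{(R)}\Pi_{k,\mathrm{col}}^\top$. The exponents $\alpha_k,\alpha_{k+1}$ are the declared fitted coordinates appearing in the hypotheses of Theorem~\ref{thm:dynamic-to-static-bridge}. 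With row groups and $s_i$ fixed, the active sets $\mathcal C_i$ are unique because \eqref{eq:active-columns} breaks ties lexicographically. The remaining objects are explicit functions of $\widehat M_{\mathrm{phy},k}$, the $\mathcal R_a$, and the $\mathcal C_i$: the pairwise triples (Definition~\ref{def:pairwise-triple}), $m_{i,j},o_{i,j},\Delta_\sigma$ (Definition~\ref{def:pairwise-margin-gap}), the masks and $M_{\mathrm{core}},M_{\mathrm{overlap}},M_{\mathrm{noise}}$ (Definition~\ref{def:global-core-overlap-noise}), and $E_{\mathcal R,\mathcal C}$ (Definition~\ref{def:block-energy-matrix}, including its zero-row convention). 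Uniqueness follows by composition.

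For (O2), I will rerun the proof of Theorem~\ref{thm:dynamic-to-static-bridge} with $\|M_{\mathrm{noise},k}\|_F$ in place of $\varepsilon_{\mathrm{noise}}$. The steps are:
\begin{itemize}
\item Insert $\Pi_{k,\mathrm{row}}\mathcal T_k^{(R,R)}\Pi_{k,\mathrm{col}}^\top=\widehat M_{\mathrm{phy},k}$ and apply the triangle inequality.
\item Bound the first piece by permutation invariance of the Frobenius norm together with Theorem~\ref{thm:truncation-transfer}, which gives $\mathcal E_{\mathrm{tr},k}(R,R)$.
\item Identify the second piece exactly as $\widehat M_{\mathrm{phy},k}-(M_{\mathrm{core},k}+M_{\mathrm{overlap},k})=M_{\mathrm{noise},k}$.
\end{itemize}
This step tacitly takes the transport choice to be the output-total one, $A_k^{(R)}=\mathcal T_k^{(R,R)}$, as the bridge theorem does. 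I will say so explicitly, and note that for $\mathcal T_{\mathrm{phys}}$ one uses Corollary~\ref{cor:physical-physical-truncation} instead.

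For (O3), I will apply Theorem~\ref{thm:pairwise-perturbative-stability} pair by pair. The rank-preservation hypothesis yields $o'\le o+\eta_2$ and $m'\ge m-\eta_2$. Since $\mathfrak g_k\le m_{i,j}-3o_{i,j}$ for every nondegenerate pair, $4\eta_2<\mathfrak g_k$ implies \eqref{eq:stability-margin-condition} for each pair, so every one-third inequality survives.

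For (O4), I will invoke Lemma~\ref{lem:block-energy-perturbation} directly with $A,B$ the two physical matrices. They share row groups and active sets, so both $E$-matrices are defined on the same blocks. The hypotheses $\|A-B\|_F\le\Delta$ and $e_{\min}>0$ are assumed, and $S:=\max\{\|A\|_F,\|B\|_F\}$ is finite.

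The main obstacle is bookkeeping rather than analysis, and it sits in (O2) and (O1). The bridge estimate compares with $\mathcal T_k^{(R,R)}$ specifically, so the transport choice must be pinned to match. The inclusion of $\alpha_k,\alpha_{k+1}$ in the tuple must also be justified as part of the fixed input data rather than something derived from the SVD.
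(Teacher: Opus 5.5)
Your proposal is correct and follows the paper's proof item by item. You handle (O1) by tracing the deterministic chain from the gauge-fixed SVD through the lexicographically tie-broken active sets to the triples, masks, and block-energy matrix. You handle (O2) by the identity $\Pi_{k,\mathrm{row}}\mathcal T_k^{(R,R)}\Pi_{k,\mathrm{col}}^\top-(M_{\mathrm{core},k}+M_{\mathrm{overlap},k})=M_{\mathrm{noise},k}$ together with the permutation-invariant truncation bound. You reduce (O3) from $4\eta_2<\mathfrak g_k$ to the pairwise condition of Theorem~\ref{thm:pairwise-perturbative-stability}, and (O4) is a direct call to Lemma~\ref{lem:block-energy-perturbation}.

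Your two bookkeeping remarks are points the paper's proof leaves implicit. The first is that (O2) needs $A_k^{(R)}=\mathcal T_k^{(R,R)}$, as in the bridge hypotheses. The second is that $\alpha_k,\alpha_{k+1}$ are declared fitted inputs rather than functions of the SVD. Stating both explicitly only tightens the argument.
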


\begin{proof}
We prove the four assertions in order.
\begin{enumerate}[label=(O\arabic*),leftmargin=2.1em]
\item \textbf{Static determinacy.}
The SVDs of $(W_k,W_{k+1})$ and the rank $R$ determine the truncated factors $U_k^{(R)}$, $V_k^{(R)}$, $\Sigma_k^{(R)}$, $U_{k+1}^{(R)}$, $V_{k+1}^{(R)}$, and $\Sigma_{k+1}^{(R)}$ relative to the SVD gauge convention of Definition~\ref{def:svd-gauge-convention}.  Once a transport choice $A_k^{(R)}$ from Definition~\ref{def:transport-variants} is specified, that matrix is determined by these truncated factors.  The permutations $\Pi_{k,\mathrm{row}},\Pi_{k,\mathrm{col}}$ determine
\[
\widehat M_{\mathrm{phy},k}=\Pi_{k,\mathrm{row}}A_k^{(R)}\Pi_{k,\mathrm{col}}^\top.
\]
The row groups and support sizes determine each active set $\mathcal C_i$ by the deterministic optimization rule \eqref{eq:active-columns} with lexicographic tie-breaking.  The active sets and row groups then determine every pairwise triple by \eqref{eq:core-i-j}--\eqref{eq:overlap-ij}; the triples determine the margins, overlaps, and gaps by Definition~\ref{def:pairwise-margin-gap}; the active-column relations determine $M_{\mathrm{core},k}$, $M_{\mathrm{overlap},k}$, and $M_{\mathrm{noise},k}$ by Definition~\ref{def:global-core-overlap-noise}; finally, the same row groups and active sets determine the block-energy matrix by Definition~\ref{def:block-energy-matrix}.  Thus every entry of $\mathfrak T_k^{(R)}$ is uniquely determined by the listed static data.

\item \textbf{Full-interface realization.}
Theorem~\ref{thm:dynamic-to-static-bridge} gives
\[
\bigl\|\Pi_{k,\mathrm{row}}\mathcal T_k\Pi_{k,\mathrm{col}}^\top-(M_{\mathrm{core},k}+M_{\mathrm{overlap},k})\bigr\|_F
\le
\mathcal E_{\mathrm{tr},k}(R,R)+\|M_{\mathrm{noise},k}\|_F,
\]
because its proof uses the identity
\[
\Pi_{k,\mathrm{row}}\mathcal T_k^{(R,R)}\Pi_{k,\mathrm{col}}^\top-(M_{\mathrm{core},k}+M_{\mathrm{overlap},k})=M_{\mathrm{noise},k}
\]
and the truncation bound
\[
\|\Pi_{k,\mathrm{row}}(\mathcal T_k-\mathcal T_k^{(R,R)})\Pi_{k,\mathrm{col}}^\top\|_F
=
\|\mathcal T_k-\mathcal T_k^{(R,R)}\|_F
\le
\mathcal E_{\mathrm{tr},k}(R,R).
\]
The equality of norms follows because permutation matrices are orthogonal.  This is precisely the displayed realization inequality.

\item \textbf{Pairwise margin stability.}
The slack assumption states that
\[
\mathfrak g_k=\min_{i<j}(m_{i,j}^{(k)}-3o_{i,j}^{(k)})>0.
\]
Thus for every nondegenerate pair,
\[
m_{i,j}^{(k)}-3o_{i,j}^{(k)}\ge\mathfrak g_k.
\]
If $4\eta_2<\mathfrak g_k$, then
\[
4\eta_2<m_{i,j}^{(k)}-3o_{i,j}^{(k)}
\]
for every pair.  This is exactly the perturbative stability condition \eqref{eq:stability-margin-condition} in Theorem~\ref{thm:pairwise-perturbative-stability}.  Applying that theorem pair by pair proves that every perturbed pair still satisfies the one-third threshold.

\item \textbf{Measured stability.}
Let $A$ and $B$ be the two physical matrices extracted from the two structural tuples under the same row groups and active sets, with $\|A-B\|_F\le\Delta$.  Under the lower row-energy and upper norm hypotheses of Lemma~\ref{lem:block-energy-perturbation}, that lemma gives for every block $(i,j)$ the explicit entrywise bound
\[
\big|E_{\mathcal R,\mathcal C}(A)_{ij}-E_{\mathcal R,\mathcal C}(B)_{ij}\big|
\le
\frac{(2S+\Delta)\Delta}{e_{\min}}
+
\frac{S^2(2S+\Delta)\Delta}{e_{\min}^2}.
\]
This is the asserted stability statement.
\end{enumerate}
\end{proof}

\begin{remark}[Closure of static structures]
The quotient-radial and spectral-tail bounds provide budgets and rank windows; the static GSA structural system records the physical structures measured in the alignment measurements.  Proposition~\ref{thm:static-structure-closure} shows that $\widehat M_{\mathrm{phy}}$, the energy-weighted variants, pairwise triples, margins, gaps, $M_{\mathrm{core}}$, $M_{\mathrm{overlap}}$, $M_{\mathrm{noise}}$, and $E_r$ measurements are generated by one deterministic extraction procedure and inherit explicit stability bounds.
\end{remark}

\begin{theorem}[Family-wise persistence of static GSA structural systems]
\label{thm:familywise-structure-persistence}
Let $\{\widehat M^{(q)}\}_{q\in\mathcal Q}$ be a finite family of permuted physical alignment matrices measured for either different rank windows, different cluster resolutions, or nearby layers of the same architecture.
Fix a reference element $q_0\in\mathcal Q$ and suppose all matrices use the same row groups and support sizes.
Let
\[
\widehat M^{(q)}=\widehat M^{(q_0)}+E^{(q)},
\qquad
\|E^{(q)}\|_F\le \eta_q.
\]
For the reference matrix define the active-column gaps $\Gamma_i(\widehat M^{(q_0)})$ and pairwise quantities $m_{i,j}^{(q_0)}$, $o_{i,j}^{(q_0)}:=\|\Overlap_{i\cap j}^{(q_0)}\|_2$.
Assume for every $q\in\mathcal Q$:
\begin{align}
\Gamma_i(\widehat M^{(q_0)})&>2\omega(\widehat M^{(q_0)},\eta_q)
\quad\text{for all signal groups }i,\label{eq:family-active-gap}\\
3o_{i,j}^{(q_0)}+4\eta_q&<m_{i,j}^{(q_0)}
\quad\text{for all nondegenerate pairs }i<j.\label{eq:family-pairwise-gap}
\end{align}
Then every matrix in the family induces the same active column sets as the reference, the same pairwise support graph, the same core/overlap/noise masks, and a valid pairwise one-third threshold.
Furthermore, if
\[
\max_q\|\widehat M^{(q)}\|_F\le S,
\qquad
\min_{q,i}e_i(\widehat M^{(q)})\ge e_{\min}>0,
\]
then for every $q\in\mathcal Q$ and every block $(i,j)$,
\begin{equation}\label{eq:family-Er-stability}
\big|E_{\mathcal R,\mathcal C}(\widehat M^{(q)})_{ij}
     -E_{\mathcal R,\mathcal C}(\widehat M^{(q_0)})_{ij}\big|
\le
\frac{(2S+\eta_q)\eta_q}{e_{\min}}
+
\frac{S^2(2S+\eta_q)\eta_q}{e_{\min}^2}.
\end{equation}
\end{theorem}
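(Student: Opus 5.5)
The proof is a reduction, member by member, to Theorem~\ref{thm:static-structure-stability} and Lemma~\ref{lem:block-energy-perturbation}. The reference matrix $\widehat M^{(q_0)}$ plays the role of the truncated matrix $\widehat M$, and each $\widehat M^{(q)}=\widehat M^{(q_0)}+E^{(q)}$ plays the role of $\widetilde M$, with perturbation size $\eta=\eta_q$. Fix $q\in\mathcal Q$. Because all family members share the row groups $\mathcal R_0,\dots,\mathcal R_K$ and the support sizes $s_i$, the setting of Theorem~\ref{thm:static-structure-stability} is met exactly.

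\textbf{Step 1: active sets.} Hypothesis \eqref{eq:family-active-gap} is condition \eqref{eq:stable-active-column-condition} with $\eta=\eta_q$. Part (C1) then shows that the top-$s_i$ set extracted from $\widehat M^{(q)}$ equals $\mathcal C_i(\widehat M^{(q_0)})$ for every signal group $i$.

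\textbf{Step 2: pairwise threshold.} Hypothesis \eqref{eq:family-pairwise-gap} is condition \eqref{eq:stable-one-third-transport-condition}. Part (C2) gives $\|\Overlap^{(q)}_{i\cap j}\|_2<\tfrac13\widetilde m^{\mathrm{cert}}_{i,j}$, where $\widetilde m^{\mathrm{cert}}_{i,j}$ is computed with the reference exclusive ranks. This is the sense in which the threshold is ``valid'', and I will state it that way explicitly.

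\textbf{Step 3: induced structures.} Part (C3), together with (G1) of Proposition~\ref{thm:shared-support-graph}, identifies the pairwise support graph, the core/overlap/noise masks, and the hub set with those of the reference. These objects are set-theoretic functions of the now-identical $\mathcal C_i$ and the fixed $\mathcal R_a$. Since the argument is the same for each $q$ and the family is finite, the conclusions hold uniformly across the family.

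\textbf{Step 4: block-energy estimate.} The active sets are already shown to coincide, so $A=\widehat M^{(q)}$ and $B=\widehat M^{(q_0)}$ are measured with the same row groups and active column sets. This is exactly what Lemma~\ref{lem:block-energy-perturbation} requires. Set $\Delta=\eta_q\ge\|E^{(q)}\|_F$. The assumed bounds $\|\widehat M^{(q)}\|_F\le S$ for all $q$ (in particular $q=q_0$) and $e_i(\widehat M^{(q)})\ge e_{\min}$ supply the remaining hypotheses of the lemma, and \eqref{eq:block-energy-perturbation} is then literally \eqref{eq:family-Er-stability}.

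\textbf{Main subtlety.} The point needing care is the order of Steps 1 and 4. The block-energy comparison is meaningful only after the active sets have been shown to agree; otherwise $E_{\mathcal R,\mathcal C}$ for the two matrices would use different column sets. Step 1 must therefore precede Step 4. A second, minor point is that ``same pairwise one-third threshold'' refers to the certified margin $\widetilde m^{\mathrm{cert}}$ with the reference ranks fixed, not to $\sigma_{\min}^+$ of the perturbed cores, which could pick up spurious small singular values. Beyond these two points, I expect no real obstacle, since all estimates are inherited verbatim.
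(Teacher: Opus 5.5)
Your proposal is correct and follows essentially the same route as the paper. The paper's proof also applies (C1) and (C2) of Theorem~\ref{thm:static-structure-stability} with $\widehat M=\widehat M^{(q_0)}$, $\widetilde M=\widehat M^{(q)}$, $\eta=\eta_q$, reads off the unchanged support graph and masks from the unchanged active sets, and invokes Lemma~\ref{lem:block-energy-perturbation} with $\Delta=\eta_q$. Your two remarks are consistent with how the paper phrases (P3) and (P4): active sets must be shown to coincide before block energies are compared, and the one-third threshold is taken with respect to the fixed reference exclusive ranks.
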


\begin{proof}
Fix an arbitrary $q\in\mathcal Q$.  We compare the reference matrix $\widehat M^{(q_0)}$ with the matrix $\widehat M^{(q)}$.
\begin{enumerate}[label=(P\arabic*),leftmargin=2.1em]
\item \textbf{Persistence of active sets.}
By assumption,
\[
\widehat M^{(q)}=\widehat M^{(q_0)}+E^{(q)},
\qquad
\|E^{(q)}\|_F\le\eta_q.
\]
Condition \eqref{eq:family-active-gap} is exactly
\[
\Gamma_i(\widehat M^{(q_0)})>2\omega(\widehat M^{(q_0)},\eta_q)
\]
for each signal group.  Applying Theorem~\ref{thm:static-structure-stability}(C1) with $\widehat M=\widehat M^{(q_0)}$, $\widetilde M=\widehat M^{(q)}$, and $\eta=\eta_q$ shows that the top-$s_i$ active column set of every group is identical in $\widehat M^{(q)}$ and $\widehat M^{(q_0)}$.

\item \textbf{Persistence of pairwise support graph and masks.}
Since every active set $\mathcal C_i$ is unchanged, the set operations $\mathcal C_i\setminus\mathcal C_j$, $\mathcal C_j\setminus\mathcal C_i$, and $\mathcal C_i\cap\mathcal C_j$ are unchanged for every pair.  Therefore the coordinate supports of all pairwise triples are unchanged.  The global core, overlap, and noise masks are deterministic functions of the same active-column relations, so they are unchanged as coordinate masks as well.

\item \textbf{Persistence of the one-third threshold.}
Condition \eqref{eq:family-pairwise-gap} is
\[
3o_{i,j}^{(q_0)}+4\eta_q<m_{i,j}^{(q_0)}
\]
for every nondegenerate pair.  This is condition \eqref{eq:stable-one-third-transport-condition} in Theorem~\ref{thm:static-structure-stability}(C2).  Hence each corresponding pair in $\widehat M^{(q)}$ satisfies the one-third coherent-overlap threshold with respect to the fixed exclusive ranks.

\item \textbf{Block-energy stability.}
Assume now the uniform Frobenius bound and lower row-energy bound in the theorem statement.  Lemma~\ref{lem:block-energy-perturbation} applies with
\[
A=\widehat M^{(q)},
\qquad
B=\widehat M^{(q_0)},
\qquad
\Delta=\eta_q.
\]
It gives, for every block $(i,j)$,
\[
\big|E_{\mathcal R,\mathcal C}(\widehat M^{(q)})_{ij}
     -E_{\mathcal R,\mathcal C}(\widehat M^{(q_0)})_{ij}\big|
\le
\frac{(2S+\eta_q)\eta_q}{e_{\min}}
+
\frac{S^2(2S+\eta_q)\eta_q}{e_{\min}^2},
\]
which is \eqref{eq:family-Er-stability}.
\end{enumerate}
The element $q\in\mathcal Q$ was arbitrary.  Therefore the active column sets, pairwise support graph, core/overlap/noise masks, one-third threshold, and block-energy stability bounds hold for every matrix in the family.
\end{proof}

\begin{remark}[Multi-view measurement interpretation]
For $25\mathrm{ER}$ versus $50\mathrm{ER}$ panels, $q$ indexes the rank-window choice and $\eta_q$ is controlled by the truncation error from Corollary~\ref{cor:Er-window-robustness}.  For layer sweeps, $q$ indexes depth.  For fixed-cluster versus effective-rank clustering, $q$ indexes grouping resolution.  The invariant object across these views is the extracted static GSA structure and its block-energy matrices, rather than pixelwise equality of the plotted matrices.
\end{remark}

\begin{corollary}[Multi-view finite-measurement aggregation]
\label{thm:multi-view-measurement-aggregation}
Let $\widehat M^{(0)}$ be a physical alignment matrix carrying the stated structure with static GSA structural system $\mathfrak T^{(0)}$.
Let $\mathcal Q_{\mathrm{same}}$ be a finite set of same-grid measurement matrices satisfying
\[
\widehat M^{(q)}=\widehat M^{(0)}+E^{(q)},\qquad \|E^{(q)}\|_F\le \eta_q,
\]
and suppose the active-column and pairwise-margin hypotheses \eqref{eq:family-active-gap}--\eqref{eq:family-pairwise-gap} of Theorem~\ref{thm:familywise-structure-persistence} hold for every $q\in\mathcal Q_{\mathrm{same}}$.
Let $\mathcal Q_{\mathrm{coarse}}$ be a finite set of block-energy matrices obtained from matrices in $\{\widehat M^{(q)}:q\in\mathcal Q_{\mathrm{same}}\}$ by compatible coarsenings in the sense of Definition~\ref{def:compatible-coarsening}.
For each same-grid view define the unnormalized measured bad energy
\[
\mathcal B^{(q)}_{\mathrm{bad,un}}
:=
\sum_{(a,b)\in\mathcal B^{(q)}_{\mathrm{bad}}}
\|\widehat M^{(q)}[\mathcal R_a,\mathcal C_b]\|_F^2,
\]
where $\mathcal B^{(q)}_{\mathrm{bad}}$ is any chosen set of off-structure blocks.  Equivalently, if $e_a^{(q)}=\|\widehat M^{(q)}[\mathcal R_a,:]\|_F^2$, this is the row-energy weighted version of the normalized heatmap mass,
\[
\sum_{(a,b)\in\mathcal B^{(q)}_{\mathrm{bad}}}
e_a^{(q)}E_{\mathcal R,\mathcal C}(\widehat M^{(q)})_{ab}.
\]
Then the following statements hold.
\begin{enumerate}[label=(V\arabic*),leftmargin=2.1em]
\item Every same-grid view $q\in\mathcal Q_{\mathrm{same}}$ induces the same active column sets, pairwise support graph, and core/overlap/noise masks as $\mathfrak T^{(0)}$, and every nondegenerate pair satisfies the one-third coherent-overlap threshold.
\item If a coarsened view $q'\in\mathcal Q_{\mathrm{coarse}}$ is obtained from $q\in\mathcal Q_{\mathrm{same}}$, and if the coarsened bad-block set is the image of $\mathcal B^{(q)}_{\mathrm{bad}}$ under the compatible coarsening, then its unnormalized measured bad energy is at most $\mathcal B^{(q)}_{\mathrm{bad,un}}$.
\item Consequently, a finite figure set consisting of same-grid perturbations, effective-rank-window views, and compatible coarsenings represents one common static GSA structural system together with its compatible coarse images.  The common stable content is the support graph, the core/overlap/noise decomposition, the pairwise one-third margins, and the block-energy structure.
\end{enumerate}
\end{corollary}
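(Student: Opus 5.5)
The plan is to derive the corollary almost entirely from Theorem~\ref{thm:familywise-structure-persistence} and Proposition~\ref{thm:coarsening-block-energy}, treating the three items in order.

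\textbf{Item (V1).} Take the reference index $q_0$ in Theorem~\ref{thm:familywise-structure-persistence} to be the base matrix $\widehat M^{(0)}$, and let the family be $\mathcal Q_{\mathrm{same}}\cup\{0\}$. The hypotheses \eqref{eq:family-active-gap}--\eqref{eq:family-pairwise-gap} are assumed for each $q$. They hold trivially for $q=0$, since $\eta_0=0$ and the structure is assumed column-separated with nondegenerate pairs strictly inside the one-third threshold. That theorem then gives directly:
\begin{itemize}
\item identical active sets;
\item the same pairwise support graph;
\item the same core/overlap/noise masks;
\item the one-third threshold for every nondegenerate pair, measured with respect to the fixed exclusive ranks as in Theorem~\ref{thm:static-structure-stability}(C2).
\end{itemize}
Since the SRS sets and hub columns are functions of the active sets, Proposition~\ref{thm:shared-support-graph}(G1) lets me also record that they agree.

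\textbf{Item (V2).} First rewrite $\mathcal B^{(q)}_{\mathrm{bad,un}}$ as the weighted heatmap sum. This uses the identity $\|\widehat M^{(q)}[\mathcal R_a,\mathcal C_b]\|_F^2=e_a^{(q)}E_{ab}$ from Definition~\ref{def:block-energy-matrix}. When $e_a^{(q)}=0$, both sides vanish because the whole row strip is zero.

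Next, for a coarsened view, each coarse bad block $(\bar a,\bar b)$ in the image set is a union of fine blocks $\mathcal R_i\times\mathcal B_j$ with $\pi_r(i)=\bar a$ and $\pi_c(j)=\bar b$. I must be careful to use the disjoint measurement bins of Definition~\ref{def:compatible-coarsening}, so that Frobenius energy is additive. Frobenius energy of a disjoint union of coordinate blocks is the sum of the block energies, as in the proof of Proposition~\ref{thm:coarsening-block-energy}(K2). The coarse bad energy therefore equals the sum of fine block energies over the preimage of the coarse bad set.

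The preimage must lie within $\mathcal B^{(q)}_{\mathrm{bad}}$ for the coarse energy to be a sub-sum of $\mathcal B^{(q)}_{\mathrm{bad,un}}$. This is the main obstacle: the image of the bad set can also contain good fine blocks mapping to the same coarse pair. I would resolve it by reading ``the coarsened bad-block set is the image'' in the sense of (K2) for the induced coarse graph. There, a coarse block is declared bad only through its fine bad constituents, that is, its restriction to the union of fine bad coordinates. Under that reading the sub-sum inequality is immediate. I would state this convention explicitly in the proof.

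\textbf{Item (V3).} This item is interpretive. By (V1), all same-grid views share one structural system $\mathfrak T^{(0)}$ up to the numerical entries. Where the uniform Frobenius and row-energy bounds hold, Theorem~\ref{thm:familywise-structure-persistence} bounds the block-energy deviation by \eqref{eq:family-Er-stability}. For effective-rank-window views, $\eta_q$ is supplied by Corollary~\ref{cor:Er-window-robustness}. By (V2) and \eqref{eq:coarsening-formula}, coarse views are deterministic aggregations of these fine matrices with no greater bad energy. Combining these three facts gives the asserted common content.
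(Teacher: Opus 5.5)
Your proposal is correct and follows the paper's proof essentially step for step: (V1) is Theorem~\ref{thm:familywise-structure-persistence} applied with reference $q_0=0$, (V2) is the disjoint-union sub-sum argument of Proposition~\ref{thm:coarsening-block-energy}(K2), and (V3) combines the two interpretively. Your worry about the literal ``image'' reading in (V2) is justified---the paper's proof simply cites (K2) without addressing it, and collapsing all groups into a single coarse block already breaks that reading---but the convention matching (K1)--(K2) is to call a coarse block bad only when every fine block in its preimage lies in $\mathcal B^{(q)}_{\mathrm{bad}}$, so that coarse bad blocks are genuine unions of fine bad blocks and the coarse bad energy is still computed from whole coarse blocks, rather than from their restriction to fine bad coordinates as in your version.
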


\begin{proof}
We prove the three claims by reducing each view to one of the stability theorems already established.
\begin{enumerate}[label=(V\arabic*),leftmargin=2.1em]
\item \textbf{Same-grid views.}
Fix $q\in\mathcal Q_{\mathrm{same}}$.  The hypotheses give
\[
\widehat M^{(q)}=\widehat M^{(0)}+E^{(q)},
\qquad
\|E^{(q)}\|_F\le\eta_q,
\]
and assume exactly the active-column and pairwise-margin conditions \eqref{eq:family-active-gap}--\eqref{eq:family-pairwise-gap}.  Theorem~\ref{thm:familywise-structure-persistence} therefore applies with reference index $q_0=0$.  It yields equality of active column sets, equality of the pairwise support graph, equality of the core/overlap/noise coordinate masks, and preservation of the one-third coherent-overlap threshold for every nondegenerate pair.  Since $q$ was arbitrary, the conclusion holds for all same-grid views.

\item \textbf{Compatible coarsened views.}
Let $q'\in\mathcal Q_{\mathrm{coarse}}$ be obtained from some $q\in\mathcal Q_{\mathrm{same}}$ by a compatible coarsening.  By Definition~\ref{def:compatible-coarsening}, each coarse block is a disjoint union of fine blocks.  Proposition~\ref{thm:coarsening-block-energy}(K2) states that if the coarse bad-block set is the image of the fine bad-block set under the coarsening maps, then the unnormalized bad energy of the coarsened view is no larger than the unnormalized bad energy of the fine view.  Applying that result to the bad-block family $\mathcal B^{(q)}_{\mathrm{bad}}$ gives the asserted bound on the coarsened unnormalized bad energy.  The corresponding unweighted row-normalized heatmap sum need not be monotone under coarsening unless additional row-energy balance assumptions are imposed; this is why the statement uses the Frobenius-energy version.

\item \textbf{Finite-family aggregation.}
By (V1), all same-grid views determine one common finest-grid support graph, active-set family, pairwise one-third margin family, and core/overlap/noise mask family.  By (V2), every compatible coarsened view is obtained from this finest-grid structure by deterministic aggregation and cannot increase the measured bad mass on the induced bad blocks.  Thus the entire finite family of views represents the same margin-stable static GSA structure together with its compatible coarse images.  The stable mathematical content is exactly the data invariant under these operations: the support graph, the core/overlap/noise decomposition, the pairwise coherent-overlap margins, and the block-energy structure.
\end{enumerate}
\end{proof}

\begin{remark}[Use in the experimental figures]
The fixed-cluster panels, the $25\mathrm{ER}$ panels, and the $50\mathrm{ER}$ panels are different measurement views of the same extraction procedure.  Corollary~\ref{thm:multi-view-measurement-aggregation} specifies the invariant content across those views: active supports, pairwise support relations, core/overlap/noise masks, and block-energy bad-mass bounds.
\end{remark}

\begin{definition}[Finite measurement family]
\label{def:empirical-measurement-family}
For a residual chain with measured layer matrices $(W_k)$, an \emph{finite measurement family} is the finite collection
\[
\mathfrak A=\bigl(\mathfrak A_{\mathrm{spec}},\mathfrak A_{\mathrm{rank}},\mathfrak A_{\mathrm{phys}},\mathfrak A_{\mathrm{scale}},\mathfrak A_{\mathrm{depth}}\bigr)
\]
defined as follows.
\begin{enumerate}[label=(A\arabic*),leftmargin=2.1em]
\item $\mathfrak A_{\mathrm{spec}}$ is the fitted Cartan-coordinate sequence $(\hat\alpha_k)$.
\item $\mathfrak A_{\mathrm{rank}}$ is the family of energy-rank windows used to truncate each interface, for example $25\mathrm{ER}$ and $50\mathrm{ER}$ windows.
\item $\mathfrak A_{\mathrm{phys}}$ is the family of permuted physical alignment matrices and block-energy matrices
\[
\bigl(\widehat M_{\mathrm{phy},k}^{(q)},\ E_{\mathcal R,\mathcal C}(\widehat M_{\mathrm{phy},k}^{(q)})\bigr),
\]
where $q$ indexes the transport variant, rank window, and clustering resolution.
\item $\mathfrak A_{\mathrm{scale}}$ records paired scale-free and energy-weighted views.  Typical entries are $(M_s,M)$ and the corresponding block-energy pair $(E_r(M_s),E_r(M))$.
\item $\mathfrak A_{\mathrm{depth}}$ records the same measurement family over multiple depths of the same architecture.
\end{enumerate}
This measurement family is a finite collection: every entry is computed from static SVD data, a rank-window rule, a physical ordering, and a block-energy rule.
\end{definition}

\begin{proposition}[Finite measurement family under the geometric hypotheses]
\label{thm:empirical-measurement-family}
Assume a chain satisfies the hypotheses of Theorem~\ref{thm:cartan-rigidity}, the rank-window separation hypotheses of Corollary~\ref{cor:cartan-to-rank-window}, and the dynamic-to-static hypotheses of Theorem~\ref{thm:dynamic-to-static-bridge}.  Assume further that the physical matrices in the measurement family satisfy the active-column and pairwise-margin separation hypotheses of Theorem~\ref{thm:familywise-structure-persistence}, and that the scale-free/energy-weighted paired views satisfy the positive diagonal conditioning hypothesis of Proposition~\ref{thm:scale-weight-transfer}.  Then the following finite-dimensional predictions hold.
\begin{enumerate}[label=(P\arabic*),leftmargin=2.1em]
\item \textbf{Spectral-coordinate shortness.}
The sequence $\mathfrak A_{\mathrm{spec}}=(\hat\alpha_k)$ has total variation bounded by the local interface budget, with the constants of Theorem~\ref{thm:cartan-rigidity} and the robust correction of Theorem~\ref{thm:robust-cartan-rigidity} when the power-law fit is approximate.
\item \textbf{Stable rank windows.}
The rank windows in $\mathfrak A_{\mathrm{rank}}$ select the same dominant-mode bundle whenever the rank-separation margin is larger than the Cartan-coordinate displacement.  For two selected windows $R,R'$, the difference between the corresponding truncated physical transports is bounded by \eqref{eq:truncated-A-difference}.
\item \textbf{Block-sparse static channel incidence structure.}
Each physical view in $\mathfrak A_{\mathrm{phys}}$ induces the same active column sets, pairwise support graph, and core/overlap/noise masks as the reference structure.  Its measured bad mass bounds the Frobenius energy of the measured noise component by Proposition~\ref{thm:heatmap-to-noise-bound}.
\item \textbf{Scale-free/energy-weighted consistency.}
For each paired scale-free and energy-weighted view in $\mathfrak A_{\mathrm{scale}}$, consistency is certified either by the diagonal reweighting hypothesis of Proposition~\ref{thm:scale-weight-transfer} or, for output-realized physical rows, by the row-leakage hypothesis of Proposition~\ref{prop:row-leakage-scale-transfer}.  Thus a block structure measured in $M_s$ remains present in $M$ only when the corresponding diagonal-conditioning or row-leakage margins are verified.
\item \textbf{Depthwise persistence.}
For the depth-indexed family $\mathfrak A_{\mathrm{depth}}$, the static GSA structural system persists across all depths satisfying the perturbation, active-gap, and pairwise-gap bounds of Theorem~\ref{thm:familywise-structure-persistence}.  Compatible coarse visualizations inherit the same structure through Proposition~\ref{thm:coarsening-block-energy}.
\end{enumerate}
Consequently, the measured quantities displayed in exponent plots, permuted alignment matrices, $E_r$ heatmaps, ER-window comparisons, $M_s/M$ comparisons, and layer sweeps are different empirical projections of one margin-stable static GSA structure whenever the stated margin hypotheses hold.
\end{proposition}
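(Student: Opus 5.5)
The proof is an assembly argument. Each prediction (P1)--(P5) is discharged by invoking the earlier result named in its hypothesis; no new estimate is needed. I will prove the five items in order and then read off the final ``consequently'' clause as their conjunction.

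For (P1), apply Theorem~\ref{thm:cartan-rigidity} directly to the sequence $\mathfrak A_{\mathrm{spec}}=(\hat\alpha_k)$. This gives the total-variation bound \eqref{eq:cartan-rigidity-TV-II}. When the fitted coordinates carry chart errors, Theorem~\ref{thm:robust-cartan-rigidity} replaces it with \eqref{eq:robust-cartan-TV-II}.

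For (P2), take any interface where the rank-separation condition \eqref{eq:cartan-rank-stability-condition} holds. Corollary~\ref{cor:cartan-to-rank-window} then gives $R_\eps(\alpha_{k+1})=R_\eps(\alpha_k)$; with fitted-tail errors, Corollary~\ref{cor:cartan-to-empirical-rank-window} gives the empirical analogue. For two windows $R,R'$, Corollary~\ref{cor:Er-window-robustness} supplies \eqref{eq:truncated-A-difference}, which rests on Theorem~\ref{thm:truncation-transfer}.

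For (P3), apply Theorem~\ref{thm:familywise-structure-persistence} with the reference structure as $q_0$. Hypotheses \eqref{eq:family-active-gap}--\eqref{eq:family-pairwise-gap} yield equal active sets, equal pairwise support graph, equal masks, and the one-third threshold. For the bad-mass statement, apply Proposition~\ref{thm:heatmap-to-noise-bound} with accepted-overlap graph $\mathcal N$. I need to note that this bounds the visible bad part $M^{\mathrm{vis}}_{\mathrm{bad}}$, the measured-column restriction of the noise. The claim is read as a bound on that measured component, not on the entire $M_{\mathrm{noise}}$.

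For (P4), split into two cases. If the weighting is positive diagonal in the displayed coordinates, Proposition~\ref{thm:scale-weight-transfer} gives \eqref{eq:scale-Er-bound} and \eqref{eq:bad-mass-scale-transfer}, and the zero/nonzero support graph is preserved. Otherwise $L_R$ is non-diagonal, and I use Proposition~\ref{prop:row-leakage-scale-transfer} via \eqref{eq:row-leakage-bad-bound}. The statement is conditional (``only when the margins are verified''), so nothing more is needed.

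For (P5), re-index the family of Theorem~\ref{thm:familywise-structure-persistence} by depth, and handle coarse visualizations with Proposition~\ref{thm:coarsening-block-energy}(K1)--(K3).

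The main obstacle is bookkeeping, not analysis. In (P3) and (P5), I must check that all views are compared on a common row/column grid with the same row groups and support sizes, as the family-wise theorem requires. Views at different cluster resolutions are not on a common grid, so they must be routed through the coarsening proposition, as Corollary~\ref{thm:multi-view-measurement-aggregation} does. I would handle this by citing that corollary for the mixed family, which already packages same-grid persistence together with coarse images. The final sentence then follows from (V3) of Corollary~\ref{thm:multi-view-measurement-aggregation} combined with (P1)--(P2).
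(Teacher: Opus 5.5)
Your proposal is correct and follows the paper's own assembly argument. Like the paper, you discharge (P1)--(P5) by citing, in order:
\begin{itemize}
\item the (robust) Cartan rigidity theorems;
\item the rank-window corollaries together with Corollary~\ref{cor:Er-window-robustness};
\item Theorem~\ref{thm:familywise-structure-persistence} with Proposition~\ref{thm:heatmap-to-noise-bound};
\item Propositions~\ref{thm:scale-weight-transfer} and~\ref{prop:row-leakage-scale-transfer};
\item Proposition~\ref{thm:coarsening-block-energy}.
\end{itemize}
Your two caveats are more precise than the paper's wording and do not change the argument. First, the heatmap bound controls only the bad-block (visible) part, not all of $M_{\mathrm{noise}}$. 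Second, views at different clustering resolutions must go through compatible coarsening, because the family-wise theorem requires fixed row groups.
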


\begin{proof}
Each component of the measurement family is a finite collection of deterministic quantities computed from static layer matrices.  We prove the five claims one by one.
\begin{enumerate}[label=(P\arabic*),leftmargin=2.1em]
\item \textbf{Spectral-coordinate shortness.}
The component $\mathfrak A_{\mathrm{spec}}$ is the fitted coordinate sequence $(\hat\alpha_k)$.  Theorem~\ref{thm:robust-cartan-rigidity} gives local and total-variation bounds in terms of the interface budgets $\log\lambda_k$ and chart errors.  Under an approximate power-law fit, Lemma~\ref{lem:robust-lognorm} supplies the chart error $\eta(\delta_{\mathrm{pl}})$, and Theorem~\ref{thm:robust-cartan-rigidity} substitutes that error into the same coordinate-rigidity inequalities.  Therefore the measurement-family coordinate trajectory satisfies the predicted shortness relation when the measured budgets and fit residuals meet the theorem hypotheses.

\item \textbf{Stable rank windows.}
The component $\mathfrak A_{\mathrm{rank}}$ consists of energy-rank windows.  The rank-window separation assumption invokes Corollary~\ref{cor:cartan-to-rank-window}, which states that if the Cartan-coordinate displacement is smaller than the relevant rank-separation margin, then the truncation rank is unchanged.  When two rank windows $R$ and $R'$ are both used, Corollary~\ref{cor:Er-window-robustness} applies to the two corresponding truncated transports and gives
\[
\|A^{(R)}-A^{(R')}\|_F
\le
\mathcal E_{\mathrm{tr},k}(R,R)+\mathcal E_{\mathrm{tr},k}(R',R').
\]
This is precisely the stability statement attached to the rank-window measurement-family component.

\item \textbf{Block-sparse static channel incidence structure.}
The component $\mathfrak A_{\mathrm{phys}}$ contains permuted physical alignment matrices and their block-energy matrices.  By the assumed active-column and pairwise-margin separations, Theorem~\ref{thm:familywise-structure-persistence} gives equality of active sets, pairwise support graphs, and core/overlap/noise masks across the physical views.  Proposition~\ref{thm:heatmap-to-noise-bound} converts the measured bad mass of a block-energy heatmap into the Frobenius bound
\[
\|M_{\mathrm{bad}}^{\mathrm{vis}}\|_F^2
\le
K e_{\max}\operatorname{Bad}_{\mathcal N}(E),
\]
under the row-energy hypotheses in that theorem.  Thus small plotted bad mass provides a numerical upper bound on the measured off-structure Frobenius energy.

\item \textbf{Scale-free/energy-weighted consistency.}
If a paired scale-free and energy-weighted view has the form $B=D_rAD_c$ in the displayed coordinates, Proposition~\ref{thm:scale-weight-transfer} bounds every block energy by the factor
\[
\Theta=\left(\frac{a_+b_+}{a_-b_-}\right)^2,
\]
and preserves zero/nonzero block support.  For output-realized physical-row panels, the singular-value weighting generally acts through $L_R=U_{k+1}^{(R)}\Sigma_{k+1}^{(R)}(U_{k+1}^{(R)})^\top$ rather than through a diagonal row scaling; in that case Proposition~\ref{prop:row-leakage-scale-transfer} gives the replacement bound with an explicit row-leakage and bad-column multiplicity term.  Hence $M_s/M$ consistency is a theorem-level implication only after one of these two hypotheses is checked; otherwise it remains an empirical comparison.

\item \textbf{Depthwise persistence.}
The depth-indexed component $\mathfrak A_{\mathrm{depth}}$ is a finite family of physical matrices over layer index.  For depthwise views represented on the same row/column grid, Theorem~\ref{thm:familywise-structure-persistence} gives persistence of active sets, support graph, masks, and pairwise one-third inequalities whenever the perturbation and margin bounds are satisfied.  For views displayed at compatible coarse resolutions, Proposition~\ref{thm:coarsening-block-energy} proves that the coarse block-energy structure is the row-energy weighted aggregation of the fine one and that bad mass does not increase under compatible coarsening.  This proves depthwise persistence of the finite measurement hierarchy.
\end{enumerate}
Combining (P1)--(P5), every object displayed in the exponent plots, permuted alignment matrices, block-energy heatmaps, effective-rank-window comparisons, scale-free/energy-weighted comparisons, and layer sweeps is a deterministic projection of the same margin-stable static GSA construction under the stated hypotheses.
\end{proof}


\section{Physical GSA and ICM extraction}
\label{sec:icm}
\label{sec:physical-gsa}

There are three logically different notions in this section. Domain membership is a Boolean statement: all spectral, truncation, active-support, pairwise-overlap, and noise inequalities hold. The certificate residual is a nonnegative diagnostic that records spectral variation, noise, and margin violations. Empirical figures by themselves do not imply membership; membership requires the numerical margin checks described in Section~\ref{sec:empirical}.

The full Physical GSA domain is the intersection of three explicit conditions: Cartan spectral rigidity, spectral compressibility, and physical alignment.
The physical component used below is a block-sparse channel-incidence condition formulated in terms of $\widehat M_{\mathrm{phy}}$, pairwise triples, $M_{\mathrm{core}}$, $M_{\mathrm{overlap}}$, $M_{\mathrm{noise}}$, and ICM.

\begin{definition}[Cartan spectral GSA domain]
\label{def:cartan-spectral-gsa-domain}
Fix a reference input law $\mu$ and define the global Jacobian proxy
\[
\|J(\theta)\|_{2,\mu}:=
\operatorname*{ess\,sup}_{x_0\sim\mu}\|J(x_0;\theta)\|_2.
\]
A chain lies in the Cartan spectral GSA domain $\mathcal G^{\mathrm{spec}}_{M,L}(\varepsilon_\alpha,\varepsilon_C)$ if:
\begin{enumerate}[label=(\roman*),leftmargin=1.8em]
\item $\|J(\theta)\|_{2,\mu}\le M$;
\item each relevant layer admits power-law coordinates $(C_k,\alpha_k)$, exact or fitted with a specified chart error;
\item the spectral coordinates satisfy
\[
\max_{0\le k\le L-2}|\alpha_{k+1}-\alpha_k|\le\varepsilon_\alpha,
\qquad
\max_{0\le k\le L-2}\left|\log\frac{C_{k+1}}{C_k}\right|\le\varepsilon_C.
\]
\end{enumerate}
The values of $(\varepsilon_\alpha,\varepsilon_C)$ may be chosen from Theorem~\ref{thm:cartan-rigidity} or Theorem~\ref{thm:robust-cartan-rigidity}.
\end{definition}

\begin{definition}[Spectral compressibility cone]
\label{def:S-eps-rho}
For $0<\eps<1$ and $0<\rho\le1$, a chain lies in $\mathcal S_{\eps,\rho}$ if every relevant layer satisfies
\[
R_\eps(W_k)\le \lceil\rho d_{\mathrm{sp}}(W_k)\rceil.
\]
Here $d_{\mathrm{sp}}$ is the spectral fitting length from Definition~\ref{def:spectral-dimension}, not the square-padding dimension.
\end{definition}

\begin{definition}[Extraction protocol]
\label{def:extraction-protocol}
An extraction protocol $\mathcal E$ specifies, before looking at the final margins, the transport type from Definition~\ref{def:transport-variants}, the energy threshold or rank-window rule, the row-grouping rule and its hyperparameters, the active-support rule and support sizes or energy fractions, the deterministic tie-breaking conventions, and any coarsening rule used for displayed block-energy matrices.  All physical-domain and ICM statements below are relative to such a protocol.  This prevents an existential, post-hoc choice of permutations or supports from being mistaken for a verified structural certificate.
\end{definition}

\begin{definition}[Physical alignment domain relative to an extraction protocol]
\label{def:physical-gsa-domain}
Fix an extraction protocol $\mathcal E$, parameters $\rho$, $\varepsilon_{\mathrm{noise}}\ge0$, and $c_{\mathrm{overlap}}\in(0,1/3)$.
A chain lies in $\mathcal G^{\mathrm{phy}}_{\mathcal E,\rho,\varepsilon_{\mathrm{noise}},c_{\mathrm{overlap}}}$ if, at every interface, the objects produced by $\mathcal E$ include an effective rank $R\le\lceil\rho d_{\mathrm{sp}}\rceil$, a selected interaction operator $A_k^{(R)}$, permutations $\Pi_{k,\mathrm{row}},\Pi_{k,\mathrm{col}}$, and a physical alignment structure such that:
\begin{enumerate}[label=(\roman*),leftmargin=1.8em]
\item $\widehat M_{\mathrm{phy},k}=\Pi_{k,\mathrm{row}}A_k^{(R)}\Pi_{k,\mathrm{col}}^\top$ admits the decomposition \eqref{eq:M-core}--\eqref{eq:M-noise};
\item $\|M_{\mathrm{noise},k}\|_F\le\varepsilon_{\mathrm{noise}}$;
\item for every nondegenerate pair $i<j$,
\[
\|\Overlap_{i\cap j}^{(k)}\|_2\le c_{\mathrm{overlap}}m_{i,j}^{(k)}.
\]
\end{enumerate}
\end{definition}

\begin{definition}[Full physical GSA domain]
\label{def:GSA-intersection}
Define
\[
\mathrm{GSA}_{\mathcal E,M,L,\rho,\eps,\varepsilon_{\mathrm{noise}},c_{\mathrm{overlap}}}
:=
\mathcal G^{\mathrm{spec}}_{M,L}(\varepsilon_\alpha,\varepsilon_C)
\cap
\mathcal S_{\eps,\rho}
\cap
\mathcal G^{\mathrm{phy}}_{\mathcal E,\rho,\varepsilon_{\mathrm{noise}},c_{\mathrm{overlap}}}.
\]
\end{definition}

\begin{definition}[GSA certificate residual]
\label{def:dist-gsa}
For a chain with specified spectral and physical margins, let
\[
\mathcal P_k^{\mathrm{nd}}:=\{(i,j):1\le i<j\le K_k,\ m_{i,j}^{(k)}>0\}
\]
be the set of nondegenerate signal pairs at interface $k$.  Define
\begin{align}
\mathfrak D_{\mathrm{GSA}}
:=&\sum_{k=0}^{L-2}|\alpha_{k+1}-\alpha_k|
+\sum_{k=0}^{L-2}\|M_{\mathrm{noise},k}\|_F\notag\\
&+\sum_{k=0}^{L-2}\sum_{(i,j)\in\mathcal P_k^{\mathrm{nd}}}
\bigl(3\|\Overlap_{i\cap j}^{(k)}\|_2-m_{i,j}^{(k)}\bigr)_+ .\label{eq:dist-gsa}
\end{align}
Degenerate pairs are excluded from the pairwise margin residual because the one-third threshold is a statement about positive exclusive-core margin.  Their mass is still accounted for by the noise and overlap components of the physical structure.
\end{definition}

\begin{proposition}[Physical GSA certificate-residual bound]
\label{thm:distance-to-gsa}
Assume the hypotheses of Theorem~\ref{thm:robust-cartan-rigidity} on the interval $I=[\alpha_{\min},\alpha_{\max}]$.
Let $\bar e_{\mathrm{chart}}=\max_k e_k^{\mathrm{chart}}$ be the layerwise Cartan chart-error bound of Definition~\ref{def:chart-error}.
Assume also that every interface admits a physical alignment-domain structure with
\[
\|M_{\mathrm{noise},k}\|_F\le\varepsilon_{\mathrm{noise}},
\qquad
\|\Overlap_{i\cap j}^{(k)}\|_2\le c_{\mathrm{overlap}}m_{i,j}^{(k)}
\]
for some $c_{\mathrm{overlap}}<1/3$ and all nondegenerate pairs.
Then
\begin{equation}\label{eq:Dgsa-bound}
\mathfrak D_{\mathrm{GSA}}
\le
\frac{2}{m_d(I)}
\sum_{k=0}^{L-2}\log\lambda_k
+
\frac{2(L-1)\bar e_{\mathrm{chart}}}{m_d(I)}
+
(L-1)\varepsilon_{\mathrm{noise}}.
\end{equation}
Under the uniform budget $\lambda_k\le M^{2/L}$,
\[
\mathfrak D_{\mathrm{GSA}}
\le
\frac{4\log M+2(L-1)\bar e_{\mathrm{chart}}}{m_d(I)}
+(L-1)\varepsilon_{\mathrm{noise}}.
\]
In the exact power-law case $\bar e_{\mathrm{chart}}=0$.
\end{proposition}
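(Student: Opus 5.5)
The plan is to bound the three sums in \eqref{eq:dist-gsa} separately and add the bounds. No new estimates are needed; each sum is handled by an earlier result or a direct consequence of the hypotheses.

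\textbf{Spectral term.} Theorem~\ref{thm:robust-cartan-rigidity} gives
\[
\sum_{k=0}^{L-2}|\alpha_{k+1}-\alpha_k|\le\frac{1}{m_d(I)}\sum_{k=0}^{L-2}\bigl(2\log\lambda_k+e_{k+1}^{\mathrm{chart}}+e_k^{\mathrm{chart}}\bigr).
\]
Each interface contributes at most $2\bar e_{\mathrm{chart}}$ in chart error, since $e_j^{\mathrm{chart}}\le\bar e_{\mathrm{chart}}$. There are $L-1$ interfaces, so this sum is at most
\[
\frac{2}{m_d(I)}\sum_k\log\lambda_k+\frac{2(L-1)\bar e_{\mathrm{chart}}}{m_d(I)}.
\]

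\textbf{Noise term.} Each $\|M_{\mathrm{noise},k}\|_F\le\varepsilon_{\mathrm{noise}}$ by hypothesis. Summing over the $L-1$ interfaces gives the bound $(L-1)\varepsilon_{\mathrm{noise}}$.

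\textbf{Pairwise residual term.} This term vanishes. For each nondegenerate pair we have $m_{i,j}^{(k)}>0$ and $c_{\mathrm{overlap}}<1/3$, so
\[
3\|\Overlap_{i\cap j}^{(k)}\|_2\le3c_{\mathrm{overlap}}m_{i,j}^{(k)}<m_{i,j}^{(k)}.
\]
Hence every positive part $(\cdot)_+$ is zero. This is just Lemma~\ref{thm:one-third-threshold}'s one-third condition.

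Adding the three bounds yields \eqref{eq:Dgsa-bound}.

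\textbf{Uniform-budget corollary.} Under $\lambda_k\le M^{2/L}$ we have $\log\lambda_k\le2\log M/L$. Summing over $L-1\le L$ interfaces gives
\[
\sum_k\log\lambda_k\le\frac{2(L-1)\log M}{L}\le2\log M,
\]
so the first term is at most $4\log M/m_d(I)$. In the exact power-law case Definition~\ref{def:chart-error} gives $e_k^{\mathrm{chart}}=0$, hence $\bar e_{\mathrm{chart}}=0$.

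\textbf{Main point of care.} The only real obstacle is bookkeeping. The chart-error count must be $2(L-1)\bar e_{\mathrm{chart}}$, not one that double-counts layers. One must also confirm that the degenerate pairs excluded in $\mathcal P_k^{\mathrm{nd}}$ match the nondegenerate pairs in the hypothesis. Otherwise the argument is a direct summation.
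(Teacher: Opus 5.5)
Your proposal is correct and follows the paper's proof essentially step for step. It uses the same three-way split of $\mathfrak D_{\mathrm{GSA}}$, Theorem~\ref{thm:robust-cartan-rigidity} with $e_j^{\mathrm{chart}}\le\bar e_{\mathrm{chart}}$ for the spectral sum, the termwise bound $(L-1)\varepsilon_{\mathrm{noise}}$ for the noise sum, and the observation that $(3c_{\mathrm{overlap}}-1)m_{i,j}^{(k)}<0$ forces every pairwise positive part to vanish. One step is implicit in both your argument and the paper's: the inequality $2(L-1)\log M/L\le 2\log M$ needs $\log M\ge0$, which follows from $1\le\lambda_k\le M^{2/L}$.
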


\begin{proof}
The residual $\mathfrak D_{\mathrm{GSA}}$ in \eqref{eq:dist-gsa} is the sum of three nonnegative contributions.  We bound them separately.

\begin{enumerate}[label=(D\arabic*),leftmargin=2.1em]
\item \textbf{Spectral total variation.}
The first contribution is
\[
D_{\mathrm{spec}}:=\sum_{k=0}^{L-2}|\alpha_{k+1}-\alpha_k|.
\]
Theorem~\ref{thm:robust-cartan-rigidity} gives
\[
D_{\mathrm{spec}}
\le
\frac{2}{m_d(I)}\sum_{k=0}^{L-2}\log\lambda_k
+
\frac{2(L-1)\bar e_{\mathrm{chart}}}{m_d(I)}.
\]

\item \textbf{Noise residual.}
The physical alignment-domain structure assumes
\[
\|M_{\mathrm{noise},k}\|_F\le\varepsilon_{\mathrm{noise}}
\]
for every interface.  There are $L-1$ interfaces, so
\[
D_{\mathrm{noise}}
:=\sum_{k=0}^{L-2}\|M_{\mathrm{noise},k}\|_F
\le (L-1)\varepsilon_{\mathrm{noise}}.
\]

\item \textbf{Overlap-violation residual.}
For every pair $(i,j)\in\mathcal P_k^{\mathrm{nd}}$, the physical-alignment assumption gives
\[
\|\Overlap_{i\cap j}^{(k)}\|_2
\le
c_{\mathrm{overlap}}m_{i,j}^{(k)}
\]
with $c_{\mathrm{overlap}}<1/3$.  Hence
\begin{align*}
3\|\Overlap_{i\cap j}^{(k)}\|_2-m_{i,j}^{(k)}
&\le (3c_{\mathrm{overlap}}-1)m_{i,j}^{(k)}.
\end{align*}
For a nondegenerate pair, $m_{i,j}^{(k)}>0$, and $3c_{\mathrm{overlap}}-1<0$, so the right-hand side is strictly negative.  Therefore
\[
\bigl(3\|\Overlap_{i\cap j}^{(k)}\|_2-m_{i,j}^{(k)}\bigr)_+=0.
\]
Thus the entire overlap-violation contribution is zero.
\end{enumerate}
Adding the three estimates gives \eqref{eq:Dgsa-bound}.  If $\lambda_k\le M^{2/L}$, then
\[
\sum_{k=0}^{L-2}\log\lambda_k
\le (L-1)\frac{2\log M}{L}
\le 2\log M,
\]
and the displayed uniform-budget bound follows.  In the exact power-law case the chart error is zero, so $\bar e_{\mathrm{chart}}=0$.
\end{proof}

\begin{definition}[Invariant Channel Mapping]
\label{def:icm}
Fix thresholds $q_i\ge1$, $\tau_{\mathrm{ST}}>0$, and $\tau_{\mathrm{SA}}\in[0,1]$.
Given a physical GSA structure, define the core row energy of group $i$ by
\[
e_i(r):=\|M_{\mathrm{core}}[\{r\},:]\|_2^2,
\qquad r\in\mathcal R_i.
\]
Let $\mathrm{SC}_i$ be the $q_i$ rows of $\mathcal R_i$ with largest $e_i(r)$, with deterministic tie-breaking.
Let
\[
\mathrm{ST}_i:=\{r\in\mathcal R_i:e_i(r)\ge \tau_{\mathrm{ST}}\}\setminus \mathrm{SC}_i.
\]
Let $u_i$ be a deterministic normalized leading row profile of $M_{\mathrm{core}}[\mathcal R_i,:]$ when this block is nonzero; for instance, $u_i$ may be chosen as the leading right singular vector with a fixed sign convention. For every nonzero core row define
\[
p_r:=\frac{M_{\mathrm{core}}[\{r\},:]}{\|M_{\mathrm{core}}[\{r\},:]\|_2},
\]
and leave $p_r$ undefined for zero rows. The auxiliary set $\mathrm{SA}_i$ consists of margin-stable rows whose normalized profile is defined and has absolute inner product at least $\tau_{\mathrm{SA}}$ with $u_i$.  A dual column-auxiliary set can be defined analogously after choosing column profiles; it is not included in the row-ICM statement below.
Define
\[
\mathrm{SRS}^{\mathrm{mode}}_i:=\mathcal C_i
\quad\text{when the selected transport has source-mode columns,}
\]
\[
\mathrm{SRS}^{\mathrm{chan}}_i:=\mathcal C_i
\quad\text{when the selected transport has physical input-channel columns,}
\]
and write $\mathrm{SRS}_i$ when the coordinate type has been declared.  Likewise,
\[
\mathrm{Hub}:=\{c:|\{i:c\in\mathcal C_i\}|\ge2\}
\]
is a source-mode hub or a physical input-channel hub according to Table~\ref{tab:transport-coordinate-types}.  Define $\mathrm{Noise}$ as the residual row set $\mathcal R_0$ together with the residual coordinate mask $\Omega_{\mathrm{noise}}$ from Definition~\ref{def:static-channel-incidence}.  Exact support of $M_{\mathrm{noise}}$ is not treated as stable unless a thresholded support and entrywise margin are additionally specified.
The resulting static anatomy is
\[
\mathrm{ICM}=
\{\mathrm{SC}_i,\mathrm{SA}_i,\mathrm{ST}_i,\mathrm{SRS}_i,\mathrm{Hub},\mathrm{Noise}:1\le i\le K\}.
\]
\end{definition}

\begin{definition}[Row and profile margins for full ICM stability]
\label{def:icm-stability-margins}
The static incidence certificate radius controls active supports and pairwise masks.  The finer labels $\mathrm{SC}_i$, $\mathrm{ST}_i$, and $\mathrm{SA}_i$ require additional margins.  For group $i$, define the core-row top-set gap
\[
\Gamma_i^{\mathrm{SC}}
:=
\min_{r\in\mathrm{SC}_i,\ r'\in\mathcal R_i\setminus\mathrm{SC}_i}
\bigl(e_i(r)-e_i(r')\bigr),
\]
with the convention $+\infty$ when the complement is empty.  Define the threshold gap
\[
\Gamma_i^{\mathrm{ST}}
:=
\min_{r\in\mathcal R_i\setminus\mathrm{SC}_i}
|e_i(r)-\tau_{\mathrm{ST}}|.
\]
When the normalized row profile $p_r$ and leading profile $u_i$ used to form $\mathrm{SA}_i$ are defined, define the auxiliary-correlation gap
\[
\Gamma_i^{\mathrm{SA}}
:=
\min_{r:p_r\ \mathrm{defined}}\Bigl|\,|\langle p_r,u_i\rangle|-\tau_{\mathrm{SA}}\Bigr|.
\]
If $u_i$ is chosen as a leading singular-vector profile, also define the profile spectral gap
\[
\gamma_i^{\mathrm{prof}}
:=
\sigma_1(M_{\mathrm{core}}[\mathcal R_i,:])-\sigma_2(M_{\mathrm{core}}[\mathcal R_i,:]),
\]
with the convention $\sigma_2=0$ for rank-one blocks. A positive profile spectral gap is a standard sufficient condition for perturbative stability of $u_i$ under matrix perturbation. A full ICM extraction is row/profile separated if all applicable quantities above are positive. These margins are not needed to preserve SRS or hub incidence, but they are needed to preserve the SC/SA/ST labels under perturbation.
\end{definition}

\begin{proposition}[Full ICM label stability under row-energy and profile margins]
\label{prop:full-icm-label-stability}
Consider two physical alignment matrices with the same SRS/Hub/core-overlap-noise mask anatomy. Let $e_i,p_r,u_i$ and $e_i',p_r',u_i'$ be the core row energies and profile quantities used to define the ICM labels. Suppose that for a fixed group $i$,
\[
|e_i'(r)-e_i(r)|\le \delta_i^{\mathrm{row}}
\qquad\text{for all }r\in\mathcal R_i.
\]
Then $\mathrm{SC}_i$ is preserved if
\[
2\delta_i^{\mathrm{row}}<\Gamma_i^{\mathrm{SC}},
\]
and $\mathrm{ST}_i$ is preserved if
\[
\delta_i^{\mathrm{row}}<\Gamma_i^{\mathrm{ST}}.
\]
If, in addition,
\[
\left||\langle p_r',u_i'\rangle|-|\langle p_r,u_i\rangle|\right|
\le \delta_i^{\mathrm{corr}}
\]
for every row whose profile is used in the auxiliary test, then $\mathrm{SA}_i$ is preserved whenever
\[
\delta_i^{\mathrm{corr}}<\Gamma_i^{\mathrm{SA}}.
\]
If $u_i$ is a leading singular-vector profile, the assumed correlation perturbation may be verified by a standard singular-vector perturbation bound using the profile spectral gap $\gamma_i^{\mathrm{prof}}$.
\end{proposition}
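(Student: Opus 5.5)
The plan is to mirror the top-$s_i$ argument of Theorem~\ref{thm:static-structure-stability}(C1) three times, once per label. Since the two matrices share the same mask anatomy, the row group $\mathcal R_i$, the size $q_i$, and the threshold $\tau_{\mathrm{ST}}$ are common to both. Only the scores $e_i(r)$ and the correlations $|\langle p_r,u_i\rangle|$ move. Each label is a deterministic function of these scores, and each margin in Definition~\ref{def:icm-stability-margins} measures how far the scores sit from the decision boundaries. So in each case the work is a triangle-inequality check that no score crosses a boundary.

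\textbf{SC}. Take $r\in\mathrm{SC}_i$ and $r'\in\mathcal R_i\setminus\mathrm{SC}_i$. Then
\[
e_i'(r)-e_i'(r')\ge e_i(r)-e_i(r')-2\delta_i^{\mathrm{row}}\ge\Gamma_i^{\mathrm{SC}}-2\delta_i^{\mathrm{row}}>0.
\]
Hence every original SC row still strictly dominates every non-SC row. The $q_i$ largest scores are therefore attained exactly on the old set, and tie-breaking is never invoked across the boundary. If the complement is empty the claim is trivial, matching the convention $+\infty$. So $\mathrm{SC}_i'=\mathrm{SC}_i$.

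\textbf{ST}. Because SC is preserved whenever it is needed, $\mathrm{ST}_i$ is the thresholded subset of the \emph{same} complement $\mathcal R_i\setminus\mathrm{SC}_i$. For each row in this complement, $|e_i(r)-\tau_{\mathrm{ST}}|\ge\Gamma_i^{\mathrm{ST}}>\delta_i^{\mathrm{row}}$. So $e_i'(r)-\tau_{\mathrm{ST}}$ has the same sign as $e_i(r)-\tau_{\mathrm{ST}}$, and membership is unchanged.

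One point needs care. The ST claim as stated assumes only $\delta_i^{\mathrm{row}}<\Gamma_i^{\mathrm{ST}}$, but its definition depends on $\mathrm{SC}_i$. I would therefore state the ST conclusion as holding when the SC condition also holds. Alternatively, I would note that the ST argument applies to the thresholded set before removing $\mathrm{SC}_i$, and the subtraction is harmless once SC is preserved. This dependency is the main (though mild) obstacle.

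\textbf{SA}. Profiles are defined on nonzero core rows, and the zero/nonzero pattern is shared by the assumption of common anatomy. Alternatively, one can restrict to rows whose profile is used in both tests, as the hypothesis phrasing allows. For each such row,
\[
\bigl|\,|\langle p_r,u_i\rangle|-\tau_{\mathrm{SA}}\bigr|\ge\Gamma_i^{\mathrm{SA}}>\delta_i^{\mathrm{corr}}.
\]
So the sign of $|\langle p_r',u_i'\rangle|-\tau_{\mathrm{SA}}$ is unchanged, and $\mathrm{SA}_i$ is preserved.

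\textbf{Final remark}. I would cite a Wedin-type bound: $\sin\angle(u_i,u_i')\lesssim\|\Delta\|_2/\gamma_i^{\mathrm{prof}}$. Combined with the perturbation of the normalized rows $p_r$, this bounds $\delta_i^{\mathrm{corr}}$. I would only indicate this step rather than prove it, since the statement itself only asserts that it "may be verified".
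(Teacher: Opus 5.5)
Your proof is the paper's argument (a label-by-label triangle-inequality check that no row-energy or correlation score crosses its decision boundary), and your caveat on $\mathrm{ST}_i$ is correct and more careful than the paper, whose proof tacitly assumes the candidate set $\mathcal R_i\setminus\mathrm{SC}_i$ is unchanged: with $q_i=1$, $\tau_{\mathrm{ST}}=5$ and $(e_i(1),e_i(2))=(10,9.9)\mapsto(9.5,10.4)$ one has $\delta_i^{\mathrm{row}}=0.5<\Gamma_i^{\mathrm{ST}}=4.9$, yet $\mathrm{ST}_i$ changes from $\{2\}$ to $\{1\}$, so $2\delta_i^{\mathrm{row}}<\Gamma_i^{\mathrm{SC}}$ must indeed also be imposed for the $\mathrm{ST}_i$ conclusion. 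For $\mathrm{SA}_i$, drop the claim that a common mask anatomy fixes which core rows vanish, since the masks do not fix the numerical entries; instead use your second reading, that the correlation hypothesis implicitly requires both $p_r$ and $p_r'$ to be defined on every row it covers.
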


\begin{proof}
For $\mathrm{SC}_i$, take $r\in\mathrm{SC}_i$ and $r'\notin\mathrm{SC}_i$. The gap definition gives $e_i(r)-e_i(r')\ge\Gamma_i^{\mathrm{SC}}$. Under the perturbation bound,
\[
e_i'(r)-e_i'(r')\ge \Gamma_i^{\mathrm{SC}}-2\delta_i^{\mathrm{row}}>0,
\]
so no selected row can be overtaken by an unselected row. The deterministic tie-breaking is therefore unchanged. For $\mathrm{ST}_i$, every candidate row remains on the same side of the threshold $\tau_{\mathrm{ST}}$ if $\delta_i^{\mathrm{row}}<\Gamma_i^{\mathrm{ST}}$. For $\mathrm{SA}_i$, the defining score $|\langle p_r,u_i\rangle|$ remains on the same side of $\tau_{\mathrm{SA}}$ whenever the correlation perturbation is smaller than the auxiliary-correlation gap. These three observations prove the claim.
\end{proof}

\begin{theorem}[ICM extraction from the Physical GSA]
\label{thm:icm-extraction}
Let a layer interface satisfy a Physical GSA structure with $c_{\mathrm{overlap}}<1/3$.
Then:
\begin{enumerate}[label=(C\arabic*),leftmargin=2.1em]
\item the signal rows decompose into selected spectral/physical groups $\mathcal R_1,\dots,\mathcal R_K$ whose pairwise interactions satisfy the coherent-overlap threshold of Lemma~\ref{thm:one-third-threshold}; if the row-profile margins of Lemma~\ref{lem:row-group-stability} hold, this row partition is stable under the corresponding perturbation;
\item each group has a well-defined SRS $\mathcal C_i$; the support splits into dedicated columns and shared hub columns;
\item the residual component is bounded by $\|M_{\mathrm{noise}}\|_F\le\varepsilon_{\mathrm{noise}}$;
\item the SC/SA/ST/SRS/Hub/Noise anatomy in Definition~\ref{def:icm} is a deterministic function of the physical alignment structure and thresholds;
\item under perturbations obeying the active-column and pairwise conditions of Theorem~\ref{thm:static-structure-stability}, the SRS/Hub/core-overlap-noise mask anatomy remains unchanged; if, in addition, the row/profile margins in Definition~\ref{def:icm-stability-margins} dominate the induced row-energy and profile perturbations, then the full SC/SA/ST/SRS/Hub/Noise labeling remains unchanged.
\end{enumerate}
\end{theorem}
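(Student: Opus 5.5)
The proof checks each of the five conclusions (C1)--(C5) in turn. Each one follows by unpacking the definitions or by citing a stability result proved earlier. Throughout, "Physical GSA structure with $c_{\mathrm{overlap}}<1/3$" means membership in $\mathcal G^{\mathrm{phy}}_{\mathcal E,\rho,\varepsilon_{\mathrm{noise}},c_{\mathrm{overlap}}}$ of Definition~\ref{def:physical-gsa-domain}. That membership supplies the row groups, active sets, core/overlap/noise decomposition, noise bound, and pairwise bound $\|\Overlap_{i\cap j}\|_2\le c_{\mathrm{overlap}}m_{i,j}$.

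\textbf{(C1).} For every nondegenerate pair we have $3o_{i,j}\le 3c_{\mathrm{overlap}}m_{i,j}<m_{i,j}$, since $m_{i,j}>0$. This is \eqref{eq:one-third}, and Lemma~\ref{thm:one-third-threshold} then gives the equivalent gap form and $\Delta_\sigma(i,j)>0$. For the stability clause, I would assume the groups come from the mode-profile rule of Definition~\ref{def:mode-profile-row-partition}. Lemma~\ref{lem:row-group-stability} then says that a row meeting the $2B\delta$ threshold margin and the $4B\delta$ winner margin keeps both its group label and its signal/noise status, so the partition is preserved row by row.

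\textbf{(C2)--(C4).} The set $\mathcal C_i$ is uniquely determined by the top-$s_i$ rule \eqref{eq:active-columns} with lexicographic tie-breaking, and by definition $\mathrm{SRS}_i=\mathcal C_i$. The sets $\mathcal C_i^{\mathrm{ded}}$ and $\mathcal C_i^{\mathrm{sh}}$ partition $\mathcal C_i$, and $\mathcal C_i^{\mathrm{sh}}$ is exactly $\mathcal C_i\cap\mathrm{Hub}$; this gives (C2). Claim (C3) is item (ii) of Definition~\ref{def:physical-gsa-domain} verbatim. For (C4), I would trace Definition~\ref{def:icm}. The core row energies $e_i(r)$ come from $M_{\mathrm{core}}$, which is a mask of $\widehat M$. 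The set $\mathrm{SC}_i$ is a top-$q_i$ selection with fixed tie-breaking, and $\mathrm{ST}_i$ is a threshold set. The profile $u_i$ is fixed by the deterministic singular-vector and sign convention, the $p_r$ are normalized rows, and $\mathrm{SA}_i$ is a threshold on $|\langle p_r,u_i\rangle|$. Hub and Noise are set-theoretic functions of the $\mathcal C_i$, of $\mathcal R_0$, and of the mask $\Omega_{\mathrm{noise}}$. Every object is therefore a deterministic function of the structure and the thresholds, as (C4) claims.

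\textbf{(C5).} The first half follows from Theorem~\ref{thm:static-structure-stability}: (C1) of that theorem preserves every $\mathcal C_i$, and (C3) of that theorem preserves the masks, SRS and Hub. This is the step needing the most care, because the core masks are preserved but the core entries change, so the ICM inputs $e_i$, $p_r$, $u_i$ move. To bound $\delta_i^{\mathrm{row}}$, I would use that $M_{\mathrm{core}}'-M_{\mathrm{core}}$ is a coordinate restriction of $E$. Then $|e_i'(r)-e_i(r)|\le 2\|\widehat M\|_F\eta+\eta^2$, by the same score estimate as in Theorem~\ref{thm:static-structure-stability}(C1). The correlation perturbation $\delta_i^{\mathrm{corr}}$ cannot be derived from the other margins. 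It has to be taken as the hypothesis "the margins dominate the induced perturbations," verified for instance through $\gamma_i^{\mathrm{prof}}$. Given that, Proposition~\ref{prop:full-icm-label-stability} preserves $\mathrm{SC}_i$, $\mathrm{ST}_i$ and $\mathrm{SA}_i$. Noise is unchanged, because $\mathcal R_0$ is fixed and $\Omega_{\mathrm{noise}}$ is preserved, and this completes the full labeling.
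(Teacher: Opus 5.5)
Your proposal is correct and follows the paper's proof essentially step for step. It proves (C1)--(C4) by unpacking the domain definition, Lemma~\ref{thm:one-third-threshold}, Lemma~\ref{lem:row-group-stability} and Definition~\ref{def:icm}, and (C5) by Theorem~\ref{thm:static-structure-stability} together with Proposition~\ref{prop:full-icm-label-stability}. Your one addition is the explicit bound $\delta_i^{\mathrm{row}}\le 2\|\widehat M\|_F\eta+\eta^2$, which follows because the fixed core mask makes $M_{\mathrm{core}}'-M_{\mathrm{core}}$ a masked restriction of $E$; the paper leaves this implicit in its hypothesis that the margins ``dominate the induced perturbations.''
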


\begin{proof}
All objects in the ICM are extracted from the same physical structure.  We prove the five claims explicitly.
\begin{enumerate}[label=(C\arabic*),leftmargin=2.1em]
\item \textbf{Selected spectral/physical groups.}
The physical GSA structure includes the row partition
\[
\mathcal R_0\sqcup\mathcal R_1\sqcup\cdots\sqcup\mathcal R_K
=
\{1,\dots,m\}.
\]
The sets $\mathcal R_1,
\dots,\mathcal R_K$ are the signal groups and $\mathcal R_0$ is the residual group.  The margin condition assumes
\[
\|\Overlap_{i\cap j}\|_2\le c_{\mathrm{overlap}}m_{i,j},
\qquad c_{\mathrm{overlap}}<1/3.
\]
For every nondegenerate pair this implies
\[
\|\Overlap_{i\cap j}\|_2<\frac13m_{i,j}.
\]
By Lemma~\ref{thm:one-third-threshold}, this is equivalent to the gap-based pairwise stability condition.  Hence the selected signal groups interact through controlled coherent overlap. Stability of the row partition itself is supplied by the row-profile separation condition of Lemma~\ref{lem:row-group-stability}, or by treating a numerical clustering as a proposed partition and verifying the certificate after the partition is fixed.

\item \textbf{SRS, dedicated support, and hubs.}
For each signal group, the SRS is defined by
\[
\mathrm{SRS}_i:=\mathcal C_i.
\]
The dedicated support of group $i$ is
\[
\mathcal C_i^{\mathrm{ded}}=\mathcal C_i\setminus\bigcup_{j\ne i}\mathcal C_j,
\]
and the shared support consists of columns lying in at least two active sets.  The hub set in Definition~\ref{def:icm} is
\[
\mathrm{Hub}=\{c:|\{i:c\in\mathcal C_i\}|\ge2\}.
\]
Thus every SRS column is classified as either dedicated to a single group or shared by multiple groups, and the shared columns are exactly the hub columns.

\item \textbf{Residual bound.}
The physical GSA structure includes the decomposition
\[
\widehat M=M_{\mathrm{core}}+M_{\mathrm{overlap}}+M_{\mathrm{noise}}
\]
and the noise condition
\[
\|M_{\mathrm{noise}}\|_F\le\varepsilon_{\mathrm{noise}}.
\]
Therefore the residual component used in the ICM has the asserted Frobenius bound.

\item \textbf{Deterministic extraction of SC/SA/ST/SRS/Hub/Noise.}
Once the physical alignment structure and thresholds are fixed, the core row energies
\[
e_i(r)=\|M_{\mathrm{core}}[\{r\},:]\|_2^2
\]
are numerical values.  The set $\mathrm{SC}_i$ is selected as the $q_i$ largest values with deterministic tie-breaking.  The set $\mathrm{ST}_i$ is the thresholded subset of remaining rows with $e_i(r)\ge\tau_{\mathrm{ST}}$.  The auxiliary set $\mathrm{SA}_i$ is selected by the prescribed collinearity threshold $\tau_{\mathrm{SA}}$ with the leading normalized row profile.  The sets $\mathrm{SRS}_i$, $\mathrm{Hub}$, and $\mathrm{Noise}$ are then given by the formulas in Definition~\ref{def:icm}.  No forward-pass data or additional optimization choice is used.  Hence the entire ICM anatomy is a deterministic function of the physical alignment structure and thresholds.

\item \textbf{Perturbation stability.}
Suppose perturbations obey the active-column and pairwise conditions of Theorem~\ref{thm:static-structure-stability}.  The active-column condition implies that each active set $\mathcal C_i$ is unchanged, and the pairwise condition implies that every nondegenerate pair remains inside the one-third threshold.  Since dedicated supports, shared supports, SRS sets, and hubs are deterministic set-theoretic functions of the active sets, their incidence structure is unchanged.  The labels SC, ST, and SA are selected from row-energy order, row-energy threshold, and profile-correlation tests. Proposition~\ref{prop:full-icm-label-stability} gives explicit row-energy and correlation perturbation inequalities under which these tests cannot change their outcomes. Therefore the SRS/Hub support and core/overlap/noise mask anatomy is stable under the static certificate radius, and the full ICM labeling is stable once the additional row/profile margins are also verified.
\end{enumerate}
\end{proof}


\section{Low-disruption fine-tuning paths in GSA coordinates}
\label{sec:fine-tuning-paths}

The preceding sections extract static channel structure from a trained model.  We now record two finite-dimensional consequences for local adaptation.  The goal is not to model a particular optimizer.  Instead, we identify the scale and singular-frame directions that have small displacement in the static GSA coordinates.  This gives a precise version of two matrix-level low-disruption conditions: uniform spectral scaling and coherent singular-vector rotation.  These coordinates can be used to analyze a given adapter, LoRA, or other local update after it has been measured, but the statements below do not prove that any particular fine-tuning method automatically preserves GSA structure.

\begin{definition}[Layer-scale adaptation variables]
\label{def:scale-adaptation-variables}
Let $N\ge2$ layer blocks have positive base spectral scales $C_1^{\mathrm{base}},\ldots,C_N^{\mathrm{base}}$.  A post-adaptation scale vector is written
\[
C_i^{\mathrm{post}}=s_iC_i^{\mathrm{base}},
\qquad s_i>0,
\qquad i=1,\ldots,N.
\]
Set $\ell_i:=\log s_i$ and $\bar\ell:=N^{-1}\sum_{i=1}^N\ell_i$.  The complete-graph log-ratio disruption is
\begin{equation}\label{eq:scale-log-disruption}
\mathcal D_{\log}(s):=\sum_{1\le i<j\le N}(\ell_i-\ell_j)^2.
\end{equation}
The relative scale-ratio disruption used for direct scale comparisons is
\begin{equation}\label{eq:scale-ratio-disruption}
\mathcal D_{\mathrm{ratio}}(s):=
\sum_{1\le i<j\le N}\left(\frac{s_i}{s_j}-1\right)^2
\left(\frac{C_i^{\mathrm{base}}}{C_j^{\mathrm{base}}}\right)^2.
\end{equation}
\end{definition}

\begin{proposition}[Scale-ratio rigidity and uniform singular-value scaling]
\label{prop:uniform-scale-finetuning}
Let $s=(s_1,\ldots,s_N)$ have positive entries.
\begin{enumerate}[label=(S\arabic*),leftmargin=2.1em]
\item The log-ratio disruption satisfies the exact identity
\begin{equation}\label{eq:complete-graph-variance-identity}
\mathcal D_{\log}(s)=N\sum_{i=1}^N(\ell_i-\bar\ell)^2.
\end{equation}
In particular, $\mathcal D_{\log}(s)=0$ if and only if $s_1=\cdots=s_N$.
\item For every pair $i,j$,
\begin{equation}\label{eq:scale-ratio-bound-from-Dlog}
\left|\log\frac{s_i}{s_j}\right|
\le
\sqrt{\frac{2\mathcal D_{\log}(s)}{N}},
\qquad
\exp\!\left(-\sqrt{\frac{2\mathcal D_{\log}(s)}{N}}\right)
\le
\frac{s_i}{s_j}
\le
\exp\!\left(\sqrt{\frac{2\mathcal D_{\log}(s)}{N}}\right).
\end{equation}
\item The relative scale-ratio disruption satisfies $\mathcal D_{\mathrm{ratio}}(s)=0$ if and only if $s_1=\cdots=s_N$.
\end{enumerate}
Thus the unique zero-disruption scale direction is uniform layerwise scaling; small log-ratio disruption forces every relative post/base scaling ratio to remain close to every other one.
\end{proposition}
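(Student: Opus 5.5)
For (S1) I will expand the pairwise sum directly. Write $\ell_i-\ell_j=(\ell_i-\bar\ell)-(\ell_j-\bar\ell)$ and set $x_i:=\ell_i-\bar\ell$, so that $\sum_i x_i=0$. Summing over ordered pairs gives
\[
\sum_{i,j}(x_i-x_j)^2=2N\sum_i x_i^2-2\Bigl(\sum_i x_i\Bigr)^2=2N\sum_i x_i^2 .
\]
Halving this to pass to unordered pairs $i<j$ (the diagonal terms vanish) yields \eqref{eq:complete-graph-variance-identity}. The zero case then follows from that identity, since each summand is nonnegative. Indeed, $\mathcal D_{\log}(s)=0$ forces $\ell_i=\bar\ell$ for every $i$. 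Because $\log$ is injective on $(0,\infty)$, this is equivalent to $s_1=\cdots=s_N$, and the converse direction is immediate.

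For (S2) I fix a pair $i\ne j$. From \eqref{eq:complete-graph-variance-identity}, $x_i^2+x_j^2\le\sum_k x_k^2=\mathcal D_{\log}(s)/N$. Then I apply $(x_i-x_j)^2\le 2(x_i^2+x_j^2)$, which is Cauchy--Schwarz, or equivalently $(a-b)^2\le 2a^2+2b^2$. This gives
\[
\bigl(\log(s_i/s_j)\bigr)^2=(x_i-x_j)^2\le 2\mathcal D_{\log}(s)/N .
\]
Taking square roots gives the first inequality in \eqref{eq:scale-ratio-bound-from-Dlog}, and exponentiating, since $\exp$ is monotone, gives the two-sided ratio bound. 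The case $i=j$ is trivial.

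For (S3) I use positivity of the base scales, which makes each weight $(C_i^{\mathrm{base}}/C_j^{\mathrm{base}})^2$ strictly positive. Hence $\mathcal D_{\mathrm{ratio}}(s)=0$ holds if and only if every summand vanishes, that is, $s_i/s_j=1$ for all $i<j$. This is equivalent to all $s_i$ being equal. Note that $N\ge2$ is what makes the set of pairs nonempty, so the zero condition carries content.

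The closing sentence of the proposition is then a restatement of (S1)--(S3). No step seems a real obstacle. The only point needing care is the factor bookkeeping in (S1), namely ordered versus unordered pairs. A second possible slip is using the crude bound $x_i^2+x_j^2\le\sum_k x_k^2$ in (S2) rather than attempting anything sharper.
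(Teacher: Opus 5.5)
Your proposal is correct and follows essentially the same route as the paper: for (S1) you expand the double sum (centering first so the cross term $(\sum_i x_i)^2$ vanishes, where the paper instead reduces both sides to $N\sum_i\ell_i^2-(\sum_i\ell_i)^2$); for (S2) you combine $(x_i-x_j)^2\le 2(x_i^2+x_j^2)\le 2\sum_k x_k^2$ with the identity; and for (S3) you use termwise nonnegativity with strictly positive base weights. On your worry about (S2), the factor $2/N$ cannot be improved, since it is attained when $x_i=-x_j$ and all other $x_k=0$.
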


\begin{proof}
We prove the three assertions separately.
\begin{enumerate}[label=(S\arabic*),leftmargin=2.1em]
\item \textbf{Variance identity.}
The standard complete-graph variance identity is
\[
\sum_{1\le i<j\le N}(\ell_i-\ell_j)^2
=
N\sum_{i=1}^N(\ell_i-\bar\ell)^2.
\]
For completeness, we derive it.  Since
\[
\sum_{1\le i<j\le N}(\ell_i-\ell_j)^2
=\frac12\sum_{i=1}^N\sum_{j=1}^N(\ell_i-\ell_j)^2,
\]
expanding the double sum gives
\begin{align*}
\frac12\sum_{i,j}(\ell_i-\ell_j)^2
&=\frac12\sum_{i,j}(\ell_i^2+\ell_j^2-2\ell_i\ell_j)\\
&=N\sum_i\ell_i^2-\left(\sum_i\ell_i\right)^2.
\end{align*}
On the other hand,
\[
N\sum_i(\ell_i-\bar\ell)^2
=N\sum_i\ell_i^2-2N\bar\ell\sum_i\ell_i+N^2\bar\ell^2
=N\sum_i\ell_i^2-\left(\sum_i\ell_i\right)^2,
\]
because $N\bar\ell=\sum_i\ell_i$.  This proves \eqref{eq:complete-graph-variance-identity}.  The right-hand side is zero if and only if every $\ell_i=\bar\ell$, which is equivalent to $s_i=e^{\bar\ell}$ for all $i$.

\item \textbf{Pairwise scale-ratio bound.}
For any pair $i,j$,
\begin{align*}
|\ell_i-\ell_j|
&\le |\ell_i-\bar\ell|+|\ell_j-\bar\ell|\\
&\le \sqrt{2}\left((\ell_i-\bar\ell)^2+(\ell_j-\bar\ell)^2\right)^{1/2}\\
&\le \sqrt{2}\left(\sum_{r=1}^N(\ell_r-\bar\ell)^2\right)^{1/2}.
\end{align*}
Using \eqref{eq:complete-graph-variance-identity} gives
\[
|\ell_i-\ell_j|\le \sqrt{\frac{2\mathcal D_{\log}(s)}{N}}.
\]
Since $\ell_i-\ell_j=\log(s_i/s_j)$, exponentiating the two-sided inequality proves \eqref{eq:scale-ratio-bound-from-Dlog}.

\item \textbf{Relative scale-ratio disruption.}
Every summand in \eqref{eq:scale-ratio-disruption} is nonnegative, and the base scale weights $(C_i^{\mathrm{base}}/C_j^{\mathrm{base}})^2$ are strictly positive.  Hence $\mathcal D_{\mathrm{ratio}}(s)=0$ if and only if
\[
\frac{s_i}{s_j}-1=0
\]
for every $i<j$.  This is equivalent to $s_i=s_j$ for every pair, hence to $s_1=\cdots=s_N$.
\end{enumerate}
\end{proof}

\begin{definition}[SVD-frame rotations for a local layer update]
\label{def:svd-frame-rotations}
Let a base layer have SVD
\[
W^{\mathrm{base}}=U\Sigma V^\top,
\qquad
\Sigma=\diag(\sigma_1,\ldots,\sigma_d),
\qquad \sigma_i\ge0.
\]
A post-adaptation layer with the same dimensions is represented in the base singular frames as
\begin{equation}\label{eq:post-svd-frame-rotation}
W^{\mathrm{post}}=UQ_U\Sigma^{\mathrm{post}}Q_V^\top V^\top,
\qquad
Q_U,Q_V\in\Orth(d),
\end{equation}
where $Q_U$ and $Q_V$ are the output-frame and input-frame rotations relative to the base SVD gauge.  The layer displacement is
\[
\Delta W:=W^{\mathrm{post}}-W^{\mathrm{base}}.
\]
\end{definition}

\begin{proposition}[Frobenius cost of coherent and incoherent SVD-frame rotations]
\label{prop:coherent-svd-rotation}
In the setting of Definition~\ref{def:svd-frame-rotations}, the following statements hold.
\begin{enumerate}[label=(R\arabic*),leftmargin=2.1em]
\item \textbf{Exact frame-reduced perturbation formula.}
\begin{equation}\label{eq:frame-reduced-perturbation}
\|\Delta W\|_F
=
\|Q_U\Sigma^{\mathrm{post}}Q_V^\top-\Sigma\|_F.
\end{equation}
\item \textbf{Uniform scale plus coherent rotation.}
If $\Sigma^{\mathrm{post}}=s\Sigma$ with $s>0$ and $Q_U=Q_V=Q$, then
\begin{equation}\label{eq:coherent-rotation-cost-general-s}
\|\Delta W\|_F^2
=
\sum_{a=1}^d\sum_{b=1}^d q_{ab}^2\,(s\sigma_b-\sigma_a)^2.
\end{equation}
In particular, when $s=1$,
\begin{equation}\label{eq:coherent-rotation-cost}
\|\Delta W\|_F^2
=
\sum_{a,b}q_{ab}^2(\sigma_b-\sigma_a)^2.
\end{equation}
Thus a common rotation has small cost when it mainly mixes singular directions with nearly equal singular values.
\item \textbf{Relative rotation control.}
Assume $\Sigma^{\mathrm{post}}=s\Sigma$ with $s>0$ and define the relative rotation
\[
R:=Q_U^\top Q_V.
\]
Then
\begin{equation}\label{eq:relative-rotation-bound}
\|s\Sigma(R^\top-\Id)\|_F
\le
\|\Delta W\|_F+
\|Q_U(s\Sigma)Q_U^\top-\Sigma\|_F.
\end{equation}
If $\sigma_d>0$, then
\begin{equation}\label{eq:relative-rotation-bound-sigmin}
\|R-\Id\|_F
\le
\frac{\|\Delta W\|_F+
\|Q_U(s\Sigma)Q_U^\top-\Sigma\|_F}{s\sigma_d}.
\end{equation}
\end{enumerate}
Consequently, in a low-displacement update for which the common-frame rotation is also low cost, the left and right singular-vector rotations must remain close in the relative-rotation metric.
\end{proposition}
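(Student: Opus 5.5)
The plan is to work entirely in the base singular frames, where orthogonal invariance of the Frobenius norm does most of the work.

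\textbf{(R1).} Write $\Delta W=U\bigl(Q_U\Sigma^{\mathrm{post}}Q_V^\top-\Sigma\bigr)V^\top$ and use $\|UXV^\top\|_F=\|X\|_F$ for $U,V\in\Orth(d)$. This gives \eqref{eq:frame-reduced-perturbation} immediately.

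\textbf{(R2).} With $Q_U=Q_V=Q$ and $\Sigma^{\mathrm{post}}=s\Sigma$, (R1) gives $\|\Delta W\|_F=\|Qs\Sigma Q^\top-\Sigma\|_F$.
\begin{itemize}
\item Right-multiply by $Q$, which preserves the Frobenius norm, to get $\|Qs\Sigma-\Sigma Q\|_F$.
\item The $(a,b)$ entry of $Qs\Sigma-\Sigma Q$ is $q_{ab}s\sigma_b-\sigma_aq_{ab}=q_{ab}(s\sigma_b-\sigma_a)$.
\item Summing the squares of these entries gives \eqref{eq:coherent-rotation-cost-general-s}.
\item Setting $s=1$ gives \eqref{eq:coherent-rotation-cost}.
\end{itemize}

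\textbf{(R3).} Start from (R1): $\|\Delta W\|_F=\|Q_Us\Sigma Q_V^\top-\Sigma\|_F$.
\begin{itemize}
\item Insert and subtract the coherent comparison $Q_Us\Sigma Q_U^\top$. By the reverse triangle inequality,
\[
\|Q_Us\Sigma(Q_V^\top-Q_U^\top)\|_F\le\|\Delta W\|_F+\|Q_Us\Sigma Q_U^\top-\Sigma\|_F.
\]
\item Factor $Q_V^\top-Q_U^\top=(Q_V^\top Q_U-\Id)Q_U^\top=(R^\top-\Id)Q_U^\top$, using $R=Q_U^\top Q_V$ so that $R^\top=Q_V^\top Q_U$.
\item Strip the orthogonal factor $Q_U$ on the left and $Q_U^\top$ on the right. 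The left-hand side becomes $\|s\Sigma(R^\top-\Id)\|_F$, which is \eqref{eq:relative-rotation-bound}.
\end{itemize}

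\textbf{The $\sigma_d>0$ bound.} Use $\|DX\|_F\ge\sigma_{\min}(D)\|X\|_F$ for $D=s\Sigma$, whose smallest diagonal entry is $s\sigma_d$. Also use $\|R^\top-\Id\|_F=\|R-\Id\|_F$, since transposition preserves the Frobenius norm. Dividing by $s\sigma_d>0$ yields \eqref{eq:relative-rotation-bound-sigmin}.

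The closing "consequently" sentence is just a reading of \eqref{eq:relative-rotation-bound-sigmin}: a small $\|\Delta W\|_F$ plus a small common-frame cost force $R$ to be close to $\Id$.

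The only step needing care is bookkeeping in (R3), namely which side $R$ versus $R^\top$ lands on after factoring. Checking that the factorization $(Q_V^\top Q_U-\Id)Q_U^\top$ is consistent with $R=Q_U^\top Q_V$ settles it. Everything else is routine orthogonal invariance.
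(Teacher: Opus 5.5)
Your proof is correct. For (R1) and (R3) it is the paper's argument. The paper also writes $Q_V^\top=R^\top Q_U^\top$, subtracts the coherent comparison $Q_U(s\Sigma)Q_U^\top$, strips the orthogonal factors, and finishes with $\|s\Sigma X\|_F\ge s\sigma_d\|X\|_F$. One small point of terminology: the comparison step is the ordinary triangle inequality applied to $(A-\Sigma)-(B-\Sigma)$, not the reverse one. The bound you display is nonetheless right.

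The genuine difference is in (R2). The paper expands
$\|QDQ^\top-E\|_F^2=\tr(D^2)+\tr(E^2)-2\tr(QDQ^\top E)$
and computes $\tr(QDQ^\top E)=\sum_{a,b}q_{ab}^2d_be_a$. It then needs the doubly stochastic identities $\sum_a q_{ab}^2=\sum_b q_{ab}^2=1$ to reassemble the three sums into $\sum_{a,b}q_{ab}^2(d_b-e_a)^2$.

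You instead right-multiply by $Q$, which turns $QDQ^\top-E$ into $QD-EQ$. Its entries are visibly $q_{ab}(d_b-e_a)$, so the identity falls out entrywise. There is no trace bookkeeping and no appeal to the row and column sums of $(q_{ab}^2)$. Your route is shorter and less error-prone. The paper's expansion has the mild advantage of making the cross term $\langle QDQ^\top,E\rangle_F$ and the doubly stochastic weights $q_{ab}^2$ explicit, but nothing in the proposition depends on that form.
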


\begin{proof}
We prove the three assertions.
\begin{enumerate}[label=(R\arabic*),leftmargin=2.1em]
\item \textbf{Exact frame-reduced perturbation formula.}
Using \eqref{eq:post-svd-frame-rotation},
\[
\Delta W
=UQ_U\Sigma^{\mathrm{post}}Q_V^\top V^\top-U\Sigma V^\top
=U\bigl(Q_U\Sigma^{\mathrm{post}}Q_V^\top-\Sigma\bigr)V^\top.
\]
The Frobenius norm is invariant under multiplication by orthogonal matrices on the left and right.  Therefore
\[
\|\Delta W\|_F=
\|Q_U\Sigma^{\mathrm{post}}Q_V^\top-\Sigma\|_F.
\]

\item \textbf{Uniform scale plus coherent rotation.}
Set $D=s\Sigma$ and $E=\Sigma$.  If $Q_U=Q_V=Q$, then by (R1)
\[
\|\Delta W\|_F^2=\|QDQ^\top-E\|_F^2.
\]
Expanding the square gives
\[
\|QDQ^\top-E\|_F^2
=\tr(D^2)+\tr(E^2)-2\tr(QDQ^\top E).
\]
Because $D$ and $E$ are diagonal,
\[
\tr(QDQ^\top E)=\sum_{a=1}^d\sum_{b=1}^d q_{ab}^2 d_b e_a.
\]
Also $\sum_a q_{ab}^2=1$ for each $b$ and $\sum_b q_{ab}^2=1$ for each $a$.  Hence
\begin{align*}
\tr(D^2)+\tr(E^2)-2\tr(QDQ^\top E)
&=\sum_{a,b}q_{ab}^2d_b^2+
  \sum_{a,b}q_{ab}^2e_a^2-
  2\sum_{a,b}q_{ab}^2d_be_a\\
&=\sum_{a,b}q_{ab}^2(d_b-e_a)^2.
\end{align*}
Substituting $d_b=s\sigma_b$ and $e_a=\sigma_a$ proves \eqref{eq:coherent-rotation-cost-general-s}.  Setting $s=1$ gives \eqref{eq:coherent-rotation-cost}.

\item \textbf{Relative rotation control.}
Let $R=Q_U^\top Q_V$, so $Q_V=Q_UR$ and $Q_V^\top=R^\top Q_U^\top$.  Then
\[
Q_U(s\Sigma)Q_V^\top
=Q_U(s\Sigma)R^\top Q_U^\top.
\]
Subtract the coherent-rotation matrix $Q_U(s\Sigma)Q_U^\top$:
\begin{align*}
Q_U(s\Sigma)Q_V^\top-Q_U(s\Sigma)Q_U^\top
&=Q_U\bigl(s\Sigma(R^\top-\Id)\bigr)Q_U^\top.
\end{align*}
Orthogonal invariance gives
\[
\|Q_U(s\Sigma)Q_V^\top-Q_U(s\Sigma)Q_U^\top\|_F
=
\|s\Sigma(R^\top-\Id)\|_F.
\]
By the triangle inequality,
\begin{align*}
\|s\Sigma(R^\top-\Id)\|_F
&\le
\|Q_U(s\Sigma)Q_V^\top-\Sigma\|_F
+
\|Q_U(s\Sigma)Q_U^\top-\Sigma\|_F.
\end{align*}
The first term is $\|\Delta W\|_F$ by (R1), proving \eqref{eq:relative-rotation-bound}.  If $\sigma_d>0$, then the smallest singular value of $s\Sigma$ is $s\sigma_d$, and
\[
\|s\Sigma(R^\top-\Id)\|_F\ge s\sigma_d\|R^\top-\Id\|_F=s\sigma_d\|R-\Id\|_F.
\]
Combining this lower bound with \eqref{eq:relative-rotation-bound} gives \eqref{eq:relative-rotation-bound-sigmin}.
\end{enumerate}
\end{proof}

\begin{remark}[Deep-learning interpretation]
Proposition~\ref{prop:uniform-scale-finetuning} identifies uniform layerwise scaling as the zero-disruption scale direction.  Proposition~\ref{prop:coherent-svd-rotation} identifies the corresponding directional condition: if a fine-tuned layer remains close to the base layer in Frobenius norm, and if its common frame rotation does not mix widely separated singular directions, then the output and input singular-vector rotations must be close to one another.  In channel terms, low-disruption adaptation preserves the layer's input-output frame up to a coherent rotation and avoids arbitrary re-wiring of the dominant singular directions.
\end{remark}

\section{Empirical measurements of the finite-dimensional predictions}
\label{sec:empirical}

The experiments are organized by the finite-dimensional variables appearing in the theorems.  The spectral measurements track the fitted Cartan coordinate.  The physical-alignment measurements build the transport matrices and their block-energy summaries.  The effective-rank measurements compare whether the same block structure persists under different spectral truncation windows.  This organization separates three levels of empirical measurements: spectral evolution, channel organization, and stability of the observed organization under changes of the retained rank window.

\paragraph{Checkpoint-level measurement procedure.}
For a trained checkpoint and a selected sequence of layerwise matrices, the measurements are computed in a deterministic order.
\begin{enumerate}[label=(\arabic*),leftmargin=2.0em]
\item Compute singular values and singular vectors of each selected operator $W_k=U_k\Sigma_kV_k^\top$; for large matrices this step may use standard truncated or randomized SVD methods \cite{halko2011finding}.
\item Fit the Cartan coordinate $\widehat\alpha_k$ from the singular-value profile and record the fit residual or regression score.
\item Choose an energy threshold $1-\eps$ and compute the effective-rank window $R_\eps(W_k)$ from Definition~\ref{def:effective-rank}.
\item Construct the interface matrices used in the figures. The output-realized scale-free angular transport is denoted by
\[
M_s:=U_{k+1}^{(R)}(V_{k+1}^{(R)})^{\top}U_k^{(R)},
\]
while the target-truncated physical, energy-realized transport is
\[
M^{(R)}:=W_{k+1}^{[R]}U_k^{(R)}
=U_{k+1}^{(R)}\Sigma_{k+1}^{(R)}(V_{k+1}^{(R)})^{\top}U_k^{(R)}.
\]
Some displayed experiments use the full-row version $M:=W_{k+1}U_k^{(R)}$; in that case
\[
M=M^{(R)}+(W_{k+1}-W_{k+1}^{[R]})U_k^{(R)},
\]
and the target-tail residual satisfies
\[
\|(W_{k+1}-W_{k+1}^{[R]})U_k^{(R)}\|_F\le E_{>R}(W_{k+1})^{1/2}.
\]
These are the experimental representatives of the transport variants in Definition~\ref{def:transport-variants}.
\item Apply the chosen physical ordering or clustering permutation to obtain the measured physical alignment matrix $\widehat M_{\mathrm{phy}}$ as in Definition~\ref{def:physical-alignment-matrix}.
\item Compute the block-energy matrices
\[
E_r(M),\qquad E_r(M_s),
\]
which are instances of Definition~\ref{def:block-energy-matrix}.  These matrices summarize how much row-group energy flows into each active column group after permutation.
\end{enumerate}

\paragraph{Operator and protocol reporting.}
A reproducible certificate report must state the extraction protocol $\mathcal E$ of Definition~\ref{def:extraction-protocol}.  In particular, it must identify the measured operator for each architecture: for transformers, which attention or MLP matrices are used, or whether a composed interface block is measured; for convolutional networks, how convolution kernels are flattened and whether normalization layers are folded into the matrix; and for diffusion or vision backbones, which linearized block is treated as $W_k$.  The report must also state whether the displayed columns are source singular modes or physical input channels, following Table~\ref{tab:transport-coordinate-types}.

\paragraph{Null and randomization baselines.}
Because permutations and clustering can create visually organized displays, the same extraction pipeline should also be run on null controls before making an empirical membership claim.  Recommended controls include random Gaussian matrices with matched dimensions, matrices with the same singular values but random singular vectors, trained weights with randomly permuted channels, untrained initialization, random orthogonal rotations of singular frames, and the same clustering pipeline applied to null data.  A Physical GSA certificate should report whether the measured margins distinguish the trained interface from these controls.

\paragraph{How empirical support is interpreted.}
The deterministic theorems have the form
\[
\text{explicit numerical hypotheses}\quad\Longrightarrow\quad\text{stable structural conclusion}.
\]
The figures measure the matrices and block-energy quantities that occur in those hypotheses and conclusions.  A smooth exponent plot supports the spectral part by showing that the measured fitted coordinates have small total variation.  A block-dominant $E_r$ heatmap supports the physical-alignment part by showing that core and accepted-overlap blocks carry most of the measured row-normalized energy.  An effective-rank-window comparison supports the truncation part by showing that the same block pattern is visible under nearby retained-energy windows.  A complete finite-dimensional margin test additionally requires the corresponding numerical margins: local transport values, power-law fit errors, rank-tail gaps, active-column gaps, bad-block mass, and pairwise one-third margin screens.

\begin{table}[H]
\centering
\scriptsize
\renewcommand{\arraystretch}{1.32}
\setlength{\tabcolsep}{2.8pt}
\begin{tabularx}{\textwidth}{L{0.21\textwidth}L{0.33\textwidth}Y}
\toprule
\textbf{Measurement family} & \textbf{Theoretical prediction} & \textbf{Measured quantity and interpretation}\\
\midrule
Exponent-profile plots & Theorem~\ref{thm:cartan-rigidity} and Corollary~\ref{cor:cartan-to-rank-window}. & Fitted $\widehat\alpha_k$ values are computed across depth.  Short trajectories correspond to small layerwise coordinate changes rather than noisy, unrelated spectra.\\
Single-interface alignment example & Definitions~\ref{def:transport-variants}--\ref{def:physical-alignment-matrix} and Propositions~\ref{thm:block-energy-margin}--\ref{thm:heatmap-to-noise-bound}. & A representative transport matrix is filtered, clustered, permuted, and summarized by $E_r$.  The resulting matrix and heatmap display the measured core/overlap/noise pattern.\\
Fixed-cluster panels & Lemma~\ref{thm:one-third-threshold}, \ref{thm:physical-incidence-margin}, and~\ref{thm:block-energy-margin}. & A prescribed number of groups is used to test whether a proposed physical grouping exposes a modular transport structure.\\
$25\mathrm{ER}$ and $50\mathrm{ER}$ panels & Corollary~\ref{cor:Er-window-robustness} and Theorem~\ref{thm:familywise-structure-persistence}. & The same interface is remeasured under different retained-energy windows.  Persistence of the block pattern indicates robustness to the spectral truncation choice.\\
$M_s$ versus $M$ panels & Proposition~\ref{thm:scale-weight-transfer}, Proposition~\ref{prop:row-leakage-scale-transfer}, and Proposition~\ref{thm:heatmap-margin-screen}. & Agreement between scale-free angular transport and energy-weighted physical transport is a measured consistency; theorem-level transfer in physical rows additionally requires diagonal weighting in the displayed coordinates or a verified row-leakage bound.\\
Layer sweeps & Theorem~\ref{thm:dynamic-to-static-bridge}, Proposition~\ref{thm:static-structure-closure}, Theorem~\ref{thm:familywise-structure-persistence}, and Corollary~\ref{thm:multi-view-measurement-aggregation}. & Repeated block-energy structure across depth indicates coherent static channel organization rather than a single-layer artifact.\\
\bottomrule
\end{tabularx}
\caption{Placement of experimental figures in the theorem chain.  Each family of figures is attached to a mathematical prediction and to the finite quantity used to evaluate it.}
\label{tab:experiment-theorem-map}
\end{table}

\begin{table}[H]
\centering
\scriptsize
\renewcommand{\arraystretch}{1.32}
\setlength{\tabcolsep}{2.8pt}
\begin{tabularx}{\textwidth}{L{0.22\textwidth}L{0.34\textwidth}Y}
\toprule
\textbf{Measured quantity} & \textbf{How it is computed} & \textbf{Role in a finite-dimensional margin test}\\
\midrule
Cartan total variation & $\sum_k |\widehat\alpha_{k+1}-\widehat\alpha_k|$ after power-law fitting. & Empirical left-hand side of the coordinate-rigidity bound.\\
Rank-window stability & Compare $25\mathrm{ER}$, $50\mathrm{ER}$, and the corresponding truncated heatmaps. & Tests whether the block structure is stable under the truncation error controlled by Corollary~\ref{cor:Er-window-robustness}.\\
Diagonal block mass & Row-normalized mass of $E_r(M)$ or $E_r(M_s)$ on diagonal or accepted blocks. & Measured version of the core/accepted-overlap dominance controlled by Proposition~\ref{thm:block-energy-margin}.\\
Visible bad mass & Mass of $E_r$ outside accepted blocks. & Input to the Frobenius visible-noise bound in Proposition~\ref{thm:heatmap-to-noise-bound}.\\
One-third margin screen & Check inequalities such as $e_iE_{ij}+e_jE_{ji}<m_{i,j}^2/9$. & Sufficient measured condition for the one-third coherent-overlap threshold through Proposition~\ref{thm:heatmap-margin-screen}.\\
Hub degree & Number of active row groups incident to an active support column. & Measured version of the shared-support structure in Definition~\ref{def:static-channel-incidence} and the scaling law in Proposition~\ref{thm:hub-scaling}.\\
\bottomrule
\end{tabularx}
\caption{Numerical quantities associated with the displayed measurements.  The figures show the matrices or heatmaps from which these values are computed; complete verification requires the margin values in this table to be reported and checked.}
\label{tab:measurement-statistics}
\end{table}

\begin{remark}[Boundary between empirical support and finite-dimensional margin tests]
In the formal statements, a displayed heatmap corresponds to a concrete matrix, a row/column partition, and a finite list of numerical inequalities.  For example, an $E_r$ heatmap is the row-normalized block-energy matrix of Definition~\ref{def:block-energy-matrix}.  When the measured bad mass and the one-third margin screen satisfy Propositions~\ref{thm:heatmap-to-noise-bound} and~\ref{thm:heatmap-margin-screen}, the heatmap data give a finite-dimensional physical-alignment certificate.  The figures show that the matrices and block-energy quantities predicted by the theory are visible in trained models; the associated margin verification is reported through the numerical inequalities.
\end{remark}

\paragraph{Reading the alignment figure set as empirical measurements.}
The systematic alignment figure set has three complementary roles.  Fixed-cluster panels test block-sparse structure under a prescribed grouping.  The $25\mathrm{ER}$ and $50\mathrm{ER}$ panels test robustness under spectral truncation.  Layer sweeps test persistence across depth.  The block-energy matrices $E_r(M)$ and $E_r(M_s)$ capture energy-weighted and scale-free transport, respectively.  Agreement between these panels, together with concentration of diagonal mass and controlled off-diagonal energy, is the measured signature corresponding to Proposition~\ref{thm:scale-weight-transfer} and Propositions~\ref{thm:heatmap-margin-screen}--\ref{thm:heatmap-numerical-certificate}.

\subsection{Empirical measurement II: block-sparse physical alignment structure}
\label{sec:prediction-physical}
This measurement instantiates the finite physical objects defined in Sections~\ref{sec:physical-alignment}--\ref{sec:block-energy-tests}.  A block-diagonal or block-dominant pattern in the permuted transport matrix means that most channel groups use their own active supports.  Sparse off-diagonal blocks represent controlled sharing, while diffuse background mass corresponds to residual noise in the finite decomposition.
\label{sec:exp2-alignment}

Lemma~\ref{thm:one-third-threshold} and Definition~\ref{def:physical-gsa-domain} require that, after selecting a dominant effective-rank window and a physical ordering, the physical alignment matrix be well approximated by
\[
\widehat M_{\mathrm{phy}}
=
M_{\mathrm{core}}+M_{\mathrm{overlap}}+M_{\mathrm{noise}},
\]
with small noise and structured rather than dense overlap.
The alignment figure set below measures the finite matrices associated with this structural prediction: after energy filtering and cosine/spectral clustering, the matrices $M$, $M_s$, $E_r(M)$, and $E_r(M_s)$ show diagonal or block-dominant structure across ResNet50, Qwen3-8B, LLaMA3-8B, and DiT-XL-2-512.

\paragraph{Deep-learning meaning.}
The block-diagonal and sparse off-diagonal patterns are interpreted as static channel anatomy.  A diagonal block records a group of output channels drawing energy from its own support; a sparse off-diagonal block records controlled feature sharing; diffuse background records unstructured interference.  Through Propositions~\ref{thm:block-energy-margin}, \ref{thm:heatmap-to-noise-bound}, and~\ref{thm:heatmap-margin-screen}, the heatmaps identify the numerical margins needed to certify the corresponding core/overlap/noise decomposition.

\begin{figure}[H]
\centering
\begin{subfigure}[t]{0.49\textwidth}
\centering
\includegraphics[width=\linewidth]{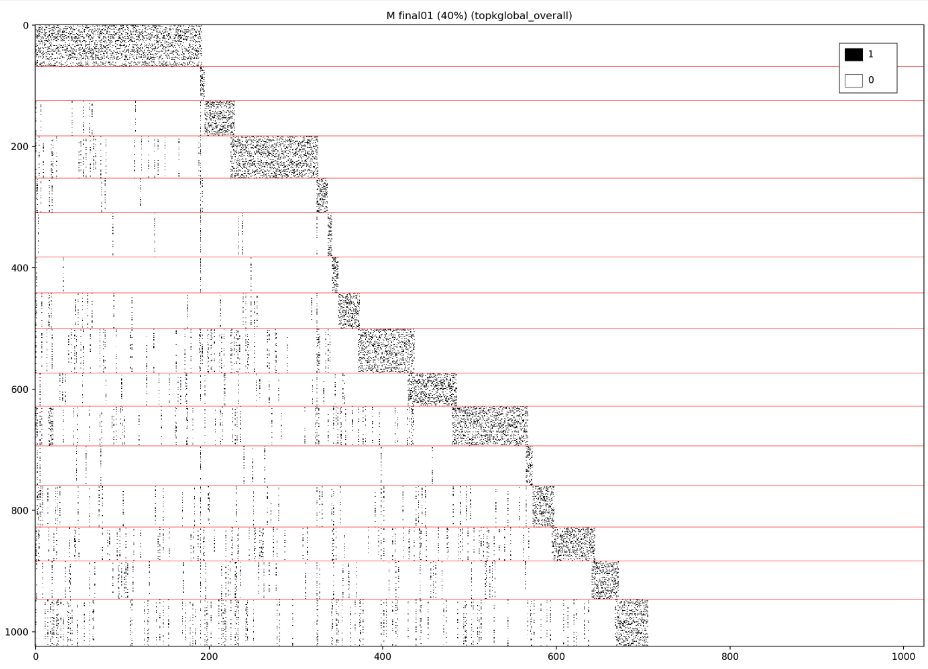}
\caption{Permuted $M$}
\end{subfigure}
\hfill
\begin{subfigure}[t]{0.49\textwidth}
\centering
\includegraphics[width=\linewidth]{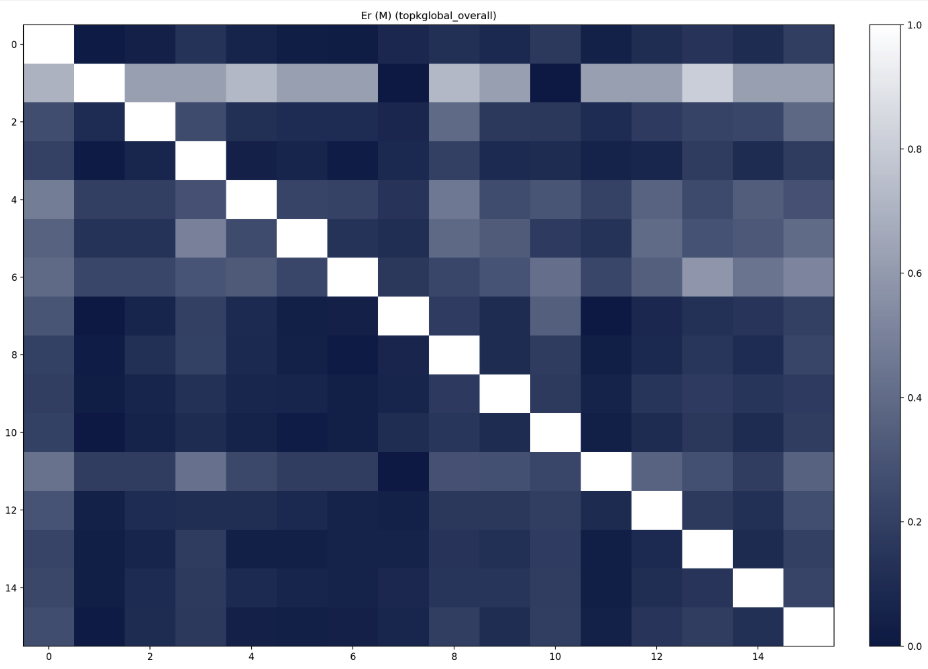}
\caption{$E_r(M)$}
\end{subfigure}

\medskip

\begin{subfigure}[H]{0.49\textwidth}
\centering
\includegraphics[width=\linewidth]{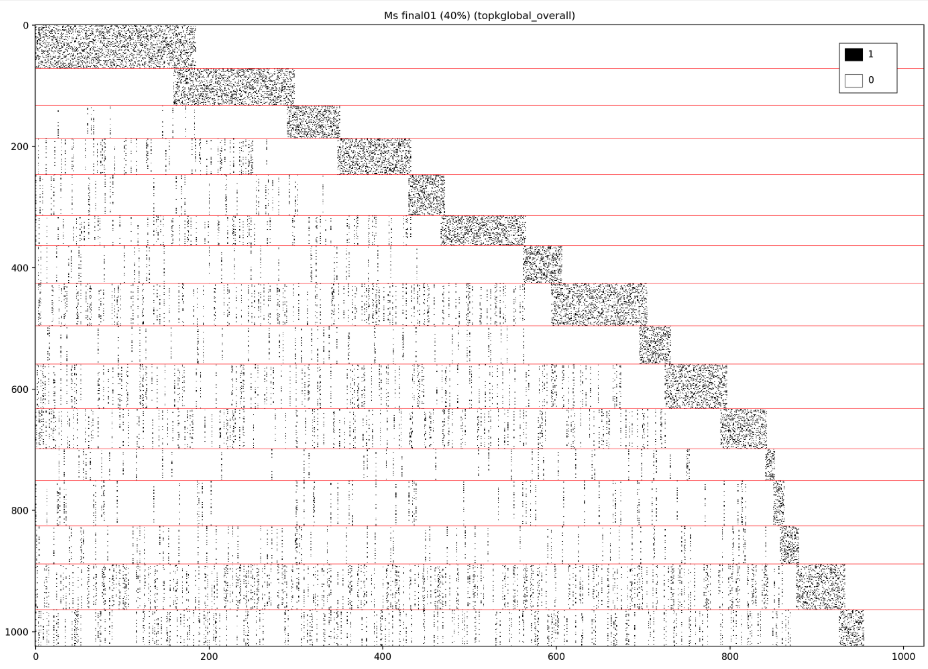}
\caption{Permuted $M_s$}
\end{subfigure}
\hfill
\begin{subfigure}[H]{0.49\textwidth}
\centering
\includegraphics[width=\linewidth]{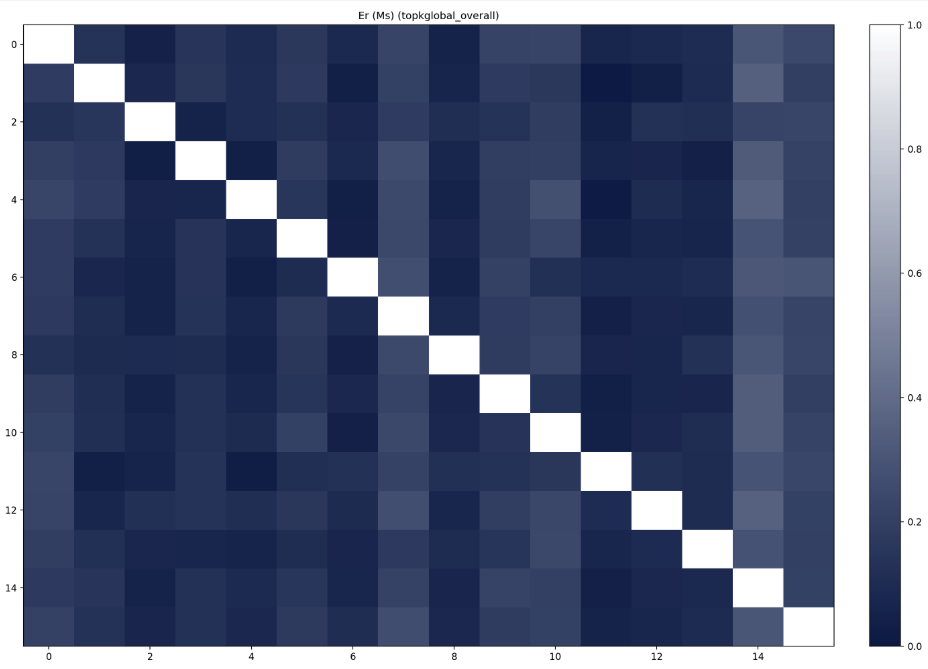}
\caption{$E_r(M_s)$}
\end{subfigure}
\caption{\textbf{Physical-alignment measurement:} representative four-panel interface measurement.  The panels show the permuted output-realized transport matrix $M=W_{k+1}U_k^{(R)}$ with physical output rows and source-mode columns, its block-energy summary $E_r(M)$, the scale-free angular transport matrix $M_s=U_{k+1}^{(R)}(V_{k+1}^{(R)})^{\top}U_k^{(R)}$, and its block-energy summary $E_r(M_s)$.  The block-dominant pattern is consistent with the decomposition $\widehat M_{\mathrm{phy}}=M_{\mathrm{core}}+M_{\mathrm{overlap}}+M_{\mathrm{noise}}$.  The heatmaps measure diagonal/core mass, structured off-diagonal sharing, and visible residual mass; a rigorous finite-dimensional margin test additionally requires the bad-mass and pairwise-margin inequalities in Table~\ref{tab:measurement-statistics}.}
\label{fig:align-16}
\end{figure}

\begin{figure}[H]
\centering
\begin{subfigure}[H]{0.49\textwidth}
\centering
\includegraphics[width=\linewidth]{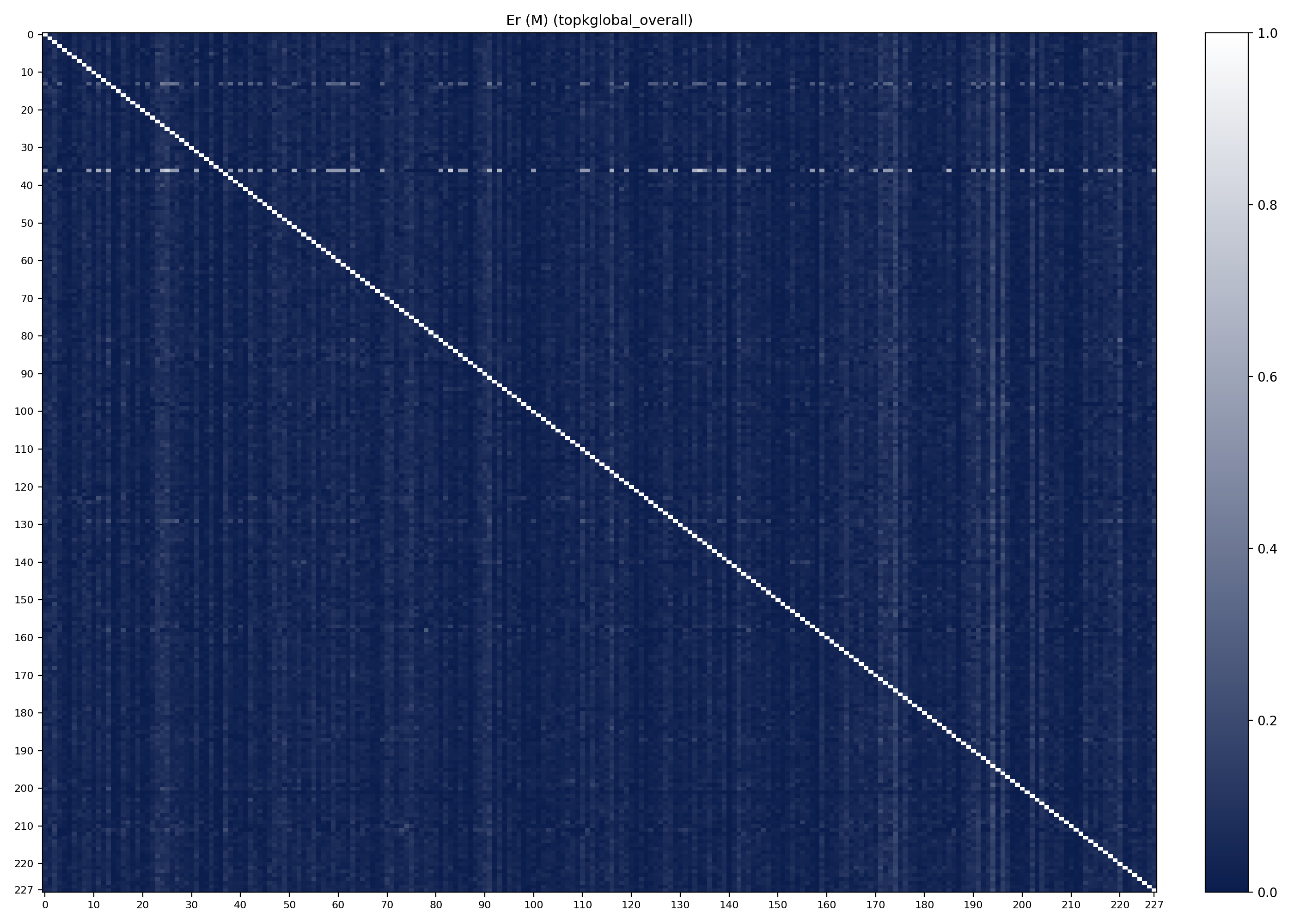}
\caption{Qwen3-8B, $50\mathrm{ER}$: $E_r(M)$}
\end{subfigure}
\hfill
\begin{subfigure}[H]{0.49\textwidth}
\centering
\includegraphics[width=\linewidth]{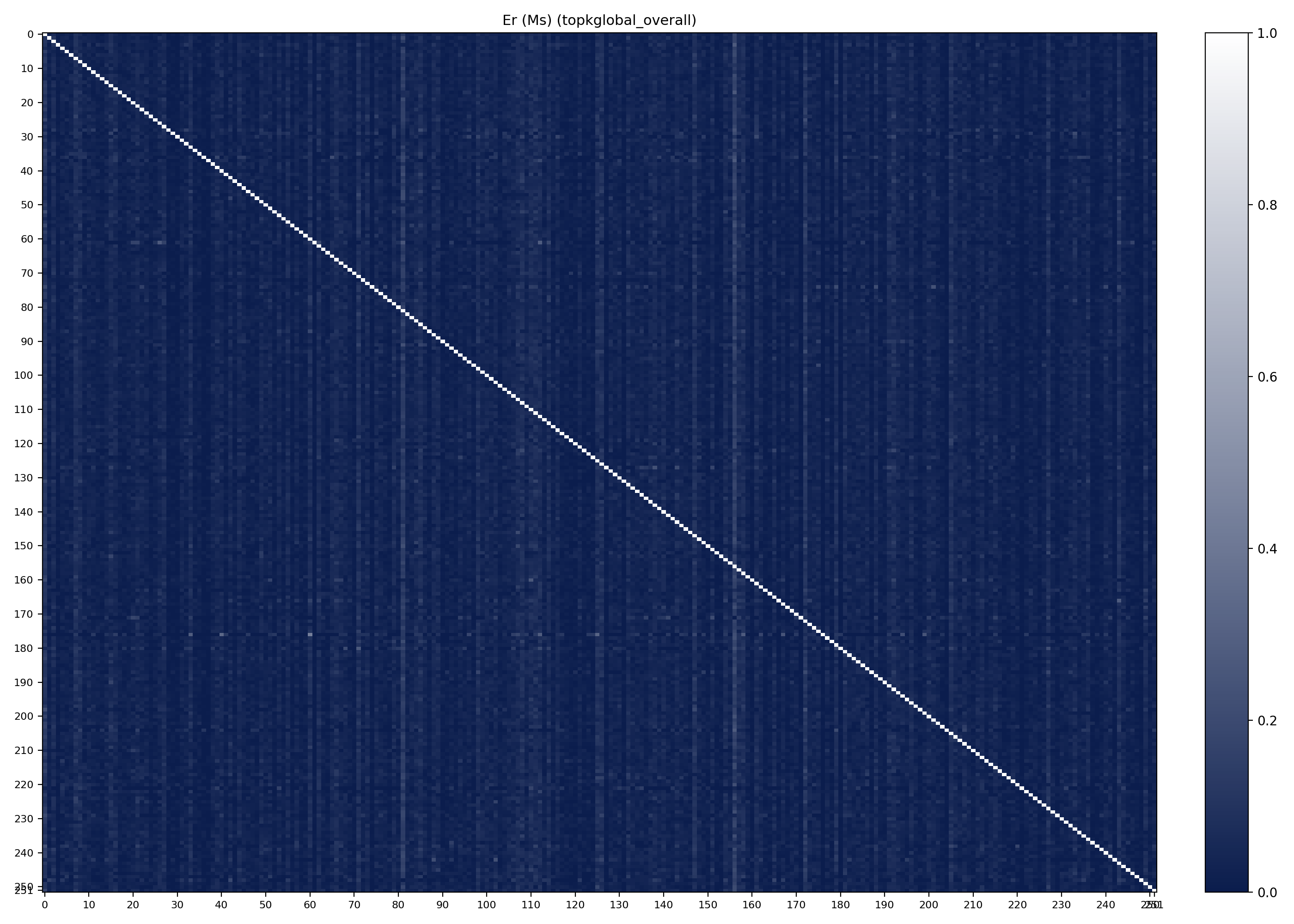}
\caption{Qwen3-8B, $50\mathrm{ER}$: $E_r(M_s)$}
\end{subfigure}

\medskip

\begin{subfigure}[H]{0.49\textwidth}
\centering
\includegraphics[width=\linewidth]{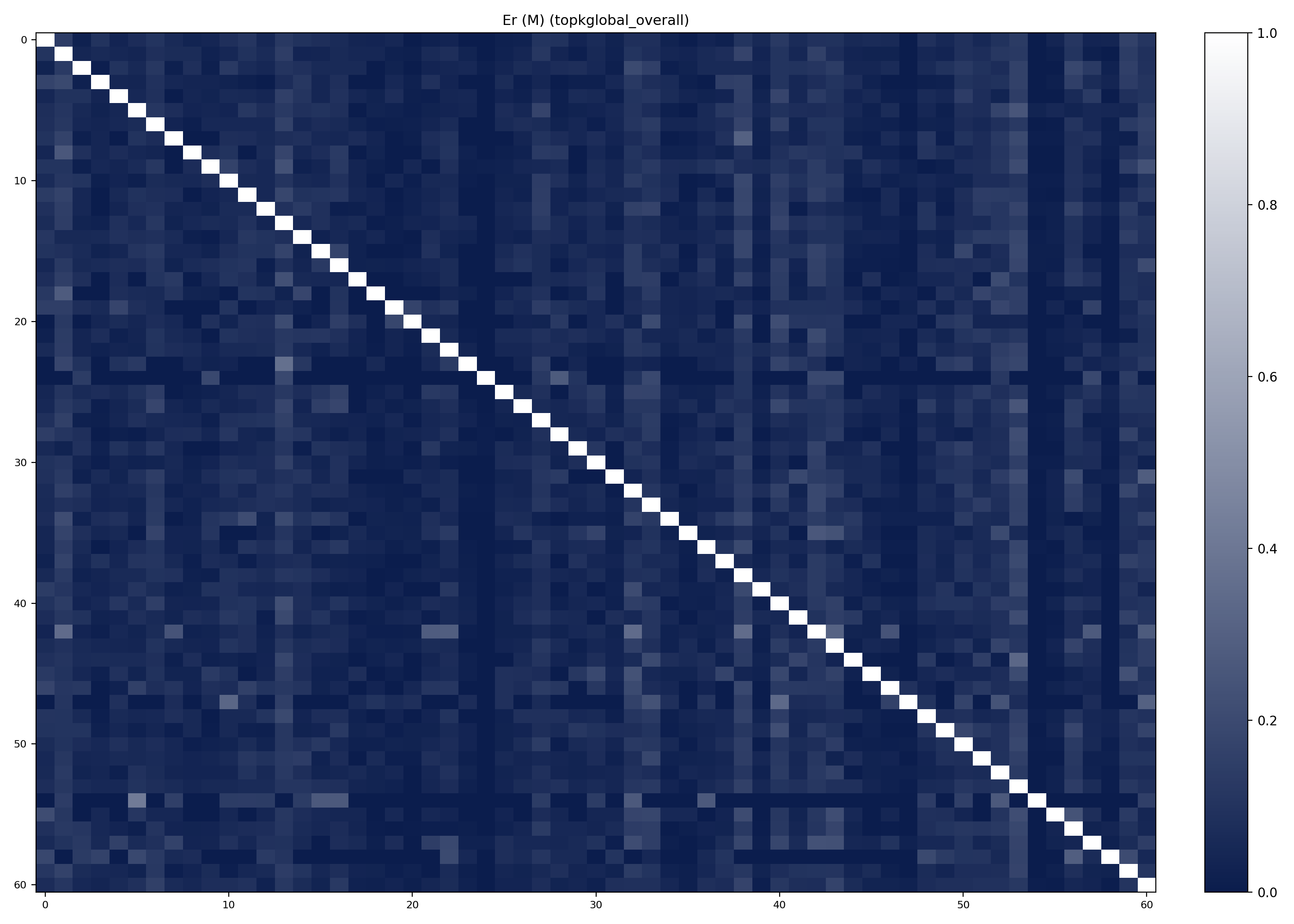}
\caption{DiT-XL-2-512, $50\mathrm{ER}$: $E_r(M)$}
\end{subfigure}
\hfill
\begin{subfigure}[H]{0.49\textwidth}
\centering
\includegraphics[width=\linewidth]{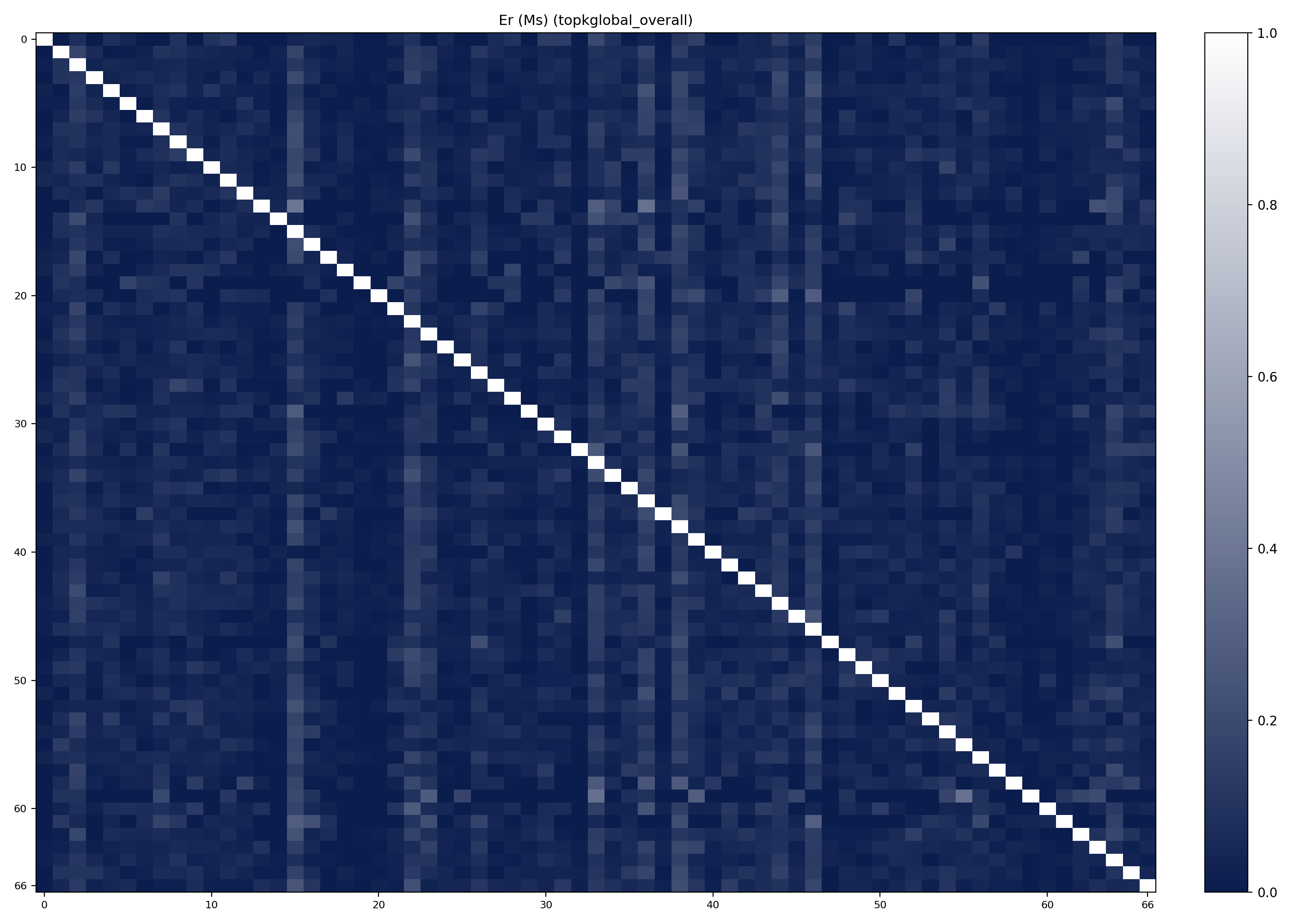}
\caption{DiT-XL-2-512, $50\mathrm{ER}$: $E_r(M_s)$}
\end{subfigure}
\caption{\textbf{Physical-alignment measured quantity:} representative block-energy matrices from the systematic \texttt{alignmentM} figure set.  The panels are direct instances of Definition~\ref{def:block-energy-matrix}.  Diagonal or block-dominant mass is the finite quantity controlled by Proposition~\ref{thm:block-energy-margin}; differences between $M_s$ and $M$ report how singular-value weighting and physical output realization change the scale-free angular organization; theorem-level transfer requires the diagonal-weighting or row-leakage conditions stated in Propositions~\ref{thm:scale-weight-transfer} and~\ref{prop:row-leakage-scale-transfer}.}
\label{fig:alignmentM-main-Er}
\end{figure}

\begin{figure}[H]
\centering
\begin{subfigure}[H]{0.49\textwidth}
\centering
\includegraphics[width=\linewidth]{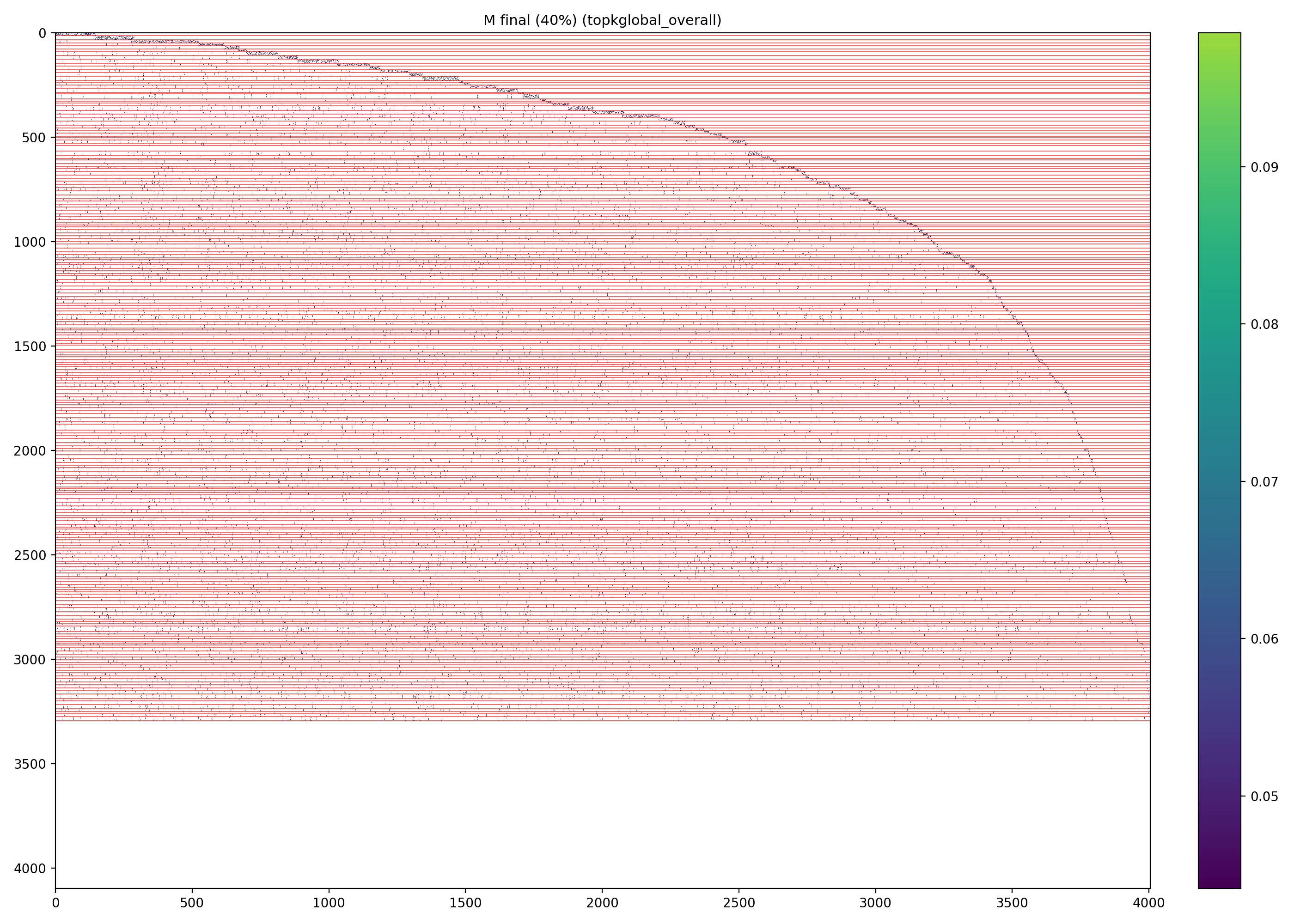}
\caption{$25\mathrm{ER}$: permuted $M$}
\end{subfigure}
\hfill
\begin{subfigure}[H]{0.49\textwidth}
\centering
\includegraphics[width=\linewidth]{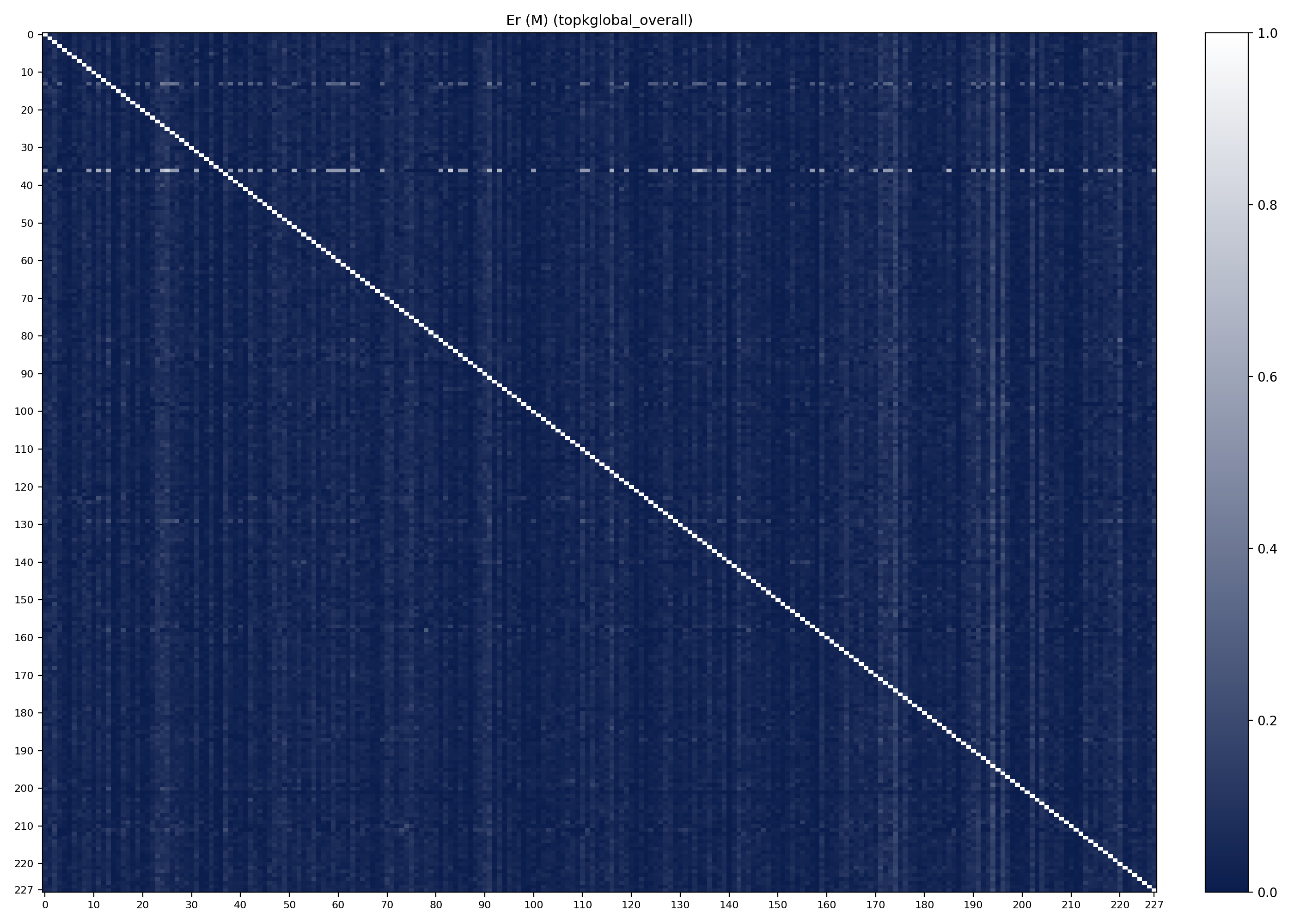}
\caption{$25\mathrm{ER}$: $E_r(M)$}
\end{subfigure}

\medskip

\begin{subfigure}[H]{0.49\textwidth}
\centering
\includegraphics[width=\linewidth]{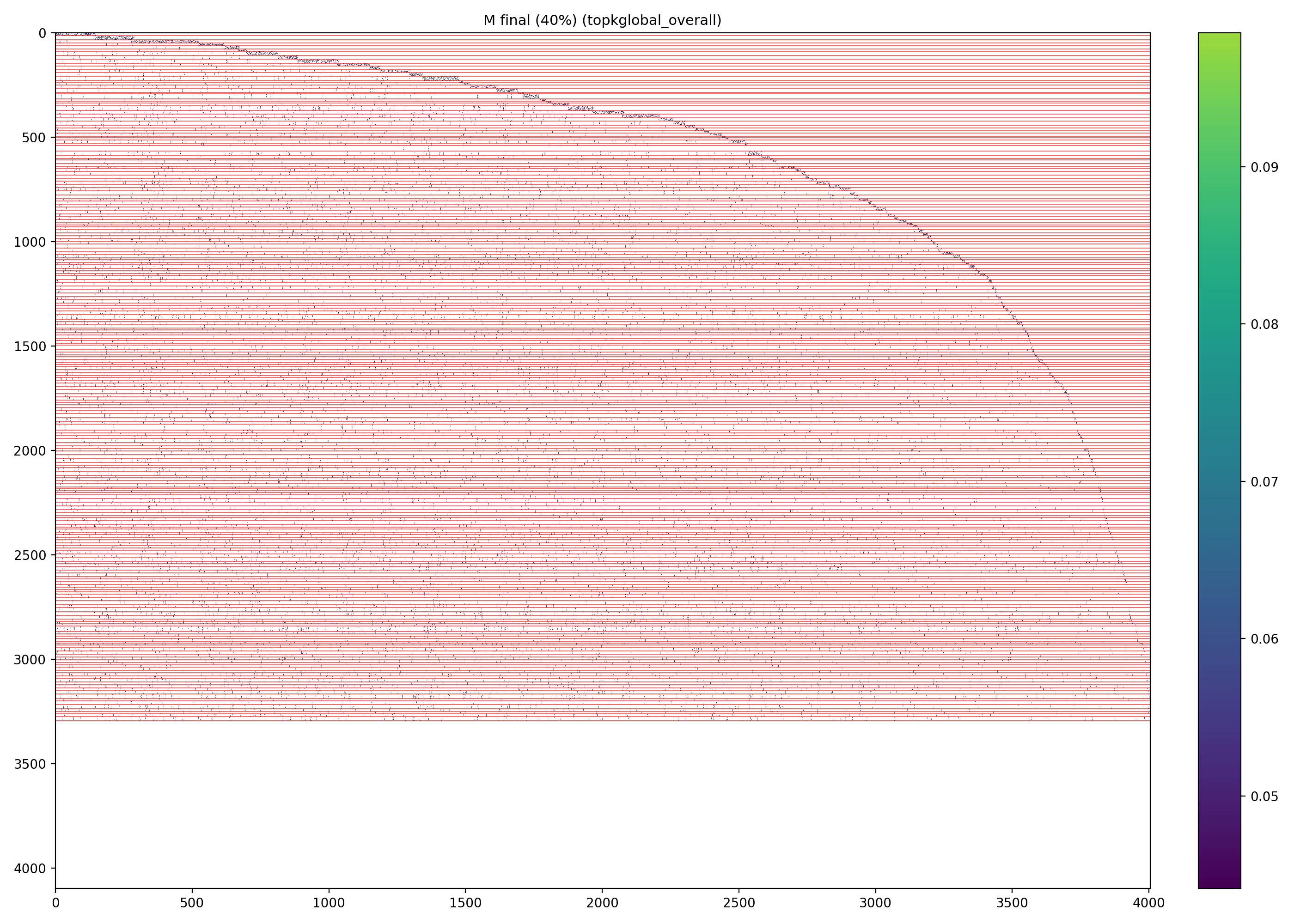}
\caption{$50\mathrm{ER}$: permuted $M$}
\end{subfigure}
\hfill
\begin{subfigure}[H]{0.49\textwidth}
\centering
\includegraphics[width=\linewidth]{alignmentM/Qwen3-8B/50ER/layer_18/M/Er_M.png}
\caption{$50\mathrm{ER}$: $E_r(M)$}
\end{subfigure}
\caption{\textbf{Physical/rank-window bridge:} the same Qwen3-8B interface is displayed under two energy-rank windows.  The repeated block-energy structure is consistent with the spectral-tail window stability and static GSA stability theorems.}
\label{fig:qwen-er-window-main}
\end{figure}

\begin{figure}[H]
\centering
\begin{subfigure}[H]{0.24\textwidth}
\centering
\includegraphics[width=\linewidth]{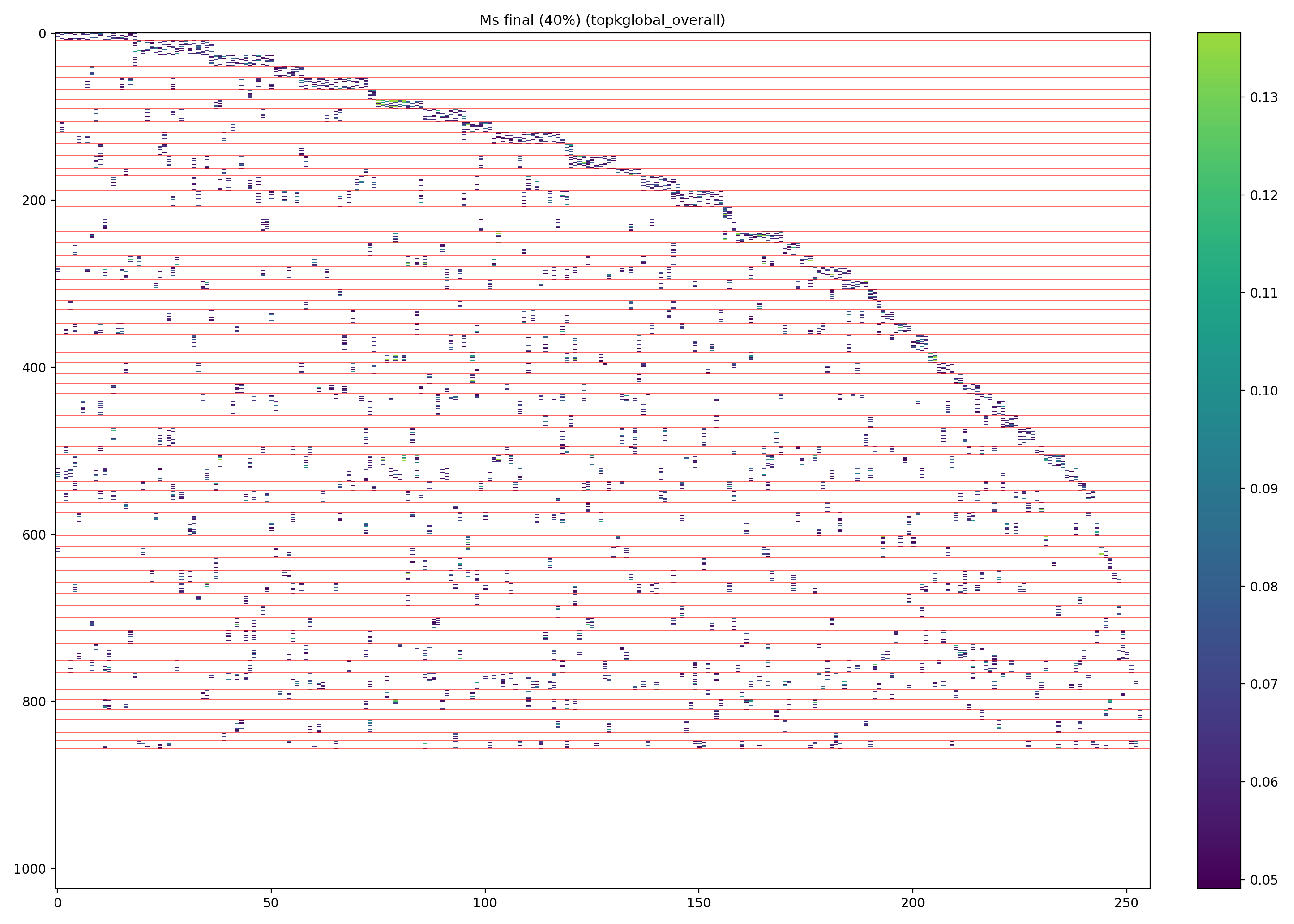}
\caption{permuted $M_s$}
\end{subfigure}\hfill
\begin{subfigure}[H]{0.24\textwidth}
\centering
\includegraphics[width=\linewidth]{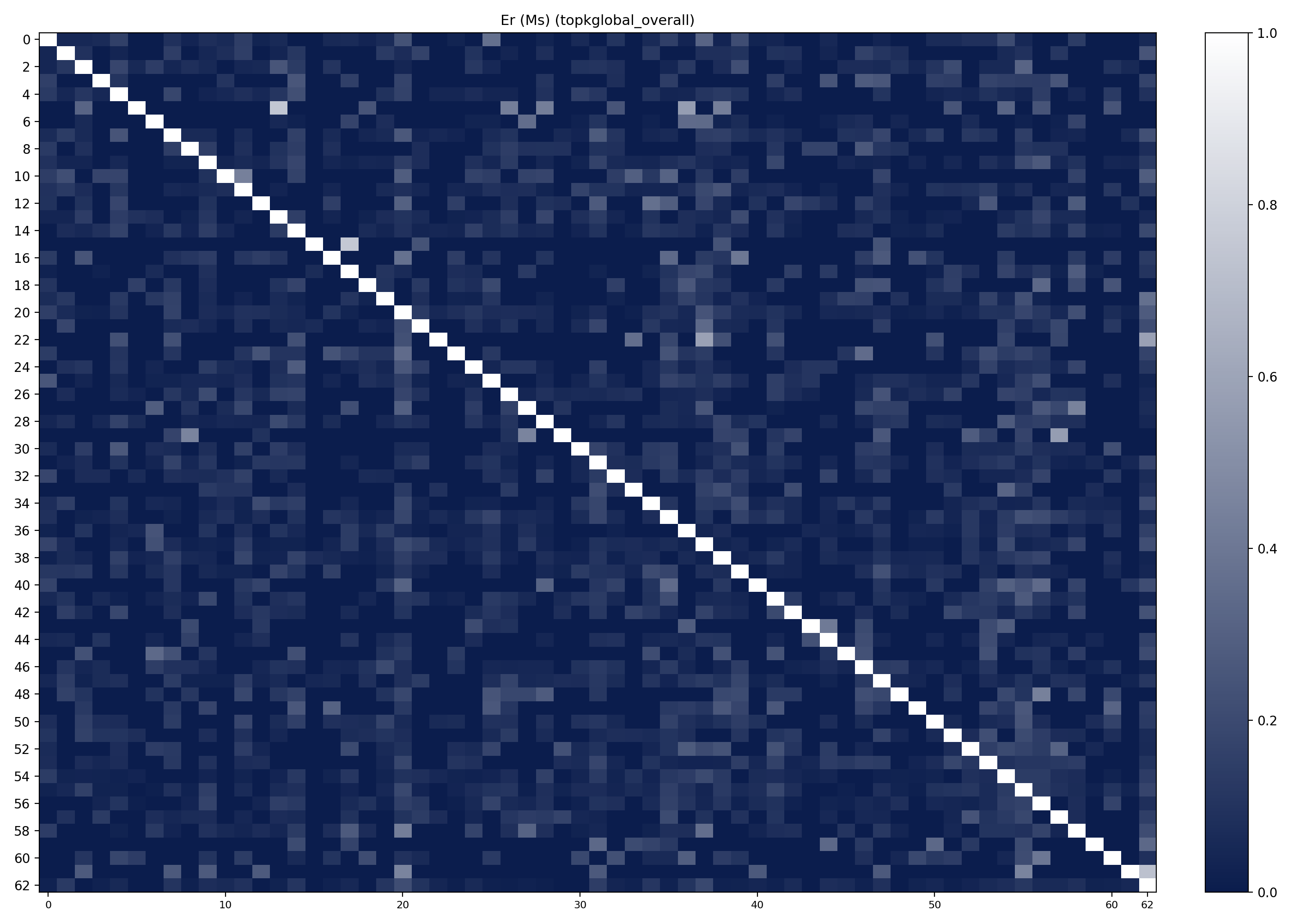}
\caption{$E_r(M_s)$}
\end{subfigure}\hfill
\begin{subfigure}[H]{0.24\textwidth}
\centering
\includegraphics[width=\linewidth]{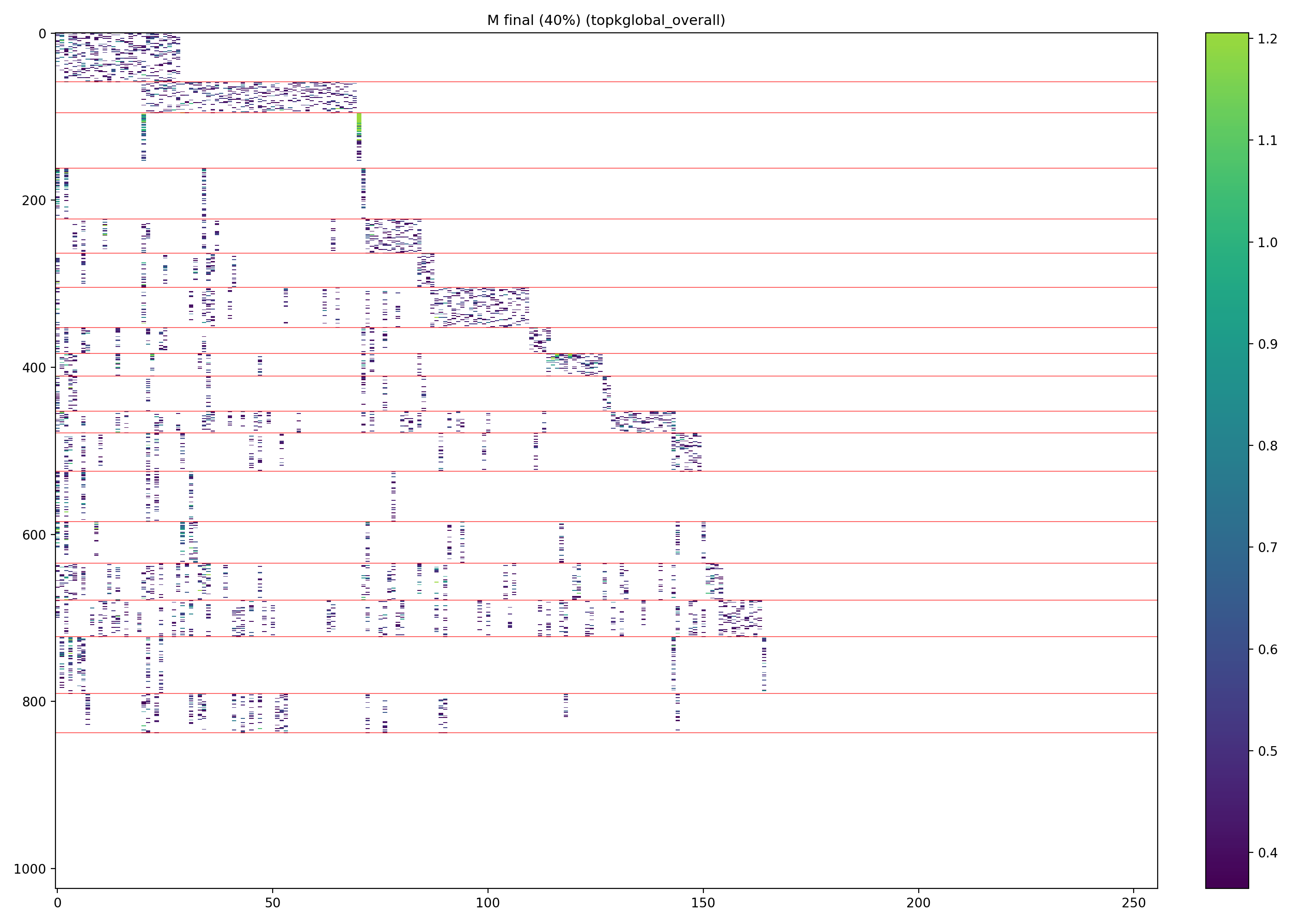}
\caption{permuted $M$}
\end{subfigure}\hfill
\begin{subfigure}[H]{0.24\textwidth}
\centering
\includegraphics[width=\linewidth]{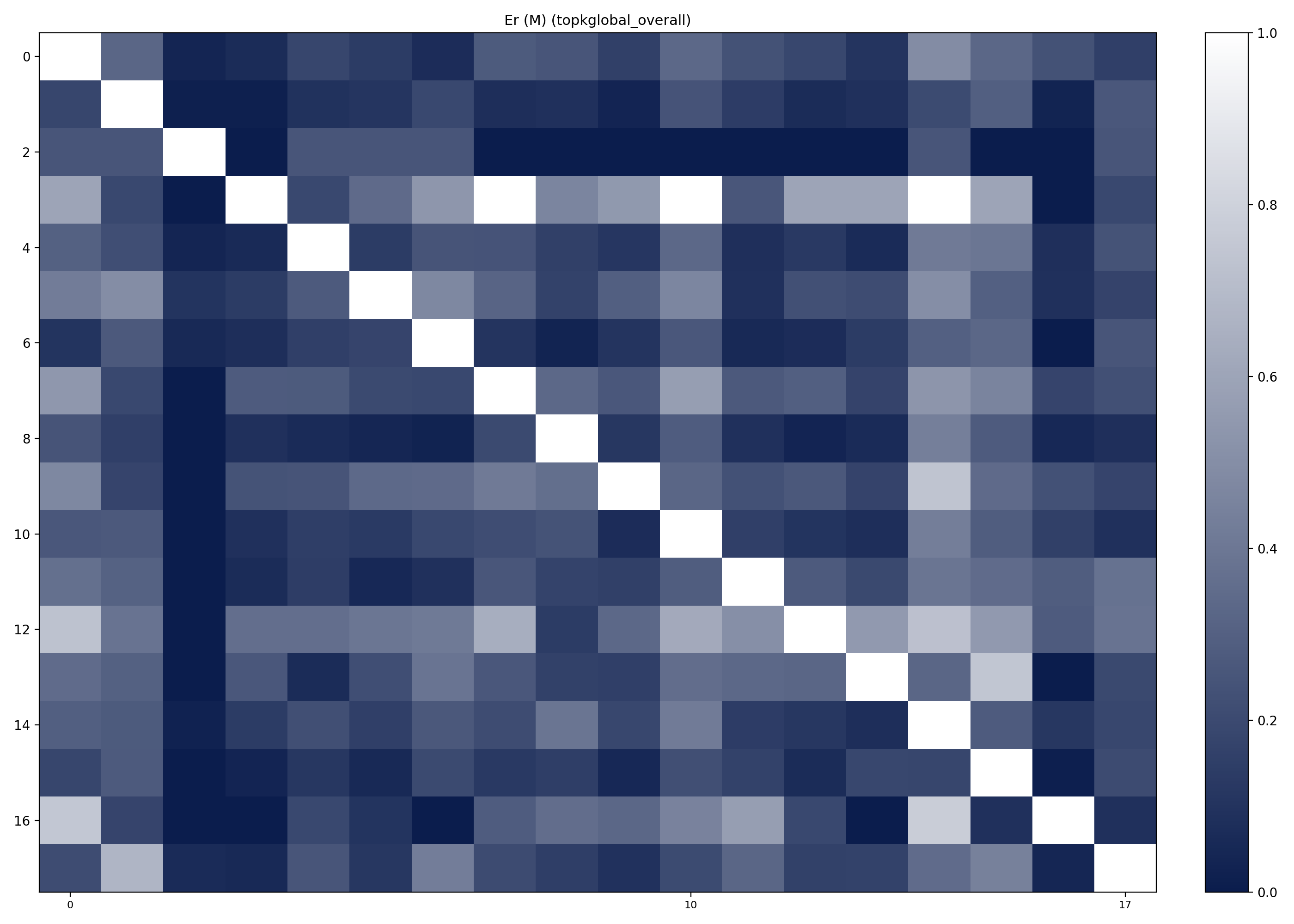}
\caption{$E_r(M)$}
\end{subfigure}
\caption{\textbf{Physical-alignment measurement beyond Transformers:} ResNet50 alignment measurement at layer 8 with the $25\mathrm{ER}$ rank window.  This figure tests Propositions~\ref{thm:block-energy-margin}, \ref{thm:heatmap-to-noise-bound}, and~\ref{thm:scale-weight-transfer}: the permuted matrices exhibit structured physical transport, while the $E_r$ heatmaps summarize the measured core/overlap/noise structure.}
\label{fig:resnet-physical-alignment-main}
\end{figure}

\subsection{Empirical measurement III: stable effective-rank windows}
\label{sec:prediction-compressibility}

The spectral-tail theorem states that dominant energy can be captured in a finite truncation window governed by the tail measure $\nu_\alpha$.
The alignment experiments explicitly use energy-thresholded windows such as top-$50\%$ energy and effective-rank variants.
Figure~\ref{fig:qwen-er-window-main} is placed here because it tests exactly the middle bridge: the Cartan/tail theory selects a stable dominant window, and Theorem~\ref{thm:static-structure-stability} explains when the same physical static structures survive the truncation error.
The persistence of block structure across those windows is consistent with the spectral-tail truncation serving as the interface between Cartan spectral geometry and physical alignment.

\paragraph{Deep-learning meaning.}
Changing the energy-rank window changes the number of retained singular directions.  If the same block structure persists under $25\mathrm{ER}$ and $50\mathrm{ER}$ windows, then the observed modular structure is not an artifact of one hand-picked truncation level.  It indicates that the physically meaningful channel organization is already present in the dominant energy range and is stable under moderate changes of the retained spectral window.

\begin{figure}[t]
\centering
\begin{subfigure}[t]{0.49\textwidth}
\centering
\includegraphics[width=\linewidth]{alignmentM/Qwen3-8B/25ER/layer_18/M/Er_M.png}
\caption{Qwen3-8B, $25\mathrm{ER}$: $E_r(M)$}
\end{subfigure}
\hfill
\begin{subfigure}[t]{0.49\textwidth}
\centering
\includegraphics[width=\linewidth]{alignmentM/Qwen3-8B/50ER/layer_18/M/Er_M.png}
\caption{Qwen3-8B, $50\mathrm{ER}$: $E_r(M)$}
\end{subfigure}

\medskip

\begin{subfigure}[t]{0.49\textwidth}
\centering
\includegraphics[width=\linewidth]{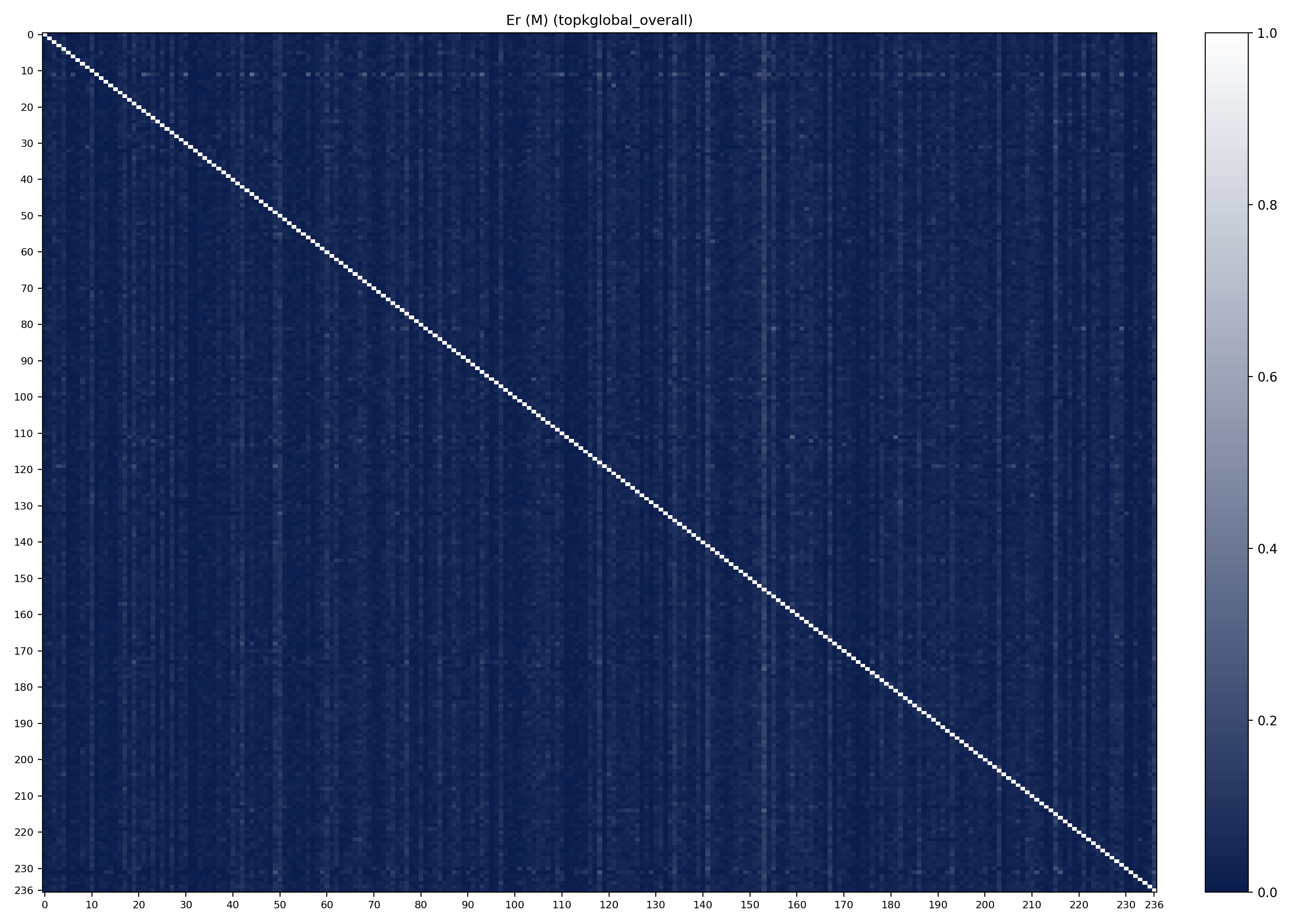}
\caption{LLaMA3-8B, $25\mathrm{ER}$: $E_r(M)$}
\end{subfigure}
\hfill
\begin{subfigure}[t]{0.49\textwidth}
\centering
\includegraphics[width=\linewidth]{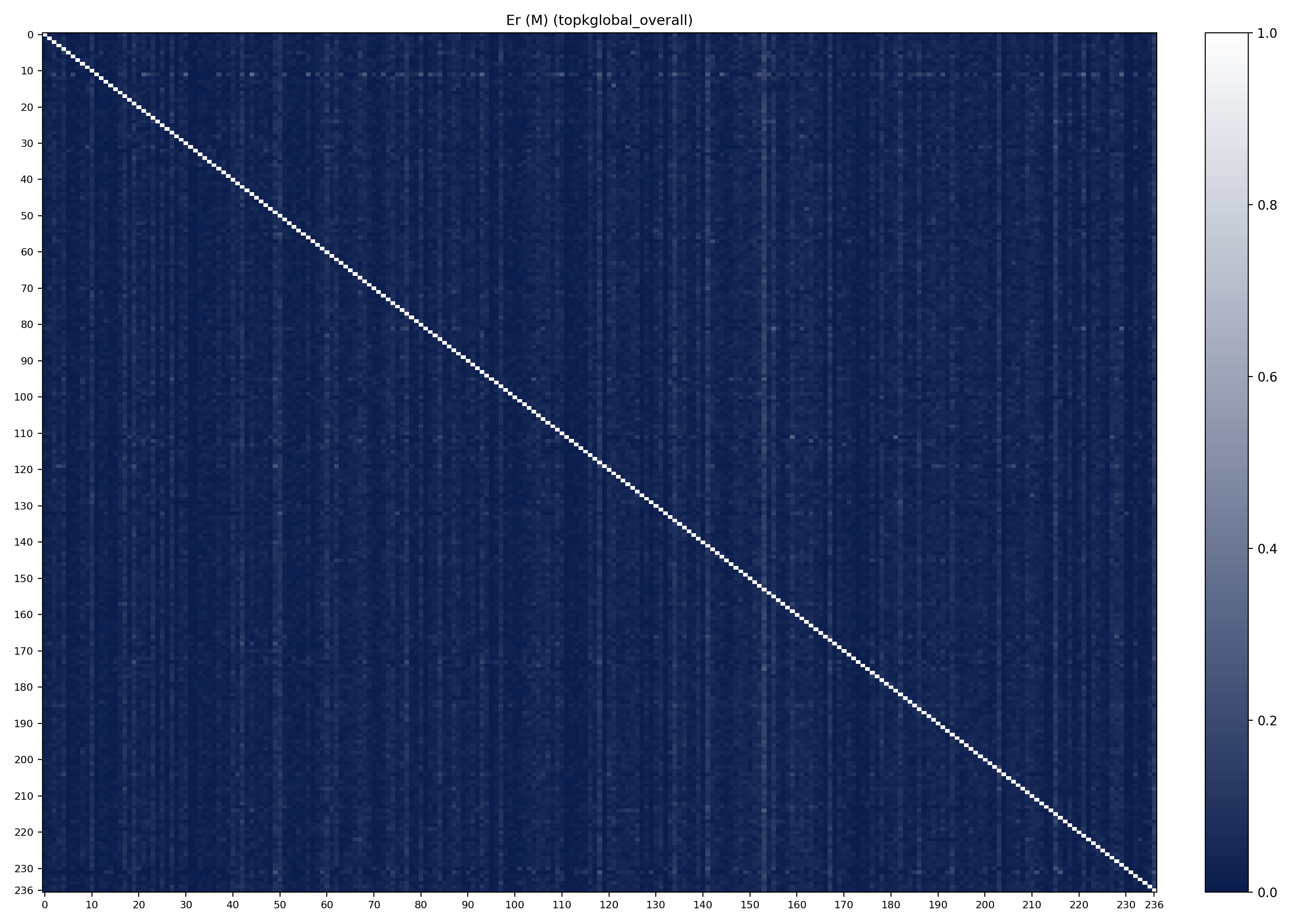}
\caption{LLaMA3-8B, $50\mathrm{ER}$: $E_r(M)$}
\end{subfigure}
\caption{\textbf{Effective-rank-window measurement.}  These panels compare $25\mathrm{ER}$ and $50\mathrm{ER}$ versions of the block-energy matrix for representative LLM interfaces.  The persistence of a similar block-energy pattern under a larger retained spectral window indicates that the measured channel organization is not tied to a single truncation level.  Corollary~\ref{cor:Er-window-robustness} states that exact pixels need not match; the invariant quantity is the block-energy structure when truncation error is controlled.}
\label{fig:rank-window-robustness}
\end{figure}

\subsection{Numerical margin measurement protocol}
\label{sec:margin-measurement-protocol}

A checkpoint-level finite-dimensional test of the full theorem package requires measuring:
\begin{enumerate}[leftmargin=2.0em]
\item fitted coordinates $\hat\alpha_k$, chart errors $e_k^{\mathrm{chart}}$, and fitted-tail errors $\Delta_{\mathrm{tail}}(W_k,\hat\alpha_k)$;
\item local transport budgets $\widehat\Lambda_k$;
\item effective ranks $R_\eps(W_k)$ and rank-tail margins $\mathfrak m_\eps(\hat\alpha_k)$;
\item the coordinate interpretation of the measured transport: source-mode columns for $M_{\mathrm{out}}$ or physical input-channel columns for $M_{\mathrm{phys}}$;
\item physical alignment structures $(\Pi_{k,\mathrm{row}},\Pi_{k,\mathrm{col}},\mathcal R_i,\mathcal C_i)$;
\item block-energy matrices $E_{\mathcal R,\mathcal C}(\widehat M_{\mathrm{phy}})$ and their off-diagonal mass $\operatorname{Off}(E)$;
\item the three GSA residuals: spectral total variation, $\|M_{\mathrm{noise}}\|_F$, and pairwise violations $(3\|\Overlap_{i\cap j}\|_2-m_{i,j})_+$, together with the heatmap margin screen $e_iE_{ij}+e_jE_{ji}-m_{i,j}^2/9$;
\item the row/profile margins $\Gamma_i^{\mathrm{SC}}$, $\Gamma_i^{\mathrm{ST}}$, $\Gamma_i^{\mathrm{SA}}$ when full SC/SA/ST ICM labels are claimed stable.
\end{enumerate}
The figures included here provide the first three groups of measurements in this protocol: exponent trajectories, physical alignment matrices, and block-energy matrices.  These experiments measure the finite-dimensional quantities appearing in the certificate statements and compare their observed behavior with the structural predictions of the theorems.  A complete finite-dimensional margin test additionally reports the corresponding numerical margins and compares them with the theorem thresholds.  The pairwise-gap quantities $m_{i,j}$ and $3\|\Overlap_{i\cap j}\|_2-m_{i,j}$ convert displayed block structure into a numerical physical GSA membership test.  The active-column gaps $\Gamma_i(\widehat M)$ from Definition~\ref{def:active-column-order-gap} are also required for a complete margin test, since Theorem~\ref{thm:static-structure-stability} uses them to decide whether the measured static structures are stable under truncation and full-transport perturbations.

Table~\ref{tab:numerical-certificate-template} records the numerical certificate entries that should accompany a representative checkpoint-level report. The present heatmaps visualize the matrices from which these values are computed; certificate-domain membership is asserted only for interfaces whose numerical entries satisfy the stated inequalities.

\clearpage
\begin{table}[H]
\centering
\scriptsize
\setlength{\tabcolsep}{2pt}
\begin{tabularx}{\textwidth}{L{0.15\textwidth}L{0.18\textwidth}L{0.22\textwidth}Y}
\toprule
Certificate entry & Computed from & Required comparison & Decision role \\
\midrule
$R_\eps(W_k)$ and $\widehat\alpha_k$ & singular values of the declared operator & rank-window rule and fitted-tail model & identifies the tested dominant window \\
$\Delta_{\mathrm{tail}}(W_k,\widehat\alpha_k)$ & empirical and fitted spectral tails & condition~\eqref{eq:cartan-empirical-rank-stability} & validates fitted-tail rank transfer \\
$\mathcal E_{\mathrm{tr},k}$ and $r_{\mathrm{cert}}$ & tail energies and extracted margins & $\mathcal E_{\mathrm{tr},k}<r_{\mathrm{cert}}$ & certifies full-to-truncated incidence stability \\
$\min_i\Gamma_i$ & active-column energy scores & Theorem~\ref{thm:static-structure-stability}(C1) & certifies active-support stability \\
$\max_{i<j}3o_{ij}/m_{ij}$ or $H_{\max}$ & pairwise blocks or heatmap screen & value $<1$ & certifies one-third pairwise overlap \\
$\|M_{\mathrm{noise}}\|_F/\|\widehat M\|_F$ & core/overlap/noise decomposition & chosen noise tolerance & reports residual unstructured mass \\
Null-baseline margins & same pipeline on controls & trained margin exceeds null margins & checks against clustering/permutation artifacts \\
\bottomrule
\end{tabularx}
\caption{Numerical certificate entries required for a complete finite-dimensional Physical GSA margin test. Values are not inferred from visual intensity alone; they are computed from the same matrices, partitions, coordinate interpretation, and extraction protocol used to generate the figures. If this table is not filled with actual values for an interface, the figures should be read as measurements rather than certificate-domain membership claims.}
\label{tab:numerical-certificate-template}
\end{table}

\begin{corollary}[Complete numerical margin criterion for empirical measurements]
\label{cor:margin-complete-test}
Fix a trained checkpoint and a family of measured interfaces.  Suppose the following quantities have been computed on the same layer indices, rank windows, row partitions, coordinate interpretation, and active-column rules used to generate the figures:
\[
\widehat D_{\mathrm{spec}}
:=\sum_{k=0}^{L-2}|\widehat\alpha_{k+1}-\widehat\alpha_k|,
\qquad
\widehat D_{\mathrm{noise}}
:=\sum_{k=0}^{L-2}\|M_{\mathrm{noise},k}\|_F,
\]
and
\[
\widehat D_{\mathrm{pair}}
:=
\sum_{k=0}^{L-2}\sum_{(i,j)\in\mathcal P_k^{\mathrm{nd}}}
\bigl(3\|\Overlap_{i\cap j}^{(k)}\|_2-m_{i,j}^{(k)}\bigr)_+ .
\]
Assume further that the fitted-tail errors satisfy the empirical rank-window condition \eqref{eq:cartan-empirical-rank-stability} whenever a fitted Cartan tail is used to justify an empirical rank window, and that the measured active-column gaps satisfy
\[
\Gamma_i(\widehat M_{\mathrm{phy},k})>2\omega(\widehat M_{\mathrm{phy},k},\eta_k)
\]
whenever a truncation or full-transport perturbation of size at most $\eta_k$ is used, and that the heatmap margin screen
\[
e_iE_{ij}+e_jE_{ji}<\frac{m_{i,j}^2}{9}
\]
holds for every pair whose one-third threshold is inferred from block-energy data rather than measured directly.  Then the displayed measurements constitute a complete finite-dimensional margin test in the following precise sense:
\begin{enumerate}[label=(V\arabic*),leftmargin=2.1em]
\item $\widehat D_{\mathrm{spec}}$ is the empirical value of the Cartan-coordinate total variation controlled by Theorem~\ref{thm:cartan-rigidity}.
\item $\widehat D_{\mathrm{noise}}$ and $\widehat D_{\mathrm{pair}}$ are the empirical physical-alignment residuals appearing in Definition~\ref{def:dist-gsa}.
\item If the measured transport budgets and chart errors satisfy the right-hand side of Proposition~\ref{thm:distance-to-gsa} with tolerance $\tau$, then
\[
\widehat D_{\mathrm{spec}}+\widehat D_{\mathrm{noise}}+\widehat D_{\mathrm{pair}}\le \tau
\]
is a margin-verified physical-GSA certificate-residual bound for that checkpoint.
\item If, in addition, the active-column and static-structure stability inequalities of Theorem~\ref{thm:static-structure-stability} hold, then the extracted SRS/Hub/core-overlap-noise anatomy is stable under the measured truncation and full-transport error; the full SC/SA/ST ICM labels are stable when the row/profile margins of Definition~\ref{def:icm-stability-margins} are also checked.
\end{enumerate}
\end{corollary}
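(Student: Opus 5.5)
The corollary is a bookkeeping statement: each of (V1)--(V4) asserts that a measured quantity coincides with, or satisfies the hypotheses of, an object already controlled by an earlier result. The plan is to check the four items in order, matching the computed quantities to the definitions and applying the cited theorems. No new estimate should be needed; the work is to verify that every hypothesis of each invoked result is met by the listed computed data on the same layers, windows, partitions and active-column rules.

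For (V1), $\widehat D_{\mathrm{spec}}$ is literally the left-hand side of \eqref{eq:cartan-rigidity-TV-II}, and of \eqref{eq:robust-cartan-TV-II} when the fitted coordinates $\widehat\alpha_k$ are used. Theorem~\ref{thm:cartan-rigidity}, or Theorem~\ref{thm:robust-cartan-rigidity} with chart errors $e_k^{\mathrm{chart}}$, then bounds it. For (V2), I compare the three summands of \eqref{eq:dist-gsa} term by term with $\widehat D_{\mathrm{spec}}$, $\widehat D_{\mathrm{noise}}$ and $\widehat D_{\mathrm{pair}}$. This gives the identity $\mathfrak D_{\mathrm{GSA}}=\widehat D_{\mathrm{spec}}+\widehat D_{\mathrm{noise}}+\widehat D_{\mathrm{pair}}$, provided the noise matrices and nondegenerate pair sets $\mathcal P_k^{\mathrm{nd}}$ are computed with the same extraction protocol.

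For (V3), Proposition~\ref{thm:distance-to-gsa} would give $\mathfrak D_{\mathrm{GSA}}\le\text{RHS}$ under the domain hypothesis $\|\Overlap\|_2\le c_{\mathrm{overlap}}m$. Here, however, that domain hypothesis is not assumed in general. I therefore bound the three pieces directly:
\begin{itemize}
\item $\widehat D_{\mathrm{spec}}$ by the robust rigidity theorem;
\item $\widehat D_{\mathrm{noise}}$ by its measured value;
\item $\widehat D_{\mathrm{pair}}$ by its measured value, using the screen where relevant.
\end{itemize}
For pairs inferred from block-energy data, the screen $e_iE_{ij}+e_jE_{ji}<m_{i,j}^2/9$ together with Proposition~\ref{thm:heatmap-margin-screen} gives $3\|\Overlap_{i\cap j}\|_2<m_{i,j}$, so those positive parts vanish. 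Then ``measured budgets satisfy the right-hand side with tolerance $\tau$'' is read as: the sum of the rigidity bound and the measured noise and pair residuals is at most $\tau$. The inequality $\widehat D_{\mathrm{spec}}+\widehat D_{\mathrm{noise}}+\widehat D_{\mathrm{pair}}\le\tau$ follows by adding these bounds. The main obstacle is this interpretive step: fixing what ``satisfy the right-hand side with tolerance $\tau$'' means so that the implication is genuinely deductive. I would state the reading explicitly and note that it reduces to \eqref{eq:Dgsa-bound} when every pair passes the one-third test.

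For (V4), I invoke Theorem~\ref{thm:static-structure-stability}. The assumed gap inequality $\Gamma_i>2\omega(\widehat M_{\mathrm{phy},k},\eta_k)$ is exactly (C1). The assumed pairwise inequalities $3o_{i,j}+4\eta_k<m_{i,j}$ are (C2), so (C3) yields unchanged SRS, Hub and core/overlap/noise masks. If one prefers a single check, Theorem~\ref{thm:single-radius-static-stability} with $\eta_k=\mathcal E_{\mathrm{tr},k}$ and Theorem~\ref{thm:full-transport-to-icm} covers the truncation-to-full case. When the row/profile margins of Definition~\ref{def:icm-stability-margins} dominate the induced perturbations, Proposition~\ref{prop:full-icm-label-stability}, as packaged in Theorem~\ref{thm:icm-extraction}(C5), gives stability of the SC/SA/ST labels.
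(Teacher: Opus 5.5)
Your proposal is correct. For (V1), (V2) and (V4) it follows the paper's proof. You identify $\widehat D_{\mathrm{spec}}$ with the left-hand side of \eqref{eq:robust-cartan-TV-II} and match the three summands of \eqref{eq:dist-gsa}, then appeal to Theorem~\ref{thm:static-structure-stability}. For the SC/SA/ST labels you cite Proposition~\ref{prop:full-icm-label-stability} explicitly, which the paper only gestures at.

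The real difference is in (V3). The paper substitutes the measured quantities into Proposition~\ref{thm:distance-to-gsa} and declares the conclusion conditional on that proposition's hypotheses. It therefore tacitly assumes domain membership: $\|M_{\mathrm{noise},k}\|_F\le\varepsilon_{\mathrm{noise}}$ and $\|\Overlap_{i\cap j}^{(k)}\|_2\le c_{\mathrm{overlap}}m_{i,j}^{(k)}$ with $c_{\mathrm{overlap}}<1/3$. Under that assumption $\widehat D_{\mathrm{pair}}=0$, and ``tolerance $\tau$'' means the right-hand side of \eqref{eq:Dgsa-bound} is at most $\tau$.

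You decline that assumption. You keep the measured noise and pair residuals inside the tolerance and use the heatmap screen to remove the inferred pair terms. Under the proposition's hypotheses your threshold is bounded by the right-hand side of \eqref{eq:Dgsa-bound}, so your reading is implied by the paper's, and you prove a slightly more general statement. The price is that the noise and pair parts of your certificate are bounded by themselves. Your only nontrivial inputs are the rigidity bound and the screen, whereas the paper's reading ties $\tau$ to the a priori parameters $\varepsilon_{\mathrm{noise}}$ and $c_{\mathrm{overlap}}$.

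Two smaller points. First, when you state your reading of ``tolerance $\tau$'', include the hypotheses of Theorem~\ref{thm:robust-cartan-rigidity}: non-backtracking interfaces, $\lambda_k\ge1$, fitted exponents in $I$, and the normalization. The corollary does not list them, and without them the bound on $\widehat D_{\mathrm{spec}}$ is unavailable. The paper covers this only through its blanket ``conditional on the same hypotheses as the proposition'' clause.

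Second, in (V4) you are right to take $3o_{i,j}+4\eta_k<m_{i,j}$ directly as one of the assumed stability inequalities. The screen of Proposition~\ref{thm:heatmap-margin-screen} yields only the unperturbed threshold $3o_{i,j}<m_{i,j}$, without the $4\eta_k$ slack that (C2) needs. The paper mentions the screen at that point, but it cannot replace (C2). Preservation of the masks, SRS sets and hubs needs only (C1).
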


\begin{proof}
We prove the four assertions.

For (V1), Theorem~\ref{thm:robust-cartan-rigidity} states that the total variation of the Cartan coordinate is bounded by the measured local transport budgets and chart errors.  The quantity $\widehat D_{\mathrm{spec}}$ is exactly the left-hand side of that total-variation estimate with the fitted coordinates replacing the theoretical coordinates.  Thus, once the fit errors are included in the chart-error term, it is the empirical instance of the theorem's spectral residual.

For (V2), Definition~\ref{def:dist-gsa} defines the physical GSA residual as the sum of the spectral total variation, the Frobenius norms of the noise components, and the positive parts of the one-third-threshold violations over nondegenerate pairs.  The quantities $\widehat D_{\mathrm{noise}}$ and $\widehat D_{\mathrm{pair}}$ are precisely the second and third terms of that definition computed from the measured physical structures.

For (V3), Proposition~\ref{thm:distance-to-gsa} gives an upper bound for the sum of the three residuals in Definition~\ref{def:dist-gsa}.  Substituting the measured budgets, chart errors, noise norms, and pairwise margins into that proposition yields the asserted finite-dimensional margin test.  The conclusion is conditional on the same hypotheses as the proposition: the rank windows, physical structures, nondegenerate-pair convention, and overlap margin assumptions must be evaluated on the same objects that produced the figures.

For (V4), Theorem~\ref{thm:static-structure-stability} proves that active-column sets, pairwise incidence structure, and the derived core/overlap/noise and SRS/Hub incidence structures are unchanged under perturbations whose size is below the specified active-column and pairwise margin thresholds.  The active-column inequality in the statement is exactly the stability condition used to preserve selected support sets.  When the one-third threshold is not measured directly but inferred from a block-energy heatmap, Proposition~\ref{thm:heatmap-margin-screen} converts the numerical screen $e_iE_{ij}+e_jE_{ji}<m_{i,j}^2/9$ into $3\|\Overlap_{i\cap j}\|_2<m_{i,j}$.  Therefore, if these inequalities hold for the measured perturbation sizes $\eta_k$, the extracted SRS/Hub/core-overlap-noise anatomy is stable under the corresponding truncation and full-transport error, and the full SC/SA/ST ICM labels are stable once the additional row/profile margins are verified.
\end{proof}

\section{Conclusion}
\label{sec:conclusion}
This article develops the angular and static-channel component of GSA.  The spectral results imported from the companion article identify a stable dominant energy window.  Inside that window, angular transport matrices and their physical realizations produce finite static objects: the Physical Alignment Matrix, block-energy matrices, pairwise relational triples, the core/overlap/noise decomposition, and ICM/SRS/Hub variables.  The deterministic theorems specify when these objects are stable under truncation error, diagonal reweighting, full-transport perturbations, and pairwise overlap perturbations.  The same coordinates yield low-disruption fine-tuning consequences: small scale-ratio cost forces uniform layerwise scaling, while small SVD-frame displacement and low common-frame cost force coherent left/right singular-vector rotations.

The alignment experiments are placed next to the corresponding finite quantities.  They measure whether trained models display the predicted block-dominant physical transport and whether the same structure persists across effective-rank windows, cluster choices, and layer sweeps.  The visual matrices give measurements consistent with the qualitative structural pattern; the finite-dimensional margin checking is obtained by reporting and checking the associated numerical margins.  Together with the spectral article, this gives a two-part GSA certificate theory: the spectral article controls the location and motion of the dominant spectral window, and this article gives finite-dimensional margin conditions under which the channel-incidence structure transported inside that window is stable and represents a controlled approximation to the full interface transport.
\printbibliography
\end{document}